\newcommand{\R}{\mathbb{R}}
\newcommand{\E}{\mathbb{E}}
\newcommand\supp{\operatorname{supp}}
\newcommand{\name}{{strict saddle}}
\newcommand{\bigname}{{Strict saddle}}
\newcommand{\Hess}{\nabla^2}
\newcommand{\tlO}{\tilde{O}}
\newcommand{\tlOmega}{\tilde{\Omega}}
\newcommand{\nameCQ}{ $\alpha_c$-RLICQ }
\newcommand{\BigC}{\mathfrak{S}}
\newtheorem{theorem}{Theorem}
\newtheorem{lemma}[theorem]{Lemma}
\newtheorem*{remark}{Remark}
\newtheorem*{claim}{Remark}
\theoremstyle{definition}
\newtheorem{definition}[theorem]{Definition}
\title{Escaping From Saddle Points \--- \\  Online Stochastic Gradient for Tensor Decomposition}
\date{}
\author{Rong Ge\thanks{Microsoft Research New England, rongge@microsoft.com} \and
Furong Huang \thanks{University of California Irvine, Department of Electrical Engineering and Computer Science, furongh@uci.edu} \and
Chi Jin \thanks{University of California Berkeley, Department of Electrical Engineering and Computer Science, chijin@cs.berkeley.edu} \and
Yang Yuan \thanks{Cornell University, Computer Science Department, yangyuan@cs.cornell.edu}}
\begin{document}

\maketitle

\begin{abstract}
We analyze stochastic gradient descent for optimizing non-convex functions. In many cases for non-convex functions the goal is to find a reasonable local minimum, and the main concern is that gradient updates are trapped in {\em saddle points}. In this paper we identify {\em \name} property for non-convex problem that allows for efficient optimization. Using this property we show that stochastic gradient descent converges to a local minimum in a polynomial number of iterations. To the best of our knowledge this is the first work that gives {\em global} convergence guarantees for stochastic gradient descent on non-convex functions with exponentially many local minima and saddle points. 

Our analysis can be applied to orthogonal tensor decomposition, which is widely used in learning a rich class of  latent variable models. We propose a new optimization formulation for the tensor decomposition problem that has \name~property. As a result we get the first online algorithm for orthogonal tensor decomposition with global convergence guarantee.

\end{abstract}



\section{Introduction}

Stochastic gradient descent is one of the basic algorithms in optimization. It is often used to solve the following stochastic optimization problem
\begin{equation}
w = \arg\min_{w\in \R^d} f(w), \textrm{~where~} f(w) = \E_{x\sim \mathcal{D}}[\phi(w,x)]
\label{eq:opt}
\end{equation}
Here $x$ is a data point that comes from some unknown distribution $\mathcal{D}$, and $\phi$ is a {\em loss function} that is defined for a pair $(x,w)$. We hope to minimize the expected loss $\E[\phi(w,x)]$.

When the function $f(w)$ is convex, convergence of stochastic gradient descent is well-understood \citep{ICML2012Rakhlin_261, shalev2009stochastic}. However, stochastic gradient descent is not only limited to convex functions. Especially, in the context of neural networks, stochastic gradient descent is known as the ``backpropagation'' algorithm~\citep{rumelhart1988learning}, and has been the main algorithm that underlies the success of deep learning~\citep{bengio2009learning}. However, the guarantees in the convex setting does not transfer to the non-convex settings.

Optimizing a non-convex function is NP-hard in general. The difficulty comes from two aspects. First, a non-convex function may have many local minima, and it might be hard to find the best one (global minimum) among them. Second, even finding a local minimum might be hard as there can be many saddle points which have $0$-gradient but are not local minima\footnote{See Section~\ref{sec:sgd} for definition of saddle points.}. In the most general case, there is no known algorithm that guarantees to find a local minimum in polynomial number of steps. The discrete analog (finding local minimum in domains like $\{0,1\}^n$) has been studied in complexity theory and is PLS-complete~\citep{johnson1988easy}.

In many cases, especially in those related to deep neural networks~\citep{dauphin2014identifying}\\ \citep{choromanska2014loss}, the main bottleneck in optimization is not due to local minima, but the existence of many saddle points. Gradient based algorithms are in particular susceptible to saddle point problems as they only rely on the gradient information. The saddle point problem is alleviated for second-order methods that also rely on the Hessian information~\citep{dauphin2014identifying}.

However, using Hessian information usually increases the memory requirement and computation time per iteration. As a result many applications still use stochastic gradient and empirically get reasonable results. In this paper we investigate why stochastic gradient methods can be effective even in presence of saddle point, in particular we answer the following question:

\medskip
{\noindent \textbf{Question:}} Given a non-convex function $f$ with many saddle points, what properties of $f$ will guarantee stochastic gradient descent to converge to a local minimum efficiently?
\medskip

We identify a property of non-convex functions which we call {\em \name}. Intuitively, this property guarantees local progress if we have access to the Hessian information. Surprisingly we show with only first order (gradient) information, stochastic gradient can escape the saddle points efficiently. 
We give a framework for analyzing stochastic gradient in both unconstrained and equality-constrained case using this property.

We apply our framework to {\em orthogonal tensor decomposition}, which is a core problem in learning many latent variable models (see discussion in~\ref{sec:prelim:tensor}). 
The tensor decomposition problem is inherently susceptible to the saddle point issues, as the problem asks to find $d$ different components and any permutation of the true components yields a valid solution. Such symmetry creates exponentially many local minima and saddle points in the optimization problem.  Using our new analysis of stochastic gradient, we give the first online algorithm for orthogonal tensor decomposition with global convergence guarantee. This is a key step towards making tensor decomposition algorithms more scalable.

\subsection{Summary of Results}

\paragraph{\bigname~functions}

Given a function $f(w)$ that is twice differentiable, we call $w$ a stationary point if $\nabla f(w) = 0$. A stationary point can either be a local minimum, a local maximum or a saddle point. 
We identify an interesting class of non-convex functions which we call \name. For these functions the Hessian of every saddle point has a negative eigenvalue. 
In particular, this means that local second-order algorithms which are similar to the ones in \citep{dauphin2014identifying} can always make some progress. 

It may seem counter-intuitive why stochastic gradient can work in these cases: in particular if we run the basic gradient descent starting from a stationary point then it will not move. However, we show that the saddle points are not stable and that the randomness in stochastic gradient helps the algorithm to escape from the saddle points.
%
%
%
%

\begin{theorem}[informal] Suppose $f(w)$ is \name~(see Definition~\ref{def:robustcondition}), Noisy Gradient Descent (Algorithm~\ref{algo:sgdwn}) outputs a point that is close to a local minimum in polynomial number of steps.
\end{theorem}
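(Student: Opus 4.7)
The plan is to exploit the strict saddle property to partition the parameter space into three overlapping regions and then show that the noisy gradient iterates make measurable progress (in expected function value) in each region until they reach a neighborhood of a local minimum. Concretely, at every iterate $w$, the strict saddle assumption forces one of the following to hold: (i) $\|\nabla f(w)\|$ is lower bounded by some $\gamma > 0$; (ii) the Hessian $\Hess f(w)$ has an eigenvalue smaller than some $-\lambda < 0$; or (iii) $w$ lies in a small neighborhood of a local minimum where $f$ is effectively strongly convex. The overall argument is then a potential-function argument on $f(w_t)$: because $f$ is bounded below, the total expected decrease is bounded, so the iterate cannot remain in regions (i) or (ii) for more than a polynomial number of steps.

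In region (i), I would run a standard one-step analysis of stochastic gradient descent. Using smoothness of $f$ and a Taylor expansion,
\begin{equation*}
\E[f(w_{t+1}) \mid w_t] \le f(w_t) - \eta \|\nabla f(w_t)\|^2 + O(\eta^2),
\end{equation*}
so choosing the step size $\eta$ small enough gives an expected decrease of order $\eta \gamma^2$ per step in this region. This part is essentially the classical SGD decrease lemma and is not the hard case.

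The core difficulty is region (ii): the saddle-escape analysis. Near a saddle point the gradient is tiny, so a deterministic descent step does almost nothing, and we must argue that the injected noise picks up the negative curvature direction over many steps. My plan is to fix a saddle $w^*$ with a unit eigenvector $e_{\min}$ of $\Hess f(w^*)$ corresponding to eigenvalue $\le -\lambda$, and to track the evolution of $\alpha_t := \inner{w_t - w^*, e_{\min}}$ over a time window of length $T = \tilde{O}(1/(\eta \lambda))$. Using a second-order Taylor expansion of $\nabla f$ around $w^*$ and the isotropy (or at least non-degeneracy) of the noise, $\alpha_t$ satisfies a recursion of the form $\alpha_{t+1} \approx (1+\eta\lambda)\alpha_t + \eta \xi_t + (\text{higher-order terms})$, so $\alpha_t$ behaves like an amplified random walk along the escape direction. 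The key lemma will show that, with constant probability, $|\alpha_t|$ grows to a size at which the quadratic term $-\tfrac12 \lambda \alpha_t^2$ dominates the Taylor expansion of $f$, yielding a function-value decrease of at least some polynomial quantity $\Omega(\eta \lambda / L^2)$ within $T$ steps. The hardest technical piece here will be controlling the error terms: I need to confine $w_t$ to a small ball around $w^*$ throughout the window so that the Hessian is well-approximated by $\Hess f(w^*)$, which means bounding the transverse components of $w_t - w^*$ by a martingale concentration argument simultaneously with lower-bounding the parallel component.

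For region (iii), near a local minimum $f$ is locally strongly convex by the strict saddle condition, so standard SGD analysis (descent lemma plus strong convexity) shows the iterate contracts geometrically in expectation to within $O(\sqrt{\eta})$ of the minimizer and stays there with high probability. Putting the three cases together: each step spent in region (i) or (ii) decreases $\E[f(w_t)]$ by at least an inverse-polynomial amount, and $f$ is bounded below, so within a polynomial number of iterations the algorithm must enter and remain in region (iii), at which point the output is close to a local minimum. A union bound over the (polynomial) number of saddle-escape episodes handles the failure probabilities. The main obstacle, as noted, is the saddle-escape lemma; everything else is bookkeeping around smoothness constants and step-size choices.
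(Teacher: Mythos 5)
Your three-case decomposition and the overall potential argument are the same as the paper's: a descent lemma when the gradient is large, a locally strongly convex contraction/confinement argument near a local minimum, and a saddle-escape lemma in the hard middle case, combined via the boundedness of $f$ into a polynomial iteration bound with a final high-probability amplification. The gradient and local-minimum cases you describe match Lemmas~\ref{thm::case1} and~\ref{thm::case3} essentially verbatim.

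The saddle-escape step is where your formalization diverges from the paper's, and where there is a real gap you would need to close. The paper does not track the 1D projection $\alpha_t$ onto the bottom eigenvector and argue constant-probability escape; instead it couples $w_t$ to an auxiliary sequence $\tilde w_t$ that runs the \emph{exact} SGD dynamics on the local quadratic model $\tilde f$, for which the iterate and its gradient have closed forms in powers of $(I-\eta\mathcal H)$, and it computes $\E[\tilde f(\tilde w_T)-\tilde f(\tilde w_0)]$ by summing over the \emph{whole} spectrum. The point your sketch glosses over is the sign-balance across eigendirections: noise accumulated along the positive-eigenvalue directions pushes $f$ \emph{up}, contributing roughly $+(d-1)\eta\sigma^2/2$ over the window, while the negative eigenvalue contributes $-\gamma_0\sum_{\tau<T}(1+\eta\gamma_0)^{2\tau}\cdot\eta^2\sigma^2/2$. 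The paper's window length $T$ is chosen precisely so that $\sum_{\tau<T}(1+\eta\gamma_0)^{2\tau}\approx d/(\eta\gamma_0)$, at which point the second term just overtakes the first; a window that is too short fails. If you only track $\alpha_t$ and ask for it to be ``large enough that $-\tfrac12\lambda\alpha_t^2$ dominates,'' you have not yet shown that it dominates the cumulative increase from the other $d-1$ coordinates, and this is exactly where the $\log d$ in $T=O(\log d/(\gamma\eta))$ comes from. Relatedly, proving a \emph{constant-probability} escape is weaker than what the potential argument needs: the paper establishes an \emph{expected} decrease $\E[f(w_{t+T})]-f(w_t)\le -\tilde\Omega(\eta)$, which plugs directly into the bounded-below telescoping; a constant-probability event would force you to also control what happens (and how much $f$ can increase) on the complement. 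Your plan is salvageable, but you should either redo the escape lemma in expectation over the full spectrum as the paper does, or carefully argue that the complement event cannot increase $\E f$ by more than a fraction of the gain.
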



\paragraph{Online tensor decomposition} Requiring all saddle points to have a negative eigenvalue may seem strong, but it already allows non-trivial applications to natural non-convex optimization problems. 
As an example, we consider the orthogonal tensor decomposition problem. This problem is the key step in spectral learning for many latent variable models (see more discussions in Section~\ref{sec:prelim:tensor}).

We design a new objective function for tensor decomposition that is \name. 
\begin{theorem}
Given random samples $X$ such that $T = \E[g(X)] \in \R^{d^4}$ is an orthogonal $4$-th order tensor (see Section~\ref{sec:prelim:tensor}), there is an objective function $f(w) = \E[\phi(w,X)]$ $w\in \R^{d\times d}$ such that every local minimum of $f(w)$ corresponds to a valid decomposition of $T$. Further, function $f$ is \name.
\end{theorem}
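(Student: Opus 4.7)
The plan is to first exhibit an objective whose population version is a natural 4th-order polynomial on the orthogonal group and then verify the two required properties. Concretely I would take $w\in\R^{d\times d}$ with rows $w_1,\dots,w_d$, choose $\phi(w,X)$ so that $\E[\phi(w,X)] = -\sum_{i=1}^d T(w_i,w_i,w_i,w_i)$ (for instance, $\phi(w,X)$ is a multilinear form of $w$ contracted against $g(X)$ whose expectation is $T$), and impose the equality constraint $ww^\top = I$. Since $T=\sum_{k=1}^d a_k^{\otimes 4}$ with $\{a_k\}$ orthonormal, this gives $f(w) = -\sum_{i,k}\langle a_k,w_i\rangle^4$ on the Stiefel manifold, which is exactly the setting of the equality-constrained version of Algorithm~\ref{algo:sgdwn}. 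The change of variables $C=(c_{ik})$ with $w_i=\sum_k c_{ik}a_k$ makes $C$ orthogonal as well and turns the objective into $-\sum_{i,k}c_{ik}^4$, reducing everything to an intrinsic problem on $O(d)$.

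Next I would characterize stationary points via the Lagrangian / Riemannian gradient on the Stiefel manifold. The first-order condition should reduce, after projecting onto the tangent space $\{\Delta : \Delta C^\top + C\Delta^\top = 0\}$, to $c_{ik}(c_{ik}^2 - \mu_i) = 0$ for each $i,k$ and some scalar $\mu_i$. This forces every row of $C$ to be uniform in absolute value over its support. Combined with $CC^\top = C^\top C = I$, a counting argument on the supports shows that up to permutation $C$ must be block-diagonal with each block an $m\times m$ matrix whose entries are $\pm 1/\sqrt{m}$. The blocks of size $m=1$ correspond to $w_i = \pm a_{\pi(i)}$, i.e., valid signed-permutation decompositions of $T$, so the claim that every local minimum is a valid decomposition reduces to showing that blocks of size $m\ge 2$ are not local minima.

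For that I would exhibit a tangent direction with strictly negative Riemannian Hessian at every stationary point containing an $m\ge 2$ block. On such a block, one can perform an infinitesimal rotation inside the span of the corresponding $a_k$'s that concentrates mass of one row onto a single coordinate; using strict convexity of $x\mapsto x^4$ at the uniform distribution on $m\ge 2$ atoms, the objective $-\sum_k c_{ik}^4$ strictly decreases at second order, and I would quantify this gap as a function of $m$ and $d$ to obtain a uniform $\gamma = \Omega(1/d)$ lower bound on the magnitude of the negative eigenvalue, matching the strict saddle definition (Definition~\ref{def:robustcondition}). A smoothness calculation on the degree-4 polynomial and its third derivatives gives polynomial $\beta$ and $\rho$ bounds, and an $\alpha$ lower bound at actual local minima follows by evaluating the Hessian at a signed-permutation $C$ and checking it is positive definite on the tangent space.

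The main obstacle is bookkeeping the constrained/Riemannian Hessian correctly: the Euclidean Hessian of $-\sum_{i,k}\langle a_k,w_i\rangle^4$ must be corrected by the second fundamental form of the Stiefel manifold, and the candidate negative-curvature direction must be verified to lie in the tangent space while coupling the rows consistently through $CC^\top = I$. The other delicate point is proving that no additional spurious critical points arise from interactions across different $w_i$; the block-diagonal structure derived in step two is what makes this tractable, since it lets me reduce the negative curvature question to a single block of size $m$ and a $2\times 2$ planar rotation inside it.
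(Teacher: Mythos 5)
Your proposal departs from the paper in a basic way: the paper does \emph{not} optimize $-\sum_i T(w_i,w_i,w_i,w_i)$ over the orthogonal group. It minimizes the pairwise objective
$$
\sum_{i\neq j}T(u_i,u_i,u_j,u_j)
$$
subject only to the $d$ decoupled sphere constraints $\|u_i\|^2=1$, and orthogonality of the $u_i$ is a \emph{conclusion} at global/local minima, not a constraint. That choice is what makes the Lagrangian analysis tractable: each constraint gradient lives on a disjoint block of coordinates, so $2$-RLICQ is immediate, the Lagrange multiplier $\lambda^*_i$ decouples per row (Eq.~(\ref{eq:computelambdastar})), and the constrained gradient and Hessian ($\chi$, $\mathfrak{M}$) have clean coordinatewise formulas (Eqs.~(\ref{chi_2}), (\ref{frakM_2})). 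The paper even says explicitly that the more natural reconstruction-style objectives are avoided precisely because their strict-saddle structure is unclear.

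The concrete gap in your sketch is the claimed first-order reduction to $c_{ik}(c_{ik}^2-\mu_i)=0$. That per-row condition is correct for the single-vector warm-up problem (Section~\ref{sec:warmup}), where the only constraint is $\|x\|_2^2=1$. On the orthogonal group $\{C: CC^\top = I\}$ there are also the $\binom{d}{2}$ cross constraints $c_i^\top c_j = 0$, and the stationarity condition for $-\sum_{i,k}c_{ik}^4$ is that $C G^\top$ be symmetric for $G_{ik}=-4c_{ik}^3$, i.e.\
$$
\sum_k c_{ik}c_{jk}\bigl(c_{ik}^2-c_{jk}^2\bigr)=0\quad\text{for all }i\neq j,
$$
which couples the rows. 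The off-diagonal Lagrange multipliers $\Lambda_{ij}=-4\sum_k c_{ik}^3 c_{jk}$ do not vanish in general, so your ``uniform absolute value on the support'' characterization of critical points, and the downstream block-diagonal counting argument, are not established. Without that, the claim that every local minimum is a signed permutation remains open, and so does the existence of a quantitative $\gamma$-negative eigenvalue of the constrained Hessian at non-permutation stationary points. On top of that, your second-order step (``strict convexity of $x\mapsto x^4$ at the uniform distribution'') describes a direction in function-value space, not a verified tangent vector to $O(d)$ with a bound on the Riemannian Hessian; the paper's corresponding Lemmas~\ref{lem:Problem2_case2}--\ref{lem:Problem2_case1} construct the escape direction explicitly and bound $\hat v^\top\mathfrak M(U)\hat v$ coordinate by coordinate, which is possible precisely because the constraint set decouples. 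In short, this is a genuinely different (and not obviously correct) formulation, and the key step in your outline is not justified.
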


Combining this new objective with our framework for analyzing stochastic gradient in non-convex setting,  we get the first online algorithm for orthogonal tensor decomposition with global convergence guarantee.

\subsection{Related Works}

\paragraph{Relaxed notions of convexity} In optimization theory and economics, there are extensive works on understanding functions that behave similarly to convex functions (and in particular can be optimized efficiently). Such notions involve pseudo-convexity~\citep{mangasarian1965pseudo}, quasi-convexity~\\\citep{kiwiel2001convergence}, invexity\citep{hanson1999invexity} and their variants. More recently there are also works that consider classes that admit more efficient optimization procedures like RSC (restricted strong convexity)~\citep{agarwal2010fast}. Although these classes involve functions that are non-convex, the function (or at least the function restricted to the region of analysis) still has a unique stationary point that is the desired local/global minimum. Therefore these works cannot be used to prove global convergence for problems like tensor decomposition, where by symmetry of the problem there are multiple local minima and saddle points.

\paragraph{Second-order algorithms} The most popular second-order method is the Newton's method. Although Newton's method converges fast near a local minimum, its global convergence properties are less understood in the more general case. For non-convex functions,~\citep{frieze1996learning} gave a concrete example where second-order method converges to the desired local minimum in polynomial number of steps (interestingly the function of interest is trying to find one component in a $4$-th order orthogonal tensor, which is a simpler case of our application). As Newton's method often converges also to saddle points, to avoid this behavior, different trusted-region algorithms are applied~\citep{dauphin2014identifying}.

\paragraph{Stochastic gradient and symmetry} The tensor decomposition problem we consider in this paper has the following symmetry: the solution is a set of $d$ vectors $v_1,...,v_d$. If $(v_1,v_2,...,v_d)$ is a solution, then for any permutation $\pi$ and any sign flips $\kappa \in \{\pm 1\}^d$, $(.., \kappa_i v_{\pi(i)}, ...)$ is also a valid solution. In general, symmetry is known to generate saddle points, and variants of gradient descent often perform reasonably in these cases (see~\citep{saad1995line},~\citep{rattray1998natural},~\citep{inoue2003line}). The settings in these work are different from ours, and none of them give bounds on number of steps required for convergence.

There are many other problems that have the same symmetric structure as the tensor decomposition problem, including the sparse coding problem~\citep{olshausen1997sparse} and many deep learning applications~\citep{bengio2009learning}. In these problems the goal is to learn multiple ``features'' where the solution is invariant under permutation. Note that there are many recent papers on iterative/gradient based algorithms for problems related to matrix factorization~\citep{jain2013low,saxe2013exact}. These problems
often have very different symmetry, as if $Y = AX$ then for any invertible matrix $R$ we know $Y = (AR)(R^{-1} X)$. In this case all the equivalent solutions are in a connected low dimensional manifold and there need not be saddle points between them.

\vspace*{-0.2in}
\section{Preliminaries}

\paragraph{Notation} Throughout the paper we use $[d]$ to denote set $\{1,2,...,d\}$. We use $\|\cdot\|$ to denote the $\ell_2$ norm of vectors and spectral norm of matrices. For a matrix we use $\lambda_{min}$ to denote its smallest eigenvalue. For a function $f:\R^d\to \R$,  $\nabla f$ and $\nabla^2 f$ denote its gradient vector and Hessian matrix.

\vspace*{-0.1in}
\subsection{Stochastic Gradient Descent}

The stochastic gradient aims to solve the stochastic optimization problem (\ref{eq:opt}), which we restate here:
$$
w = \arg\min_{w\in \R^d} f(w), \textrm{~where~} f(w) = \E_{x\sim \mathcal{D}}[\phi(w,x)].
$$
Recall $\phi(w,x)$ denotes the loss function evaluated for sample $x$ at point $w$.
The algorithm follows a stochastic gradient 
\begin{equation}
w_{t+1} = w_{t} - \eta \nabla_{w_t} \phi(w_t,x_t),
\end{equation}
where $x_t$ is a random sample drawn from distribution $\mathcal{D}$ and $\eta$ is the {\em learning rate}.


In the more general setting, stochastic gradient descent can be viewed as optimizing an arbitrary function $f(w)$ given a stochastic gradient oracle.

\begin{definition}
For a function $f(w):\R^d \to \R$, a function $SG(w)$ that maps a variable to a random vector in $\R^d$ is a stochastic gradient oracle if $\E[SG(w)] = \nabla f(w)$ and $\|SG(w) - \nabla f(w)\| \le Q$.
\end{definition}

In this case the update step of the algorithm becomes $w_{t+1} = w_t - \eta SG(w_t)$.
%

\paragraph{Smoothness and Strong Convexity}
Traditional analysis for stochastic gradient often assumes the function is smooth and strongly convex. A function is $\beta$-smooth if for any two points $w_1,w_2$,
\begin{equation}
\|\nabla f(w_1) - \nabla f(w_2)\| \le \beta\|w_1-w_2\|.
\end{equation}
When $f$ is twice differentiable this is equivalent to assuming that the spectral norm of the Hessian matrix is bounded by $\beta$.
We say a function is $\alpha$-strongly convex if the Hessian at any point has smallest eigenvalue at least $\alpha$ ($\lambda_{min}(\Hess f(w)) \ge \alpha$).

Using these two properties, previous work~\citep{ICML2012Rakhlin_261} shows that stochastic gradient converges at a rate of $1/t$. In this paper we consider non-convex functions, which can still be $\beta$-smooth but cannot be strongly convex.

\paragraph{Smoothness of Hessians} 
%
We also require the Hessian of the function $f$ to be smooth.  We say a function $f(w)$ has $\rho$-Lipschitz Hessian if for any two points $w_1,w_2$ we have
\begin{equation}
\|\Hess f(w_1) - \Hess f(w_2)\| \le \rho \|w_1-w_2\|.
\end{equation}
This is a third order condition that is 
true if the third order derivative exists and is bounded.
%

\subsection{Tensors decomposition}

\label{sec:prelim:tensor}
A $p$-th order tensor is a $p$-dimensional array. In this paper we will mostly consider $4$-th order tensors.
If $T\in \R^{d^4}$ is a $4$-th order tensor, we use $T_{i_1,i_2,i_3,i_4} (i_1,...,i_4\in [d])$ to denote its $(i_1,i_2,i_3,i_4)$-th entry.

Tensors can be constructed from tensor products. We use $(u\otimes v)$ to denote a $2$nd order tensor where $(u\otimes v)_{i,j} = u_iv_j$. This generalizes to higher order and we use $u^{\otimes 4}$ to denote the $4$-th order tensor
$$
[u^{\otimes 4}]_{i_1,i_2,i_3,i_4} = u_{i_1}u_{i_2}u_{i_3}u_{i_4}.
$$ 
We say a $4$-th order tensor $T\in \R^{d^4}$ has an {\em orthogonal decomposition} if it can be written as
\begin{equation}
T = \sum_{i=1}^d a_i^{\otimes 4}, \label{eq:orthodecomp}
\end{equation}
where $a_i$'s are orthonormal vectors that satisfy $\|a_i\| = 1$ and $a_i^T a_j = 0$ for $i\ne j$. We call the vectors $a_i$'s the components of this decomposition. Such a decomposition is unique up to permutation of $a_i$'s and sign-flips.

A tensor also defines a multilinear form (just as a matrix defines a bilinear form), for a $p$-th order tensor $T\in \R^{d^p}$ and matrices $M_i\in \R^{d\times n_i} i\in[p]$, we define
$$
[T(M_1,M_2,...,M_p)]_{i_1,i_2,...,i_p} = \sum_{j_1,j_2,...,j_p\in[d]} T_{j_1,j_2,...,j_p} \prod_{t\in[p]} M_t[i_t,j_t].
$$
That is, the result of the multilinear form $T(M_1,M_2,...,M_p)$ is another tensor in $\R^{n_1\times n_2\times \cdots \times n_p}$. We will most often use vectors or identity matrices in the multilinear form. In particular, for a $4$-th order tensor $T\in \R^{d^4}$ we know $T(I,u,u,u)$ is a vector and $T(I,I,u,u)$ is a matrix. In particular, if $T$ has the orthogonal decomposition in (\ref{eq:orthodecomp}), we know $T(I,u,u,u) = \sum_{i=1}^d (u^T a_i)^3 a_i$ and $T(I,I,u,u) = \sum_{i=1}^d (u^Ta_i)^2 a_ia_i^T$.

Given a tensor $T$ with an orthogonal decomposition, the orthogonal tensor decomposition problem asks to find the individual components $a_1,...,a_d$. This is a central problem in learning many latent variable models, including Hidden Markov Model, multi-view models, topic models, mixture of Gaussians and Independent Component Analysis (ICA). See the discussion and citations in \cite{JMLR:v15:anandkumar14b}. Orthogonal tensor decomposition problem can be solved by many algorithms even when the input is a noisy estimation $\tilde{T} \approx T$ ~\citep{harshman1970foundations,kolda2001orthogonal,JMLR:v15:anandkumar14b}. 
 In practice this approach has been successfully applied to ICA~\citep{comon2002tensor}, topic models~\citep{zou2013contrastive} and community detection~\citep{huang2013fast}.

\section{Stochastic gradient descent for \name~function}

\label{sec:sgd}

In this section we discuss the properties of saddle points, and show if all the saddle points are well-behaved then stochastic gradient descent finds a local minimum for a non-convex function in polynomial time.

\subsection{\bigname~property}
\label{subsec:strictsaddleproperty}
For a twice differentiable function $f(w)$, we call the points stationary points if their gradients are equal to $0$. Stationary points could be local minima, local maxima or saddle points. By local optimality conditions~\citep{wright1999numerical}, in many cases we can tell what type a point $w$ is by looking at its Hessian: if $\Hess f(w)$ is positive definite then $w$ is a local minimum; if $\Hess f(w)$ is negative definite then $w$ is a local maximum; if $\Hess f(w)$ has both positive and negative eigenvalues then $w$ is a saddle point. These criteria do not cover all the cases as there could be degenerate scenarios: $\Hess f(w)$ can be positive semidefinite with an eigenvalue equal to 0, in which case the point could be a local minimum or a saddle point.

If a function does not have these degenerate cases, then we say the function is \name:

\begin{definition}
A twice differentiable function $f(w)$ is {\em \name}, if all its local minima have $\Hess f(w) \succ 0$ and all its other stationary points satisfy $\lambda_{min} (\Hess f(w)) < 0$.
\end{definition}

Intuitively, if we are not at a stationary point, then we can always follow the gradient and reduce the value of the function. If we are at a saddle point, we need to consider a second order Taylor expansion:
$$
f(w+\Delta w) \approx w + (\Delta w)^T \Hess f(w) (\Delta w) + O(\|\Delta w\|^3).
$$
Since the \name~property guarantees $\Hess f(w)$ to have a negative eigenvalue, there is always a point that is near $w$ and has strictly smaller function value. It is possible to make local improvements as long as we have access to second order information. However it is not clear whether the more efficient stochastic gradient updates can work in this setting. 

To make sure the local improvements are significant, we use a robust version of the \name~property:

\begin{definition}
\label{def:robustcondition}
A twice differentiable function $f(w)$ is $(\alpha, \gamma, \epsilon, \delta)$-{\em\name}, if for any point $w$ at least one of the following is true
\begin{enumerate}
\item $\|\nabla f(w)\| \ge \epsilon$.
\item $\lambda_{min}(\Hess f(w)) \le -\gamma$.
\item There is a local minimum $w^\star$ such that $\|w-w^\star\| \le \delta$, and the function $f(w')$ restricted to $2\delta$ neighborhood of $w^\star$ ($\|w'-w^\star\|\le 2\delta$) is $\alpha$-strongly convex.
\end{enumerate}
\end{definition}

Intuitively, this condition says for any point whose gradient is small, it is either close to a robust local minimum, or is a saddle point (or local maximum) with a significant negative eigenvalue. 


\begin{algorithm}[ht]
 \caption{Noisy Stochastic Gradient}
 \label{algo:sgdwn}
 \begin{algorithmic}[1]
 \REQUIRE Stochastic gradient oracle $SG(w)$, initial point $w_0$, desired accuracy $\kappa$.
  \ENSURE $w_t$ that is close to some local minimum $w^\star$.
  \STATE Choose $\eta = \min\{\tilde{O}(\kappa^2 / \log (1/\kappa)), \eta_{\max}\}$, $T = \tilde{O}(1/\eta^2)$
 	\FOR{$t = 0$ to $T-1$}
 	\STATE Sample noise $n$ uniformly from unit sphere.
	\STATE $w_{t+1} \leftarrow w_{t} - \eta (SG(w) + n)$
 		\ENDFOR
 \end{algorithmic}
 \end{algorithm}

We purpose a simple variant of stochastic gradient algorithm, where the only difference to the traditional algorithm is we add an extra noise term to the updates. The main benefit of this additional noise is that we can guarantee there is noise in every direction, which allows the algorithm to effectively explore the local neighborhood around saddle points. If the noise from stochastic gradient oracle already has nonnegligible variance in every direction, our analysis also applies without adding additional noise. We show noise can help the algorithm escape from saddle points and optimize \name~functions.

\begin{theorem} [Main Theorem]\label{thm:sgdmain}
Suppose a function $f(w):\R^d\to \R$ that is $(\alpha, \gamma, \epsilon, \delta)$-\name, and has a stochastic gradient oracle with radius at most $Q$. Further, suppose the function is bounded by $|f(w)| \le B$, is $\beta$-smooth and has $\rho$-Lipschitz Hessian. 
Then there exists a threshold $\eta_{\max} = \tilde{\Theta}(1)$, so that for any $\zeta>0$, and
for any $\eta \le \eta_{\max} / \max\{1, \log(1/\zeta)\}$, with probability at least $1-\zeta$ in $t = \tlO(\eta^{-2}\log (1/\zeta))$ iterations, Algorithm~\ref{algo:sgdwn} (Noisy Gradient Descent) outputs a point $w_t$ that is $\tlO(\sqrt{\eta\log(1/\eta\zeta)})$-close to some local minimum $w^\star$.
\end{theorem}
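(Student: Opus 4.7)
My plan is a case analysis driven by the three clauses of Definition~\ref{def:robustcondition}: at every iterate $w_t$ one of the following holds, (i) $\|\nabla f(w_t)\|\ge\epsilon$ (large-gradient region), (ii) $\lambda_{\min}(\Hess f(w_t))\le-\gamma$ (saddle region), or (iii) $w_t$ lies inside a $\delta$-neighborhood of a local minimum $w^\star$ around which $f$ is $\alpha$-strongly convex. For each case I will produce either a per-step expected decrease in $f$ or an expected contraction toward $w^\star$, and then combine the three using $|f|\le B$ and a martingale/Azuma-type high-probability upgrade.

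Case (i) is routine. Writing the update as $w_{t+1}=w_t-\eta g_t$ with $g_t=SG(w_t)+n_t$, $\E g_t=\nabla f(w_t)$ and $\|g_t\|\le Q+1$, $\beta$-smoothness gives
\[
\E[f(w_{t+1})\mid w_t]\le f(w_t)-\eta\|\nabla f(w_t)\|^2+\tfrac{\beta}{2}\eta^2\,\E\|g_t\|^2,
\]
so for $\eta$ small enough the expected decrease per step is $\Omega(\eta\epsilon^2)$. Case (iii) is the standard strongly convex SGD argument on the squared distance: inside the $2\delta$-neighborhood,
\[
\E\bigl[\|w_{t+1}-w^\star\|^2\,\big|\,w_t\bigr]\le(1-2\eta\alpha)\|w_t-w^\star\|^2+O(\eta^2),
\]
which drives the squared distance to an $O(\eta)$ floor in $\tlO(1/(\eta\alpha))$ steps, matching the $\tlO(\sqrt{\eta\log(1/\eta\zeta)})$ distance in the conclusion. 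The only non-trivial piece here is showing the iterate does not escape the $2\delta$-neighborhood where strong convexity is available, which I will prove by Azuma-type concentration on the partial sums of the noise; this is also what forces the $\eta\lesssim 1/\log(1/\zeta)$ restriction in the theorem.

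The main obstacle is Case (ii): escape from a saddle using only first-order information. My plan is to fix the starting point $w_t$ of a candidate saddle epoch, diagonalize $\Hess f(w_t)$, and split $\R^d$ into the ``bad'' subspace $S_-$ spanned by eigenvectors with eigenvalue $\le-\gamma/2$ and its orthogonal complement. Over an epoch of length $T_e=\Theta(\log(1/\eta)/(\eta\gamma))$, I track $u_k=P_{S_-}(w_{t+k}-w_t)$. Using the second-order Taylor expansion of $\nabla f$ around $w_t$ together with the $\rho$-Lipschitz Hessian bound, the recursion reads approximately
\[
u_{k+1}=\bigl(I-\eta\,\Hess f(w_t)|_{S_-}\bigr)u_k-\eta P_{S_-}\xi_k+O\bigl(\eta\rho\|w_{t+k}-w_t\|^2\bigr),
\]
so the deterministic linear part multiplies $u_k$ by an operator with eigenvalues at least $1+\eta\gamma/2$, i.e.\ $\|u_k\|$ grows geometrically. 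The isotropic injected noise ensures $\E\|u_k\|^2=\tlOmega(\eta)$ within the first few steps, and iterating over $T_e$ steps boosts $\|u_{T_e}\|$ up to a pre-chosen radius $R=\Theta(\gamma/\rho)$ inside which the second-order approximation and the Taylor remainder bound remain valid. The subtle part is maintaining the invariant $\|w_{t+k}-w_t\|\le R$ throughout the epoch, which I will establish inductively together with an Azuma bound that controls the non-$S_-$ components; union-bounding over the $T_e$ steps of the epoch produces the $\log(1/\zeta)$ penalty in $\eta$. Once $\|u_{T_e}\|=\Omega(R)$, a second-order expansion of $f$ itself shows the function value has dropped by $\Omega(\gamma R^2)=\Omega(\gamma^3/\rho^2)$, a constant decrease that is independent of $\eta$.

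To assemble the three cases: Case (i) can account for at most $O(B/(\eta\epsilon^2))$ productive iterations in expectation, and Case (ii) for at most $O(B\rho^2/\gamma^3)$ successful escape epochs of length $\tlO(1/(\eta\gamma))$ each, so the total number of iterations spent outside Case (iii) is $\tlO(1/(\eta\gamma\epsilon^2))$. The residual iterations fall in Case (iii) and converge to the $\tlO(\sqrt{\eta\log(1/\eta\zeta)})$ floor. Summing these budgets, converting expected decrease into ``most iterations land in Case (iii)'' via Markov, and applying the martingale concentration from Cases (ii) and (iii) (which contributes the $\log(1/\zeta)$ factors) yields the stated $T=\tlO(\eta^{-2}\log(1/\zeta))$ iteration count together with the claimed closeness of the output $w_T$ to some local minimum $w^\star$.
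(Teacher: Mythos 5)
Your case decomposition matches the paper's (Lemmas~\ref{lem:gradient}, \ref{lem:minimum}, \ref{lem:saddle} in the main text, Lemmas~\ref{thm::case1}--\ref{thm::case2} in the appendix), and your treatments of the large-gradient case and the locally strongly convex case are essentially the same as the paper's, including the supermartingale/Azuma step that forces $\eta\lesssim 1/\log(1/\zeta)$. The disagreement is in Case~(ii), the saddle escape, and there I believe your plan has a genuine gap.

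Your plan is to track $u_k=P_{S_-}(w_{t+k}-w_t)$, show it grows geometrically by a factor $(1+\eta\gamma/2)$ per step, and run the epoch until $\|u_{T_e}\|=\Omega(R)$ for a pre-chosen radius $R=\Theta(\gamma/\rho)$, harvesting a constant function decrease $\Omega(\gamma^3/\rho^2)$. The problem is that the Taylor remainder you yourself write down, of size $O(\eta\rho\|w_{t+k}-w_t\|^2)$, becomes comparable to the useful drift $\eta\gamma\|u_k\|$ precisely when $\|u_k\|\approx\gamma/\rho$, i.e.\ exactly at the radius you need to reach. So the geometric-growth invariant cannot be maintained up to scale $R$; there is no slack in the constants. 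A second difficulty is that geometric growth is a statement about the deterministic part of the recursion, but the realized $u_k$ is that deterministic part plus a random walk seeded by the bounded (not Gaussian) noise $\xi_k$, and showing the realized trajectory actually reaches a constant radius with high probability (rather than just in expectation, or in one unlucky coordinate) is a separate argument you don't supply. Third, a large escape excursion also carries movement in the stable eigendirections (where the quadratic form is positive) and along $\nabla f(w_t)$, and the net change in $f$ is the sum; once $R=\Theta(1)$ these positive contributions are no longer negligible and have to be weighed against the negative one.

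The paper sidesteps all three issues at once by coupling $w_t$ with a fictitious sequence $\tilde w_t$ that runs SGD on the \emph{exact} local quadratic $\tilde f$, choosing a \emph{fixed} deterministic stopping time $T$ (defined by $\sum_{\tau=0}^{T-1}(1+\eta\gamma_0)^{2\tau}\approx d/(\eta\gamma_0)$, so $(1+\eta\gamma_0)^T\approx\sqrt{d}$), and computing $\E\,\tilde f(\tilde w_T)$ in closed form. At that time scale, $\|\tilde w_T-w_0\|=\tlO(\sqrt\eta)$, so the Taylor error and the coupling error are $\tlO(\eta\log^2(1/\eta))$ (Lemma~\ref{lem::saddle_and_maximum}), which is negligible, and the expected decrease works out to $\tlOmega(\eta)$: the exponentially amplified noise variance in the unstable direction contributes $-\gamma_0\eta^2\sigma^2\sum_\tau(1+\eta\gamma_0)^{2\tau}/2\le -d\eta\sigma^2/2$, which just barely dominates the $+(d-1)\eta\sigma^2/2$ contributed by noise variance pushed outward along the $d-1$ stable directions, and the gradient term is nonpositive. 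This delicate balance between the unstable and stable contributions is what picks out $T$, and is absent from your sketch. Note also that the paper gets only an $\tlOmega(\eta)$ decrease per escape, not a constant one, which is why its final iteration budget is $\tlO(\eta^{-2})$; your claimed constant decrease would give $\tlO(\eta^{-1})$, which is stronger than the theorem you are proving and should itself have been a warning sign. If you want a constant per-escape decrease in the stochastic setting, you will need a qualitatively different argument; the fixed-time expectation computation is what makes the paper's version go through cleanly.
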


Here (and throughout the rest of the paper) $\tlO(\cdot)$ ($\tilde{\Omega},\tilde{\Theta}$) hides the factor that is polynomially dependent on all other parameters (including $Q$, $1/\alpha$, $1/\gamma$, $1/\epsilon$, $1/\delta$, $B$, $\beta$, $\rho$, and $d$), but independent of $\eta$ and $\zeta$. So it focuses on the dependency on $\eta$ and $\zeta$. 
Our proof technique can give explicit dependencies on these parameters however we hide these dependencies for simplicity of presentation. 


\begin{remark} [Decreasing learning rate]
Often analysis of stochastic gradient descent uses decreasing learning rates and the algorithm converges to a local (or global) minimum. Since the function is strongly convex in the small region close to local minimum, we can use Theorem \ref{thm:sgdmain} to first find a point that is close to a local minimum, and then apply standard analysis of SGD in the strongly convex case (where we decrease the learning rate by $1/t$ and get $1/\sqrt{t}$ convergence in $\|w-w^\star\|$).
\end{remark}
%

In the next part we sketch the proof of the main theorem. Details are deferred to Appendix~\ref{sec:unconstrained}.

\subsection{Proof sketch}

In order to prove Theorem~\ref{thm:sgdmain}, we analyze the three cases in Definition~\ref{def:robustcondition}. When the gradient is large, we show the function value decreases in one step (see Lemma~\ref{lem:gradient}); when the point is close to a local minimum, we show with high probability it cannot escape in the next polynomial number of iterations (see Lemma~\ref{lem:minimum}). 

\begin{lemma}[Gradient]
\label{lem:gradient}
Under the assumptions of Theorem~\ref{thm:sgdmain}, for any point with $\|\nabla f(w_t)\| \ge C\sqrt{\eta}$ (where $C = \tilde{\Theta}(1)$) and $C\sqrt{\eta} \le \epsilon$, after one iteration we have $\E[f(w_{t+1})] \le f(w_t) - \tlOmega(\eta^2)$.
\end{lemma}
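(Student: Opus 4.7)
The plan is to apply a one-step descent lemma via $\beta$-smoothness, take expectations using the properties of the stochastic gradient oracle and the injected spherical noise, and then choose the constant $C$ large enough that the linear (in $\eta$) decrease dominates the quadratic (in $\eta^2$) noise terms.

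First I would invoke $\beta$-smoothness in its standard quadratic-upper-bound form
\[
f(w_{t+1}) \le f(w_t) + \nabla f(w_t)^\top (w_{t+1}-w_t) + \tfrac{\beta}{2}\|w_{t+1}-w_t\|^2,
\]
and substitute the update $w_{t+1}-w_t = -\eta(SG(w_t)+n)$, where $n$ is uniform on the unit sphere and independent of $SG(w_t)$. Taking expectation conditioned on $w_t$, the linear term becomes $-\eta\,\nabla f(w_t)^\top \E[SG(w_t)+n] = -\eta\|\nabla f(w_t)\|^2$, because $\E[SG(w_t)] = \nabla f(w_t)$ and $\E[n]=0$.

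Next I would bound the quadratic term. By independence of $n$ and $SG$, and $\E[n]=0$, $\|n\|=1$, we get $\E\|SG(w_t)+n\|^2 = \E\|SG(w_t)\|^2 + 1$. The oracle bound $\|SG(w_t)-\nabla f(w_t)\|\le Q$ gives $\|SG(w_t)\|\le \|\nabla f(w_t)\|+Q$, so
\[
\tfrac{\beta\eta^2}{2}\E\|SG(w_t)+n\|^2 \le \tfrac{\beta\eta^2}{2}\bigl((\|\nabla f(w_t)\|+Q)^2 + 1\bigr).
\]
Writing $g := \|\nabla f(w_t)\|$ and using $(g+Q)^2 \le 2g^2 + 2Q^2$, I obtain
\[
\E[f(w_{t+1})] - f(w_t) \;\le\; -\eta g^2 + \beta\eta^2 g^2 + \beta\eta^2 Q^2 + \tfrac{\beta\eta^2}{2}.
\]

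Finally I would close the argument by choosing $\eta_{\max}$ and $C$ appropriately. Pick $\eta_{\max} \le 1/(2\beta)$, so that $\beta\eta \le 1/2$ and the first two terms combine to at most $-\tfrac{1}{2}\eta g^2$. Using the hypothesis $g \ge C\sqrt{\eta}$, this gives $-\tfrac{1}{2}\eta g^2 \le -\tfrac{C^2}{2}\eta^2$. Choosing $C$ large enough that $C^2/2 \ge 2\beta Q^2 + \beta + 1$ (a $\tilde{\Theta}(1)$ choice), the remaining $O(\eta^2)$ slack terms are absorbed and we conclude
\[
\E[f(w_{t+1})] \le f(w_t) - \tilde{\Omega}(\eta^2),
\]
as desired. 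The only mild subtlety — and the part requiring care rather than real difficulty — is keeping the constants honest: one has to check that the same $C$ and $\eta_{\max}$ chosen here are compatible with the thresholds used in Lemma~\ref{lem:minimum} and Theorem~\ref{thm:sgdmain} (in particular with the requirement $C\sqrt{\eta}\le\epsilon$, which is exactly what lets us stay in the ``large gradient'' case of the robust \name{} definition rather than accidentally landing in the strongly-convex-basin case).
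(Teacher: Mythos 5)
Your proof is correct and follows essentially the same route as the paper: one-step smoothness descent, take conditional expectation to kill the linear noise term, and absorb the remaining $O(\eta^2)$ noise contribution by choosing $C=\tilde\Theta(1)$ large and $\eta_{\max}=\tilde\Theta(1)$ small. The only cosmetic difference is that the paper first merges $SG(w)-\nabla f(w)$ and the spherical noise $n$ into a single zero-mean $\xi$ with $\E[\xi\xi^T]=\sigma^2 I$, so the second-moment calculation gives $\E\|\nabla f+\xi\|^2=\|\nabla f\|^2+\sigma^2 d$ exactly, whereas you bound $\E\|SG\|^2$ via the triangle inequality $(g+Q)^2\le 2g^2+2Q^2$, losing a benign factor of $2$; you could tighten this by instead writing $\E\|SG\|^2=\|\nabla f\|^2+\E\|SG-\nabla f\|^2\le g^2+Q^2$, which again uses $\E[SG]=\nabla f$, but either way the conclusion and the choice of constants go through.
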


The proof of this lemma is a simple application of the smoothness property.

\begin{lemma}[Local minimum]
\label{lem:minimum}
Under the assumptions of Theorem~\ref{thm:sgdmain}, for any point $w_t$ that is $\tlO(\sqrt{\eta}) < \delta$ close to local minimum $w^\star$, in $\tlO(\eta^{-2}\log (1/\zeta))$ number of steps all future $w_{t+i}$'s are $\tlO(\sqrt{\eta\log(1/\eta\zeta)})$-close with probability at least $1-\zeta/2$.
\end{lemma}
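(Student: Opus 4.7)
\textbf{Proof proposal for Lemma~\ref{lem:minimum}.}
The plan is to work in local coordinates $\Delta_t := w_t - w^\star$ and exploit the $\alpha$-strong convexity of $f$ in the $2\delta$-ball around $w^\star$ to show a one-step contraction in squared distance, then upgrade the in-expectation statement to a high-probability uniform-in-$t$ statement via a supermartingale argument.

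First I would decompose one update. Writing $SG(w_t)=\nabla f(w_t)+\xi_t$ with $\E[\xi_t\mid w_t]=0$ and $\|\xi_t\|\le Q$, and letting $n_t$ be the injected unit-sphere noise, we have
\[
\|\Delta_{t+1}\|^2 = \|\Delta_t\|^2 - 2\eta\langle \nabla f(w_t),\Delta_t\rangle - 2\eta\langle \xi_t+n_t,\Delta_t\rangle + \eta^2\|\nabla f(w_t)+\xi_t+n_t\|^2.
\]
As long as $w_t$ lies in the $2\delta$-ball, the integrated form $\nabla f(w_t)=\int_0^1 \Hess f(w^\star+s\Delta_t)\Delta_t\,ds$ combined with $\Hess f\succeq \alpha I$ in that ball gives $\langle \nabla f(w_t),\Delta_t\rangle\ge \alpha\|\Delta_t\|^2$. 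The $\beta$-smoothness and the bounds on $\xi_t,n_t$ control the quadratic term by $O(\eta^2)$. Taking conditional expectation yields the contraction
\[
\E\bigl[\|\Delta_{t+1}\|^2 \;\big|\; w_t\bigr] \;\le\; (1-2\eta\alpha)\|\Delta_t\|^2 + c_0\eta^2,
\]
for some $c_0=\tilde\Theta(1)$, valid whenever $w_t$ is inside the strongly-convex ball. Iterating shows the equilibrium scale of $\|\Delta_t\|^2$ is $O(\eta)$, i.e.\ distance $\tilde O(\sqrt{\eta})$ in expectation, matching the lemma up to the $\sqrt{\log(1/\eta\zeta)}$ factor.

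Next I would promote this to a uniform high-probability bound. The natural object is the shifted process $Y_t := (\|\Delta_t\|^2 - r^2)_{+}$ where $r^2 = c_1\eta\log(1/\eta\zeta)$ is chosen so that whenever $\|\Delta_t\|^2\ge r^2$ the drift term $-2\eta\alpha\|\Delta_t\|^2$ dominates the $c_0\eta^2$ noise, making $Y_t$ a supermartingale restricted to the ``inside the $2\delta$-ball'' event. Each increment $|Y_{t+1}-Y_t|$ is almost surely $O(\eta\sqrt{\eta\log(1/\eta\zeta)})$ when $\|\Delta_t\|\le 2\delta$, because $\|\Delta_{t+1}-\Delta_t\|=O(\eta)$ and $\|\Delta_t\|=\tilde O(\sqrt\eta)$. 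An Azuma/Freedman-type inequality applied to the stopped supermartingale $Y_{t\wedge\tau}$, where $\tau$ is the first exit time from the $2\delta$-ball, then bounds $\Pr[\max_{i\le T}\|\Delta_{t+i}\|\ge \tilde O(\sqrt{\eta\log(1/\eta\zeta)})]$ by something like $T\cdot\exp(-\Omega(\log(1/\eta\zeta)))$; plugging in $T=\tilde O(\eta^{-2}\log(1/\zeta))$ and absorbing polynomial factors into the $\tilde O$ notation gives failure probability at most $\zeta/2$.

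Finally I would tie up the circularity: the supermartingale property required $w_{t+i}$ to lie in the $2\delta$-ball, but the high-probability bound we derive ensures precisely this (since $\tilde O(\sqrt{\eta\log(1/\eta\zeta)})\ll \delta$ for $\eta$ small enough). This is the main obstacle \textemdash{} strong convexity is only a local hypothesis, so the contraction argument and the confinement argument must be bootstrapped together via a stopping-time/union-bound construction rather than applied independently. Once this localization is in place, the lemma follows by combining the one-step contraction, the concentration inequality for $Y_{t\wedge\tau}$, and a union bound over the $T$ iterations to certify that $\tau>T$ with probability $\ge 1-\zeta/2$.
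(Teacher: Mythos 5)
Your plan mirrors the paper's proof (Lemma~\ref{thm::case3} in the appendix) quite closely: both use local $\alpha$-strong convexity to derive the one-step contraction $\E[\|\Delta_{t+1}\|^2\mid\mathfrak{F}_t] \le (1-\eta\alpha)\|\Delta_t\|^2 + O(\eta^2)$ on the inside-ball event, both construct a supermartingale from a shifted version of $\|\Delta_t\|^2$, both use Azuma-Hoeffding and then a union bound over $T=\tilde O(\eta^{-2}\log(1/\zeta))$ steps, and both handle the localization issue by restricting to indicator events / a stopping time and bootstrapping.

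The one place where your details need tightening is the choice of shifted process. You take $Y_t = (\|\Delta_t\|^2 - r^2)_+$ and assert it is a supermartingale on the inside-ball event, but this is not quite right: $(\cdot)_+$ is convex, so Jensen goes the wrong way, and when $\|\Delta_t\|^2$ is just below $r^2$ (so $Y_t=0$) the one-step jump can push $\|\Delta_{t+1}\|^2$ above $r^2$ with positive probability, making $\E[Y_{t+1}\mid\mathcal{F}_t]>0=Y_t$. This leak is small and can be repaired (e.g., by enlarging $r^2$ by an $O(\eta^{1.5})$ increment buffer before applying Azuma, or by stopping $Y$ only when it clears $r^2$), but as written the supermartingale claim has a hole. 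The paper sidesteps it entirely with the exponentially reweighted process $G_t = (1-\eta\alpha)^{-t}(\|\Delta_t\|^2 - \eta/\alpha)$, for which $\E[G_t 1_{\mathfrak{E}_{t-1}}\mid\mathfrak{F}_{t-1}] \le G_{t-1}1_{\mathfrak{E}_{t-2}}$ holds exactly on the good event with no truncation; the geometric factor $(1-\eta\alpha)^{-t}$ absorbs the contraction precisely, and Azuma is then applied to $G_t 1_{\mathfrak{E}_{t-1}}$ directly. Everything else in your plan --- the increment bound $\tilde O(\eta^{1.5}\log^{1/2}(1/\eta\zeta))$ per step, the per-step failure probability $\tilde O(\eta^3\zeta)$, the union bound yielding $\tilde O(\eta\zeta\log(1/\zeta))\le\zeta/2$ for $\eta\le\eta_{\max}/\log(1/\zeta)$ --- matches the paper.
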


The proof of this lemma is similar to the standard analysis \citep{ICML2012Rakhlin_261} of stochastic gradient descent in the smooth and strongly convex setting, except we only have local strongly convexity. The proof appears in Appendix~\ref{sec:unconstrained}.


The hardest case is when the point is ``close'' to a saddle point: it has gradient smaller than $\epsilon$ and smallest eigenvalue of the Hessian bounded by $-\gamma$. In this case we show the noise in our algorithm helps the algorithm to escape:

\begin{lemma}[Saddle point]
\label{lem:saddle}
Under the assumptions of Theorem~\ref{thm:sgdmain}, for any point $w_t$ where $\|\nabla f(w_t)\| \le C\sqrt{\eta}$ (for the same $C$ as in Lemma~\ref{lem:gradient}), and $\lambda_{\min}(\Hess f(w_t)) \le -\gamma$, there is a number of steps $T$ that depends on $w_t$ such that $\E[f(w_{t+T})] \le f(w_t)-\tlOmega(\eta)$. The number of steps $T$ has a fixed upper bound $T_{max}$ that is independent of $w_t$ where $T \le T_{max} = \tilde{O}(1/\eta)$.
\end{lemma}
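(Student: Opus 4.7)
The plan is to track the expected function value along roughly $T=\tilde{\Theta}(1/\eta)$ iterations and show that the noise forces the iterate to drift in the direction of the most negative eigenvector of $\Hess f(w_t)$, producing a decrease of $\tilde\Omega(\eta)$ via the quadratic term of Taylor's expansion. Throughout the analysis I would condition on being inside a small ball around $w_t$ of radius $\tilde O(\sqrt{\eta \log(1/\eta)})$ (so that the second-order Taylor approximation controlled by $\rho$ is accurate), and verify after the fact that this containment is maintained with high probability.

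The setup: fix $H := \Hess f(w_t)$ and let $e$ be a unit eigenvector associated with the eigenvalue $-\gamma' \le -\gamma$. Write $\Delta_k := w_{t+k}-w_t$ and decompose $\Delta_k = \alpha_k e + \beta_k$ with $\beta_k \perp e$. Using $\beta$-smoothness and the update rule $w_{t+1} = w_t - \eta(SG(w_t)+n)$, I would expand
\begin{equation*}
\alpha_{k+1} = (1+\eta\gamma') \alpha_k - \eta\, e^\top \nabla f(w_t) - \eta\, e^\top \xi_k + \eta\, g_k,
\end{equation*}
where $\xi_k$ collects the stochastic oracle noise plus the added unit-sphere noise (mean zero, with $\Omega(1/d)$ variance in the $e$ direction from the added noise alone), and $g_k$ is a higher-order term controlled by $\rho\|\Delta_k\|^2+\beta\|\Delta_k\|$. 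A standard martingale/variance recursion then gives
\begin{equation*}
\E[\alpha_k^2] \;\ge\; \Omega\!\left(\tfrac{\eta}{\gamma'd}\right)\bigl((1+\eta\gamma')^{2k}-1\bigr) - (\text{small drift/bias terms}),
\end{equation*}
so after $T = \Theta\bigl(\log(1/\eta)/(\eta\gamma')\bigr)$ steps we reach $\E[\alpha_T^2] = \tilde\Omega(1/\gamma')$, well beyond the $O(\eta)$ scale.

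With this variance lower bound in hand, I would use a second-order Taylor expansion at $w_t$:
\begin{equation*}
\E[f(w_{t+T})] \le f(w_t) + \nabla f(w_t)^\top \E[\Delta_T] + \tfrac{1}{2}\E\!\bigl[\Delta_T^\top H \Delta_T\bigr] + \tfrac{\rho}{6}\E\!\bigl[\|\Delta_T\|^3\bigr].
\end{equation*}
The linear term is bounded in magnitude by $\|\nabla f(w_t)\|\cdot\E\|\Delta_T\| \le C\sqrt{\eta}\cdot \tilde O(\sqrt{\eta})=\tilde O(\eta)$ with a small constant. The quadratic term splits as $\tfrac{1}{2}(-\gamma')\E[\alpha_T^2] + \tfrac{1}{2}\E[\beta_T^\top H \beta_T]$. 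The first piece is $\le -\tilde\Omega(\eta)$ with constant significantly larger than the linear-term bound (by choosing $C$ small), while the second piece is bounded below by $-\tfrac{\beta}{2}\E\|\beta_T\|^2$; crucially, $\|\beta_T\|$ grows only polynomially in $k$ (no exponential amplification, since eigenvalues orthogonal to $e$ are $\ge -\gamma'$ but are damped at rate at most $(1+\eta\beta)$), so one tunes constants to ensure this term does not cancel the gain. The cubic remainder is controlled by the $\tilde O(\sqrt{\eta\log(1/\eta)})^3$ displacement bound, which is $o(\eta)$.

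The main obstacle is the delicate tension between needing $\E[\alpha_T^2]$ to grow to a constant-order quantity (requiring $T$ large enough for exponential amplification to win) while simultaneously keeping $\|\Delta_T\|$ small enough that (i) the Hessian at intermediate points is close to $H$ (via $\rho$-Lipschitzness), (ii) the cubic remainder term is negligible, and (iii) the orthogonal $\beta_k$ does not dwarf $\alpha_T$. I would address (i)--(iii) by an inductive Doob-style argument: on the high-probability event that $\|\Delta_k\| \le \tilde O(\sqrt{\eta \log(1/\eta)})$ for all $k\le T$, the recursions for $\alpha_k$ and $\beta_k$ behave as described; a martingale concentration bound shows this event holds with probability $\ge 1-\eta^{\Theta(1)}$, so the failure contribution to $\E[f(w_{t+T})]-f(w_t)$ is absorbed by the boundedness of $f$ times a small probability. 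Finally, $T_{\max}=\tilde O(1/\eta)$ is uniform since the required $T$ depends on $w_t$ only through $\gamma'\ge\gamma$ via $T \le \Theta(\log(1/\eta)/(\eta\gamma))$.
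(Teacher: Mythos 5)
Your high-level strategy (Taylor-expand $f$ around $w_t$, track the growth of the component $\alpha_k$ along the most negative eigenvector, and pay a small price for the cubic remainder and the orthogonal part) is in the same family as the paper's argument, but the proposal contains a quantitative inconsistency that breaks it, and it under-treats the orthogonal directions, which is where the real difficulty lies.

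The paper couples the true iterates to the SGD trajectory on the exact local quadratic $\tilde f$, computes the trajectory in closed form, and bounds the coupling error separately. You instead push a direct recursion for $\alpha_k$; that can be made to work, but not with the numbers you chose. You assert that after $T = \Theta(\log(1/\eta)/(\eta\gamma'))$ steps one reaches $\E[\alpha_T^2] = \tilde\Omega(1/\gamma')$, ``well beyond the $O(\eta)$ scale.'' That is incompatible with the rest of your argument: if $\E[\alpha_T^2]$ is bounded away from zero uniformly in $\eta$, then $\|\Delta_T\| \ge |\alpha_T|$ is typically $\Theta(1)$, the displacement bound $\|\Delta_T\| \le \tilde O(\sqrt{\eta\log(1/\eta)})$ you later invoke cannot hold, and the $\rho\|\Delta_T\|^3$ remainder is no longer negligible. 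The correct target is the one the paper enforces via Eq.~(\ref{choose_t}): pick $T$ so that $\sum_{\tau=0}^{T-1}(1+\eta\gamma_0)^{2\tau} \approx d/(\eta\gamma_0)$, which gives $\E[\alpha_T^2] = \Theta(\eta\sigma^2 d/\gamma_0)$ --- still $O(\eta)$ --- and a quadratic-term gain of order $-\eta\sigma^2 d/2$, not of constant order. Your stated quantitative decrease ($\le -\tilde\Omega(\eta)$ for the first piece) contradicts your own claimed size of $\E[\alpha_T^2]$; it would be $-\tilde\Omega(1)$.

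The second, more structural gap is the handling of $\tfrac{1}{2}\E[\beta_T^\top H\beta_T]$. You need an \emph{upper} bound on this term, and the crude bound $\tfrac{1}{2}\E[\beta_T^\top H\beta_T] \le \tfrac{\beta}{2}\E\|\beta_T\|^2$ does not suffice: in the direction of a small positive eigenvalue $\lambda_i$, the stationary variance scales like $\eta/\lambda_i$, so $\E\|\beta_T\|^2$ can be as large as $\Theta(\eta d/\alpha)$; multiplying by $\beta/2$ yields $\Theta(\beta\eta d/\alpha)$, which dwarfs the $\Theta(\eta d)$ gain unless $\beta/\alpha = O(1)$, which is not assumed. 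The paper avoids this by never decoupling $\lambda$ from the variance: it bounds the per-eigenvalue product $\lambda_i\sum_\tau(1-\eta\lambda_i)^{2\tau}$ by $1/\eta$ for every $\lambda_i>0$, so the aggregate positive contribution is at most $(d-1)\eta\sigma^2/2$ regardless of the eigenvalue distribution; the choice of $T$ then forces the negative-eigenvalue term to exceed $d\eta\sigma^2/2$, producing a net $-\eta\sigma^2/2$. The two contributions are of the same order in $\eta$ and $d$, so ``tune constants'' is not enough --- the exact calibration of $T$ against the geometric sum is the crux. You would need to replace your crude $\beta$-smoothness bound with an eigenvalue-by-eigenvalue computation and re-derive the correct target size of $\E[\alpha_T^2]$ to repair the argument.
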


Intuitively, at point $w_t$ there is a good direction that is hiding in the Hessian. The hope of the algorithm is that the additional (or inherent) noise in the update step makes a small step towards the correct direction, and then the gradient information will reinforce this small perturbation and the future updates will ``slide'' down the correct direction. 

To make this more formal, we consider a coupled sequence of updates $\tilde{w}$ such that the function to minimize is just the local second order approximation $$\tilde{f}(w) = f(w_t) + \nabla f(w_t)^T (w-w_t) + \frac{1}{2}(w-w_t)^T\Hess f(w_t) (w-w_t).$$


	The dynamics of stochastic gradient descent for this quadratic function is easy to analyze as 
	$\tilde{w}_{t+i}$ can be calculated analytically. Indeed, we show the expectation of 
	$\tilde{f}(\tilde{w})$ will decrease. We then use the smoothness of the function to show that as long as the points did not go very far from $w_t$, the two update sequences $\tilde{w}$ and $w$ will remain close to each other, and thus $\tilde{f}(\tilde{w}_{t+i}) \approx f(w_{t+i})$. Finally we prove the future $w_{t+i}$'s (in the next $T$ steps) will remain close to $w_t$ with high probability by Martingale bounds. The detailed proof appears in Appendix~\ref{sec:unconstrained}.

With these three lemmas it is easy to prove the main theorem. Intuitively, as long as there is a small probability of being $\tlO(\sqrt{\eta})$-close to a local minimum, we can always apply Lemma~\ref{lem:gradient} or Lemma~\ref{lem:saddle} to make the expected function value decrease by $\tlOmega(\eta)$ in at most $\tlO(1/\eta)$ iterations, this cannot go on for more than $\tlO(1/\eta^2)$ iterations because in that case the expected function value will decrease by more than $2B$, but $\max f(x) - \min f(x) \le 2B$ by our assumption. Therefore in $\tlO(1/\eta^2)$ steps with at least constant probability $w_t$ will become $\tilde{O}(\sqrt{\eta})$-close to a local minimum. By Lemma~\ref{lem:minimum} we know once it is close it will almost always stay close, so we can repeat this $\log (1/\zeta)$ times to get the high probability result. More details appear in Appendix~\ref{sec:unconstrained}.
%
%
%
%
\subsection{Constrained Problems}
\label{sec:constrainedproblem}

In many cases, the problem we are facing are constrained optimization problems. In this part we briefly describe how to adapt the analysis to problems with equality constraints (which suffices for the tensor application). Dealing with general inequality constraint is left as future work.

For a constrained optimization problem:
\begin{align}
&\min_{w\in \R^d} \quad \quad  f(w) \\
&\text{s.t.} \quad \quad c_i(w) = 0, \quad \quad i\in[m]\nonumber
\end{align}
in general we need to consider the set of points in a low dimensional manifold that is defined by the constraints. In particular, in the algorithm after every step we need to project back to this manifold (see Algorithm~\ref{algo:psgdwn} where $\Pi_\mathcal{W}$ is the projection to this manifold).


\begin{algorithm}[ht]
 \caption{Projected Noisy Stochastic Gradient}
 \label{algo:psgdwn}
 \begin{algorithmic}[1]
 \REQUIRE Stochastic gradient oracle $SG(w)$, initial point $w_0$, desired accuracy $\kappa$.
  \ENSURE $w_t$ that is close to some local minimum $w^\star$.
  \STATE Choose $\eta = \min\{\tilde{O}(\kappa^2/\log (1/\kappa)), \eta_{\max}\}$, $T = \tilde{O}(1/\eta^2)$
 	\FOR{$t = 0$ to $T-1$}
 	\STATE Sample noise $n$ uniformly from unit sphere.
	\STATE $v_{t+1} \leftarrow w_{t} - \eta (SG(w) + n)$
	\STATE $w_{t+1} = \Pi_{\mathcal{W}}(v_{t+1})$
 		\ENDFOR
 \end{algorithmic}
 \end{algorithm}

For constrained optimization it is common to consider the Lagrangian:
\begin{equation}
\mathcal{L}(w, \lambda) =  f(w) - \sum_{i=1}^m \lambda_i c_i(w).
\end{equation}


Under common regularity conditions, it is possible to compute the value of the Lagrangian multipliers: $$\lambda^*(w)=\arg\min_{\lambda} \|\nabla_w \mathcal{L}(w, \lambda)\|.$$ We can also define the tangent space, which contains all directions that are orthogonal to all the gradients of the constraints: $\mathcal{T}(w) =\{v: \nabla c_i(w)^T v = 0; ~ i=1, \cdots, m \}$. In this case the corresponding gradient and Hessian we consider are the first-order and second-order partial derivative of Lagrangian $\mathcal{L}$
at point $(w, \lambda^*(w))$:
\begin{align}
&\chi(w) = \nabla_w \mathcal{L}(w, \lambda) |_{(w, \lambda^*(w))} =\nabla f(w) - \sum_{i=1}^m \lambda^*_i(w) \nabla c_i(w)  \\
&\mathfrak{M}(w) = \Hess_{ww} \mathcal{L}(w, \lambda) |_{(w, \lambda^*(w))} = \Hess f(w) - \sum_{i=1}^m \lambda^*_i(w) \nabla^2 c_i(w) 
\end{align}

We replace the gradient and Hessian with $\chi(w)$ and $\mathfrak{M}(w)$, and when computing eigenvectors of $\mathfrak{M}(w)$ we focus on its projection on the tangent space. In this way, we can get a similar definition for \name ~(see Appendix~\ref{sec:constrained}), and the following theorem.



\begin{theorem}(informal)\label{thm:constrainedinformal}
Under regularity conditions and smoothness conditions, if a constrained optimization problem satisfies \name~property, then for a small enough $\eta$, in $\tlO(\eta^{-2}\log 1/\zeta)$ iterations Projected Noisy Gradient Descent (Algorithm~\ref{algo:psgdwn}) outputs a point $w$ that is $\tlO(\sqrt{\eta}\log (1/\eta\zeta))$ close to a local minimum with probability at least $1-\zeta$.
\end{theorem}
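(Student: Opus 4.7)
}
The plan is to mirror the three-case decomposition used for Theorem~\ref{thm:sgdmain}, but to carry out each step on the constraint manifold $\mathcal{W}=\{w:c_i(w)=0,\,i\in[m]\}$ rather than in $\R^d$, using the Lagrangian gradient $\chi(w)$ and Lagrangian Hessian $\mathfrak{M}(w)$ (restricted to the tangent space $\mathcal{T}(w)$) in place of $\nabla f$ and $\Hess f$. Concretely, I would first state the constrained analog of Definition~\ref{def:robustcondition}: at every $w\in\mathcal{W}$, either $\|\chi(w)\|\ge\epsilon$, or the projection of $\mathfrak{M}(w)$ onto $\mathcal{T}(w)$ has an eigenvalue $\le-\gamma$, or $w$ is $\delta$-close to a local minimum $w^\star$ around which $f$ restricted to $\mathcal{W}$ is $\alpha$-strongly convex. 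Regularity (e.g.\ LICQ) guarantees that $\lambda^*(w)$ is well-defined and smooth, so $\chi,\mathfrak{M}$ inherit Lipschitz bounds from $f$ and $\{c_i\}$.

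The key auxiliary step is a \emph{projection lemma}: if $v_{t+1}=w_t-\eta(SG(w_t)+n)$ and $w_{t+1}=\Pi_{\mathcal{W}}(v_{t+1})$, then $w_{t+1}-w_t$ equals the tangent-space component of $-\eta(SG(w_t)+n)$ up to an $O(\eta^2)$ error in the normal direction, and $f(w_{t+1})=f(v_{t+1})+O(\eta^2)$. This follows from a second-order expansion of the constraints at $w_t$ together with the fact that $\Pi_{\mathcal{W}}$ moves a point $v$ only by $O(\|v-w_t\|^2)$ along the normal bundle when $v$ is close to $w_t$. Once this is established, one can replace $-\eta\,SG(w_t)$ in the ambient update by $-\eta\,P_{\mathcal{T}(w_t)}\chi(w_t)$ plus a zero-mean noise (with bounded variance) supported in $\mathcal{T}(w_t)$, plus an $O(\eta^2)$ manifold-curvature term.

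With this lemma in hand I would reprove the three lemmas. For the \textbf{gradient} case, the Taylor expansion of $f$ along the geodesic (or just along $\mathcal{W}$ using the projection lemma) gives $\E[f(w_{t+1})]\le f(w_t)-\eta\|P_{\mathcal{T}}\chi(w_t)\|^2+O(\eta^2)$, and when $\|\chi(w_t)\|\ge C\sqrt{\eta}$ its tangent component dominates, yielding the $-\tlOmega(\eta^2)$ decrease. For the \textbf{local minimum} case, I would run the standard strongly-convex SGD argument on the restriction of $f$ to a geodesic ball of radius $2\delta$ around $w^\star$; the additional projection error and injected noise contribute only lower-order terms, so a martingale/supermartingale argument shows $w_{t+i}$ stays $\tlO(\sqrt{\eta\log(1/\eta\zeta)})$-close for the next $\tlO(\eta^{-2}\log(1/\zeta))$ steps. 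For the \textbf{saddle} case, I would couple the iterates with the linearized process $\tilde w_{t+i}$ that descends on the quadratic $\tilde f(w)=f(w_t)+\chi(w_t)^T(w-w_t)+\tfrac12(w-w_t)^T\mathfrak{M}(w_t)(w-w_t)$ restricted to the tangent space $\mathcal{T}(w_t)$; since the injected noise $n$ is uniform on the sphere, its projection onto $\mathcal{T}(w_t)$ still has $\Omega(1/d)$ variance in the direction of the most negative eigenvector of $P_{\mathcal{T}}\mathfrak{M}(w_t)P_{\mathcal{T}}$, so the same analytic computation as in Lemma~\ref{lem:saddle} shows $\E[\tilde f(\tilde w_{t+T})]\le\tilde f(w_t)-\tlOmega(\eta)$ for some $T\le T_{\max}=\tlO(1/\eta)$. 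A Martingale concentration bound keeps $\tilde w_{t+i}$ close to $w_{t+i}$, and the projection lemma transfers the decrease from $\tilde f$ back to $f$.

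The main obstacle is controlling the interaction between the curvature of $\mathcal{W}$ and the two dynamics we compare in the saddle-point argument: the linearized (flat) trajectory $\tilde w_{t+i}$ lives in a linear subspace, while $w_{t+i}$ lives on a curved manifold, so the coupling accumulates an $O(\eta^2 i^2)$ error per step that we must keep below the $\tlO(\sqrt{\eta})$ motion of $\tilde w$ throughout all $T\le\tlO(1/\eta)$ steps. This requires both the Hessian-Lipschitz assumption on $f$ and a Lipschitz assumption on the second fundamental form of $\mathcal{W}$ (which follows from bounded second derivatives of $c_i$ and the regularity condition). Once this coupling is closed, stitching the three lemmas together exactly as in the unconstrained proof yields the stated $\tlO(\eta^{-2}\log(1/\zeta))$ iteration complexity with failure probability $\zeta$.
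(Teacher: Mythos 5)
Your proposal follows essentially the same strategy as the paper's Appendix~B: you replace $\nabla f$ and $\Hess f$ by the Lagrangian gradient $\chi(w)$ and Lagrangian Hessian $\mathfrak{M}(w)$ restricted to the tangent space $\mathcal{T}(w)$, state the constrained analog of Definition~\ref{def:robustcondition}, prove a projection lemma showing the PSGD step equals the tangent component of the ambient step up to a $\tlO(\eta^2)$ correction, and then reprove the gradient/local-minimum/saddle lemmas using that lemma, coupling the saddle case to a linearized process living in $\mathcal{T}(w_t)$. That is exactly the paper's decomposition (Lemma~\ref{lem::projection_distance}, Lemma~\ref{PSGD_equivalent}, then Lemmas~\ref{thm::case1_constraint}--\ref{thm::case2_constraint}).

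Two places where your sketch is imprecise enough to matter. First, the regularity condition you invoke as ``e.g.\ LICQ'' is too weak: LICQ alone only guarantees $\lambda^*(w)$ is well-defined, not that it (and hence $\chi$, $\mathfrak{M}$, $C^\dagger$) is bounded and Lipschitz with quantitative constants. The paper introduces a robust variant (RLICQ: $\sigma_{\min}(C(w))\ge\alpha_c$ uniformly on the feasible set) precisely so the curvature of the manifold and the smoothness of $\lambda^*$ can be bounded explicitly (Lemma~\ref{lem:constrainedsmooth}). You also do not flag that $\nabla\chi = \mathfrak{M}+\mathfrak{N}$ has an asymmetric piece $\mathfrak{N}(w)=-\sum_i\nabla c_i(w)\nabla\lambda^*_i(w)^T$; the argument goes through because $\mathfrak{N}(w)v$ always lands in the normal space $\mathcal{T}^c(w)$ (Remark after the definition of $\mathfrak{N}$), but one needs to state and use that property to discard the extra term in both the strong-convexity and the saddle arguments. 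Second, your error scaling for the coupling is off: you describe the accumulated deviation as ``$O(\eta^2 i^2)$'' which, if taken literally, is $\Theta(1)$ after $T=\tlO(1/\eta)$ steps and would break the transfer of the $\tlOmega(\eta)$ decrease from $\tilde f$ back to $f$. The correct bound is $\|w_t-\tilde w_t\|\le\tlO(\eta\log^2(1/\eta))$ for all $t\le T$ (Lemma~\ref{lem::saddle_and_maximum_constraint}), and getting it requires the supermartingale-plus-Azuma machinery from the unconstrained Lemma~\ref{lem::saddle_and_maximum}, adapted so that the per-step $\tlO(\eta^2)$ projection corrections and normal-space leakage are tracked against the $(1+\eta\gamma_0)^{t-\tau}$ amplification factors, which are only $\tlO(1)$ at $t=T$ because of the specific choice of $T$ in Eq.~(\ref{choose_t_constraint}). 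With those two repairs your plan is the paper's proof.
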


Detailed discussions and formal version of this theorem are deferred to Appendix~\ref{sec:constrained}.



\section{Online Tensor Decomposition}\label{sec:tensors}

In this section we describe how to apply our stochastic gradient descent analysis to tensor decomposition problems. We first give a new formulation of tensor decomposition as an optimization problem, and show that it satisfies the \name~property. Then we explain how to compute stochastic gradient in a simple example of Independent Component Analysis (ICA)~\citep{icabook}.

\subsection{Optimization problem for tensor decomposition}

Given a tensor $T\in \R^{d^4}$ that has an orthogonal decomposition
\begin{equation}
T = \sum_{i=1}^d a_i^{\otimes 4},
\end{equation}
where the components $a_i$'s are orthonormal vectors ($\|a_i\| = 1$, $a_i^Ta_j = 0$ for $i\ne j$), the goal of orthogonal tensor decomposition is to find the components $a_i$'s.

This problem has inherent symmetry: for any permutation $\pi$ and any set of $\kappa_i\in \{\pm 1\},i\in[d]$, we know $u_i = \kappa_i a_{\pi(i)}$ is also a valid solution. This symmetry property makes the natural optimization problems non-convex.

In this section we will give a new formulation of orthogonal tensor decomposition as an optimization problem, and show that this new problem satisfies the \name~property.

Previously, \cite{frieze1996learning} solves the problem of finding one component, with the following objective function
\begin{equation}\label{eq:findone}
\max\limits_{\|u\|^2 = 1}  \quad T(u,u,u,u).
\end{equation}
In Appendix \ref{sec:warmup}, as a warm-up example we show this function is indeed \name, and we can apply Theorem~\ref{thm:constrainedinformal} to prove global convergence of stochastic gradient descent algorithm.

It is possible to find all components of a tensor by iteratively finding one component, and do careful {\em deflation}, as described in \cite{JMLR:v15:anandkumar14b} or \cite{arora2012provable}. However, in practice the most popular approaches like Alternating Least Squares \citep{comon2009tensor} or FastICA \citep{hyvarinen1999fast} try to use a single optimization problem to find all the components. Empirically these algorithms are often more robust to noise and model misspecification.

The most straight-forward formulation of the problem aims to minimize the {\em reconstruction error}
\begin{equation}\label{eq:reconstruction}
\min\limits_{\forall i, \|u_i\|^2 = 1} \quad \| T - \sum_{i=1}^d u_i^{\otimes 4}\|_F^2.
\end{equation}
Here $\|\cdot \|_F$ is the Frobenius norm of the tensor which is equal to the $\ell_2$ norm when we view the tensor as a $d^4$ dimensional vector. However, it is not clear whether this function satisfies the \name~property, and empirically stochastic gradient descent is unstable for this objective.

We propose a new objective that aims to minimize the correlation between different components:
\begin{equation}\label{eq:hardprob}
\min\limits_{\forall i, \|u_i\|^2 = 1}  \quad \sum_{i\ne j} T(u_i,u_i,u_j,u_j),
\end{equation}
To understand this objective intuitively, we first expand vectors $u_k$ in the orthogonal basis formed by $\{a_i\}$'s. That is, we can write $u_k = \sum_{i=1}^{d}z_k(i) a_i$, where $z_k(i)$ are scalars that correspond to the coordinates in the $\{a_i\}$ basis. In this way we can rewrite $T(u_k,u_k,u_l,u_l) = \sum_{i=1}^{d} (z_k(i))^2 (z_l(i))^2$. From this form it is clear that the $T(u_k,u_k,u_l,u_l)$ is always nonnegative, and is equal to $0$  only when the support of $z_k$ and $z_l$ do not intersect. For the objective function, we know in order for it to be equal to 0 the $z$'s must have disjoint support. Therefore, we claim that $\{u_k\}, \forall k\in[d]$ is equivalent to $\{a_i\}, \forall i\in[d]$ up to permutation and sign flips when the global minimum (which is 0) is achieved. 

We further show that this optimization program satisfies the \name~property and all its local minima in fact achieves global minimum value. The proof is deferred to Appendix \ref{sec:hardcase}.
\begin{theorem}
The optimization problem (\ref{eq:hardprob}) is $(\alpha, \gamma, \epsilon,\delta)$-\name, for $\alpha = 1$ and $\gamma,\epsilon,\delta = 1/\mbox{poly}(d)$. Moreover, all its local minima have the form $u_i = \kappa_i a_{\pi(i)}$ for some $\kappa_i = \pm 1$ and permutation $\pi(i)$.
\end{theorem}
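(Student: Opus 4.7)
The plan is to reformulate the problem in the natural coordinate system induced by the true components and then reduce the strict saddle property to a combinatorial pigeonhole analysis of the support patterns allowed at critical points.

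First I would change variables to the orthonormal basis $\{a_i\}$: writing $u_k = \sum_{i=1}^d z_k(i) a_i$, the unit-sphere constraints become $\|z_k\|_2^2 = 1$ and
\[
F(Z) \;:=\; \sum_{k\neq l} T(u_k,u_k,u_l,u_l) \;=\; \sum_{k\neq l}\sum_i z_k(i)^2 z_l(i)^2 \;=\; \sum_i r_i^2 - \sum_k \|z_k\|_4^4,
\]
where $r_i = \sum_k z_k(i)^2$. Because the change of basis is orthogonal, all smoothness, Hessian-Lipschitz and strict-saddle parameters transfer without loss. I would then form the Lagrangian $\mathcal{L} = F - \sum_k \lambda_k(\|z_k\|^2-1)$; stationarity reads $4 z_k(i)(r_i - z_k(i)^2) = 2\lambda_k z_k(i)$, so at any critical point, for every $i$ in the support of row $z_k$ one has $\lambda_k/2 = r_i - z_k(i)^2$.

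Second, I would identify the global minima. Since $F\ge 0$ with equality iff the row supports are pairwise disjoint, and $d$ unit-norm rows with disjoint supports inside $d$ indices must each be supported on a single index, the global minima are exactly the signed permutations $z_k = \kappa_k e_{\pi(k)}$. At such a point the tangent Hessian $\mathfrak{M}$ is easy to compute: every off-diagonal block vanishes, every diagonal block $H_{kk}$ is diagonal with entries $4(r_i - z_k(i)^2)$, and $\lambda_k=0$; restricted to the tangent space $\{\delta_k \perp z_k\}$ this gives $\mathfrak{M} = 4I$, which yields strong convexity with $\alpha\ge 1$ and extends to a $1/\mathrm{poly}(d)$ neighborhood by Hessian-Lipschitzness (automatic for this degree-$4$ polynomial on a compact set).

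The central step is to exhibit negative curvature at every non-permutation critical point. The key combinatorial lemma is a pigeonhole on the bipartite support graph: if $Z$ is not a signed permutation, there exist both a \emph{collision} index $i$ with $z_k(i) z_l(i) \neq 0$ for some $k \neq l$, and an \emph{unused} index $j$ with $r_j = 0$. Consider the tangent perturbation $\delta_l = \alpha e_j$ and $\delta_m = 0$ for $m \ne l$; tangency $\delta_l \cdot z_l = 0$ holds because $z_l(j)=0$. A direct Hessian evaluation gives
\[
\delta^\top \mathfrak{M}(Z)\,\delta \;=\; \bigl(4(r_j - z_l(j)^2) - 2\lambda_l\bigr)\alpha^2 \;=\; -4\bigl(r_i - z_l(i)^2\bigr)\alpha^2 \;\le\; -4\, z_k(i)^2\,\alpha^2,
\]
so the tangent Hessian has an eigenvalue at most $-4 \max_{m \ne l}(z_m(i)^2)$.

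To upgrade this into a uniform $\gamma = 1/\mathrm{poly}(d)$ in the robust definition, I would fix thresholds $\tau, \tau' = 1/\mathrm{poly}(d)$ and split into cases: if some collision column contains two nonzero entries with $\max z_m(i)^2 \ge \tau$, the direction above delivers tangent curvature $\le -4\tau$; otherwise every collision is tiny, and a second pigeonhole shows that the rows' ``large'' entries (those with $z_k(i)^2 \ge \tau'$) must be injective across rows, so since every unit-norm row contains an entry of squared magnitude $\ge 1/d$, these large entries form a full signed permutation and $Z$ lies within $\mathrm{poly}(d)\sqrt{\tau}$ of the corresponding minimum. Combined with the local strong convexity of the previous paragraph and the trivial gradient-large case, this yields the $(\alpha,\gamma,\epsilon,\delta)$-strict saddle property with the claimed parameters, and the only local minima are signed permutations.

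The main obstacle will be this quantitative case analysis, and in particular lifting it to the approximate-critical-point regime required by Definition~\ref{def:robustcondition}: when $\|\nabla F\|$ is small but nonzero the stationarity identity $\lambda_k/2 = r_i - z_k(i)^2$ only holds up to $O(\epsilon)$ errors, and those errors must be tracked carefully through both the negative-curvature computation and the pigeonhole argument that places an ``almost non-colliding'' iterate near a signed permutation.
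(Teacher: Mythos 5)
Your change of basis, the Lagrangian and stationarity computation, the identification of signed permutations as global minima, and the local strong-convexity argument at a permutation are all correct and match the strategy of Appendix~\ref{sec:hardcase}. However, the central combinatorial claim --- that every non-permutation critical point admits both a collision index $i$ (with $z_k(i)z_l(i)\neq 0$ for some $k\neq l$) \emph{and} an unused index $j$ with $r_j = 0$ --- is false, and your test direction $\delta_l = \alpha e_j$ depends on it. Take $d=2$, $z_1 = (1/\sqrt{2},1/\sqrt{2})$, $z_2 = (1/\sqrt{2},-1/\sqrt{2})$. One checks via $\lambda_k/2 = r_i - z_k(i)^2$ that this is a critical point ($\lambda_1 = \lambda_2 = 1$); both columns are collisions of size $\Theta(1)$; but $r_1 = r_2 = 1$, so there is no unused index, and both rows have full support so even the weaker requirement $z_l(j)=0$ needed for tangency of $\delta_l = \alpha e_j$ fails. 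This point is $\Theta(1)$ from any signed permutation, so your ``tiny collision $\Rightarrow$ near a permutation'' branch does not catch it either, and your argument produces no negative-curvature certificate for this whole class of saddle points.

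The paper's Lemma~\ref{lem:Problem2_case2} handles exactly this situation by using \emph{in-support rotation} directions instead of moves into unused coordinates: when rows $i$ and $j$ share two large coordinates, take $\hat v_{i1}\propto U_{i2}$, $\hat v_{i2}\propto -U_{i1}$, $\hat v_{j1}\propto \pm U_{j2}$, $\hat v_{j2}\propto \mp U_{j1}$. This is tangent by construction without any support hypothesis and picks up the off-diagonal cross-block entries $4U_{i'k}U_{ik}$ of $\mathfrak{M}$ (which your direction zeroes out), producing a term $\propto -|U_{i1}U_{i2}U_{j1}U_{j2}|$ after choosing signs. A short case split (two shared large coordinates, exactly one shared, or one row nearly coordinate-aligned, the last handled via the trace-type identity $\sum_k \psi_{jk}(U) \le -1 + d^2\epsilon_0^2$ forcing a large negative diagonal entry) then gives a uniform $\gamma = 1/\mathrm{poly}(d)$. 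To repair your proof you would need to replace the unused-index direction with such rotations (or another device that does not presume an empty coordinate) and then carry the $O(\epsilon)$ errors through as you already anticipated; the no-collision branch and the strong-convexity step can stay as you have them.
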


\subsection{Implementing stochastic gradient oracle}
\label{sec:icagrad}
To design an online algorithm based on objective function \eqref{eq:hardprob}, we need to give an implementation for the stochastic gradient oracle.

In applications, the tensor $T$ is oftentimes the expectation of multilinear operations of samples $g(x)$ over $x$ where $x$ is generated from some distribution $\mathcal{D}$. In other words, for any $x\sim \mathcal{D}$, the tensor is $T =\E[g(x)] $. Using the linearity of the multilinear map, we know $\E[g(x)] (u_i,u_i,u_j,u_j) = \E[g(x)(u_i,u_i,u_j,u_j)]$. Therefore we can define the loss function $\phi(u,x) = \sum_{i\ne j} g(x)(u_i,u_i,u_j,u_j)$, and the stochastic gradient oracle $SG(u) = \nabla_u \phi(u,x)$.

For concreteness, we look at a simple ICA example. In the simple setting we consider an unknown signal $x$ that is uniform\footnote{In general ICA the entries of $x$ are independent, non-Gaussian variables.} in $\{\pm 1\}^d$, and an unknown orthonormal linear transformation\footnote{In general (under-complete) ICA this could be an arbitrary linear transformation, however usually after the ``whitening'' step (see \cite{cardoso1989source}) the linear transformation becomes orthonormal.} $A$ ($AA^T = I$). The sample we observe is $y :=  Ax \in \R^d$. Using standard techniques (see \cite{cardoso1989source}), we know the $4$-th order cumulant of the observed sample is a tensor that has orthogonal decomposition. Here for simplicity we don't define 4-th order cumulant, instead we give the result directly.

Define tensor $Z\in \R^{d^4}$ as follows:
\begin{equation*}
\begin{array}{ll}
Z(i,i,i,i) =3, &   \forall i\in [d] \\
Z(i,i,j,j) = Z(i,j,i,j) = Z(i,j,j,i) = 1, &\forall i\ne j\in [d]\\
\end{array}
\end{equation*}
where all other entries of $Z$ are equal to $0$. 
The tensor $T$ can be written as a function of the auxiliary tensor $Z$ and multilinear form of the sample $y$.
\begin{lemma}\label{lm:constructTensorZ}
The expectation $\E[\frac{1}{2}(Z - y^{\otimes 4})] = \sum_{i=1}^d a_i^{\otimes 4}=T$, where $a_i$'s are columns of the unknown orthonormal matrix $A$.
\end{lemma}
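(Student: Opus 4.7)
The plan is a direct moment computation. Expanding $y = Ax$ coordinate-wise gives $y_i = \sum_k A_{ik} x_k$, so
\[
\E[y_{i_1} y_{i_2} y_{i_3} y_{i_4}] = \sum_{k_1,k_2,k_3,k_4} A_{i_1 k_1} A_{i_2 k_2} A_{i_3 k_3} A_{i_4 k_4} \,\E[x_{k_1} x_{k_2} x_{k_3} x_{k_4}].
\]
Since the $x_k$ are independent uniform on $\{\pm 1\}$, the joint moment is $1$ when every index appears an even number of times and $0$ otherwise. The first step is to encode this cleanly as
\[
\E[x_{k_1} x_{k_2} x_{k_3} x_{k_4}] = \delta_{k_1 k_2}\delta_{k_3 k_4} + \delta_{k_1 k_3}\delta_{k_2 k_4} + \delta_{k_1 k_4}\delta_{k_2 k_3} - 2\,\delta_{k_1 k_2 k_3 k_4},
\]
where the $-2\delta_{k_1 k_2 k_3 k_4}$ corrects the triple-counting on the diagonal. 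I would verify this identity by checking the three qualitatively distinct cases (all four indices equal, exactly two matched pairs, otherwise).

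Next, I plug this decomposition into the expansion above. For the three pair terms, the sums factor, e.g.\ $\sum_{k_1,k_3} A_{i_1 k_1} A_{i_2 k_1} A_{i_3 k_3} A_{i_4 k_3} = (AA^T)_{i_1 i_2}(AA^T)_{i_3 i_4}$, and orthonormality $AA^T = I$ collapses this to $\delta_{i_1 i_2}\delta_{i_3 i_4}$. The three pair terms together therefore produce
\[
\delta_{i_1 i_2}\delta_{i_3 i_4} + \delta_{i_1 i_3}\delta_{i_2 i_4} + \delta_{i_1 i_4}\delta_{i_2 i_3},
\]
which I recognize as exactly $Z_{i_1 i_2 i_3 i_4}$ (a one-line case check against the definition of $Z$ confirms this; the diagonal entry $3$ comes from all three deltas firing, and each off-diagonal pattern from exactly one).

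Finally, the diagonal correction contributes $-2\sum_k A_{i_1 k}A_{i_2 k}A_{i_3 k}A_{i_4 k} = -2 \sum_k (a_k^{\otimes 4})_{i_1 i_2 i_3 i_4}$, where $a_k$ is the $k$-th column of $A$. Combining,
\[
\E[y^{\otimes 4}] = Z - 2\sum_{k=1}^d a_k^{\otimes 4},
\]
and rearranging yields $\E[\tfrac{1}{2}(Z - y^{\otimes 4})] = \sum_{k=1}^d a_k^{\otimes 4} = T$, as claimed.

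There is no real obstacle here; the only mildly subtle point is the $-2$ correction for the all-equal diagonal, which is needed because the three pair-indicators each fire when $k_1=k_2=k_3=k_4$ but the true moment is $1$, not $3$. Everything else reduces to index bookkeeping and the orthonormality identity $AA^T = I$.
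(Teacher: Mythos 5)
Your computation is correct, and the moment identity $\E[x_{k_1}x_{k_2}x_{k_3}x_{k_4}] = \delta_{k_1k_2}\delta_{k_3k_4} + \delta_{k_1k_3}\delta_{k_2k_4} + \delta_{k_1k_4}\delta_{k_2k_3} - 2\delta_{k_1k_2k_3k_4}$, the identification of the three pair terms with $Z$, and the collapse of the diagonal term to $-2\sum_k a_k^{\otimes 4}$ via $AA^T = I$ all check out. The paper itself gives no proof here (it simply remarks the lemma is ``easy to verify'' and related to cumulants), and your direct fourth-moment expansion is exactly the argument that hint points to.
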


This lemma is easy to verify, and is closely related to cumulants~\citep{cardoso1989source}.  Recall that $\phi(u,y)$ denotes the loss (objective) function evaluated at sample $y$ for point $u$. Let $\phi(u,y) = \sum_{i\ne j} \frac{1}{2}(Z - y^{\otimes 4})(u_i,u_i,u_j,u_j)$. By Lemma~\ref{lm:constructTensorZ}, we know that  $\E[\phi(u,y)]$ is equal to the objective function as in Equation~\eqref{eq:hardprob}.
Therefore we rewrite objective (\ref{eq:hardprob}) as the following stochastic optimization problem
\begin{equation*}
\min\limits_{\forall i, \|u_i\|^2 = 1} \quad  \E[\phi(u,y)] ,~\text{where}~ \phi(u,y) = \sum_{i\ne j} \frac{1}{2}(Z - y^{\otimes 4})(u_i,u_i,u_j,u_j)
\end{equation*}
The stochastic gradient oracle is then 
\begin{equation}\label{eq:icasg}
\nabla_{u_i} \phi(u,y) = \sum\limits_{j\neq i}\left(\left\langle u_j ,u_j \right\rangle u_i  + 2 \left\langle u_i ,u_j \right\rangle u_j  - \left\langle u_j , y\right\rangle^2 \left\langle u_i ,y\right\rangle y \right).
\end{equation}
Notice that computing this stochastic gradient does not require constructing the $4$-th order tensor $T - y^{\otimes 4}$. In particular, this stochastic gradient can be computed very efficiently: 
\begin{claim}
The stochastic gradient (\ref{eq:icasg}) can be computed in $O(d^3)$ time for one sample or $O(d^3+d^2k)$ for average of $k$ samples. 
\end{claim}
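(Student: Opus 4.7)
The plan is to split formula~(\ref{eq:icasg}) into the ``$u$-part'' (the first two terms, which do not depend on $y$) and the ``$y$-part'' (the last term), bound the cost of each, and then observe that only the $y$-part needs to be recomputed per sample. Writing $U\in\R^{d\times d}$ for the matrix whose columns are $u_1,\dots,u_d$, the key observation is that all the inner products appearing in the formula, when ranged over $i$, are entries of the Gram-type matrices $U^{T}U$ (or $UU^{T}$) and the vector $U^{T}y$; computing these once exposes enough structure to avoid any naive $O(d^{4})$ enumeration.

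First I would handle the $u$-part. For $\sum_{j\neq i}\langle u_j,u_j\rangle u_i$ the scalar coefficient of $u_i$ is $\|U\|_F^{2}-\|u_i\|^{2}$; precomputing $\|U\|_F^{2}$ and the norms $\|u_i\|^{2}$ costs $O(d^{2})$, after which each $i$ needs only an $O(d)$ scalar multiplication of $u_i$. For $\sum_{j\neq i} 2\langle u_i,u_j\rangle u_j$, form $M = UU^{T}$ once at cost $O(d^{3})$; then $\sum_{j}\langle u_i,u_j\rangle u_j = M u_i$ can be read off by one $O(d^{2})$ matrix-vector product, and the $j=i$ term is subtracted in $O(d)$. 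Summed over $i$ this is $O(d^{3})$, and in total the $u$-part costs $O(d^{3})$.

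Next the $y$-part. Compute $c = U^{T}y\in\R^{d}$ in $O(d^{2})$; its entries are exactly $c_j=\langle u_j,y\rangle$. Then $\sum_{j\neq i}\langle u_j,y\rangle^{2}\langle u_i,y\rangle\,y = c_i\bigl(\|c\|^{2}-c_i^{2}\bigr)\,y$, and once $\|c\|^{2}$ is precomputed (cost $O(d)$), each $i$ requires only an $O(d)$ scalar-times-vector addition to the running gradient, giving $O(d^{2})$ in total. Combining with the $u$-part yields the $O(d^{3})$ bound for a single sample.

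Finally, for $k$ samples the $u$-part is independent of $y$, so it is computed once at cost $O(d^{3})$ and reused. For each additional sample $y^{(\ell)}$ one computes $c^{(\ell)} = U^{T}y^{(\ell)}$ in $O(d^{2})$, then accumulates the scalar-times-vector updates for all $i$ in another $O(d^{2})$; aggregating over $\ell$ gives $O(k d^{2})$, for a total of $O(d^{3}+kd^{2})$. There is no real obstacle here beyond careful bookkeeping: the argument rests on recognising that the inner products $\langle u_i,u_j\rangle$ and $\langle u_j,y\rangle$ are shared across the summation over $i$, so they should be precomputed once as $UU^{T}$ and $U^{T}y$ rather than re-evaluated inside each coordinate block.
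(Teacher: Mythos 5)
Your proposal is correct and follows essentially the same approach as the paper: split the formula into the $y$-independent part (first two terms, $O(d^3)$, shared across samples) and the $y$-dependent part (precompute the inner products $\langle u_j, y\rangle$, costing $O(d^2)$ per sample). You simply spell out the algebraic bookkeeping — writing the sums in terms of $\|U\|_F^2$, $UU^T u_i$, and $U^T y$ — that the paper's one-sentence proof leaves implicit.
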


\begin{proof} The proof is straight forward as the first two terms take $O(d^3)$ and is shared by all samples. The third term can be efficiently computed once the inner-products between all the $y$'s and all the $u_i$'s are computed (which takes $O(kd^2)$ time).
\end{proof}

\section{Experiments}\label{sec:experi}

We run simulations for Projected Noisy Gradient Descent (Algorithm~\ref{algo:psgdwn}) applied to orthogonal tensor decomposition.
The results show that the algorithm converges from random initial points efficiently (as predicted by the theorems), and our new formulation (\ref{eq:hardprob}) performs better than reconstruction error (\ref{eq:reconstruction}) based formulation.

\paragraph{Settings} We set dimension $d = 10$, the input tensor $T$ is a random tensor in $\R^{10^4}$ that has orthogonal decomposition (\ref{eq:orthodecomp}). The step size is chosen carefully for respective objective functions. The performance is measured by normalized reconstruction error $\mathcal{E} =\left({\|T - \sum_{i=1}^{d} u_i ^{\otimes4}\|_F^2}\right)/{\| T\|_F^2}$. 

\paragraph{Samples and stochastic gradients}
We use two ways to generate samples and compute stochastic gradients. In the first case we generate  sample $x$ by setting it equivalent to $d^{\frac{1}{4}} a_i$ with probability $1/d$. It is easy to see that $\E[x^{\otimes 4}] = T$. This is a very simple way of generating samples, and we use it as a sanity check for the objective functions.

In the second case we consider the ICA example introduced in Section~\ref{sec:icagrad}, and use Equation (\ref{eq:icasg}) to compute a stochastic gradient. In this case the stochastic gradient has a large variance, so we use mini-batch of size 100 to reduce the variance.

\paragraph{Comparison of objective functions} We use the simple way of generating samples for our new objective function (\ref{eq:hardprob}) and reconstruction error objective (\ref{eq:reconstruction}). The result is shown in Figure~\ref{fig:obj}. Our new objective function is empirically more stable (always converges within 10000 iterations); the reconstruction error do not always converge within the same number of iterations and often exhibits long periods with small improvement (which is likely to be caused by saddle points that do not have a significant negative eigenvalue).

\paragraph{Simple ICA example} As shown in Figure~\ref{fig:ICA}, our new algorithm also works in the ICA setting. When the learning rate is constant the error stays at a fixed small value. When we decrease the learning rate the error converges to 0.

\begin{figure}[!htb]
\hfill
\subfigure[New Objective (\ref{eq:hardprob})]{\psfrag{reconstruction error}[Bc]{\scriptsize  Reconstruction Error}\psfrag{iter}[c]{\scriptsize  Iteration}\includegraphics[width=0.4\textwidth]{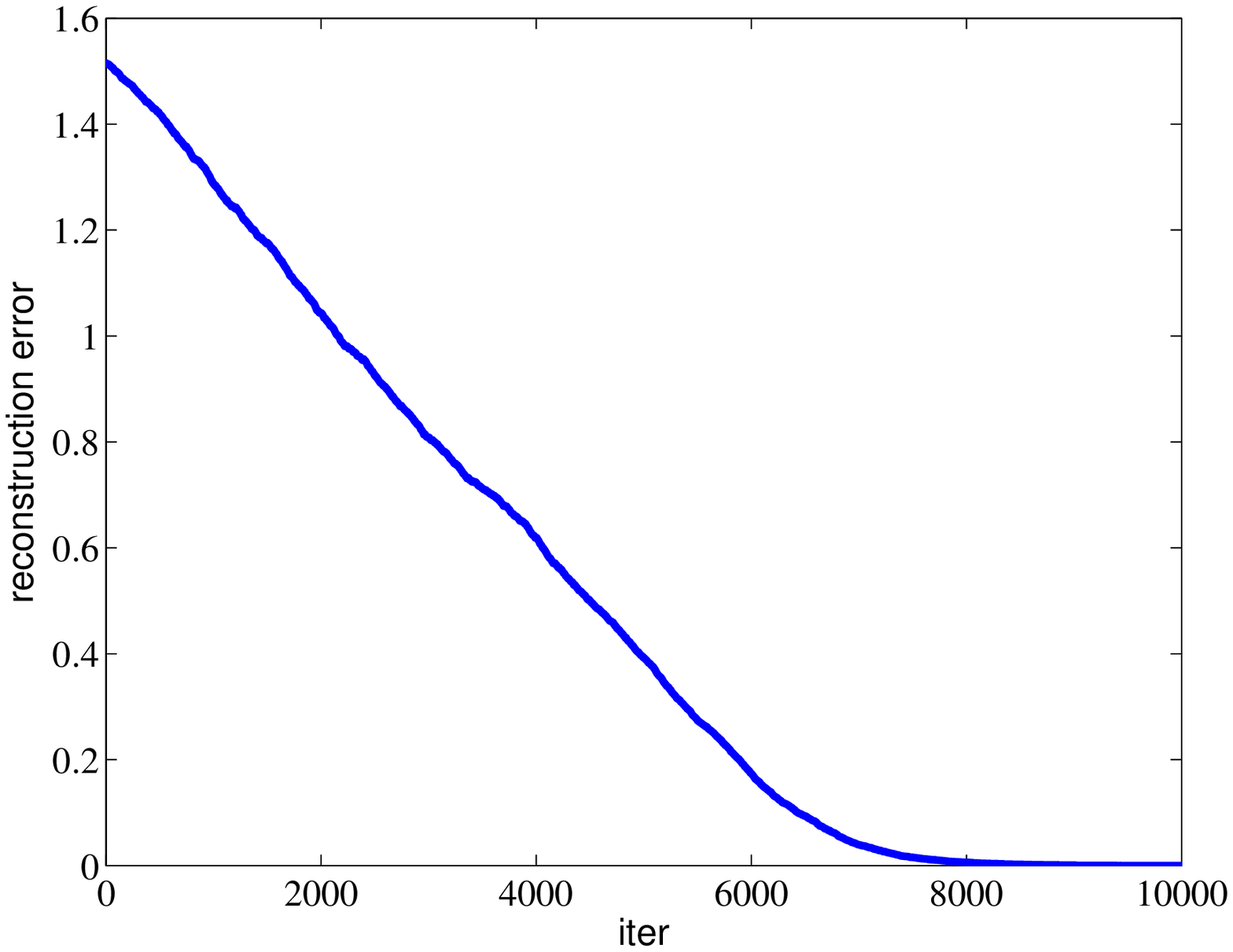}}\label{fig:NG1}
\hfill
\subfigure[Reconstruction Error Objective (\ref{eq:reconstruction})]{\psfrag{reconstruction error}[Bc]{\scriptsize  Reconstruction Error}\psfrag{iter}[c]{\scriptsize  Iteration}\includegraphics[width=0.4\textwidth]{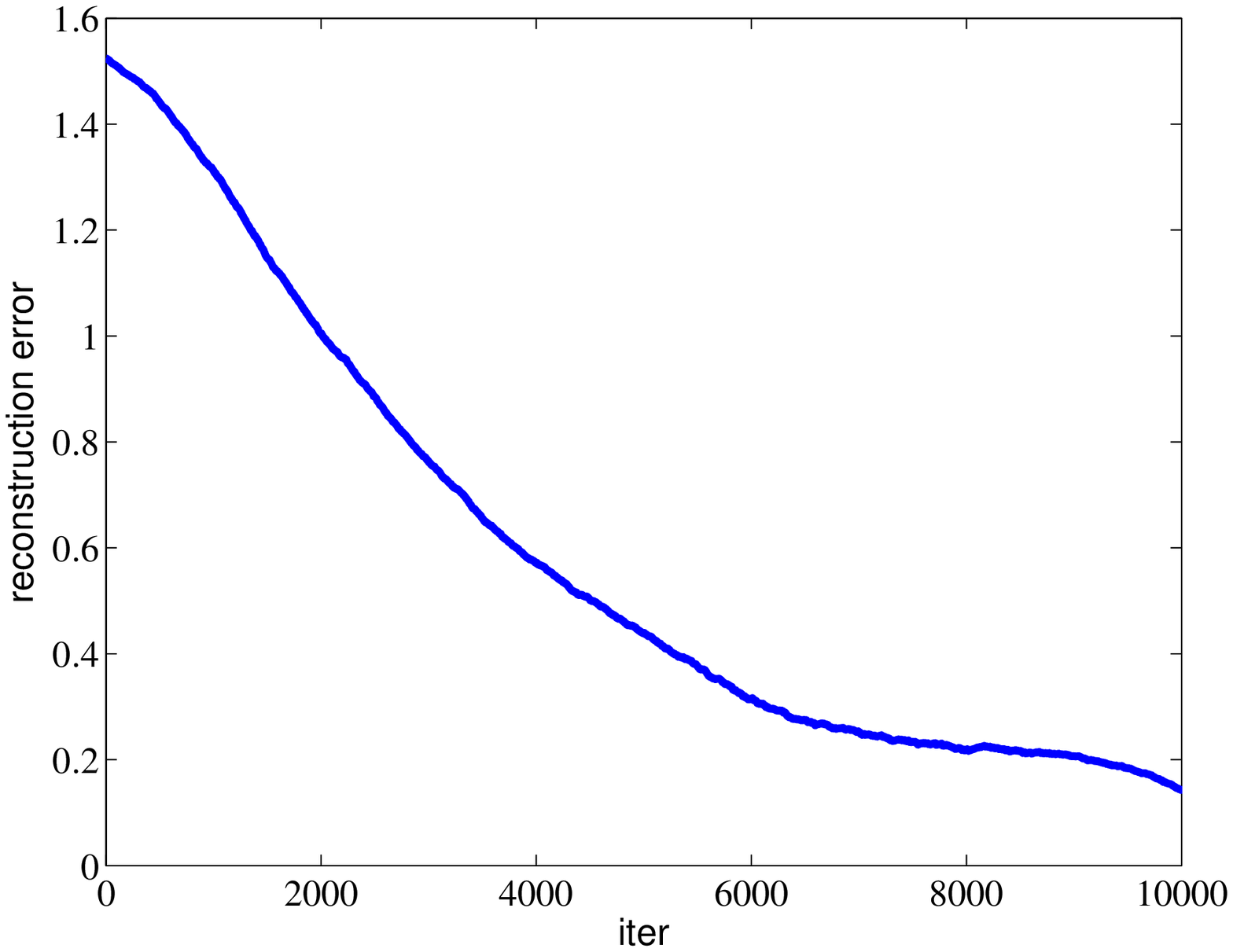}}\label{fig:IO1}
\hfill
\caption{Comparison of different objective functions}\label{fig:obj}
\end{figure}

\begin{figure}[!htb]
\hfill
\subfigure[Constant Learning Rate $\eta$]{\psfrag{reconstruction error}[Bc]{\scriptsize  Reconstruction Error}\psfrag{iter}[c]{\scriptsize  Iteration}\includegraphics[width=0.4\textwidth]{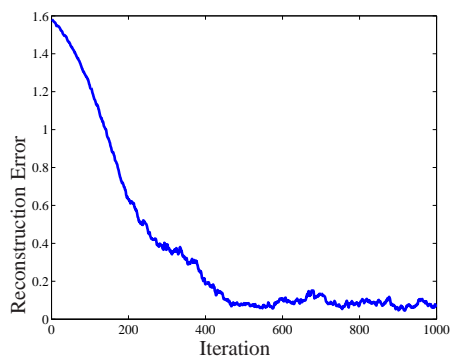}}\label{fig:ICA1}
\hfill
\subfigure[Learning Rate $\eta/t$ (in $\log$ scale)]{\psfrag{reconstruction error}[Bc]{\scriptsize  Reconstruction Error}\psfrag{iter}[c]{\scriptsize  Iteration}\includegraphics[width=0.4\textwidth]{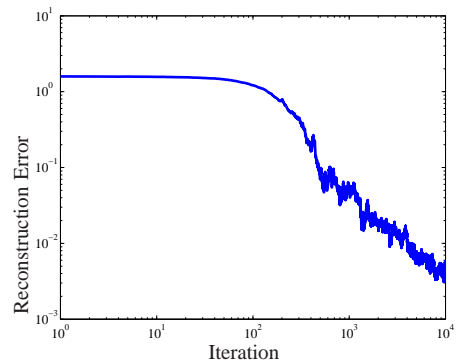}}\label{fig:ICA2}
\hfill
\caption{ICA setting performance with mini-batch of size 100}\label{fig:ICA}
\end{figure}

\section{Conclusion}
In this paper we identify the \name~property and show stochastic gradient descent converges to a local minimum under this assumption. This leads to new online algorithm for orthogonal tensor decomposition. We hope this is a first step towards understanding stochastic gradient for more classes of non-convex functions. We believe \name~property can be extended to handle more functions, especially those functions that have similar symmetry properties.


\clearpage

\bibliographystyle{apalike}
\bibliography{gradient}

\clearpage

\appendix

\section{Detailed Analysis for Section~\ref{sec:sgd} in Unconstrained Case}

\label{sec:unconstrained}

In this section we give detailed analysis for noisy gradient descent, under the assumption that the unconstrained problem satisfies $(\alpha,\gamma,\epsilon,\delta)$-\name~property.

The algorithm we investigate in Algorithm~\ref{algo:sgdwn}, we can combine the randomness in the stochastic gradient oracle and the artificial noise, and rewrite the update equation in form:
\begin{equation} \label{SGD_update}
	w_t = w_{t-1} - \eta (\nabla f(w_{t-1}) + \xi_{t-1})
\end{equation}
where $\eta$ is step size, $\xi = SG(w_{t-1}) - \nabla f(w_{t-1}) + n$ (recall $n$ is a random vector on unit sphere) is the combination of two source of noise.

By assumption, we know $\xi$'s are independent and they satisfying $\E \xi = 0$, 
$\|\xi\| \le Q+1$. Due to the explicitly added noise in Algorithm \ref{algo:sgdwn}, 
we further have   $\E \xi\xi^T \succ \frac{1}{d}I$.
For simplicity, we assume $ \E \xi\xi^T = \sigma^2I$, for some constant $\sigma = \tilde{\Theta}(1)$, 
then the algorithm we are running is exactly the same as Stochastic Gradient Descent (SGD).
Our proof can be very easily extended to the case when $\frac{1}{d} I \preceq\E [\xi\xi^T] \preceq (Q+\frac{1}{d})I$ because both the upper and lower bounds are $\tilde{\Theta}(1)$.


We first restate the main theorem in the context of stochastic gradient descent.

\begin{theorem} [Main Theorem]\label{thm:sgdmain_unconstraint}
Suppose a function $f(w):\R^d\to \R$ that is $(\alpha, \gamma, \epsilon, \delta)$-\name, and has a stochastic gradient oracle where the noise satisfy $\E \xi\xi^T = \sigma^2I$. Further, suppose the function is bounded by $|f(w)| \le B$, is $\beta$-smooth and has $\rho$-Lipschitz Hessian. Then there exists a threshold $\eta_{\max} = \tilde{\Theta}(1)$, so that for any $\zeta>0$, and
for any $\eta \le \eta_{\max} / \max\{1, \log (1/\zeta)\}$,
with probability at least $1-\zeta$ in $t = \tlO(\eta^{-2}\log( 1/\zeta))$ iterations, SGD outputs a point $w_t$ that is $\tlO(\sqrt{\eta\log(1/\eta\zeta)})$-close to some local minimum $w^\star$.
\end{theorem}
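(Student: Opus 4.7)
The plan is to chain Lemmas~\ref{lem:gradient}, \ref{lem:minimum}, and \ref{lem:saddle} via an amortized expected-decrease argument on $f(w_t)$, and then amplify the success probability through independent sub-windows. First I choose $\eta_{\max}$ small enough that $C\sqrt{\eta} \le \epsilon$ for the constant $C$ in Lemma~\ref{lem:gradient}. Then the strict saddle property partitions every iterate $w_t$ that is not already $\tlO(\sqrt{\eta})$-close to a local minimum into two regimes: (a) $\|\nabla f(w_t)\| \ge C\sqrt{\eta}$, handled by Lemma~\ref{lem:gradient} with one-step decrease $\E[f(w_{t+1})\mid w_t] - f(w_t) \le -\tlOmega(\eta^2)$; and (b) $\|\nabla f(w_t)\| < C\sqrt{\eta}$ but $\lambda_{\min}(\Hess f(w_t)) \le -\gamma$, handled by Lemma~\ref{lem:saddle} with expected decrease $\tlOmega(\eta)$ over an $w_t$-dependent horizon $T \le T_{\max} = \tlO(1/\eta)$. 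In both cases the \emph{per-step amortized} decrease is $\tlOmega(\eta^2)$.

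Next I define the stopping time $\tau = \min\{t : w_t \text{ is } \tlO(\sqrt{\eta})\text{-close to some local minimum}\}$ and view the algorithm in epochs of lengths $1$ (case (a)) or $T$ (case (b)). Because $|f| \le B$, the amortized decrease and an optional-stopping argument give $\E[\tau \wedge N] = \tlO(1/\eta^2)$ for $N = \tlO(1/\eta^2)$ chosen so that $cN\eta^2 \ge 4B$, hence Markov yields $\Pr[\tau > N] \le 1/2$. Concatenating $k = \lceil \log_2(2/\zeta) \rceil$ such sub-windows and applying the strong Markov property (valid since the noise $\xi_t$ is i.i.d.) gives $\Pr[\tau > kN] \le 2^{-k} \le \zeta/2$. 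Setting the total budget $T_{\text{total}} = kN = \tlO(\eta^{-2} \log(1/\zeta))$ ensures $\tau \le T_{\text{total}}$ except with probability $\zeta/2$.

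Finally I invoke Lemma~\ref{lem:minimum} from time $\tau$ onward with failure parameter $\zeta/2$: since its guarantee holds for $\tlO(\eta^{-2}\log(1/\zeta)) \ge T_{\text{total}} - \tau$ further iterations, the final iterate $w_{T_{\text{total}}}$ is $\tlO(\sqrt{\eta\log(1/\eta\zeta)})$-close to the local minimum identified at time $\tau$, except with an additional $\zeta/2$ failure. A union bound yields total failure probability $\zeta$ and matches the stated iteration count.

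The main obstacle will be making the amortized-decrease supermartingale rigorous, because Lemma~\ref{lem:saddle}'s bound is over a random horizon $T$ that is $\mathcal{F}_{t_k}$-measurable (depending on the current iterate). I would package this by defining epoch end-times $t_0 < t_1 < \cdots$ and the process $Y_k = f(w_{t_k}) + c\eta^2\,(t_k \wedge \tau)$ for a suitable constant $c = \tlOmega(1)$; Lemmas~\ref{lem:gradient} and~\ref{lem:saddle} together yield $\E[Y_{k+1} \mid \mathcal{F}_{t_k}]\le Y_k$ on $\{t_k < \tau\}$, and optional stopping applied to the bounded supermartingale $Y_{k\wedge K}$ delivers the $\E[\tau \wedge N]$ bound used above. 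A secondary subtlety is that Lemma~\ref{lem:saddle} only controls expectation, not high-probability decrease; using it inside sub-windows rather than requiring deterministic progress per epoch is precisely what makes the two-level (Markov plus strong-Markov-restart) argument necessary.
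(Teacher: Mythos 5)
Your proposal follows essentially the same route as the paper: partition iterates into the large-gradient, negative-curvature, and near-minimum regimes, amortize the per-step expected decrease to $\tilde{\Omega}(\eta^2)$ across variable-length epochs, use the boundedness $|f|\le B$ together with a Markov/optional-stopping step to conclude that within $\tilde{O}(1/\eta^2)$ steps the process enters the local-minimum region with probability at least $1/2$, boost to $1-\zeta/2$ by restarting over $\log(1/\zeta)$ windows, and finish by invoking Lemma~\ref{lem:minimum} with a union bound. The paper implements the amortized-decrease step via indicator functions $1_{\mathfrak{E}_i}$ over epoch endpoints $\{\tau_i\}$ rather than your explicit supermartingale $Y_k = f(w_{t_k}) + c\eta^2(t_k\wedge\tau)$, but these are equivalent bookkeeping for the same argument.
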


Recall that $\tlO(\cdot)$ ($\tilde{\Omega},\tilde{\Theta}$) hides the factor that is polynomially dependent on all other parameters, but independent of $\eta$ and $\zeta$. So it focuses on the dependency on $\eta$ and $\zeta$. Throughout the proof, we interchangeably use both $\mathcal{H}(w)$ and $\Hess f(w)$ to represent the Hessian matrix of $f(w)$.

As we discussed in the proof sketch in Section~\ref{sec:sgd}, we analyze the behavior of the algorithm in three different cases. The first case is when the gradient is large.

\begin{lemma} \label{thm::case1}
Under the assumptions of Theorem~\ref{thm:sgdmain_unconstraint}, for any point with $\|\nabla f(w_0)\| \ge \sqrt{2\eta\sigma^2\beta d}$ where $\sqrt{2\eta\sigma^2\beta d} < \epsilon$, after one iteration we have:
\begin{equation}
	\E f(w_1) - f(w_{0}) \le - \tlOmega(\eta^2)
\end{equation}
\end{lemma}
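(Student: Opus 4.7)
The plan is to apply the standard $\beta$-smoothness descent inequality and then control the noise term using the assumed covariance bound. First I would write the smoothness consequence
\[
f(w_1) \le f(w_0) + \nabla f(w_0)^T(w_1-w_0) + \frac{\beta}{2}\|w_1-w_0\|^2,
\]
plug in the update $w_1 - w_0 = -\eta(\nabla f(w_0) + \xi_0)$, and take expectations. Since $\E[\xi_0] = 0$ and $\E[\xi_0\xi_0^T] = \sigma^2 I$, the linear term contributes $-\eta\|\nabla f(w_0)\|^2$ and the quadratic term contributes $\frac{\beta\eta^2}{2}(\|\nabla f(w_0)\|^2 + \sigma^2 d)$.

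Next I would combine these to obtain
\[
\E[f(w_1)] - f(w_0) \le -\eta\Bigl(1 - \tfrac{\beta\eta}{2}\Bigr)\|\nabla f(w_0)\|^2 + \frac{\beta\eta^2 \sigma^2 d}{2}.
\]
For $\eta \le \eta_{\max} \le 1/\beta$, the coefficient $(1-\beta\eta/2) \ge 1/2$, so the right-hand side is bounded by $-\frac{\eta}{2}\|\nabla f(w_0)\|^2 + \frac{\beta\eta^2\sigma^2 d}{2}$. Substituting the hypothesis $\|\nabla f(w_0)\|^2 \ge 2\eta\sigma^2\beta d$ immediately gives $-\frac{\eta}{2}\|\nabla f(w_0)\|^2 \le -\eta^2\sigma^2\beta d$, so the two terms combine to yield
\[
\E[f(w_1)] - f(w_0) \le -\frac{\eta^2\sigma^2\beta d}{2} = -\tilde{\Omega}(\eta^2),
\]
which is the desired conclusion.

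There is essentially no hard step here; the only subtlety is that one must verify that $\eta_{\max}$ can indeed be chosen small enough (at most $1/\beta$) so that the smoothness term does not overwhelm the progress term, and that the threshold $\sqrt{2\eta\sigma^2\beta d}$ is exactly tuned so that the ``noise cost'' $\frac{\beta\eta^2\sigma^2 d}{2}$ is dominated by half the gradient-driven progress. The assumption $\sqrt{2\eta\sigma^2\beta d} < \epsilon$ is only used to ensure we stay in the regime governed by case~1 of Definition~\ref{def:robustcondition} (so the large-gradient hypothesis is nonvacuous), and plays no role in the descent inequality itself. Since $\sigma,\beta,d$ are all $\tilde{\Theta}(1)$, hiding them in the $\tilde{\Omega}$ notation yields the claimed bound.
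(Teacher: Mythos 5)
Your proof is correct and takes essentially the same approach as the paper: apply the $\beta$-smoothness descent inequality, substitute the update rule, take expectations using $\E[\xi_0]=0$ and $\E[\xi_0\xi_0^T]=\sigma^2 I$, bound the coefficient via $\eta\le 1/\beta$, and invoke the threshold $\|\nabla f(w_0)\|^2\ge 2\eta\sigma^2\beta d$ to obtain the final bound $-\tfrac{\eta^2\sigma^2\beta d}{2}$. Your remark that the hypothesis $\sqrt{2\eta\sigma^2\beta d}<\epsilon$ is not used in the descent calculation itself, but only to place the point in the large-gradient case of Definition~\ref{def:robustcondition}, is also consistent with the paper.
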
 

\begin{proof}
Choose $\eta_{\max} < \frac{1}{\beta}$, then by update equation Eq.(\ref{SGD_update}), we have:
\begin{align}
	\E f(w_1) -  f(w_{0}) &\le \nabla f(w_{0})^T \E(w_1-w_{0}) + \frac{\beta}{2}\E\|w_1-w_{0}\|^2 \nonumber \\
	& = \nabla f(w_{0})^T \E\left (- \eta (\nabla f(w_{0}) + \xi_{0})\right )
	 + \frac{\beta}{2}\E\left \|- \eta (\nabla f(w_{0}) + \xi_{0})\right \|^2 \nonumber \\	
	& = -(\eta - \frac{\beta\eta^2}{2})\|\nabla f(w_{0})\|^2 + \frac{\eta^2 \sigma^2 \beta d}{2}\nonumber \\
	& \le -\frac{\eta}{2}\|\nabla f(w_{0})\|^2  + \frac{\eta^2\sigma^2 \beta d}{2}
	\le -\frac{\eta^2\sigma^2 \beta d}{2}
\end{align}
which finishes the proof.
\end{proof}

\begin{lemma}\label{thm::case3}
Under the assumptions of Theorem~\ref{thm:sgdmain_unconstraint}, for any initial point $w_0$ that is $\tlO(\sqrt{\eta}) < \delta$ close to a local minimum $w^\star$, 
with probability at least $1-\zeta/2$, we have following holds simultaneously:
\begin{equation}
 \forall t\le \tlO(\frac{1}{\eta^2}\log \frac{1}{\zeta}), \quad \|w_{t} - w^\star\| \le \tlO(\sqrt{\eta\log \frac{1}{\eta\zeta}})<\delta 
\end{equation}
where $w^\star$ is the locally optimal point.
\end{lemma}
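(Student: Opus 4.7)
The plan is to track the squared distance $r_t^2 := \|w_t - w^\star\|^2$ and show it remains $\tlO(\eta \log(1/(\eta\zeta)))$ throughout all $T = \tlO(\eta^{-2}\log(1/\zeta))$ steps. The argument has two ingredients: a one-step contraction in expectation coming from local $\alpha$-strong convexity, and a martingale concentration bound to lift the in-expectation statement to a uniform high-probability statement across the entire horizon.

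First I would write the update as $w_{t+1} - w^\star = (w_t - w^\star) - \eta(\nabla f(w_t) + \xi_t)$ and expand $r_{t+1}^2$. Since $w^\star$ is a local minimum with $\nabla f(w^\star)=0$ and $f$ is $\alpha$-strongly convex on the $2\delta$-neighborhood of $w^\star$, as long as $w_t$ stays in that neighborhood we have $\langle \nabla f(w_t),\, w_t-w^\star\rangle \ge \alpha \|w_t - w^\star\|^2$ and $\|\nabla f(w_t)\| \le \beta\|w_t-w^\star\|$. Combined with $\E[\xi_t \mid w_t]=0$ and $\|\xi_t\|\le Q+1$, this yields
\[
\E[r_{t+1}^2 \mid w_t] \;\le\; (1 - 2\eta\alpha + \eta^2\beta^2)\, r_t^2 + \eta^2 (Q+1)^2,
\]
so for $\eta$ smaller than a constant depending on $\alpha$ and $\beta$ the multiplicative factor is at most $1-\eta\alpha$, and the stationary scale of $\E r_t^2$ is $O(\eta/\alpha)$.

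The second ingredient controls deviations. I would introduce the stopping time $\tau := \min\{t : \|w_t - w^\star\| > R\}$ with $R := C\sqrt{\eta\log(1/(\eta\zeta))}$ for a sufficiently large $C$ (polynomial in the problem parameters) and analyze the stopped process $r_{t\wedge\tau}^2$. On $\{t<\tau\}$ we have the decomposition
\[
r_{t+1}^2 - r_t^2 \;=\; \bigl(-2\eta\langle \nabla f(w_t),\, w_t-w^\star\rangle + \eta^2\|\nabla f(w_t)+\xi_t\|^2\bigr) \;+\; \bigl(-2\eta\langle \xi_t,\, w_t-w^\star\rangle\bigr),
\]
whose drift term is at most $-\eta\alpha r_t^2 + \tlO(\eta^2)$ in conditional expectation, and whose martingale increment $M_t$ satisfies $|M_t|\le 2\eta(Q+1)R$ with conditional variance at most $4\eta^2\sigma^2 r_t^2$. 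A Freedman-type inequality (or Azuma after crude variance bounds) then yields $\sup_{t\le T} r_{t\wedge\tau}^2 \le R^2$ with probability at least $1-\zeta/2$; on this event one has $\tau>T$, so every iterate stays inside the strongly convex neighborhood and the contraction above was valid all along, closing the loop.

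The main obstacle is coordinating the stopping-time argument with the martingale concentration: $R$ must be chosen large enough that the expected contraction $-\eta\alpha r_t^2$ dominates the additive noise $\tlO(\eta^2)$ so that $r_t^2$ actually returns to the $O(\eta)$ equilibrium scale, yet small enough (together with $T$) that the union bound across all $\tlO(\eta^{-2}\log(1/\zeta))$ steps does not blow up. The quadratic Lyapunov function $r_t^2$ together with the $(1-\eta\alpha)$ geometric contraction means the effective mixing horizon is $1/(\eta\alpha)$ rather than $T$, and it is exactly the $\sqrt{\log(1/(\eta\zeta))}$ factor in $R$ that absorbs the union bound and pushes the failure probability below $\zeta/2$. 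The remainder is standard and parallels the smooth, strongly convex SGD analysis of \citep{ICML2012Rakhlin_261}, the only added wrinkle being the stopping-time truncation that restricts attention to the region where local strong convexity is available.
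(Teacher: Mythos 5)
Your proposal is correct and follows essentially the same route as the paper: both arguments derive a one-step contraction of $\|w_t - w^\star\|^2$ from local $\alpha$-strong convexity and the zero-mean noise, gate the process to the good region (your stopping time $\tau$ plays the role of the paper's indicator $1_{\mathfrak{E}_{t-1}}$), and apply a martingale concentration bound (Azuma in the paper, Freedman-or-Azuma in yours) together with a union bound over the $\tlO(\eta^{-2}\log(1/\zeta))$-step horizon, with the $\sqrt{\log(1/(\eta\zeta))}$ factor in the radius absorbing the union bound. The paper makes the geometric contraction explicit by working with the rescaled supermartingale $G_t = (1-\eta\alpha)^{-t}(\|w_t - w^\star\|^2 - \eta/\alpha)$, which is the concrete realization of your observation that the effective mixing horizon is $1/(\eta\alpha)$ rather than $T$.
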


\begin{proof}
We shall construct a supermartingale and use Azuma's inequality~\citep{azuma1967weighted} to prove this result.

Let filtration $\mathfrak{F}_t = \sigma\{\xi_0, \cdots \xi_{t-1}\}$, and note $\sigma\{\Delta_0, \cdots, \Delta_t \} \subset \mathfrak{F}_t$, where $\sigma\{\cdot\}$ denotes the sigma field.
Let event $\mathfrak{E}_t = \{\forall \tau \le t, \|w_{\tau} - w^\star\| \le \mu\sqrt{\eta\log \frac{1}{\eta\zeta}} < \delta \}$, where $\mu$ is independent of $(\eta, \zeta)$, and will be specified later. To ensure the correctness of proof, $\tilde{O}$ notation in this proof will never hide any dependence on $\mu$.
Clearly there's always a small enough choice of $\eta_{\max}= \tilde{\Theta}(1)$ to make $\mu\sqrt{\eta\log \frac{1}{\eta\zeta}} < \delta$ holds as long as 
$\eta \le \eta_{\max} / \max\{1, \log (1/\zeta)\}$.
Also note $\mathfrak{E}_t \subset \mathfrak{E}_{t-1}$, that is $1_{\mathfrak{E}_t} \le 1_{\mathfrak{E}_{t-1}}$.

By Definition~\ref{def:robustcondition} of 
$(\alpha, \gamma, \epsilon, \delta)$-\name, we know $f$ is locally $\alpha$-strongly convex
in the $2\delta$-neighborhood of $w^\star$.
Since $\nabla f(w^\star) = 0$, we have
\begin{align}
\nabla f(w_t)^T (w_t - w^\star)1_{\mathfrak{E}_{t}}  \ge \alpha \|w_t - w^\star\|^21_{\mathfrak{E}_{t}}
\end{align}

Furthermore, with  $\eta_{\max} < \frac{\alpha}{\beta^2}$, using $\beta$-smoothness, we have:
\begin{align}
\E[\|w_{t} - w^\star\|^21_{\mathfrak{E}_{t-1}}|\mathfrak{F}_{t-1}] = &\E[\|w_{t-1} - \eta (\nabla f(w_{t-1}) + \xi_{t-1}) - w^\star\|^2|\mathfrak{F}_{t-1}]1_{\mathfrak{E}_{t-1}} \nonumber \\
 = &\left[\|w_{t-1}- w^\star\|^2 - 2\eta \nabla f(w_{t-1})^T(w_{t-1}- w^\star) + \eta^2\|\nabla f(w_{t-1})\|^2 +\eta^2\sigma^2\right]1_{\mathfrak{E}_{t-1}}  \nonumber \\
  \le&[(1-2\eta \alpha + \eta^2 \beta^2)\|w_{t-1}- w^\star\|^2 + \eta^2\sigma^2]1_{\mathfrak{E}_{t-1}} \nonumber \\
 \le &[(1-\eta \alpha)\|w_{t-1}- w^\star\|^2 + \eta^2\sigma^2]1_{\mathfrak{E}_{t-1}}
\end{align}
Therefore, we have:
\begin{equation}
\left[\E[\|w_{t} - w^\star\|^2|\mathfrak{F}_{t-1}] - \frac{\eta}{ \alpha} \right]1_{\mathfrak{E}_{t-1}} \le (1-\eta \alpha) \left[\|w_{t-1} - w^\star\|^2 -\frac{\eta}{ \alpha} \right]1_{\mathfrak{E}_{t-1}}
\end{equation}

Then, let $G_t = (1-\eta\alpha)^{-t}(\|w_{t} - w^\star\|^2 - \frac{\eta}{ \alpha})$, we have:
\begin{equation}
\E [G_t1_{\mathfrak{E}_{t-1}} | \mathfrak{F}_{t-1}] \le G_{t-1}1_{\mathfrak{E}_{t-1}} \le G_{t-1}1_{\mathfrak{E}_{t-2}}
\end{equation}
which means $G_t1_{\mathfrak{E}_{t-1}}$ is a supermartingale.

Therefore, with probability 1, we have:
\begin{align}
&|G_t1_{\mathfrak{E}_{t-1}} -\E[G_{t}1_{\mathfrak{E}_{t-1}}|\mathfrak{F}_{t-1}] | \nonumber \\
\le & (1-\eta\alpha)^{-t}
[~ \|w_{t-1} - \eta \nabla f(w_{t-1}) - w^\star\|\cdot \eta\|\xi_{t-1}\| + \eta^2\|\xi_{t-1}\|^2 - \eta^2\sigma^2~]1_{\mathfrak{E}_{t-1}} \nonumber \\
\le & (1-\eta\alpha)^{-t}\cdot \tlO(\mu\eta^{1.5}\log^{\frac{1}{2}} \frac{1}{\eta\zeta}) = d_t
\end{align}

Let 
\begin{equation}
c_t = \sqrt{\sum_{\tau=1}^t d_\tau^2 } =  \tlO(\mu\eta^{1.5}\log^{\frac{1}{2}}  \frac{1}{\eta\zeta})\sqrt{\sum_{\tau=1}^t(1-\eta\alpha)^{-2\tau} } 
\end{equation}
By Azuma's inequality, with probability less than $\tlO(\eta^3\zeta)$,
we have:
\begin{align}
G_t1_{\mathfrak{E}_{t-1}}  > \tlO(1)c_t \log^{\frac{1}{2}} (\frac{1}{\eta\zeta}) + G_0
\end{align}

We know $G_t  > \tlO(1)c_t \log^{\frac{1}{2}}(\frac{1}{\eta\zeta}) + G_0$ is equivalent to:
\begin{align}
\|w_{t} - w^\star\|^2 > \tlO(\eta)
+ \tlO(1) (1-\eta\alpha)^{t}c_t \log^{\frac{1}{2}} (\frac{1}{\eta\zeta})
\end{align}
We know:
\begin{align}
&(1-\eta\alpha)^{t}c_t\log^{\frac{1}{2}} (\frac{1}{\eta\zeta}) =  \mu\cdot \tlO(\eta^{1.5}\log \frac{1}{\eta\zeta})\sqrt{\sum_{\tau=1}^t(1-\eta\alpha)^{2(t-\tau)} } \nonumber \\
= & \mu\cdot \tlO(\eta^{1.5}\log \frac{1}{\eta\zeta})\sqrt{\sum_{\tau=0}^{t-1}(1-\eta\alpha)^{2\tau} }  
\le   \mu\cdot \tlO(\eta^{1.5}\log \frac{1}{\eta\zeta})\sqrt{\frac{1}{1-(1-\eta\alpha)^2}} = \mu\cdot \tlO(\eta\log \frac{1}{\eta\zeta})
\end{align}
This means Azuma's inequality implies, there exist some $\tilde{C} = \tlO(1)$ so that:
\begin{align}
	P\left(\mathfrak{E}_{t-1} \cap\left\{\|w_{t} - w^\star\|^2 > \mu \cdot \tilde{C}\eta\log\frac{1}{\eta\zeta}) \right\} \right)
	\le \tlO(\eta^3\zeta)
\end{align}
By choosing $\mu > \tilde{C}$, this is equivalent to:
\begin{align}
	P\left(\mathfrak{E}_{t-1} \cap\left\{\|w_{t} - w^\star\|^2 > \mu^2\eta\log\frac{1}{\eta\zeta} \right\} \right)
	\le \tlO(\eta^3\zeta)
\end{align}
Then we have:
\begin{align}
	P(\overline{\mathfrak{E}}_{t} ) = 
	P\left(\mathfrak{E}_{t-1} \cap\left\{\|w_{t} - w^\star\| > \mu\sqrt{\eta\log\frac{1}{\eta\zeta}} \right\} \right) 	+ P(\overline{\mathfrak{E}}_{t-1} ) 
	\le  \tlO(\eta^3\zeta)+ P(\overline{\mathfrak{E}}_{t-1} )
\end{align}
By initialization conditions, we know $P(\overline{\mathfrak{E}}_{0} ) = 0$, and thus
$P(\overline{\mathfrak{E}}_{t} )
\le t \tlO(\eta^3\zeta)$. Take $t=\tlO(\frac{1}{\eta^2}\log\frac{1}{\zeta})$, 
we have $P(\overline{\mathfrak{E}}_{t} ) \le \tlO(\eta \zeta \log \frac{1}{\zeta})$.
When $\eta_{\max} = \tlO(1)$ is chosen small enough, and $\eta \le \eta_{\max}/\log(1/\zeta)$, this finishes the proof.
\end{proof}

\begin{lemma} \label{thm::case2}
Under the assumptions of Theorem~\ref{thm:sgdmain_unconstraint}, for any initial point $w_0$ where $\|\nabla f(w_0)\| \le \sqrt{2\eta\sigma^2\beta d} < \epsilon$, and $\lambda_{\min}(\mathcal{H}(w_0)) \le -\gamma$, then 
there is a number of steps $T$ that depends on $w_0$ such that:
\begin{equation}
	\E f(w_T) - f(w_0) \le - \tlOmega(\eta)
\end{equation}
The number of steps $T$ has a fixed upper bound $T_{max}$ that is independent of $w_0$ where $T \le T_{max} = O((\log d)/\gamma\eta)$.
\end{lemma}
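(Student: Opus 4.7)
\smallskip

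\noindent\textbf{Proof proposal for Lemma \ref{thm::case2}.}
The plan is to implement the coupling idea sketched in Section~\ref{sec:sgd}: compare the true iterates $w_t$ with an auxiliary sequence $\tilde w_t$ that obeys SGD on the quadratic Taylor expansion of $f$ at $w_0$, and show that (i) on the quadratic $\tilde f$, isotropic noise gets amplified by the negative eigenvalue of $\mathcal H(w_0)$ into a $\tlOmega(\eta)$ decrease of $\E\tilde f(\tilde w_T)$ after $T=\Theta(\log d/(\eta\gamma))$ steps, and (ii) with high probability the cubic error between $f$ and $\tilde f$ along the trajectory is negligible.

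\medskip

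\noindent\emph{Step 1: Set up the coupled quadratic.} Let $\mathcal H=\mathcal H(w_0)$, $g=\nabla f(w_0)$, and define
\[
\tilde f(w)=f(w_0)+g^T(w-w_0)+\tfrac12(w-w_0)^T\mathcal H(w-w_0).
\]
Using the same noise sequence $\xi_t$ as the true dynamics, define $\tilde w_0=w_0$ and $\tilde w_{t+1}=\tilde w_t-\eta(\nabla\tilde f(\tilde w_t)+\xi_t)$. Letting $u_t=\tilde w_t-w_0$, one has the closed-form linear recursion $u_{t+1}=(I-\eta\mathcal H)u_t-\eta(g+\xi_t)$, which I can iterate to get an explicit expression for $u_t$ as a linear combination of $g$ and the $\xi_s$'s.

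\medskip

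\noindent\emph{Step 2: Quadratic decrease.} Diagonalize $\mathcal H$ and split $u_t$ along its eigendirections. In the eigendirection with eigenvalue $-\gamma'\le-\gamma$, the noise variance compounds as $\eta^2\sigma_\perp^2\sum_{s<t}(1+\eta\gamma')^{2s}$, so choosing $T$ to be the smallest integer with $(1+\eta\gamma)^{2T}\ge d^{2}$ (hence $T=O(\log d/(\eta\gamma))$ and $T\le T_{\max}$) gives variance $\tlOmega(1/\gamma)$ in that direction. Taking expectations of $\tilde f$ (using $\E\xi_t=0$ and $\E\xi_s\xi_t^T=\sigma^2 I\delta_{st}$), the quadratic term contributes $\tfrac12\E u_T^T\mathcal H u_T\le -\tlOmega(\eta)$ from that eigendirection, while the positive eigendirections can only add at most $O(\eta)$ per step multiplied by an $O(1)$ saturated variance, and the $g$-contribution is non-positive because it behaves like a gradient step on $\tilde f$ with small gradient. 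This yields $\E\tilde f(\tilde w_T)-\tilde f(\tilde w_0)\le-\tlOmega(\eta)$.

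\medskip

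\noindent\emph{Step 3: Localize both trajectories.} Let $r=\mu\sqrt{\eta\log(T_{\max}/\eta)}$ for a $\mu=\tlO(1)$ to be fixed, and define the stopping event $\mathfrak E_t=\{\max_{s\le t}\|w_s-w_0\|\le r\}$. On $\mathfrak E_{t-1}$ the Hessian of $f$ along the segment from $w_0$ to $w_s$ is within $\rho r$ of $\mathcal H$, so $\|\nabla f(w_s)-\nabla\tilde f(w_s)\|\le \tfrac{\rho}{2}\|w_s-w_0\|^2\le\tfrac{\rho}{2}r^2$. From the linear recursion for $e_t=w_t-\tilde w_t$, this propagates to $\|e_t\|\le \eta\cdot\tfrac{\rho}{2}r^2\cdot\sum_{s<t}(1+\eta\beta)^{s}=\tlO(r^2)$, which is $o(r)$ once $\mu$ is taken large enough. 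Using Azuma (exactly as in Lemma~\ref{thm::case3}, applied now to the linear recursion for $u_t$ whose coordinates are sums of at most $T_{\max}$ sub-Gaussian-ish increments bounded by $\eta(Q+1)(1+\eta\beta)^{T_{\max}}=\tlO(\eta)$) shows $\Pr[\|u_t\|>r/2\text{ for some }t\le T]\le \zeta_0$ for any prescribed $\zeta_0=\tlO(1)$; combined with the deterministic bound $\|e_t\|\le r/2$ on $\mathfrak E_{t-1}$, this gives $\Pr[\overline{\mathfrak E_T}]\le\zeta_0$.

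\medskip

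\noindent\emph{Step 4: Combine.} By $\rho$-Lipschitz Hessian, on $\mathfrak E_T$,
\[
|f(w_T)-\tilde f(w_T)|\le \tfrac{\rho}{6}\|w_T-w_0\|^3\le \tlO(r^3)=\tlo(\eta),
\]
and similarly $|\tilde f(w_T)-\tilde f(\tilde w_T)|\le \beta\|e_T\|(\|u_T\|+\|e_T\|)=\tlo(\eta)$. Off $\mathfrak E_T$, the boundedness $|f|\le B$ contributes at most $2B\Pr[\overline{\mathfrak E_T}]$, which is made $\tlo(\eta)$ by choosing $\zeta_0$ small. Combining with Step~2 yields $\E f(w_T)-f(w_0)\le-\tlOmega(\eta)$, as desired.

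\medskip

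\noindent\emph{Main obstacle.} The delicate part is Step~3: the negative eigenvalue direction \emph{must} be allowed to grow (that is what drives the drop), yet the overall displacement $\|w_t-w_0\|$ must stay small enough that the cubic Taylor remainder and the coupling error $\|e_t\|$ are both $\tlo(\sqrt\eta)$-kind of order over the full horizon of $T=O(\log d/(\eta\gamma))$ steps. Choosing $T$ just large enough to amplify the noise to variance $\Theta(1/\gamma)$ in the bad direction — and no larger — while tuning $\mu$ so that the Azuma tail bound, the quadratic gain, and the cubic loss balance, is where the bulk of the technical work will sit.
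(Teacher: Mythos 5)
Your proposal follows essentially the same route as the paper: couple the true iterates with SGD on the quadratic Taylor model $\tilde f$ at $w_0$, diagonalize $\mathcal H(w_0)$ to show the noise accumulated along the negative eigendirection forces $\E\tilde f(\tilde w_T)-\tilde f(w_0)\le-\tlOmega(\eta)$, localize both trajectories by a martingale/Azuma argument, and absorb the cubic remainder and off-event contributions. That matches the paper's Lemmas \ref{lem::case_Gaussian} and \ref{lem::saddle_and_maximum} and the subsequent combination step, so the overall structure is correct.

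The one place where you should tighten the argument is exactly the issue you flag at the end. You pick $T$ as the smallest integer with $(1+\eta\gamma)^{2T}\ge d^2$, i.e.\ using only the lower bound $\gamma$. But the Lemma explicitly allows $T$ to \emph{depend on $w_0$}, and the paper exploits this: it chooses $T$ via
$\frac{d}{\eta\gamma_0}\le\sum_{\tau=0}^{T-1}(1+\eta\gamma_0)^{2\tau}<\frac{3d}{\eta\gamma_0}$
with $\gamma_0=-\lambda_{\min}(\mathcal H(w_0))$, so that the amplification factor in the bad direction is $(1+\eta\gamma_0)^{2T}=\Theta(d)$ and the displacement stays $\tlO(\sqrt\eta)$. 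If you instead run for $T=\Theta(\log d/(\eta\gamma))$ steps when $\gamma_0\gg\gamma$, the bad-direction factor becomes $(1+\eta\gamma_0)^{2T}\approx d^{2\gamma_0/\gamma}$, so the displacement, coupling error $\|e_T\|$, and cubic remainder all pick up an extra $d^{\Theta(\gamma_0/\gamma)}$. One can argue $\gamma_0\le\beta$ so this is still $\mathrm{poly}(d)$ and technically hidden by $\tlO$, but then $\eta_{\max}$ degrades badly, and the "variance $\Theta(1/\gamma)$" claim you want to match with the positive-direction contributions no longer holds directly. Choosing $T$ adaptively in $\gamma_0$, exactly as the paper does, resolves all of this cleanly; this is a needed ingredient, not just an optimization.

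Two smaller points: in Step 4, $\tilde f(w_T)-\tilde f(\tilde w_T)$ also has a linear term $g^T e_T$ that your bound $\beta\|e_T\|(\|u_T\|+\|e_T\|)$ omits (it is of the same $\tlo(\eta)$ order, so harmless); and when you peel off $\overline{\mathfrak E_T}$ you should subtract the corresponding $\E\tilde\Lambda\,1_{\overline{\mathfrak E_T}}$ term as well, not only $\E[f(w_T)-f(w_0)]1_{\overline{\mathfrak E_T}}$ — both are bounded since the quadratic-model trajectory with bounded noise stays bounded over $T_{\max}$ steps, so the fix is routine.
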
 

\begin{remark}
In general, if we relax the assumption $\E \xi\xi^T  = \sigma^2I$ to 
$ \sigma_{\min}^2I\preceq\E \xi\xi^T \preceq \sigma_{\max}^2I$, the upper bound $T_{max}$ of number of steps required in 
Lemma \ref{thm::case2} would be increased to $T_{max} = O(\frac{1}{\gamma\eta}(\log d+ \log \frac{\sigma_{\max}}{\sigma_{\min}}))$
\end{remark}
As we described in the proof sketch, the main idea is to consider a coupled update sequence that correspond to the local second-order approximation of $f(x)$ around $w_0$. We characterize this sequence of update in the next lemma.
%
%

%
%
%
%

\begin{lemma} \label{lem::case_Gaussian}
Under the assumptions of Theorem~\ref{thm:sgdmain_unconstraint}.
Let $\tilde{f}$ defined as local second-order approximation of $f(x)$ around $w_0$:
\begin{equation}\label{def_tilde_f}
\tilde{f}(w) \doteq f(w_0) + \nabla f(w_0)^T (w-w_0) + \frac{1}{2}(w-w_0)^T\mathcal{H}(w_0)(w-w_0)
\end{equation}
$\{\tilde{w}_t\}$ be the corresponding sequence generated by running SGD on function $\tilde{f}$, with $\tilde{w}_0 = w_0$.
For simplicity, denote $\mathcal{H} = \mathcal{H}(w_0)= \Hess f(w_0)$,
then we have analytically:
\begin{align}
&\nabla \tilde{f}(\tilde{w}_t)= (1-\eta\mathcal{H})^t\nabla f(w_0) -\eta \mathcal{H}\sum_{\tau=0}^{t-1}(1-\eta\mathcal{H})^{t-\tau-1}\xi_{\tau}\\
 &\tilde{w}_{t} - w_0 = -\eta \sum_{\tau = 0}^{t-1}(1-\eta \mathcal{H})^\tau \nabla f(w_0) -\eta
	\sum_{\tau=0}^{t-1}(1-\eta\mathcal{H})^{t-\tau-1}\xi_{\tau}  \label{dif_x}
\end{align}

Furthermore, for any initial point $w_0$ where $\|\nabla f(w_0)\| \le \tlO(\eta) < \epsilon$, and $\lambda_{\min}(\mathcal{H}(w_0)) = -\gamma_0$. 
Then, there exist a $T \in \mathbb{N}$ satisfying:
\begin{equation}\label{choose_t}
 \frac{d}{\eta\gamma_0} \le \sum_{\tau=0}^{T-1}(1+\eta \gamma_0)^{2\tau} < \frac{3d}{\eta\gamma_0}
\end{equation}
with probability at least $1-\tlO(\eta^3)$, we have
following holds simultaneously for all $t\le T$:
\begin{equation}
\|\tilde{w}_t - w_0\| \le \tlO(\eta^{\frac{1}{2}}\log \frac{1}{\eta}); 
\quad\quad
\|\nabla\tilde{f}(\tilde{w}_t)\| \le \tlO(\eta^{\frac{1}{2}}\log \frac{1}{\eta})
\end{equation}
\end{lemma}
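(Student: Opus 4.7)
The two identities arise by direct induction on the linear SGD recurrence on the quadratic $\tilde f$. Write $\Delta_t = \tilde w_t - w_0$ and note that $\nabla\tilde f(w) = \nabla f(w_0) + \mathcal{H}(w-w_0)$, so the SGD update specializes to $\Delta_{t+1} = (I - \eta\mathcal{H})\Delta_t - \eta\nabla f(w_0) - \eta\xi_t$ with $\Delta_0 = 0$; unrolling this linear recurrence yields the stated formula for $\tilde w_t - w_0$. Substituting it into $\nabla\tilde f(\tilde w_t) = \nabla f(w_0) + \mathcal{H}\Delta_t$ and simplifying with the operator identity $I - (I-A)\sum_{\tau=0}^{t-1} A^\tau = A^t$ for $A = I - \eta\mathcal{H}$ gives the formula for $\nabla\tilde f(\tilde w_t)$.

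For the concentration statement I would diagonalize $\mathcal{H}$ and split each of the two quantities into a deterministic part (driven by $\nabla f(w_0)$) and a stochastic part (a weighted sum of the independent, bounded, mean-zero $\xi_\tau$'s). The only dangerous eigendirection is the one with smallest eigenvalue $-\gamma_0$, where all weights grow as $(1+\eta\gamma_0)^t$. The defining condition (\ref{choose_t}) is exactly the right cutoff to control this growth: its upper bound on $\sum_{\tau=0}^{T-1}(1+\eta\gamma_0)^{2\tau}$ implies $(1+\eta\gamma_0)^{2T} \le \tlO(d) = \tlO(1)$, so $(1+\eta\gamma_0)^t = \tlO(1)$ for every $t \le T$. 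Combined with $\|\nabla f(w_0)\| \le \tlO(\sqrt\eta)$, this bounds the deterministic parts of both $\tilde w_t - w_0$ and $\nabla\tilde f(\tilde w_t)$ by $\tlO(\sqrt\eta)$.

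The stochastic parts take the form $\sum_{\tau=0}^{t-1} A_\tau \xi_\tau$ with matrix weights $A_\tau$ equal to $\eta(I-\eta\mathcal{H})^{t-1-\tau}$ or $\eta\mathcal{H}(I-\eta\mathcal{H})^{t-1-\tau}$. Working in the eigenbasis and again invoking (\ref{choose_t}), the squared weights satisfy $\sum_\tau \|A_\tau\|^2 = \tlO(\eta)$ in every eigendirection (the $-\gamma_0$ direction is the tightest). Applying Azuma--Hoeffding to each of the $d$ coordinates and taking a union bound gives, for each fixed $t$, the bound $\|\sum_\tau A_\tau \xi_\tau\| \le \tlO(\sqrt{\eta \log(1/\eta)})$ with failure probability $\tlO(\eta^4)$. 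Union bounds over $t \le T = \tlO(1/\eta)$ and over the two quantities then produce the stated simultaneous guarantee with total failure probability $\tlO(\eta^3)$.

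The main technical point --- and the conceptual heart of the whole argument --- is the tight calibration of (\ref{choose_t}). It is chosen to simultaneously (i) allow the $-\gamma_0$ eigendirection to be amplified by a factor $\tlO(\sqrt d)$, which is ultimately what drives the $\tlOmega(\eta)$ decrease of $f$ in Lemma~\ref{thm::case2}, while (ii) keeping both the deterministic amplification of $\nabla f(w_0)$ and the accumulated stochastic variance at the $\tlO(\sqrt\eta)$ scale. Getting both sides of this balance to work for the same choice of $T$, and in particular verifying the estimate $(1+\eta\gamma_0)^{2T} = \tlO(d)$ used above, is where the analysis is tightest; the rest is a routine application of the linearity of the update and standard martingale concentration.
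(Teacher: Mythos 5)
Your proposal is correct and follows essentially the same route as the paper: unroll the linear SGD recursion on the quadratic $\tilde f$ to get the closed forms, then split each into a deterministic part controlled by the bound $(1+\eta\gamma_0)^{T}=\tilde O(\sqrt d)$ implied by (\ref{choose_t}), and a stochastic part handled by per-coordinate Hoeffding (the $\xi_\tau$'s are independent, so Azuma is overkill but equivalent here) with $\sum_\tau\|A_\tau\|^2=\tilde O(\eta)$, followed by a union bound over the $d$ coordinates, $t\le T=\tilde O(1/\eta)$, and the two displays. One small remark: you state the hypothesis as $\|\nabla f(w_0)\|\le\tilde O(\sqrt\eta)$ whereas the lemma as printed says $\tilde O(\eta)$; since the lemma is invoked in Lemma~\ref{thm::case2} with only the bound $\sqrt{2\eta\sigma^2\beta d}=\tilde O(\sqrt\eta)$, your reading is in fact the one the argument needs, and the printed hypothesis appears to be a typo rather than a stronger assumption.
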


\begin{proof}
Denote $\mathcal{H} = \mathcal{H}(w_0)$, since $\tilde{f}$ is quadratic, clearly we have:
\begin{equation}\label{derivative_tilde_recursive}
\nabla \tilde{f}(\tilde{w}_t) = \nabla \tilde{f}(\tilde{w}_{t-1}) + \mathcal{H} (\tilde{w}_t - \tilde{w}_{t-1})
\end{equation}
Substitute the update equation of SGD in Eq.(\ref{derivative_tilde_recursive}), we have:
\begin{align} \label{derivative_tilde}
\nabla \tilde{f}(\tilde{w}_t) &= \nabla \tilde{f}(\tilde{w}_{t-1}) - \eta\mathcal{H} ( \nabla \tilde{f}(\tilde{w}_{t-1}) + \xi_{t-1})\nonumber \\
				&= (1-\eta\mathcal{H})\nabla \tilde{f}(\tilde{w}_{t-1}) -\eta \mathcal{H}\xi_{t-1}  \nonumber\\
				&= (1-\eta\mathcal{H})^2\nabla \tilde{f}(\tilde{w}_{t-2}) -\eta \mathcal{H}\xi_{t-1}
				-\eta  \mathcal{H}(1-\eta\mathcal{H})\xi_{t-2}=\cdots\nonumber\\
				& = (1-\eta\mathcal{H})^t\nabla f(w_0) - \eta \mathcal{H}\sum_{\tau=0}^{t-1}(1-\eta\mathcal{H})^{t-\tau-1}\xi_{\tau}
\end{align}
Therefore, we have:
\begin{align} \label{dif_tilde}
\tilde{w}_{t} - w_0 &= -\eta\sum_{\tau=0}^{t-1} (\nabla \tilde{f}(\tilde{w}_\tau) + \xi_\tau) \nonumber \\
&= -\eta\sum_{\tau=0}^{t-1}\left  (
(1-\eta\mathcal{H})^\tau \nabla f(w_0) - \eta \mathcal{H}\sum_{\tau'=0}^{\tau-1}(1-\eta\mathcal{H})^{\tau -\tau'-1}\xi_{\tau'}
 + \xi_\tau\right ) \nonumber \\
 &= -\eta \sum_{\tau = 0}^{t-1}(1-\eta \mathcal{H})^\tau \nabla f(w_0) -\eta
	\sum_{\tau=0}^{t-1}(1-\eta\mathcal{H})^{t-\tau-1}\xi_{\tau} 
\end{align}

Next, we prove the existence of $T$ in Eq.(\ref{choose_t}).
Since $\sum_{\tau=0}^{t}(1+\eta \gamma_0)^{2\tau}$ is monotonically increasing w.r.t $t$, 
and diverge to infinity as $t \rightarrow \infty$. We know there is always some 
$T \in \mathbb{N}$ gives $\frac{d}{\eta\gamma_0} \le \sum_{\tau=0}^{T-1}(1+\eta \gamma_0)^{2\tau}$.
Let $T$ be the smallest integer satisfying above equation.
By assumption, we know $\gamma \le \gamma_0 \le L$, and
\begin{equation}
\sum_{\tau=0}^{t+1}(1+\eta \gamma_0)^{2\tau}
= 1 + (1+\eta \gamma_0)^2\sum_{\tau=0}^{t}(1+\eta \gamma_0)^{2\tau}
\end{equation}
we can choose $\eta_{\max} < \min\{(\sqrt{2}-1)/L, 2d/\gamma\}$ so that
\begin{equation}
\frac{d}{\eta\gamma_0} \le \sum_{\tau=0}^{T-1}(1+\eta \gamma_0)^{2\tau} 
\le 1 + \frac{2d}{\eta\gamma_0} \le \frac{3d}{\eta\gamma_0} 
\end{equation}

Finally, by Eq.(\ref{choose_t}), we know $T = O(\log d/\gamma_0\eta)$, and $(1+\eta \gamma_0)^T \le \tlO(1)$. Also because $\E\xi = 0$ and $\|\xi\| \le Q= \tlO(1)$ with probability 1, then by Hoeffding inequality, we have for each dimension $i$ and time $t\le T$:
\begin{equation}
P\left( |\eta\sum_{\tau=0}^{t-1}(1-\eta\mathcal{H})^{t-\tau-1}\xi_{\tau, i}|
>  \tlO(\eta^{\frac{1}{2}}\log{\frac{1}{\eta}})\right) \le e^{-\tlOmega(\log^2 \frac{1}{\eta})} \le \tlO(\eta^4)
\end{equation}

then by summing over dimension $d$ and taking union bound over all $t\le T$, we directly have:
\begin{equation}
P\left( \forall t\le T, \|\eta\sum_{\tau=0}^{t-1}(1-\eta\mathcal{H})^{t-\tau-1}\xi_{\tau}\|
>  \tlO(\eta^{\frac{1}{2}}\log \frac{1}{\eta})\right) \le \tlO(\eta^3).
\end{equation}
Combine this fact with Eq.(\ref{derivative_tilde}) and Eq.(\ref{dif_tilde}), we finish the proof.


\end{proof}

Next we need to prove that the two sequences of updates are always close.

\begin{lemma} \label{lem::saddle_and_maximum}
Under the assumptions of Theorem~\ref{thm:sgdmain_unconstraint}.
and let $\{w_t\}$ be the corresponding sequence generated by running SGD on function $f$.  
Also let $\tilde{f}$ and $\{\tilde{w}_t\}$ be defined as in Lemma \ref{lem::case_Gaussian}.
Then, for any initial point $w_0$ where $\|\nabla f(w_0)\| \le \tlO(\eta) < \epsilon$, and $\lambda_{\min}(\Hess f(w_0)) = -\gamma_0$. Given the choice of $T$ as in Eq.(\ref{choose_t}), 
with probability at least $1-\tlO(\eta^2)$, we have following holds simultaneously for all $t\le T$:
\begin{align}
\|w_t - \tilde{w}_t\| \le \tlO( \eta \log^2\frac{1}{\eta});
\quad\quad
\|\nabla f(w_t) - \nabla \tilde{f}(\tilde{w}_t)\| \le \tlO( \eta \log^2\frac{1}{\eta})
\end{align}
\end{lemma}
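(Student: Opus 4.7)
The plan is to exploit the fact that the two sequences $\{w_t\}$ and $\{\tilde{w}_t\}$ use exactly the same noise $\xi_t$, so when we subtract the two SGD updates the noise cancels and we are left with a purely deterministic recurrence driven by the Taylor remainder of $f$ around $w_0$. Concretely, writing $\Delta_t = w_t - \tilde{w}_t$ and using $\nabla \tilde{f}(w) - \nabla \tilde{f}(\tilde{w}) = \mathcal{H}(w - \tilde{w})$ since $\tilde{f}$ is quadratic, I would derive
\begin{equation*}
\Delta_{t+1} = (I - \eta \mathcal{H})\Delta_t - \eta \bigl[\nabla f(w_t) - \nabla \tilde{f}(w_t)\bigr],
\end{equation*}
which unrolled (using $\Delta_0 = 0$) gives $\Delta_t = -\eta \sum_{\tau=0}^{t-1}(I-\eta\mathcal{H})^{t-1-\tau} e_\tau$ with $e_\tau := \nabla f(w_\tau) - \nabla \tilde{f}(w_\tau)$. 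The Taylor remainder plus the $\rho$-Lipschitz Hessian assumption bounds $\|e_\tau\| \le \tfrac{\rho}{2}\|w_\tau - w_0\|^2$.

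Next I would bound the operator norm factor. Since $\lambda_{\min}(\mathcal{H}) = -\gamma_0 \ge -\beta$ and we take $\eta_{\max}\le 1/\beta$, we have $\|I - \eta\mathcal{H}\| \le 1+\eta\gamma_0$, so
\begin{equation*}
\|\Delta_t\| \le \eta \sum_{\tau=0}^{t-1}(1+\eta\gamma_0)^{t-1-\tau}\|e_\tau\| \le \frac{(1+\eta\gamma_0)^T}{\gamma_0}\max_{\tau<t}\|e_\tau\|.
\end{equation*}
By the choice of $T$ in Lemma~\ref{lem::case_Gaussian} we have $(1+\eta\gamma_0)^T = \tilde{O}(1)$, so the geometric sum contributes only a $\tilde{O}(1)$ factor — this is the crucial structural point that keeps the error from exploding over $T=\tilde{\Theta}(1/(\eta\gamma_0))$ steps.

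To close the loop I would use a simple induction on $t \le T$. Condition on the high-probability event of Lemma~\ref{lem::case_Gaussian}, which gives $\|\tilde{w}_\tau - w_0\| \le \tilde{O}(\sqrt{\eta}\log(1/\eta))$ for all $\tau \le T$. Inductively assume $\|\Delta_\tau\| \le \tilde{O}(\eta \log^2(1/\eta))$ for all $\tau < t$; then by the triangle inequality $\|w_\tau - w_0\| \le \tilde{O}(\sqrt{\eta}\log(1/\eta))$, hence $\|e_\tau\| \le \tfrac{\rho}{2}\|w_\tau - w_0\|^2 \le \tilde{O}(\eta\log^2(1/\eta))$. Plugging this into the geometric bound above yields $\|\Delta_t\| \le \tilde{O}(\eta\log^2(1/\eta))$, completing the induction. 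The gradient bound then follows immediately from
\begin{equation*}
\nabla f(w_t) - \nabla \tilde{f}(\tilde{w}_t) = e_t + \mathcal{H}\Delta_t,
\end{equation*}
and $\|\mathcal{H}\|\le\beta = \tilde{O}(1)$.

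The main obstacle is ensuring that the error does not amplify over the $T$ steps, which is precisely why Lemma~\ref{lem::case_Gaussian} chose $T$ so that $(1+\eta\gamma_0)^T = \tilde{O}(1)$; without that careful choice, the operator $(I-\eta\mathcal{H})$ having an eigenvalue as large as $1+\eta\gamma_0$ on the negative-curvature direction could in principle blow up the Taylor remainder geometrically. A secondary subtlety is that the induction is clean only because the bound $\tilde{O}(\eta\log^2(1/\eta))$ on $\Delta_t$ is already an order of magnitude smaller than the bound $\tilde{O}(\sqrt{\eta}\log(1/\eta))$ on $\|\tilde{w}_t - w_0\|$, so $\|w_\tau - w_0\|$ is controlled by $\tilde{w}_\tau$ alone, and the inductive hypothesis feeds back into itself with room to spare. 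The probability bound in the statement is inherited directly from Lemma~\ref{lem::case_Gaussian}, since everything past that conditioning is deterministic.
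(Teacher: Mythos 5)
Your proof is correct, and it takes a genuinely different and cleaner route than the paper's.

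The paper's proof tracks the \emph{gradient} difference $\Delta_t = \nabla f(w_t) - \nabla\tilde{f}(\tilde{w}_t)$. Its recursion (Eq.~(\ref{Delta_recursive})) picks up a residual stochastic term $-\eta\,\mathcal{H}'_{t-1}\,\xi_{t-1}$ (with $\mathcal{H}'_{t-1} = \mathcal{H}(w_{t-1})-\mathcal{H}(w_0)$), because $\nabla f(w_t)$ is propagated using Hessians evaluated at the intermediate iterates rather than at $w_0$. Consequently the paper must set up a supermartingale in $G_t$ and invoke Azuma--Hoeffding a second time to show the accumulated randomness stays $\tilde{O}(\eta\log^2\frac{1}{\eta})$.

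You instead track the \emph{point} difference $\Delta_t = w_t - \tilde{w}_t$. Because both sequences are driven by the \emph{same} noise realization $\xi_t$, subtracting the update rules cancels the noise exactly, leaving the recursion $\Delta_{t+1} = (I-\eta\mathcal{H})\Delta_t - \eta\, e_t$ with $e_t = \nabla f(w_t)-\nabla\tilde f(w_t)$ the Taylor remainder, which is deterministically bounded by $\frac{\rho}{2}\|w_t - w_0\|^2$. This eliminates the need for any second concentration inequality: the only probabilistic step is conditioning on the event of Lemma~\ref{lem::case_Gaussian} (which already controls $\|\tilde w_\tau - w_0\|$ for all $\tau\le T$ with probability $1-\tlO(\eta^3)$), and the rest is a deterministic induction. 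Your use of the geometric-sum identity $\eta\sum_{\tau<t}(1+\eta\gamma_0)^{t-1-\tau} \le (1+\eta\gamma_0)^T/\gamma_0 = \tlO(1)/\gamma_0$ exactly mirrors what makes the paper's choice of $T$ work, and the two decompositions lead to the same bound $\tlO(\eta\log^2\frac1\eta)$. Your argument is shorter, avoids the martingale machinery, and in fact yields a slightly stronger failure probability ($\tlO(\eta^3)$ rather than the paper's $\tlO(\eta^2)$), though both satisfy the stated lemma.

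One small remark on presentation: the recursion you write is deterministic only conditionally on the trajectory --- $e_t$ depends on $w_t$, which is random --- but you handle this correctly in the final paragraph by closing the induction on the high-probability event, so there is no gap, only a word choice (``purely deterministic'') that could mislead a reader.
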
 

\begin{proof}

First, we have update function of gradient by:
\begin{align}
\nabla f(w_t) = & \nabla f(w_{t-1}) + \int_{0}^{1}\mathcal{H}(w_{t-1} + t(w_t - w_{t-1})) \mathrm{d}t \cdot (w_t - w_{t-1}) \nonumber \\
= & \nabla f(w_{t-1}) + \mathcal{H}(w_{t-1}) (w_t - w_{t-1}) + \theta_{t-1}\label{derivative_recursive}
\end{align}
where the remainder:
\begin{equation}
\theta_{t-1} \equiv \int_{0}^{1}\left[\mathcal{H}(w_{t-1} + t(w_t - w_{t-1})) - \mathcal{H}(w_{t-1})\right] \mathrm{d}t \cdot (w_t - w_{t-1})
\end{equation}
Denote $\mathcal{H} = \mathcal{H}(w_0)$, and $\mathcal{H}'_{t-1} = \mathcal{H}(w_{t-1}) - \mathcal{H}(w_0)$.
By Hessian smoothness, we immediately have:
\begin{align} 
&\|\mathcal{H}'_{t-1}\| = \|\mathcal{H}(w_{t-1}) - \mathcal{H}(w_0)\| 
\le \rho \|w_{t-1} - w_0\| \le \rho (\|w_t - \tilde{w}_t\| + \|\tilde{w}_t - w_0\|) 
\label{H'_smooth} \\
&\|\theta_{t-1}\| \le \frac{\rho}{2} \|w_t - w_{t-1}\|^2 \label{theta_smooth}
\end{align}

Substitute the update equation of SGD (Eq.(\ref{SGD_update})) into Eq.(\ref{derivative_recursive}), we have:
\begin{align}
\nabla f(w_t) &= \nabla f(w_{t-1}) -\eta(\mathcal{H}+ \mathcal{H}'_{t-1}) ( \nabla f(w_{t-1}) + \xi_{t-1}) + \theta_{t-1} \nonumber \\
				&= (1-\eta\mathcal{H})\nabla f(w_{t-1}) - \eta \mathcal{H}\xi_{t-1} -\eta \mathcal{H}'_{t-1}
				(\nabla f(w_{t-1}) + \xi_{t-1})+ \theta_{t-1}  \label{derivative}
\end{align}

Let $\Delta_t = \nabla f(w_t) - \nabla \tilde{f}(\tilde{w}_t)$ denote the difference in gradient, then
from Eq.(\ref{derivative_tilde}), Eq.(\ref{derivative}), and Eq.(\ref{SGD_update}), we have:
\begin{align} \label{Delta_recursive}
&\Delta_t =  (1-\eta \mathcal{H}) \Delta_{t-1} -\eta \mathcal{H}'_{t-1} [\Delta_{t-1} + \nabla \tilde{f}(\tilde{w}_{t-1}) + \xi_{t-1}]  + \theta_{t-1}\\
&w_t - \tilde{w}_t =  -\eta \sum_{\tau = 0}^{t-1} \Delta_\tau \label{dif}
\end{align}

Let filtration $\mathfrak{F}_t = \sigma\{\xi_0, \cdots \xi_{t-1}\}$, and note $\sigma\{\Delta_0, \cdots, \Delta_t \} \subset \mathfrak{F}_t$, where $\sigma\{\cdot\}$ denotes the sigma field. Also, let event $\mathfrak{K}_t = \{\forall \tau \le t, ~\|\nabla\tilde{f}(\tilde{w}_\tau)\| \le \tlO(\eta^{\frac{1}{2}}\log \frac{1}{\eta}), ~ 
\|\tilde{w}_\tau - w_0\| \le \tlO(\eta^{\frac{1}{2}}\log \frac{1}{\eta})\}$,
and $\mathfrak{E}_t = \{\forall \tau \le t, ~\|\Delta_{\tau}\| \le \mu \eta\log^2\frac{1}{\eta}\}$, where $\mu$ is independent of $(\eta, \zeta)$, and will be specified later. Again, $\tilde{O}$ notation in this proof will never hide any dependence on $\mu$.
Clearly, we have 
$\mathfrak{K}_t \subset \mathfrak{K}_{t-1}$
($\mathfrak{E}_t \subset \mathfrak{E}_{t-1}$), 
thus $1_{\mathfrak{K}_t} \le 1_{\mathfrak{K}_{t-1}}$
($1_{\mathfrak{E}_t} \le 1_{\mathfrak{E}_{t-1}}$), where $1_\mathfrak{K}$ is the indicator function of event $\mathfrak{K}$.

We first need to carefully bounded all terms in Eq.(\ref{Delta_recursive}), 
conditioned on event $\mathfrak{K}_{t-1}\cap \mathfrak{E}_{t-1}$, by Eq.(\ref{H'_smooth}), Eq.(\ref{theta_smooth})), and Eq.(\ref{dif}),
with probability 1, for all $t\le T\le O(\log d/\gamma_0 \eta)$, we have: 
\begin{align}
\|(1-\eta \mathcal{H}) \Delta_{t-1}\| \le \tlO(\mu\eta\log^2\frac{1}{\eta}) 
&\quad \quad 
\|\eta  \mathcal{H}'_{t-1} (\Delta_{t-1} + \nabla \tilde{f}(\tilde{w}_{t-1}))\|
\le \tlO(\eta^2 \log^2 \frac{1}{\eta}) 
\nonumber \\
\|\eta\mathcal{H}'_{t-1}\xi_{t-1}\| \le \tlO(\eta^{1.5} \log \frac{1}{\eta})
&\quad \quad \|\theta_{t-1}\| \le \tlO(\eta^2) \label{order_of_term}
\end{align}

Since event $\mathfrak{K}_{t-1}\subset \mathfrak{F}_{t-1},  \mathfrak{E}_{t-1}\subset \mathfrak{F}_{t-1}$ thus independent of $\xi_{t-1}$, we also have:
\begin{align}
&\E [((1-\eta \mathcal{H}) \Delta_{t-1})^T\eta\mathcal{H}'_{t-1}\xi_{t-1} 1_{\mathfrak{K}_{t-1}\cap \mathfrak{E}_{t-1}} ~|~ \mathfrak{F}_{t-1}] \nonumber \\
=& 1_{\mathfrak{K}_{t-1}\cap \mathfrak{E}_{t-1}}((1-\eta \mathcal{H}) \Delta_{t-1})^T\eta\mathcal{H}'_{t-1}\E [\xi_{t-1} ~|~ \mathfrak{F}_{t-1}]
= 0
\end{align}

Therefore, from Eq.(\ref{Delta_recursive}) and Eq.(\ref{order_of_term}):
\begin{align}
&\E [\|\Delta_t\|^2_21_{\mathfrak{K}_{t-1}\cap \mathfrak{E}_{t-1}} ~|~ \mathfrak{F}_{t-1}] \nonumber \\
\le &\left[(1+\eta \gamma_0)^2\|\Delta_{t-1}\|^2 
+ (1+\eta \gamma_0) \|\Delta_{t-1}\| \tlO(\eta^2 \log^2 \frac{1}{\eta}) + \tlO(\eta^{3} \log^2 \frac{1}{\eta})\right]1_{\mathfrak{K}_{t-1}\cap \mathfrak{E}_{t-1}}\nonumber \\
\le& \left[(1+\eta \gamma_0)^2\|\Delta_{t-1}\|^2  + \tlO( \mu
\eta^{3}\log^4 \frac{1}{\eta})\right]1_{\mathfrak{K}_{t-1}\cap \mathfrak{E}_{t-1}} 
\end{align}

Define
\begin{align} \label{martingle_case3}
G_t = (1+\eta \gamma_0)^{-2t} [~\|\Delta_t\|^2 + \alpha 
\eta^{2}\log^4 \frac{1}{\eta}~]
\end{align}
Then, when $\eta_{\max}$ is small enough, we have:
\begin{align}
&\E [G_t1_{\mathfrak{K}_{t-1}\cap \mathfrak{E}_{t-1}} ~|~ \mathfrak{F}_{t-1}]  
= (1+\eta \gamma_0)^{-2t}\left[\E [\|\Delta_t\|^2_21_{\mathfrak{K}_{t-1}\cap \mathfrak{E}_{t-1}} ~|~ \mathfrak{F}_{t-1}]  + \alpha \eta^{2}\log^3 \frac{1}{\eta}\right]
1_{\mathfrak{K}_{t-1}\cap \mathfrak{E}_{t-1}} \nonumber \\
\le & (1+\eta \gamma_0)^{-2t}\left[(1+\eta \gamma_0)^2\|\Delta_{t-1}\|^2  +  \tlO(\mu\eta^{3}\log^4 \frac{1}{\eta})
+ \alpha \eta^{2}\log^4 \frac{1}{\eta}\right]
1_{\mathfrak{K}_{t-1}\cap \mathfrak{E}_{t-1}} \nonumber \\
\le & (1+\eta \gamma_0)^{-2t}\left[(1+\eta \gamma_0)^2\|\Delta_{t-1}\|^2 
+ (1+\eta \gamma_0)^2\alpha \eta^{2}\log^4 \frac{1}{\eta}
\right]1_{\mathfrak{K}_{t-1}\cap \mathfrak{E}_{t-1}} \nonumber \\
=  & G_{t-1}1_{\mathfrak{K}_{t-1}\cap \mathfrak{E}_{t-1}} \le G_{t-1}
1_{\mathfrak{K}_{t-2}\cap \mathfrak{E}_{t-2}}
\end{align}
Therefore, we have $\E [G_t1_{\mathfrak{K}_{t-1}\cap \mathfrak{E}_{t-1}} ~|~ \mathfrak{F}_{t-1}] \le G_{t-1}1_{\mathfrak{K}_{t-2}\cap \mathfrak{E}_{t-2}}$ which means 
$G_t1_{\mathfrak{K}_{t-1}\cap \mathfrak{E}_{t-1}}$ is a supermartingale.

On the other hand, we have:
\begin{align}
\Delta_t =  (1-\eta H) \Delta_{t-1} -\eta  \mathcal{H}'_{t-1} (\Delta_{t-1} + \nabla \tilde{f}(\tilde{w}_{t-1})) -\eta \mathcal{H}'_{t-1}\xi_{t-1}  + \theta_{t-1}
\end{align}
Once conditional on filtration $\mathfrak{F}_{t-1}$, the first two terms are deterministic, and only the third and fourth term are random.
Therefore, we know, with probability 1:
\begin{equation}
|~\|\Delta_t\|^2_2 - \E[\|\Delta_t\|^2_2|\mathfrak{F}_{t-1}]~|1_{\mathfrak{K}_{t-1}\cap \mathfrak{E}_{t-1}} \le \tlO(\mu\eta^{2.5} \log^3 \frac{1}{\eta}) 
\end{equation}
Where the main contribution comes from the product of the first term and third term.
Then, with probability 1, we have:
\begin{align}
&| G_t1_{\mathfrak{K}_{t-1}\cap \mathfrak{E}_{t-1}} - \E[G_t1_{\mathfrak{K}_{t-1}\cap \mathfrak{E}_{t-1}}~|~\mathfrak{F}_{t-1}] | \nonumber \\
= &(1+2\eta \gamma_0)^{-2t}\cdot|~\|\Delta_t\|^2_2 - \E[\|\Delta_t\|^2_2|\mathfrak{F}_{t-1}]~|\cdot1_{\mathfrak{K}_{t-1}\cap \mathfrak{E}_{t-1}} 
 \le \tlO(\mu\eta^{2.5} \log^3 \frac{1}{\eta}) = c_{t-1}
\end{align}
By Azuma-Hoeffding inequality, with probability less than $\tlO(\eta^3)$, 
for $t\le T\le O(\log d/\gamma_0 \eta)$:
\begin{equation}
G_t1_{\mathfrak{K}_{t-1}\cap \mathfrak{E}_{t-1}} - G_0\cdot1 > \tlO(1)\sqrt{\sum_{\tau=0}^{t-1}{c^2_\tau}}\log (\frac{1}{\eta}) = \tlO(\mu\eta^{2}\log^4 \frac{1}{\eta})
\end{equation}
This means there exist some $\tilde{C} = \tlO(1)$ so that:
\begin{equation}
	P\left(G_t1_{\mathfrak{K}_{t-1}\cap \mathfrak{E}_{t-1}}  \ge \tilde{C}\mu\eta^{2}\log^4 \frac{1}{\eta}\right) \le \tlO(\eta^3)
\end{equation}
By choosing $\mu>\tilde{C}$, this is equivalent to:
\begin{equation}
	P\left(\mathfrak{K}_{t-1}\cap \mathfrak{E}_{t-1} \cap \left\{\|\Delta_t\|^2 \ge \mu^2 \eta^{2}\log^4 \frac{1}{\eta}\right\}\right) \le \tlO(\eta^3)
\end{equation}
Therefore, combined with Lemma \ref{lem::case_Gaussian}, we have:
\begin{align}
&P\left( \mathfrak{E}_{t-1} \cap \left\{\|\Delta_t\| \ge \mu \eta \log^2 \frac{1}{\eta}\right\}\right)  \nonumber \\
= &P\left(\mathfrak{K}_{t-1}\cap \mathfrak{E}_{t-1} \cap \left\{\|\Delta_t\| \ge \mu\eta \log^2 \frac{1}{\eta}\right\}\right) +
P\left(\overline{\mathfrak{K}}_{t-1}\cap \mathfrak{E}_{t-1} \cap \left\{\|\Delta_t\| \ge \mu\eta \log^2 \frac{1}{\eta}\right\}\right) \nonumber\\
\le &\tlO(\eta^3) + P(\overline{\mathfrak{K}}_{t-1}) \le \tlO(\eta^3)
\end{align}
Finally, we know:
\begin{align}
P(\overline{ \mathfrak{E}}_{t}) = P\left( \mathfrak{E}_{t-1} \cap \left\{\|\Delta_t\| \ge \mu\eta \log^2 \frac{1}{\eta}\right\}\right) + P(\overline{ \mathfrak{E}}_{t-1}) 
\le \tlO(\eta^3) + P(\overline{ \mathfrak{E}}_{t-1}) 
\end{align} 
Because $P(\overline{ \mathfrak{E}}_{0}) =0$, and $T\le \tlO(\frac{1}{\eta})$, we have 
$P(\overline{ \mathfrak{E}}_{T}) \le \tlO(\eta^2)$.
Due to Eq.(\ref{dif}), we have $\|w_t - \tilde{w}_t\| \le  \eta \sum_{\tau = 0}^{t-1} \|\Delta_\tau\|$, then by the definition of $\mathfrak{E}_{T}$, we finish the proof.

\end{proof}

Using the two lemmas above we are ready to prove Lemma~\ref{thm::case2}

\begin{proof}[Proof of Lemma \ref{thm::case2}]
Let $\tilde{f}$ and $\{\tilde{w}_t\}$ be defined as in Lemma \ref{lem::case_Gaussian}.
and also let $\lambda_{\min}(\mathcal{H} (w_0)) =-\gamma_0$. Since $\mathcal{H}(w)$ is 
$\rho$-Lipschitz, for any $w, w_0$, we have:
\begin{equation}
f(w) \le  f(w_0) + \nabla f(w_0)^T (w-w_0) + \frac{1}{2}(w-w_0)^T \mathcal{H}(w_0) (w-w_0)
+ \frac{\rho}{6} \|w-w_0\|^3
\end{equation}

Denote $\tilde{\delta} = \tilde{w}_T - w_0$ and $\delta = w_T - \tilde{w}_T$, we have: 
\begin{align}
	f(w_T) -  f(w_0) \le& \left[\nabla f(w_0)^T (w_T-w_0) + \frac{1}{2}(w_T-w_0)^T \mathcal{H}(w_0) (w_T-w_0)+ \frac{\rho}{6} \|w_T-w_0\|^3\right] \nonumber \\
	=&\left[\nabla f(w_0)^T (\tilde{\delta} + \delta) + \frac{1}{2}(\tilde{\delta} + \delta)^T \mathcal{H}(\tilde{\delta} + \delta) + \frac{\rho}{6}\|\tilde{\delta} + \delta\|^3\right]\nonumber \\
	=&\left[\nabla f(w_0)^T \tilde{\delta} + \frac{1}{2}\tilde{\delta}^T \mathcal{H}\tilde{\delta}\right] 
	+ \left[\nabla f(w_0)^T \delta + 
	\tilde{\delta}^T \mathcal{H}\delta + 
	\frac{1}{2}\delta^T \mathcal{H}\delta + 
	\frac{\rho}{6}\|\tilde{\delta} + \delta\|^3\right]
\end{align}
Where $\mathcal{H} = \mathcal{H}(w_0)$. Denote $\tilde{\Lambda} = \nabla f(w_0)^T \tilde{\delta} + \frac{1}{2}\tilde{\delta}^T \mathcal{H}\tilde{\delta}$ be the first term, and $\Lambda = \nabla f(w_0)^T \delta + 
\tilde{\delta}^T \mathcal{H}\delta + 
\frac{1}{2}\delta^T \mathcal{H}\delta + 
\frac{\rho}{6}\|\tilde{\delta} + \delta\|^3$ be the second term.
We have $f(w_T) -  f(w_0) \le \tilde{\Lambda} + \Lambda$.

Let $\mathfrak{E}_t = \{\forall \tau \le t, 
\|\tilde{w}_\tau - w_0\| \le \tlO(\eta^{\frac{1}{2}}\log \frac{1}{\eta}), 
~\|w_t - \tilde{w}_t\| \le \tlO(\eta\log^2 \frac{1}{\eta})\}$, by the result of Lemma \ref{lem::case_Gaussian} and Lemma \ref{lem::saddle_and_maximum}, we know $P(\mathfrak{E}_T)\ge 1-\tlO(\eta^2)$. Then, clearly, we have:

\begin{align}\label{Lambda_decomp}
	\E f(w_T) -  f(w_0) =& \E [f(w_T) -  f(w_0)]1_{\mathfrak{E}_T} + \E [f(w_T) -  f(w_0)]1_{\overline{\mathfrak{E}}_T} \nonumber \\
	\le & \E \tilde{\Lambda}1_{\mathfrak{E}_T} + \E \Lambda1_{\mathfrak{E}_T} + \E [f(w_T) -  f(w_0)]1_{\overline{\mathfrak{E}}_T} \nonumber \\
	= & \E \tilde{\Lambda} + \E \Lambda1_{\mathfrak{E}_T} + \E [f(w_T) -  f(w_0)]1_{\overline{\mathfrak{E}}_T} - \E \tilde{\Lambda}1_{\overline{\mathfrak{E}}_T}
\end{align}
We will carefully caculate $\E\tilde{\Lambda}$ term first, and then bound remaining term as ``perturbation'' to first term.

Let $\lambda_1, \cdots, \lambda_d$ be the eigenvalues of $\mathcal{H}$. By the result of lemma \ref{lem::case_Gaussian} and simple linear algebra, we have:
\begin{align} 
	\E\tilde{\Lambda} &= - \frac{\eta}{2}\sum_{i=1}^d\sum_{\tau=0}^{2T-1}(1-\eta\lambda_i)^\tau |\nabla_i  f(w_0)|^2
	+ \frac{1}{2}\sum_{i=1}^d \lambda_i \sum_{\tau=0}^{T-1}(1-\eta\lambda_i)^{2\tau}\eta^2\sigma^2
	\nonumber \\
	&\le \frac{1}{2}\sum_{i=1}^d \lambda_i \sum_{\tau=0}^{T-1}(1-\eta\lambda_i)^{2\tau}\eta^2\sigma^2
	\nonumber\\
	&\le \frac{\eta^2\sigma^2}{2}\left[\frac{d-1}{\eta} - \gamma_0\sum_{\tau=0}^{T-1}(1+\eta \gamma_0)^{2\tau}\right]
	\le -\frac{\eta\sigma^2}{2}\label{Lambda_tilde}
\end{align}
The last inequality is directly implied by the choice of $T$ as in Eq.(\ref{choose_t}).
Also, by Eq.(\ref{choose_t}), we also immediately have that 
$T = O(\log d/\gamma_0\eta) \le O(\log d/\gamma\eta)$. 
Therefore, by choose $T_{max}=O(\log d/\gamma\eta)$ with large enough constant, we have
$T \le T_{max} = O(\log d/\gamma\eta)$.

For bounding the second term, by definition of $\mathfrak{E}_t$, we have:
\begin{align}\label{Lambda_E}
	\E\Lambda 1_{\mathfrak{E}_T}=\E\left[\nabla f(w_0)^T \delta + 
	\tilde{\delta}^T \mathcal{H}\delta + 
	\frac{1}{2}\delta^T \mathcal{H}\delta + 
	\frac{\rho}{6}\|\tilde{\delta} + \delta\|^3\right]1_{\mathfrak{E}_T}
	\le \tlO(\eta^{1.5}\log^3 \frac{1}{\eta})
\end{align}

On the other hand,
since noise is bounded as $\|\xi\| \le \tlO(1)$, from the results of Lemma \ref{lem::case_Gaussian}, it's easy to show $\|\tilde{w} - w_0\| = \|\tilde{\delta}\| \le \tlO(1)$ is also bounded with probability 1. Recall the assumption that function $f$ is also bounded, then we have:
\begin{align}\label{Lambda_E_bar}
	&\E [f(w_T) -  f(w_0)]1_{\overline{\mathfrak{E}}_T} - \E \tilde{\Lambda}1_{\overline{\mathfrak{E}}_T} \nonumber \\
	= &\E [f(w_T) -  f(w_0)]1_{\overline{\mathfrak{E}}_T} - \E\left[\nabla f(w_0)^T \tilde{\delta} + \frac{1}{2}\tilde{\delta}^T \mathcal{H}\tilde{\delta}\right]1_{\overline{\mathfrak{E}}_T} 
	\le \tlO(1)P(\overline{\mathfrak{E}}_T) \le \tlO(\eta^2)
\end{align}

Finally, substitute Eq.(\ref{Lambda_tilde}), Eq.(\ref{Lambda_E}) and Eq.(\ref{Lambda_E_bar})
into Eq.(\ref{Lambda_decomp}), we finish the proof.
\end{proof}

Finally, we combine three cases to prove the main theorem.

\begin{proof} [Proof of Theorem \ref{thm:sgdmain_unconstraint}]
Let's set $\mathcal{L}_1 = \{w ~|~ \|\nabla f(w)\| \ge \sqrt{2\eta\sigma^2\beta d}\}$, $\mathcal{L}_2 = \{w ~|~ \|\nabla f(w)\| \le \sqrt{2\eta\sigma^2\beta d}$ and $\lambda_{\min} (\mathcal{H}(w) ) \le -\gamma\}$, and $\mathcal{L}_3 = \mathcal{L}^c_1 \cup \mathcal{L}^c_2$.
By choosing small enough $\eta_{\max}$, we could make $\sqrt{2\eta\sigma^2\beta d} < \min\{\epsilon, \alpha \delta\}$. Under this choice, we know from Definition~\ref{def:robustcondition} of ($\alpha, \gamma, \epsilon, \delta$)-\name that $\mathcal{L}_3$ is the locally $\alpha$-strongly convex region which is $\tlO(\sqrt{\eta})$-close to some local minimum.

We shall first prove that within $\tlO(\frac{1}{\eta^2}\log \frac{1}{\zeta})$ steps with probability at least $1-\zeta/2$ one of $w_t$ is in $\mathcal{L}_3$. Then by Lemma~\ref{thm::case3} we know with probability at most $\zeta/2$ there exists a $w_t$ that is in $\mathcal{L}_3$ but the last point is not. By union bound we will get the main result.

To prove within $\tlO(\frac{1}{\eta^2}\log \frac{1}{\zeta})$ steps with probability at least $1-\zeta/2$ one of $w_t$ is in $\mathcal{L}_3$, we first show starting from any point, in $\tlO(\frac{1}{\eta^2})$ steps with probability at least $1/2$ one of $w_t$ is in $\mathcal{L}_3$. Then we can repeat this $\log 1/\zeta$ times to get the high probability result.

Define stochastic process $\{\tau_i\}$ s.t. $\tau_0 = 0$, and 
\begin{equation}
\tau_{i+1} = 
\begin{cases}
	\tau_i + 1 &\mbox{if~} w_{\tau_i} \in \mathcal{L}_1 \cup \mathcal{L}_3\\
	\tau_i + T(w_{\tau_i}) &\mbox{if~} w_{\tau_i} \in \mathcal{L}_2
\end{cases}
\end{equation}
Where $T(w_{\tau_i})$ is defined by Eq.(\ref{choose_t}) with $\gamma_0= \lambda_{\min}(\mathcal{H}(w_{\tau_i}))$and we know $T \le T_{max} = \tlO(\frac{1}{\eta})$.

By Lemma \ref{thm::case1} and Lemma \ref{thm::case2}, we know:
\begin{align}
&\E [f(w_{\tau_{i+1}}) - f(w_{\tau_i})|w_{\tau_i} \in \mathcal{L}_1, \mathfrak{F}_{\tau_i - 1} ] 
= \E [f(w_{\tau_{i+1}}) - f(w_{\tau_i})|w_{\tau_i} \in \mathcal{L}_1] \le -\tlO(\eta^2) \\
&\E [f(w_{\tau_{i+1}}) - f(w_{\tau_i})|w_{\tau_i} \in \mathcal{L}_2, \mathfrak{F}_{\tau_i - 1} ] 
= \E [f(w_{\tau_{i+1}}) - f(w_{\tau_i})|w_{\tau_i} \in \mathcal{L}_2] \le -\tlO(\eta)
\end{align}
Therefore, combine above equation, we have:
\begin{equation}
\E [f(w_{\tau_{i+1}}) - f(w_{\tau_i})|w_{\tau_i} \not\in \mathcal{L}_3, \mathfrak{F}_{\tau_i - 1} ] 
= \E [f(w_{\tau_{i+1}}) - f(w_{\tau_i})|w_{\tau_i} \not\in \mathcal{L}_3] \le -(\tau_{i+1}-\tau_i)\tlO(\eta^2)
\end{equation}

Define event $\mathfrak{E}_i = \{\exists j \le i, ~ w_{\tau_j} \in \mathcal{L}_3\}$, clearly
$\mathfrak{E}_i \subset \mathfrak{E}_{i+1}$, thus $P(\mathfrak{E}_i) \le P(\mathfrak{E}_{i+1}) $. Finally,
consider $f(w_{\tau_{i+1}})1_{\mathfrak{E}_{i}}$, we have:
\begin{align}
	\E f(w_{\tau_{i+1}})1_{\mathfrak{E}_{i}} - \E f(w_{\tau_i})1_{\mathfrak{E}_{i-1}}
	&\le B \cdot P(\mathfrak{E}_{i} - \mathfrak{E}_{i-1}) + \E[f(w_{\tau_{i+1}}) - f(w_{\tau_{i}})|\overline{\mathfrak{E}_{i}}]\cdot P(\overline{\mathfrak{E}_{i}}) \nonumber \\
	&\le B \cdot P(\mathfrak{E}_{i}- \mathfrak{E}_{i-1}) -(\tau_{i+1}-\tau_i)\tlO(\eta^2)P(\overline{\mathfrak{E}_{i}})
\end{align}
Therefore, by summing up over $i$, we have: 
\begin{equation}
	\E f(w_{\tau_{i}})1_{\mathfrak{E}_{i}} -  f(w_{0})
	\le BP(\mathfrak{E}_{i}) -\tau_{i}\tlO(\eta^2)P(\overline{\mathfrak{E}_{i}})
	\le B-\tau_{i}\tlO(\eta^2)P(\overline{\mathfrak{E}_{i}})
\end{equation}
Since $|f(w_{\tau_{i}})1_{\mathfrak{E}_{i}}| < B$ is bounded, as $\tau_i$ grows to as large as $ \frac{6B}{\eta^2}$, we must have $P(\overline{\mathfrak{E}_{i}}) < \frac{1}{2}$.
That is, after $\tlO(\frac{1}{\eta^2})$ steps, with at least probability $1/2$, $\{w_t\}$ have at least enter $\mathcal{L}_3$ once. Since this argument holds for any starting point, we can repeat this $\log 1/\zeta$ times and we know after $\tlO(\frac{1}{\eta^2}\log 1/\zeta)$ steps, with probability at least $1-\zeta/2$, $\{w_t\}$ have at least enter $\mathcal{L}_3$ once.

Combining with Lemma~\ref{thm::case3}, and by union bound we know
 after $\tlO(\frac{1}{\eta^2}\log 1/\zeta)$ steps, with probability at least $1-\zeta$, $w_t$
will be in the $\tlO(\sqrt{\eta \log \frac{1}{\eta\zeta}})$ neigborhood of some local minimum.
\end{proof}

\clearpage

\section{Detailed Analysis for Section~\ref{sec:sgd} in Constrained Case}
\label{sec:constrained}

So far, we have been discussed all about unconstrained problem. In this section we extend our result to equality constraint problems under some mild conditions.

Consider the equality constrained optimization problem:
\begin{align}\label{eq_constraint_problem}
&\min_w \quad \quad  f(w) \\
&\text{s.t.} \quad \quad c_i(w) = 0, \quad \quad i=1, \cdots, m \nonumber
\end{align}
Define the feasible set as the set of points that satisfy all the constraints $\mathcal{W} = \{w~|~c_i(w) = 0; ~ i=1, \cdots, m \}$.

In this case, the algorithm we are running is Projected Noisy Gradient Descent.
Let function $\Pi_{\mathcal{W}}(v)$ to be the projection to the feasible set， where the projection is 
defined as the global solution of $\min_{w \in \mathcal{W}} \|v-w\|^2$.

With same argument as in the unconstrained case, we could slightly simplify and convert it to standard projected stochastic gradient descent (PSGD) with update equation:
\begin{align} \label{PSGD_update}
	&v_t = w_{t-1} - \eta \nabla f(w_{t-1}) + \xi_{t-1} \\
	&w_t = \Pi_{\mathcal{W}}(v_t)
\end{align}
As in unconstrained case, we are interested in noise $\xi$ is i.i.d satisfying $\E \xi = 0$, 
$ \E \xi\xi^T = \sigma^2I$ and $\|\xi\| \le Q$ almost surely.
Our proof can be easily extended to Algorithm \ref{algo:psgdwn} with $\frac{1}{d} I \preceq \E\xi\xi^T\preceq (Q+\frac{1}{d})I$. 
In this section we first introduce basic tools for handling constrained optimization problems (most these materials can be found in~\cite{wright1999numerical}), then we prove some technical lemmas that are useful for dealing with the projection step in PSGD, finally we point out how to modify the previous analysis.

\subsection{Preliminaries}

Often for constrained optimization problems we want the constraints to satisfy some regularity conditions. LICQ (linear independent constraint quantification) is a common assumption in this context.

\begin{definition}[LICQ]
In equality-constraint problem Eq.(\ref{eq_constraint_problem}), given a point $w$, 
we say that the linear independence constraint qualification (LICQ) holds if the set of constraint gradients $\{\nabla c_i(x), i=1, \cdots, m\}$ is linearly independent.
\end{definition}

In constrained optimization, we can locally transform it to an unconstrained problem by introducing Lagrangian multipliers. The Langrangian $\mathcal{L}$ can be written as
\begin{equation}
\mathcal{L}(w, \lambda) =  f(w) - \sum_{i=1}^m \lambda_i c_i(w)
\end{equation}

Then, if LICQ holds for all $w \in \mathcal{W}$, we can properly define function $\lambda^*(\cdot)$ to be:
\begin{equation}
\lambda^*(w) = \arg\min_{\lambda} \|\nabla f(w) - \sum_{i=1}^m \lambda_i \nabla c_i(w)\|
= \arg\min_{\lambda} \|\nabla_w \mathcal{L}(w, \lambda)\|
\end{equation}
where $\lambda^*(\cdot)$ can be calculated analytically:
let matrix $C(w) = (\nabla c_1(w), \cdots, \nabla c_m(w))$, then we have:
\begin{equation} \label{Lambda_star}
\lambda^*(w)  = C(w)^{\dagger}\nabla f(w) = (C(w)^TC(w))^{-1}C(w)^T\nabla f(w)
\end{equation}
where $(\cdot)^{\dagger}$ is Moore-Penrose pseudo-inverse.

In our setting we need a stronger regularity condition which we call robust LICQ (RLICQ).

\begin{definition}[\nameCQ]
In equality-constraint problem Eq.(\ref{eq_constraint_problem}), given a point $w$, 
we say that $\alpha_c$-robust linear independence constraint qualification (\nameCQ) holds if the minimum singular value of matrix $C(w) = (\nabla c_1(w), \cdots, \nabla c_m(w))$ is greater or equal to $\alpha_c$, that is $\sigma_{\min} (C(w)) \ge \alpha_c$.
\end{definition}

\begin{remark}
Given a point $w\in \mathcal{W}$, \nameCQ implies LICQ. 
While LICQ holds for all $w \in \mathcal{W}$ is a necessary condition for
$\lambda^*(w)$ to be well-defined;
it's easy to check that \nameCQ holds for all $w \in \mathcal{W}$ is a necessary condition for
$\lambda^*(w)$ to be bounded. Later, we will also see \nameCQ combined with the smoothness of $\{c_i(w)\}_{i=1}^m$ guarantee the curvature of constraint manifold to be bounded everywhere.
\end{remark}

Note that we require this condition in order to provide a quantitative bound, without this assumption there can be cases that are exponentially close to a function that does not satisfy LICQ.

We can also write down the first-order and second-order partial derivative of Lagrangian $\mathcal{L}$
at point $(w, \lambda^*(w))$:
\begin{align}
&\chi(w) = \nabla_w \mathcal{L}(w, \lambda) |_{(w, \lambda^*(w))} =\nabla f(w) - \sum_{i=1}^m \lambda^*_i(w) \nabla c_i(w)  \label{Lagrangian_1}\\
&\mathfrak{M}(w) = \nabla^2_{ww} \mathcal{L}(w, \lambda) |_{(w, \lambda^*(w))} = \Hess f(w) - \sum_{i=1}^m \lambda^*_i(w) \nabla^2 c_i(w) \label{Lagrangian_2}
\end{align}

\begin{definition}[Tangent Space and Normal Space]
Given a feasible point $w \in \mathcal{W}$, define its corresponding Tangent Space to be $\mathcal{T}(w) = \{v ~|~\nabla c_i(w)^T v = 0; ~ i=1, \cdots, m \}$, and Normal Space to be $\mathcal{T}^c(w) = \text{span}\{\nabla c_1(w) \cdots, \nabla c_m(w) \}$
\end{definition}
If $w \in \mathcal{R}^d$, and we have $m$ constraint satisfying \nameCQ, the tangent space would be
a linear subspace with dimension $d-m$; and the normal space would be a linear subspace with dimension $m$.
We also know immediately that $\chi(w)$ defined in Eq.(\ref{Lagrangian_1}) has another interpretation: it's the component of gradient $\nabla f(w)$ in tangent space.

Also, it's easy to see the normal space $\mathcal{T}^c(w)$ is the orthogonal complement of 
$\mathcal{T}$. We can also define the projection matrix of any vector onto tangent space (or normal space) to be $P_{\mathcal{T}(w)}$ (or $P_{\mathcal{T}^c(w)}$). Then, clearly, both 
$P_{\mathcal{T}(w)}$ and $P_{\mathcal{T}^c(w)}$ are orthoprojector, thus symmetric.
Also by Pythagorean theorem, we have:
\begin{equation}
\|v\|^2 = \|P_{\mathcal{T}(w)} v\|^2 + \|P_{\mathcal{T}^c(w)}v\|^2, \quad \quad \forall
v\in \mathbb{R}^d
\end{equation}

\paragraph{Taylor Expansion} Let $w, w_0 \in \mathcal{W}$, and fix $\lambda^* = \lambda^*(w_0)$ independent of $w$, assume 
$\nabla^2_{ww} \mathcal{L}(w, \lambda^*)$ is $\rho_L$-Lipschitz,
that is $\|\nabla^2_{ww} \mathcal{L}(w_1, \lambda^*) - \nabla^2_{ww} \mathcal{L}(w_2, \lambda^*) \| \le \rho_L \|w_1 - w_2\|$
By Taylor expansion, we have:
\begin{align}
\mathcal{L}(w, \lambda^*) \le &\mathcal{L}(w_0, \lambda^*) + \nabla_w \mathcal{L}(w_0, \lambda^*)^T (w-w_0)  \nonumber \\
& + \frac{1}{2}(w-w_0)^T \nabla^2_{ww} \mathcal{L}(w_0, \lambda^*) (w-w_0) + \frac{\rho_L}{6}\|w-w_0\|^3
\end{align}
Since $w, w_0$ are feasible, we know:
$\mathcal{L}(w, \lambda^*) = f(w)$ and $\mathcal{L}(w_0, \lambda^*) = f(w_0)$, 
this gives:
\begin{align} \label{Taylor_eq_constraint}
f(w)\le f(w_0) + \chi(w_0)^T (w-w_0) + \frac{1}{2}(w-w_0)^T \mathfrak{M}(w_0)(w-w_0) + \frac{\rho_L}{6}\|w-w_0\|^3
\end{align}


\paragraph{Derivative of $\chi(w)$} By taking derative of $\chi(w)$ again, we know the change of this tangent gradient can be characterized by:
\begin{align}
	\nabla \chi(w) = \mathcal{H} - \sum_{i=1}^m \lambda^*_i(w) \nabla^2 c_i(w) 
	-\sum_{i=1}^m  \nabla c_i(w) \nabla \lambda^*_i(w)^T
\end{align}
Denote 
\begin{equation}
\mathfrak{N}(w) =-\sum_{i=1}^m  \nabla c_i(w) \nabla \lambda^*_i(w)^T
\end{equation}
We immediately know that $\nabla \chi(w) = \mathfrak{M}(w) + \mathfrak{N}(w)$.

\begin{remark}\label{N_constraint}
The additional term $\mathfrak{N}(w)$ is not necessary to be even symmetric in general. This is due to the fact that $\chi(w)$ may not be the gradient of any scalar function. However, 
$\mathfrak{N}(w)$ has an important property that is: for any vector $v \in \mathbb{R}^d$, $\mathfrak{N}(w)v \in \mathcal{T}^c(w)$.
\end{remark}

Finally, for completeness, we state here the first/second-order necessary (or sufficient) conditions for optimality. Please refer to \cite{wright1999numerical} for the proof of those theorems.

\begin{theorem}[First-Order Necessary Conditions] \label{thm::first_necessary}
In equality constraint problem Eq.(\ref{eq_constraint_problem}), suppose that $w^\dagger$ is a local solution,
and that the functions $f$ and $c_i$ are continuously differentiable, and that the LICQ holds at $w^\dagger$.
Then there is a Lagrange multiplier vector $\lambda^\dagger$, such that:
\begin{align}
\nabla_w \mathcal{L}(w^\dagger, \lambda^\dagger) &= 0 \\
c_i(w^\dagger) &= 0, \quad \quad \text{for~} i=1, \cdots, m
\end{align}
These conditions are also usually referred as Karush-Kuhn-Tucker (KKT) conditions.
\end{theorem}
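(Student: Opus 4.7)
The plan is to establish the KKT conditions by showing that at a local minimizer $w^\dagger$, the gradient $\nabla f(w^\dagger)$ must lie in the normal space $\mathcal{T}^c(w^\dagger) = \mathrm{span}\{\nabla c_i(w^\dagger)\}_{i=1}^m$, after which LICQ produces the unique multiplier vector $\lambda^\dagger$. The feasibility condition $c_i(w^\dagger) = 0$ is of course immediate from the hypothesis that $w^\dagger \in \mathcal{W}$, so the entire content is the stationarity relation $\nabla f(w^\dagger) = \sum_i \lambda^\dagger_i \nabla c_i(w^\dagger)$.

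First, I would reduce the claim to the following geometric statement: $\nabla f(w^\dagger)^T v = 0$ for every $v \in \mathcal{T}(w^\dagger)$, i.e., every direction orthogonal to all $\nabla c_i(w^\dagger)$. Once this is known, the decomposition $\mathbb{R}^d = \mathcal{T}(w^\dagger) \oplus \mathcal{T}^c(w^\dagger)$ forces $\nabla f(w^\dagger) \in \mathcal{T}^c(w^\dagger)$; LICQ says the $\nabla c_i(w^\dagger)$ are a basis of $\mathcal{T}^c(w^\dagger)$, so the coefficients $\lambda^\dagger_i$ exist and are uniquely determined by $\lambda^\dagger = C(w^\dagger)^\dagger \nabla f(w^\dagger)$ in the notation of equation \eqref{Lambda_star}.

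The heart of the proof is therefore producing, for each tangent vector $v$, a smooth feasible curve through $w^\dagger$ with initial velocity $v$. Here I would invoke the implicit function theorem. Since LICQ holds, the $m\times d$ matrix $C(w^\dagger)^T$ has full row rank, so after reordering coordinates we may split $w = (y,z)$ with $y \in \mathbb{R}^m$ and the block $\partial c/\partial y$ invertible at $w^\dagger$. Given any tangent direction $v = (v_y, v_z)$ satisfying $C(w^\dagger)^T v = 0$, the implicit function theorem yields a $C^1$ map $y(z)$ defined near $z^\dagger$ such that $c_i(y(z), z) \equiv 0$, with $y'(z^\dagger)$ determined by differentiating the identity; this produces the curve $\gamma(t) = (y(z^\dagger + t v_z), z^\dagger + t v_z)$, which is feasible for small $t$ and satisfies $\gamma(0) = w^\dagger$, $\gamma'(0) = v$. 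The local minimality of $w^\dagger$ then gives $f(\gamma(t)) \ge f(\gamma(0))$ for small $t$ of either sign, so $\tfrac{d}{dt} f(\gamma(t))|_{t=0} = \nabla f(w^\dagger)^T v = 0$, as needed.

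The main obstacle, and the only nontrivial step, is this use of the implicit function theorem to convert the infinitesimal feasibility condition $C(w^\dagger)^T v = 0$ into an actual feasible curve; without LICQ one can only conclude that the tangent cone is contained in $\mathcal{T}(w^\dagger)$ but not that the two coincide, and the proof can fail for degenerate constraints. Everything else reduces to checking directional derivatives and a linear algebra decomposition along orthogonal complements, both of which are routine given the machinery already recorded in the preliminaries of this section.
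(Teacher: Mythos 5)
Your proof is correct. The paper does not actually prove this theorem — it defers to \cite{wright1999numerical} — and the argument you give (reduce stationarity to $\nabla f(w^\dagger)^T v = 0$ for all tangent $v$, then use the implicit function theorem under LICQ to manufacture a feasible $C^1$ curve with prescribed initial velocity, and finally appeal to LICQ again so that the $\nabla c_i(w^\dagger)$ form a basis of the normal space) is precisely the standard proof appearing in that reference, so you are in agreement with the paper's intended source.
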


\begin{theorem}[Second-Order Necessary Conditions] \label{thm::second_necessary}
In equality constraint problem Eq.(\ref{eq_constraint_problem}), suppose that $w^\dagger$ is a local solution,
and that the LICQ holds at $w^\dagger$.
Let $\lambda^\dagger$ Lagrange multiplier vector for which the KKT conditions are satisfied. Then:
\begin{align}
v^T\nabla^2_{xx} \mathcal{L}(w^\dagger, \lambda^\dagger)v \ge 0 \quad \quad 
\text{for all~} v\in \mathcal{T}(w^\dagger)
\end{align}
\end{theorem}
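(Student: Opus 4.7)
The plan is to reduce the constrained second-order condition to an ordinary scalar second-derivative test by restricting $f$ to a carefully chosen feasible curve through $w^\dagger$. Fix an arbitrary $v \in \mathcal{T}(w^\dagger)$, so $\nabla c_i(w^\dagger)^T v = 0$ for every $i$. My goal is to exhibit a twice-differentiable curve $w:(-\varepsilon,\varepsilon)\to\mathcal{W}$ with $w(0)=w^\dagger$ and $w'(0)=v$; once such a curve exists, $\phi(t):=f(w(t))$ is a real-valued function with a local minimum at $t=0$, hence $\phi''(0)\ge 0$, and a Lagrangian calculation will identify $\phi''(0)$ with the quadratic form in the statement.

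To build the curve, I would invoke the implicit function theorem. By LICQ, the constraint Jacobian $C(w^\dagger) = (\nabla c_1(w^\dagger),\ldots,\nabla c_m(w^\dagger))$ has full column rank $m$. Write a candidate path $w(t) = w^\dagger + tv + C(w^\dagger)u(t)$, where $u:(-\varepsilon,\varepsilon)\to\R^m$ is to be determined so that $c_i(w(t))=0$ for all $i$ and all sufficiently small $t$. The system $G(t,u):=(c_1(w(t)),\ldots,c_m(w(t)))=0$ satisfies $G(0,0)=0$ and $\partial_u G(0,0) = C(w^\dagger)^T C(w^\dagger)$, which is invertible by LICQ; IFT then produces a twice-differentiable $u(t)$ with $u(0)=0$. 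Differentiating $G(t,u(t))=0$ at $t=0$ and using $C(w^\dagger)^Tv=0$ gives $u'(0)=0$, so $w'(0)=v$ as required, and $w(t)\in\mathcal{W}$ by construction.

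With the curve in hand, note that $\mathcal{L}(w(t),\lambda^\dagger) = f(w(t)) - \sum_i \lambda_i^\dagger c_i(w(t)) = f(w(t)) = \phi(t)$. Differentiating twice in $t$,
\begin{equation*}
\phi''(0) \;=\; v^T \nabla^2_{ww}\mathcal{L}(w^\dagger,\lambda^\dagger)\, v \;+\; \nabla_w\mathcal{L}(w^\dagger,\lambda^\dagger)^T w''(0),
\end{equation*}
and the second term vanishes thanks to the KKT stationarity $\nabla_w\mathcal{L}(w^\dagger,\lambda^\dagger)=0$ from Theorem~\ref{thm::first_necessary}. Combining with $\phi''(0)\ge 0$ gives the claim.

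The only genuinely non-trivial step is the curve construction: LICQ is used precisely to guarantee that every linearized tangent direction $v$ lifts to an honest feasible perturbation. This is where the argument would break down under degenerate constraints, and it is the reason LICQ (rather than a weaker regularity hypothesis) enters the hypothesis. The remaining computations are routine chain-rule manipulations.
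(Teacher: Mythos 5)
Your proof is correct and is essentially the standard argument from the source the paper itself cites (Nocedal and Wright); the paper does not give its own proof of this theorem but simply refers the reader there. The IFT-based curve construction $w(t)=w^\dagger+tv+C(w^\dagger)u(t)$, the verification that $u'(0)=0$ so $w'(0)=v$, the identity $\phi(t)=f(w(t))=\mathcal{L}(w(t),\lambda^\dagger)$ along the feasible curve, and the cancellation of the $\nabla_w\mathcal{L}^Tw''(0)$ term via KKT stationarity are exactly the textbook route; the only implicit hypothesis you should make explicit is that $f$ and the $c_i$ are $C^2$ (so that IFT yields a $C^2$ lift $u(t)$ and $\phi''(0)$ exists), which is already assumed whenever one writes $\nabla^2_{ww}\mathcal{L}$.
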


\begin{theorem}[Second-Order Sufficient Conditions] \label{thm::second_sufficient}
In equality constraint problem Eq.(\ref{eq_constraint_problem}), suppose that for some feasible point $w^\dagger \in \mathbb{R}^d$, and there's  Lagrange multiplier vector
$\lambda^\dagger$  for which the KKT conditions are satisfied. Suppose also that:
\begin{align}
v^T\nabla^2_{xx} \mathcal{L}(w^\dagger, \lambda^\dagger)v > 0 \quad \quad 
\text{for all~} v\in \mathcal{T}(w^\dagger), v\neq 0
\end{align}
Then $w^\dagger$ is a strict local solution.
\end{theorem}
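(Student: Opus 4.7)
The plan is to argue by contradiction in the standard way: assume $w^\dagger$ is not a strict local minimum, extract a limiting unit direction along which $f$ does not strictly increase, show this direction must lie in the tangent space $\mathcal{T}(w^\dagger)$, and then use the second-order expansion of the Lagrangian to contradict the positive-definiteness hypothesis.

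First I would suppose for contradiction the existence of a sequence of feasible points $\{w_k\}\subset\mathcal{W}$ with $w_k\ne w^\dagger$, $w_k\to w^\dagger$, and $f(w_k)\le f(w^\dagger)$. Set $t_k=\|w_k-w^\dagger\|$ and $d_k=(w_k-w^\dagger)/t_k$, and by compactness of the unit sphere pass to a subsequence with $d_k\to d$ for some unit vector $d$. The first key step is to show $d\in\mathcal{T}(w^\dagger)$: for each constraint $i$, a first-order Taylor expansion of $c_i$ at $w^\dagger$ combined with $c_i(w_k)=c_i(w^\dagger)=0$ gives
\begin{equation*}
0 \;=\; c_i(w_k)-c_i(w^\dagger) \;=\; t_k\,\nabla c_i(w^\dagger)^T d_k + o(t_k),
\end{equation*}
so dividing by $t_k$ and sending $k\to\infty$ yields $\nabla c_i(w^\dagger)^T d=0$ for every $i$, i.e., $d\in\mathcal{T}(w^\dagger)$ with $\|d\|=1$.

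The second key step exploits the fact that on $\mathcal{W}$ the Lagrangian collapses onto the objective: for the fixed KKT multiplier $\lambda^\dagger$ and any $w\in\mathcal{W}$, $\mathcal{L}(w,\lambda^\dagger)=f(w)$. Combined with the KKT stationarity $\nabla_w\mathcal{L}(w^\dagger,\lambda^\dagger)=0$, a second-order Taylor expansion of $\mathcal{L}(\cdot,\lambda^\dagger)$ at $w^\dagger$ gives
\begin{equation*}
f(w_k)-f(w^\dagger) \;=\; \tfrac{1}{2}\,t_k^2\, d_k^T\, \nabla^2_{ww}\mathcal{L}(w^\dagger,\lambda^\dagger)\, d_k + o(t_k^2).
\end{equation*}
Dividing by $t_k^2/2$ and letting $k\to\infty$, the left-hand side is $\le 0$ by construction while the right-hand side converges to $d^T\nabla^2_{ww}\mathcal{L}(w^\dagger,\lambda^\dagger)d$, which is strictly positive by the second-order sufficient hypothesis applied to $d\in\mathcal{T}(w^\dagger)\setminus\{0\}$. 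This contradiction completes the argument.

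The main obstacle is really bookkeeping rather than conceptual: one needs $f$ and each $c_i$ to be $C^2$ (or at least smooth enough that the $o(t_k)$ and $o(t_k^2)$ Taylor remainders are valid), and one must resist the temptation to invoke LICQ or the explicit formula $\lambda^*(w)$ from Eq.~(\ref{Lambda_star}), since the hypothesis only postulates the existence of some $\lambda^\dagger$ satisfying KKT. A subtle point is that the tangent-space containment proof uses only first-order information about the constraints, so no constraint qualification is required here; the multiplier $\lambda^\dagger$ is simply treated as a fixed constant throughout the second-order expansion, which is what makes the Lagrangian trick go through cleanly.
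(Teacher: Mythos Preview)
Your proof is correct and is the standard textbook argument. Note, however, that the paper does not actually prove this theorem: it states it ``for completeness'' and explicitly refers the reader to \cite{wright1999numerical} for the proof. So there is no paper proof to compare against; your argument is essentially the one found in that reference.
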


\begin{remark}
By definition Eq.(\ref{Lambda_star}), we know immediately $\lambda^*(w^\dagger)$ is one of valid Lagrange multipliers $\lambda^\dagger$ for which the KKT conditions are satisfied. This means $\chi(w^\dagger) = 
\nabla_w \mathcal{L}(w^\dagger, \lambda^\dagger)$ and $\mathfrak{M}(w^\dagger) = \mathcal{L}(w^\dagger, \lambda^\dagger)$.
\end{remark}

Therefore, Theorem \ref{thm::first_necessary}, \ref{thm::second_necessary}, \ref{thm::second_sufficient} gives strong implication that $\chi(w)$ and $\mathfrak{M}(w)$ are the right thing to look at, which are in some sense equivalent to $\nabla f(w)$ and $\Hess f(w)$ in unconstrained case.

\subsection{Geometrical Lemmas Regarding Constraint Manifold}

Since in equality constraint problem, at each step of PSGD, we are effectively considering the local manifold around feasible point $w_{t-1}$.
In this section, we provide some technical lemmas relating to the geometry of constraint manifold in preparsion for the proof of main theorem in equality constraint case.

We first show if two points are close, then the projection in the normal space is much smaller than the projection in the tangent space.

\begin{lemma} \label{lem::normal_by_tangent}
Suppose the constraints $\{c_i\}_{i=1}^m$ are $\beta_i$-smooth, 
and \nameCQ holds for all $w\in \mathcal{W}$.
Then, let $\sum_{i=1}^m\frac{\beta_i^2}{ \alpha^2_c} = \frac{1}{R^2}$,
for any $w, w_0 \in \mathcal{W}$, let $\mathcal{T}_0 = \mathcal{T}(w_0)$, then
\begin{equation}
\|P_{\mathcal{T}^c_0} (w-w_0)\| \le \frac{1}{2R} \|w-w_0\|^2
\end{equation}
Furthermore, if $\|w-w_0\| < R$ holds, 
we additionally have:
\begin{equation}
\|P_{\mathcal{T}^c_0} (w-w_0)\| \le \frac{\|P_{\mathcal{T}_0} (w-w_0)\|^2}{R}
\end{equation}
\end{lemma}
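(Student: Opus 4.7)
The plan is to exploit the fact that both $w$ and $w_0$ satisfy $c_i(\cdot)=0$ for all $i$, combined with the $\beta_i$-smoothness of the constraints, to control the component of $w-w_0$ lying in the normal space $\mathcal{T}_0^c=\mathrm{span}\{\nabla c_i(w_0)\}$. The key observation is a first-order Taylor expansion of each $c_i$ at $w_0$: since $c_i(w)-c_i(w_0)=0$ and the Taylor remainder is bounded by $(\beta_i/2)\|w-w_0\|^2$ by smoothness, one obtains $|\nabla c_i(w_0)^T(w-w_0)| \le (\beta_i/2)\|w-w_0\|^2$ for every $i$.

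Stacking these $m$ inequalities yields $\|C(w_0)^T(w-w_0)\|^2 \le \tfrac{1}{4}\bigl(\sum_i \beta_i^2\bigr)\|w-w_0\|^4$, where $C(w_0)$ is the matrix of constraint gradients from the preliminaries. Because the columns of $C(w_0)$ lie in $\mathcal{T}_0^c$, we have $C(w_0)^T(w-w_0) = C(w_0)^T P_{\mathcal{T}_0^c}(w-w_0)$, and by \nameCQ the map $C(w_0)^T$ is bounded below by $\alpha_c$ on the column space of $C(w_0)$ (which is exactly $\mathcal{T}_0^c$). This last inequality is the only mildly delicate step; I would verify it via the thin SVD $C(w_0)=U\Sigma V^T$ with $U \in \mathbb{R}^{d\times m}$ having orthonormal columns spanning $\mathcal{T}_0^c$ and $\sigma_{\min}(\Sigma)\ge \alpha_c$: write $P_{\mathcal{T}_0^c}(w-w_0)=Uy$, so $\|y\|=\|P_{\mathcal{T}_0^c}(w-w_0)\|$, and note $C(w_0)^T Uy = V\Sigma y$ has norm at least $\alpha_c\|y\|$. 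Combining the two estimates gives $\|P_{\mathcal{T}_0^c}(w-w_0)\| \le \tfrac{\sqrt{\sum_i \beta_i^2}}{2\alpha_c}\|w-w_0\|^2 = \tfrac{1}{2R}\|w-w_0\|^2$, which is the first claim.

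For the second inequality, set $a=\|P_{\mathcal{T}_0^c}(w-w_0)\|$ and $b=\|P_{\mathcal{T}_0}(w-w_0)\|$. The Pythagorean identity gives $\|w-w_0\|^2=a^2+b^2$, so the first claim rearranges to $2Ra \le a^2+b^2$, i.e., $b^2 \ge a(2R-a)$. Under the extra hypothesis $\|w-w_0\|<R$ we have $a \le \|w-w_0\|<R$, hence $2R-a>R$, and therefore $b^2>aR$, which rearranges to $a < b^2/R$, as desired. Aside from the singular-value lower bound highlighted above, every step is a direct calculation, so I anticipate no genuine obstacle in carrying the argument through.
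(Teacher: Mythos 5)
Your proposal is correct and follows essentially the same route as the paper: bound $\|C(w_0)^T(w-w_0)\|$ from above by $\beta_i$-smoothness and feasibility, from below by $\alpha_c$ via RLICQ, and then combine with the Pythagorean decomposition. The only cosmetic difference is in the last step — you manipulate $2Ra \le a^2+b^2$ directly as $b^2 \ge a(2R-a) > aR$, while the paper phrases it as the two solution branches of the quadratic; both give the same conclusion, and your explicit SVD verification of the singular-value lower bound is a harmless elaboration of a step the paper simply asserts.
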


\begin{proof}
First, since for any vector $\hat{v} \in \mathcal{T}_0$, we have 
$\|C(w_0)^T \hat{v}\| = 0$, then
by simple linear algebra, it's easy to show:
\begin{align} \label{C1_constraint}
\|C(w_0)^T (w-w_0)\|^2  =&  \|C(w_0)^T P_{\mathcal{T}^c_0} (w-w_0)\|^2 
\ge \sigma^2_{\min} \|P_{\mathcal{T}^c_0} (w-w_0)\|^2 \nonumber \\
\ge & \alpha_c^2 \|P_{\mathcal{T}^c_0} (w-w_0)\|^2
\end{align}
On the other hand, by $\beta_i$-smooth, we have:
\begin{align}
|c_i(w) - c_i(w_0) - \nabla c_i(w_0) ^T (w-w_0)| \le \frac{\beta_i}{2} \|w-w_0\|^2
\end{align}
Since $w, w_0$ are feasible points, we have $c_i(w) = c_i(w_0) = 0$, 
which gives:
\begin{equation} \label{C2_constraint}
\|C(w_0)^T (w-w_0)\|^2 = \sum_{i=1}^m (\nabla c_i(w_0) ^T (w-w_0))^2
\le \sum_{i=1}^m \frac{\beta_i^2}{4} \|w-w_0\|^4
\end{equation}
Combining Eq.(\ref{C1_constraint}) and Eq.(\ref{C2_constraint}), and the definition of $R$, 
we have:
\begin{equation}
\|P_{\mathcal{T}^c_0} (w-w_0)\|^2 \le \frac{1}{4R^2} \|w-w_0\|^4
= \frac{1}{4R^2}(\|P_{\mathcal{T}^c_0} (w-w_0)\|^2 + \|P_{\mathcal{T}_0} (w-w_0)\|^2)^2
\end{equation}
Solving this second-order inequality gives two solution
\begin{equation}
	\|P_{\mathcal{T}^c_0} (w-w_0)\| \le \frac{\|P_{\mathcal{T}_0} (w-w_0)\|^2}{R}
	\quad \text{or} \quad \|P_{\mathcal{T}^c_0} (w-w_0)\| \ge R
\end{equation}
By assumption, we know $\|w-w_0\| < R$ (so the second case cannot be true), which finishes the proof.
\end{proof}

Here, we see the $\sqrt{\sum_{i=1}^m\frac{\beta_i^2}{ \alpha^2_c}} = \frac{1}{R}$ serves as a 
upper bound of the curvatures on the constraint manifold, and equivalently, $R$ serves as a lower bound of the radius of curvature. \nameCQ and smoothness guarantee that the curvature is bounded.

Next we show the normal/tangent space of nearby points are close.

\begin{lemma} \label{lem::normal}
Suppose the constraints $\{c_i\}_{i=1}^m$ are $\beta_i$-smooth, 
and \nameCQ holds for all $w\in \mathcal{W}$.
Let $\sum_{i=1}^m\frac{\beta_i^2}{ \alpha^2_c} = \frac{1}{R^2}$,
for any $w, w_0 \in \mathcal{W}$, let $\mathcal{T}_0 = \mathcal{T}(w_0)$.
Then for all $\hat{v} \in \mathcal{T}(w)$ so that $\|\hat{v}\| = 1$, we have 
\begin{equation}
\|P_{\mathcal{T}^c_0} \cdot \hat{v}\|  \le \frac{\|w-w_0\|}{R}
\end{equation}
\end{lemma}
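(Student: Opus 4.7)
The plan is to mimic the structure of Lemma~\ref{lem::normal_by_tangent}: translate the statement about the normal component of $\hat v$ into a statement about $C(w_0)^T \hat v$, then bound this quantity from above by smoothness and from below by \nameCQ.

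First, I would exploit that $\hat v \in \mathcal{T}(w)$ exactly means $\nabla c_i(w)^T \hat v = 0$ for every $i$. Therefore
\[
 |\nabla c_i(w_0)^T \hat v| \;=\; |(\nabla c_i(w_0) - \nabla c_i(w))^T \hat v| \;\le\; \|\nabla c_i(w_0) - \nabla c_i(w)\|\,\|\hat v\| \;\le\; \beta_i \|w-w_0\|,
\]
by $\beta_i$-smoothness of $c_i$ and $\|\hat v\|=1$. Squaring and summing gives
\[
 \|C(w_0)^T \hat v\|^2 \;=\; \sum_{i=1}^m (\nabla c_i(w_0)^T \hat v)^2 \;\le\; \Bigl(\sum_{i=1}^m \beta_i^2\Bigr)\|w-w_0\|^2.
\]

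Next, I would use \nameCQ to convert $\|C(w_0)^T \hat v\|$ into $\|P_{\mathcal{T}^c_0}\hat v\|$. Since the column space of $C(w_0)$ is exactly $\mathcal{T}^c_0$ and $C(w_0)^T$ annihilates vectors in $\mathcal{T}_0$, one has $C(w_0)^T \hat v = C(w_0)^T P_{\mathcal{T}^c_0} \hat v$, and because $P_{\mathcal{T}^c_0}\hat v$ lies in the column span of $C(w_0)$,
\[
 \|C(w_0)^T \hat v\| \;=\; \|C(w_0)^T P_{\mathcal{T}^c_0}\hat v\| \;\ge\; \sigma_{\min}(C(w_0))\, \|P_{\mathcal{T}^c_0}\hat v\| \;\ge\; \alpha_c \|P_{\mathcal{T}^c_0}\hat v\|.
\]

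Combining the two inequalities yields $\alpha_c^2 \|P_{\mathcal{T}^c_0}\hat v\|^2 \le \bigl(\sum_i \beta_i^2\bigr)\|w-w_0\|^2$, i.e. $\|P_{\mathcal{T}^c_0}\hat v\| \le \sqrt{\sum_i \beta_i^2/\alpha_c^2}\,\|w-w_0\| = \|w-w_0\|/R$, which is the desired bound. I do not expect a real obstacle here: the argument is essentially the ``linear'' part of the computation in Lemma~\ref{lem::normal_by_tangent}, and the only subtle point is recognizing that being tangent at $w$ gives a first-order (rather than second-order) control on the normal-at-$w_0$ component of $\hat v$, which is why the bound is linear in $\|w-w_0\|$ instead of quadratic.
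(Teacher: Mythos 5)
Your proof is correct and follows essentially the same route as the paper: you bound $\|C(w_0)^T\hat v\|$ from above via $\beta_i$-smoothness (using $C(w)^T\hat v = 0$), and from below via \nameCQ and the fact that $C(w_0)^T$ annihilates $\mathcal{T}_0$, then combine. This is exactly the argument the paper gives (up to a minor typo in the paper where $\sigma_{\min}(C(w))$ appears where $\sigma_{\min}(C(w_0))$ is meant, which your version states correctly).
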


\begin{proof}
With similar calculation as Eq.(\ref{C1_constraint}), we immediately have:
\begin{align} \label{C3_constraint}
\|P_{\mathcal{T}^c_0} \cdot \hat{v}\|^2
\le \frac{\|C(w_0)^T \hat{v}\|^2 }{\sigma^2_{\min}(C(w))}
\le \frac{\|C(w_0)^T \hat{v}\|^2 }{\alpha_c^2}
\end{align}
Since $\hat{v} \in \mathcal{T}(w)$ , we have $C(w)^T \hat{v} = 0$, 
combined with the fact that $\hat{v}$ is a unit vector, we have:
\begin{align}
\|C(w_0)^T \hat{v}\|^2 = &\|[C(w_0) - C(w)]^T \hat{v}\|^2
= \sum_{i=1}^m ([\nabla c_i(w_0) - \nabla c_i(w)]^T \hat{v})^2 \nonumber \\
\le & \sum_{i=1}^m \|\nabla c_i(w_0) - \nabla c_i(w)\|^2 \|\hat{v}\|^2
\le \sum_{i=1}^m \beta_i^2\|w_0 - w\|^2 \label{C4_constraint}
\end{align}
Combining Eq.(\ref{C3_constraint}) and Eq.(\ref{C4_constraint}), and the definition of $R$, 
we concludes the proof.
\end{proof}

\begin{lemma} \label{lem::tangent}
Suppose the constraints $\{c_i\}_{i=1}^m$ are $\beta_i$-smooth, 
and \nameCQ holds for all $w\in \mathcal{W}$.
Let $\sum_{i=1}^m\frac{\beta_i^2}{ \alpha^2_c} = \frac{1}{R^2}$,
for any $w, w_0 \in \mathcal{W}$, let $\mathcal{T}_0 = \mathcal{T}(w_0)$.
Then for all $\hat{v} \in \mathcal{T}^c(w)$ so that $\|\hat{v}\| = 1$, we have 
\begin{equation}
\|P_{\mathcal{T}_0} \cdot \hat{v}\| \le \frac{\|w-w_0\|}{R}
\end{equation}
\end{lemma}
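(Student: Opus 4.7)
The plan is to mirror the argument of Lemma~\ref{lem::normal} but in dual form: instead of using that $\hat v \in \mathcal{T}(w)$ (so $C(w)^T\hat v=0$) and pushing this to $w_0$ via smoothness, I will use that $\hat v \in \mathcal{T}^c(w)$ (so $\hat v$ is a linear combination of the columns of $C(w)$) and push this to $w_0$ via smoothness, exploiting that the columns of $C(w_0)$ lie in $\mathcal{T}^c_0$.

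Concretely, first I would write $\hat v = C(w)\mu = \sum_{i=1}^m \mu_i \nabla c_i(w)$ for some $\mu\in\mathbb{R}^m$, which is possible since $\mathcal{T}^c(w) = \mathrm{span}\{\nabla c_i(w)\}_{i=1}^m$. Because $\alpha_c$-RLICQ gives $\sigma_{\min}(C(w))\ge \alpha_c$, the coefficient vector satisfies $\|\mu\|\le \|\hat v\|/\alpha_c = 1/\alpha_c$.

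Next, the key observation: since every column of $C(w_0)$ belongs to $\mathcal{T}^c_0$, we have $P_{\mathcal{T}_0} C(w_0) = 0$, and therefore
\begin{equation}
P_{\mathcal{T}_0}\hat v \;=\; P_{\mathcal{T}_0}\bigl(C(w)-C(w_0)\bigr)\mu.
\end{equation}
Since $P_{\mathcal{T}_0}$ is an orthoprojector (operator norm $1$), the triangle inequality and $\beta_i$-smoothness of each $c_i$ give
\begin{equation}
\|P_{\mathcal{T}_0}\hat v\| \;\le\; \sum_{i=1}^m |\mu_i|\,\|\nabla c_i(w)-\nabla c_i(w_0)\| \;\le\; \|w-w_0\|\sum_{i=1}^m |\mu_i|\beta_i.
\end{equation}

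Finally, Cauchy-Schwarz yields $\sum_i |\mu_i|\beta_i \le \|\mu\|\sqrt{\sum_i \beta_i^2} \le \frac{1}{\alpha_c}\sqrt{\sum_i \beta_i^2} = \frac{1}{R}$ by the definition of $R$, which combines with the preceding display to give $\|P_{\mathcal{T}_0}\hat v\|\le \|w-w_0\|/R$. I do not anticipate any real obstacle: the only nontrivial ingredient is recognizing the identity $P_{\mathcal{T}_0}C(w_0)=0$, which plays the same structural role here as $C(w)^T\hat v = 0$ did in the proof of Lemma~\ref{lem::normal}.
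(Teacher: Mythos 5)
Your proof is correct and follows essentially the same route as the paper: write $\hat v = C(w)\mu$ with $\|\mu\|\le 1/\alpha_c$ via \nameCQ, shift the constraint gradients from $w$ to $w_0$ using $\beta_i$-smoothness, and close with Cauchy--Schwarz. The only cosmetic difference is that the paper introduces the auxiliary vector $\tilde d = C(w_0)\mu \in \mathcal{T}^c_0$ and invokes the closest-point property of projection, while you use the equivalent identity $P_{\mathcal{T}_0}C(w_0)=0$ together with the unit operator norm of $P_{\mathcal{T}_0}$; these produce the identical bound $\|P_{\mathcal{T}_0}\hat v\|\le\|\hat v-\tilde d\|$.
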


\begin{proof}
By definition of projection, clearly, we have
$P_{\mathcal{T}_0} \cdot \hat{v} + P_{\mathcal{T}^c_0} \cdot \hat{v} = \hat{v}$.
Since $\hat{v} \in \mathcal{T}^c(w)$, without loss of generality, assume 
$\hat{v} = \sum_{i=1}^m \lambda_i \nabla c_i(w)$.
Define $\tilde{d} = \sum_{i=1}^m \lambda_i \nabla c_i(w_0)$, 
clearly $\tilde{d} \in \mathcal{T}^c_0$. Since projection gives the closest point in subspace, 
we have:
\begin{align} \label{C5_constraint}
\|P_{\mathcal{T}_0} \cdot \hat{v}\| = &\| P_{\mathcal{T}^c_0} \cdot \hat{v} - \hat{v}\|
\le  \|\tilde{d} - \hat{v}\| \nonumber \\
\le&  \sum_{i=1}^m \lambda_i \|\nabla c_i(w_0) - \nabla c_i(w)\|
\le \sum_{i=1}^m \lambda_i \beta_i \|w_0 - w\|
\end{align}
On the other hand, let $\lambda = (\lambda_1, \cdots, \lambda_m)^T$, 
we know $C(w) \lambda = \hat{v}$, thus:
\begin{equation}
\lambda  = C(w)^{\dagger}\hat{v} = (C(w)^TC(w))^{-1}C(w)^T\hat{v}
\end{equation}
Therefore, by \nameCQ and the fact $\hat{v}$ is unit vector, we know: $\|\lambda\| \le \frac{1}{\alpha_c}$.
Combined with Eq.(\ref{C5_constraint}), we finished the proof.
\end{proof}

Using the previous lemmas, we can then prove that: starting from any point $w_0$ on constraint manifold, the result of
adding any small vector $v$ and then projected back to feasible set, is not very different from 
the result of adding $P_{\mathcal{T}(w_0)} v$.

\begin{lemma} \label{lem::projection_distance}
Suppose the constraints $\{c_i\}_{i=1}^m$ are $\beta_i$-smooth, 
and \nameCQ holds for all $w\in \mathcal{W}$.
Let $\sum_{i=1}^m\frac{\beta_i^2}{ \alpha^2_c} = \frac{1}{R^2}$,
for any $w_0 \in \mathcal{W}$, let $\mathcal{T}_0 = \mathcal{T}(w_0)$.
Then let $w_1 = w_0 + \eta \hat{v}$, and 
$w_2  = w_0 + \eta P_{\mathcal{T}_0}\cdot \hat{v}$, 
where $\hat{v} \in \mathbb{S}^{d-1}$ is a unit vector.
Then, we have:
\begin{equation}
\|\Pi_{\mathcal{W}}(w_1) - w_2\| \le \frac{4\eta^2}{R}
\end{equation}
Where projection $\Pi_{\mathcal{W}}(w)$ is defined as the closet point to $w$ on feasible set $\mathcal{W}$.
\end{lemma}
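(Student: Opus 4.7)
My plan is to let $w_3 = \Pi_{\mathcal{W}}(w_1)$ and analyze $w_3 - w_2$ by decomposing it into its tangent and normal components with respect to $\mathcal{T}_0 = \mathcal{T}(w_0)$, bounding each piece by $O(\eta^2/R)$. Throughout I will implicitly use that $\eta$ is small enough relative to $R$ so that the projection $w_3$ is well-defined (the $R$-tube around $\mathcal{W}$ is regular by \nameCQ and smoothness).

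First I would establish a crude bound on how far $w_3$ can move from $w_0$. Since $w_0\in\mathcal{W}$ is a feasible competitor for the projection, $\|w_1-w_3\|\le\|w_1-w_0\|=\eta$, and hence $\|w_3-w_0\|\le 2\eta$ by the triangle inequality. This allows me to apply Lemma~\ref{lem::normal_by_tangent} between the two feasible points $w_3$ and $w_0$, yielding
\begin{equation*}
\|P_{\mathcal{T}_0^c}(w_3-w_0)\|\le \frac{1}{2R}\|w_3-w_0\|^2 \le \frac{2\eta^2}{R}.
\end{equation*}
Since $w_2-w_0=\eta P_{\mathcal{T}_0}\hat v\in\mathcal{T}_0$, this is exactly $\|P_{\mathcal{T}_0^c}(w_3-w_2)\|\le 2\eta^2/R$.

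Next I would handle the tangent component. The KKT conditions for the projection $w_3=\arg\min_{w\in\mathcal{W}}\|w-w_1\|^2$ give $w_1-w_3=\sum_i\mu_i\nabla c_i(w_3)\in\mathcal{T}^c(w_3)$. Applying Lemma~\ref{lem::tangent} to the unit vector $\hat u=(w_1-w_3)/\|w_1-w_3\|\in\mathcal{T}^c(w_3)$ with the pair of points $w_3,w_0$ (using $\|w_3-w_0\|\le 2\eta$), I get
\begin{equation*}
\|P_{\mathcal{T}_0}(w_1-w_3)\|\le\frac{\|w_3-w_0\|}{R}\,\|w_1-w_3\|\le\frac{2\eta}{R}\cdot\eta=\frac{2\eta^2}{R}.
\end{equation*}
Now rewrite $w_1-w_3=\eta\hat v-(w_3-w_0)$, so $P_{\mathcal{T}_0}(w_1-w_3)=\eta P_{\mathcal{T}_0}\hat v-P_{\mathcal{T}_0}(w_3-w_0)=(w_2-w_0)-P_{\mathcal{T}_0}(w_3-w_0)=-P_{\mathcal{T}_0}(w_3-w_2)$. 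Hence $\|P_{\mathcal{T}_0}(w_3-w_2)\|\le 2\eta^2/R$.

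Finally I would combine the two orthogonal pieces via the Pythagorean identity to obtain $\|w_3-w_2\|^2=\|P_{\mathcal{T}_0}(w_3-w_2)\|^2+\|P_{\mathcal{T}_0^c}(w_3-w_2)\|^2\le 8\eta^4/R^2$, giving $\|w_3-w_2\|\le 2\sqrt 2\,\eta^2/R<4\eta^2/R$ as claimed. The only subtle point—and the likely main obstacle—is justifying the KKT characterization $w_1-w_3\in\mathcal{T}^c(w_3)$ rigorously (needing local uniqueness of the projection in the regular tube of radius $\sim R$), together with ensuring the smallness condition on $\eta$ for Lemma~\ref{lem::normal_by_tangent}'s sharper form if one wants to simplify bookkeeping; the rest is a clean geometric decomposition.
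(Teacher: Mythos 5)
Your proof is correct and takes essentially the same route as the paper: both use the crude bound $\|w_3-w_0\|\le 2\eta$, the KKT characterization $w_1-w_3\in\mathcal{T}^c(w_3)$, and then apply Lemma~\ref{lem::normal_by_tangent} for the normal piece and Lemma~\ref{lem::tangent} for the tangent piece. The only difference is cosmetic: the paper introduces an intermediate point $v_1 = w_0 + P_{\mathcal{T}_0}(u_1-w_0)$ and combines the two bounds by triangle inequality to get $4\eta^2/R$, whereas you recognize the two pieces as orthogonal components and combine them by the Pythagorean identity, yielding the (slightly sharper) $2\sqrt{2}\,\eta^2/R$; this is a minor improvement, not a different argument.
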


\begin{proof}
First, note that $\|w_1 - w_0\| = \eta$, and 
by definition of projection, there must exist a project $\Pi_{\mathcal{W}}(w)$ inside the ball
$\mathbb{B}_\eta (w_1) = \{w~|~ \|w-w_1\| \le \eta \}$.

Denote $u_1 = \Pi_{\mathcal{W}}(w_1)$, and clearly $u_1 \in \mathcal{W}$.
we can formulate $u_1$ as the solution to following constrained optimization problems:
\begin{align}
&\min_u \quad \quad  \|w_1 - u\|^2 \\
&\text{s.t.} \quad \quad c_i(u) = 0, \quad \quad i=1, \cdots, m \nonumber
\end{align}
Since function $f(u) = \|w_1 - u\|^2$ and $c_i(u)$ are continuously differentiable by assumption, 
and the condition \nameCQ holds for all $w\in \mathcal{W}$ implies that LICQ holds for $u_1$. Therefore, by 
Karush-Kuhn-Tucker necessary conditions, 
we immediately know $(w_1 - u_1) \in \mathcal{T}^c(u_1)$.

Since $u_1 \in \mathbb{B}_\eta (w_1)$, we know $\|w_0 - u_1\| \le 2 \eta$, by Lemma \ref{lem::tangent}, 
we immediately have:
\begin{equation}
\|P_{\mathcal{T}_0} (w_1 - u_1)\| 
= \frac{\|P_{\mathcal{T}_0} (w_1 - u_1)\| }{\|w_1 - u_1\| }\|w_1 - u_1\| 
\le \frac{1}{R}\|w_0 - u_1\|\cdot \|w_1 - u_1\| \le \frac{2}{R} \eta^2
\end{equation}

Let $v_1 =  w_0 + P_{\mathcal{T}_0}(u_1 - w_0)$, 
we have:
\begin{align} \label{F1_constraint}
\|v_1 - w_2\|= &\|(v_1 - w_0) - (w_2-w_0)\| = \|P_{\mathcal{T}_0}(u_1 - w_0) - P_{\mathcal{T}_0}(w_1 - w_0)\|
\nonumber \\
= &\|P_{\mathcal{T}_0} (w_1 - u_1)\| \le \frac{2}{R} \eta^2
\end{align}

On the other hand by Lemma \ref{lem::normal_by_tangent}, we have:
\begin{equation} \label{F2_constraint}
\|u_1 - v_1\| = \|P_{\mathcal{T}^c_0} (u_1-w_0)\| \le \frac{1}{2R} \|u_1-w_0\|^2 \le \frac{2}{R} \eta^2
\end{equation}
Combining Eq.(\ref{F1_constraint}) and Eq.(\ref{F2_constraint}), we finished the proof.

\end{proof}

\subsection{Main Theorem}

Now we are ready to prove the main theorems. First we revise the definition of \name~in the constrained case.

\begin{definition}
\label{def:robustcondition_constraint}
A twice differentiable function $f(w)$ with constraints $c_i(w)$ is $(\alpha, \gamma, \epsilon, \delta)$-{\em\name}, if for any point $w$ one of the following is true
\begin{enumerate}
\item $\|\chi(w)\| \ge \epsilon$.
\item $\hat{v}^T  \mathfrak{M}(w) \hat{v}  \le -\gamma$ for some $\hat{v} \in \mathcal{T}(w)$, $\|\hat{v}\| = 1$
\item There is a local minimum $w^\star$ such that $\|w-w^\star\| \le \delta$, and for all $w'$ in the $2\delta$ neighborhood of $w^\star$, we have $\hat{v}^T  \mathfrak{M}(w') \hat{v}  \ge \alpha$ for all $\hat{v} \in \mathcal{T}(w')$, $\|\hat{v}\| = 1$
\end{enumerate}
\end{definition}

Next, we prove a equivalent formulation for PSGD.

\begin{lemma}\label{PSGD_equivalent}
Suppose the constraints $\{c_i\}_{i=1}^m$ are $\beta_i$-smooth, 
and \nameCQ holds for all $w\in \mathcal{W}$. Furthermore, 
if function $f$ is $L$-Lipschitz, and the noise $\xi$ is bounded, then
running PSGD as in Eq.(\ref{PSGD_update}) is equivalent to running:
\begin{equation}\label{PSGD_update_equivalent}
	w_t = w_{t-1} -  \eta \cdot (\chi(w_{t-1}) + P_{\mathcal{T}(w_{t-1})} \xi_{t-1}) + \iota_{t-1}
\end{equation}
where $\iota$ is the correction for projection, and $\|\iota\| \le \tlO(\eta^2)$.
\end{lemma}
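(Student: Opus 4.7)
The plan is to reduce this to a direct application of the geometric projection estimate in Lemma~\ref{lem::projection_distance}, after identifying $\chi(w)$ as the tangent-space projection of the Euclidean gradient. First, note that the combined "noisy gradient'' $u_{t-1} := -(\nabla f(w_{t-1}) + \xi_{t-1})$ has bounded norm $\|u_{t-1}\| \le L + Q = \tilde{\Theta}(1)$, by the Lipschitz assumption on $f$ and the boundedness of the noise. Thus the pre-projection point $v_t = w_{t-1} + \eta u_{t-1}$ lies within Euclidean distance $\eta(L+Q)$ of the feasible manifold, and for $\eta$ below an absolute threshold this distance is smaller than the radius of curvature $R$, so Lemma~\ref{lem::projection_distance} applies.

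Second, I want to identify $P_{\mathcal{T}(w_{t-1})}\nabla f(w_{t-1}) = \chi(w_{t-1})$. By the closed-form expression \eqref{Lambda_star} for $\lambda^*(w)$, the vector $\chi(w) = \nabla f(w) - C(w)\lambda^*(w)$ is exactly the residual after projecting $\nabla f(w)$ onto the column span of $C(w)$, which is the normal space $\mathcal{T}^c(w)$. Hence $\chi(w)$ is the tangent-space projection of $\nabla f(w)$, and in particular $P_{\mathcal{T}(w_{t-1})} u_{t-1} = -\chi(w_{t-1}) - P_{\mathcal{T}(w_{t-1})} \xi_{t-1}$.

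Third, I apply Lemma~\ref{lem::projection_distance} with step size $\tilde{\eta} := \eta\|u_{t-1}\|$ and unit direction $\hat{v} := u_{t-1}/\|u_{t-1}\|$, with base point $w_0 := w_{t-1} \in \mathcal{W}$. Using the linearity of the projection $P_{\mathcal{T}(w_{t-1})}$, this yields
\begin{equation}
\bigl\|\Pi_{\mathcal{W}}(v_t) - \bigl(w_{t-1} + \eta P_{\mathcal{T}(w_{t-1})} u_{t-1}\bigr)\bigr\| \;\le\; \frac{4 \tilde{\eta}^2}{R} \;\le\; \frac{4(L+Q)^2}{R}\,\eta^2 \;=\; \tlO(\eta^2).
\end{equation}
Defining $\iota_{t-1}$ to be exactly the displacement on the left-hand side and substituting the identification from the previous step gives the update \eqref{PSGD_update_equivalent} with $\|\iota_{t-1}\| \le \tlO(\eta^2)$, which is the claim.

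The only mildly nontrivial step is the second, where one must verify carefully that the optimizer $\lambda^*(w)$ of $\|\nabla_w\mathcal{L}(w,\lambda)\|$ realizes the orthogonal decomposition of $\nabla f(w)$ into its tangent and normal components; this uses \nameCQ to ensure $C(w)^T C(w)$ is invertible so that the pseudoinverse formula is valid. The rest is essentially a bookkeeping application of Lemma~\ref{lem::projection_distance}, with the bounded-noise assumption controlling the scaling factor between the unit-vector version of the lemma and the actual update.
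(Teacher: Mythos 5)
Your proposal is correct and takes exactly the same route as the paper: the paper's entire proof is the single sentence that the lemma is ``a direct corollary of Lemma~\ref{lem::projection_distance},'' and your write-up simply fills in the details of that corollary, namely the rescaling of the unit-vector version of Lemma~\ref{lem::projection_distance} by the $\tilde{\Theta}(1)$ factor $\|\nabla f(w_{t-1})+\xi_{t-1}\|$ and the identification $\chi(w)=P_{\mathcal{T}(w)}\nabla f(w)$ (which the paper itself records just after introducing $\chi$ in Appendix~\ref{sec:constrained}). The one implicit edge case, $u_{t-1}=0$, is trivial since then $v_t=w_{t-1}\in\mathcal{W}$ and $\iota_{t-1}=0$.
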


\begin{proof}
Lemma~\ref{PSGD_equivalent} is a direct corollary of Lemma~\ref{lem::projection_distance}.
\end{proof}

The intuition behind this lemma is that:
when $\{c_i\}_{i=1}^m$ are smooth and \nameCQ holds for all $w\in \mathcal{W}$, 
then the constraint manifold has bounded curvature every where.
Then, if we only care about first order behavior, it's well-approximated
by the local dynamic in tangent plane, up to some second-order correction.

Therefore, by Eq.(\ref{PSGD_update_equivalent}), we see locally it's not much different from the unconstrainted case Eq.(\ref{SGD_update}) up to some negeligable correction.
In the following analysis, we will always use formula Eq.(\ref{PSGD_update_equivalent}) as the update equation for PSGD.

Since most of following proof bears a lot similarity as in unconstrained case, we only pointed out the essential steps in our following proof.

\begin{theorem} [Main Theorem for Equality-Constrained Case]
Suppose a function $f(w):\R^d\to \R$ with constraints $c_i(w):\R^d\to \R$ is $(\alpha, \gamma, \epsilon, \delta)$-\name, and has a stochastic gradient oracle with radius at most $Q$, also satisfying $\E\xi = 0$ and $\E \xi\xi^T = \sigma^2I$. Further, suppose the function 
function $f$ is $B$-bounded, $L$-Lipschitz, $\beta$-smooth, and has $\rho$-Lipschitz Hessian,
and the constraints $\{c_i\}_{i=1}^m$ is $L_i$-Lipschitz, $\beta_i$-smooth, and has $\rho_i$-Lipschitz Hessian.
Then there exists a threshold $\eta_{\max} = \tilde{\Theta}(1)$, so that for any $\zeta>0$, and
for any $\eta \le \eta_{\max} / \max\{1, \log (1/\zeta)\}$,
with probability at least $1-\zeta$ in $t = \tlO(\eta^{-2}\log (1/\zeta))$ iterations, PSGD outputs a point $w_t$ that is $\tlO(\sqrt{\eta\log(1/\eta\zeta)})$-close to some local minimum $w^\star$.
\label{thm:constrainedmain}
\end{theorem}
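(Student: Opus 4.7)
The plan is to replicate the three-case analysis behind Theorem~\ref{thm:sgdmain_unconstraint}, with the tangent gradient $\chi(w)$ and the Lagrangian Hessian $\mathfrak{M}(w)$ restricted to the tangent space playing the roles of $\nabla f$ and $\Hess f$. The enabling identity is Lemma~\ref{PSGD_equivalent}, which rewrites the PSGD update as
\[
w_t = w_{t-1} - \eta\bigl(\chi(w_{t-1}) + P_{\mathcal{T}(w_{t-1})}\xi_{t-1}\bigr) + \iota_{t-1},\qquad \|\iota_{t-1}\| \le \tlO(\eta^2),
\]
together with the constrained Taylor bound (\ref{Taylor_eq_constraint}) for relating changes in $f$ along the manifold to $\chi$ and $\mathfrak{M}$. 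The projected noise $P_{\mathcal{T}(w_{t-1})}\xi_{t-1}$ still has covariance at least $\tfrac{1}{d}P_{\mathcal{T}(w_{t-1})}$ in the tangent directions thanks to the explicit uniform-sphere perturbation, which is all the saddle-escape argument needs.

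For the large-tangent-gradient case $\|\chi(w_0)\|\ge \sqrt{2\eta\sigma^2\beta d}$ I would combine (\ref{Taylor_eq_constraint}) with the equivalent update: the mean of $\chi(w_0)^T(w_1-w_0)$ is $-\eta\|\chi(w_0)\|^2$ since $\chi(w_0)\in\mathcal{T}(w_0)$ and the noise has zero mean, the quadratic term contributes at most $O(\eta^2)$, the cubic remainder is $O(\eta^3)$, and the $\iota_{t-1}$ correction pairs with $\chi(w_0)$ to yield at most $\tlO(\eta^{2.5})$. This reproduces $\E f(w_1) - f(w_0) \le -\tlOmega(\eta^2)$, the analogue of Lemma~\ref{thm::case1}. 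For the near-minimum case I would replay the supermartingale argument of Lemma~\ref{thm::case3} on $\|w_t - w^\star\|^2$, invoking tangent-direction strong convexity from clause~3 of Definition~\ref{def:robustcondition_constraint}; the geometric Lemmas~\ref{lem::normal_by_tangent}, \ref{lem::normal}, \ref{lem::tangent} ensure that for $\|w_t-w^\star\|=\tlO(\sqrt\eta)$ the tangent space rotates by only $\tlO(\sqrt\eta)$, so the tangent-gradient contraction inherited from strong convexity remains $1-\eta\alpha + O(\eta^{1.5})$, and the $\iota_t=\tlO(\eta^2)$ correction is absorbed into the Azuma bound without changing its order.

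The hard case is the saddle case, in which some unit $\hat v\in\mathcal{T}(w_0)$ satisfies $\hat v^T\mathfrak{M}(w_0)\hat v \le -\gamma$. Mirroring Lemma~\ref{thm::case2}, I would introduce the tangent-space quadratic surrogate
\[
\tilde f(w) = f(w_0) + \chi(w_0)^T(w-w_0) + \tfrac{1}{2}(w-w_0)^T\mathfrak{M}(w_0)(w-w_0),
\]
and the coupled process $\{\tilde w_t\}$ obtained by unconstrained noisy gradient descent on $\tilde f$ restricted to the affine plane $w_0+\mathcal{T}(w_0)$ with injected noise $P_{\mathcal{T}(w_0)}\xi_t$. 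Lemma~\ref{lem::case_Gaussian} then carries over verbatim inside this Euclidean subspace, producing the same stopping time $T=O((\log d)/\gamma\eta)$, the bound $\|\tilde w_T - w_0\|\le \tlO(\sqrt\eta\log(1/\eta))$, and the expected drop $\E[\tilde f(\tilde w_T)-\tilde f(w_0)]\le -\tlOmega(\eta)$ driven by the negative tangent eigendirection. Coupling the true iterate $w_t$ to $\tilde w_t$ now requires a three-source error analysis along the lines of Lemma~\ref{lem::saddle_and_maximum}: (i) the Hessian-Lipschitz mismatch $\nabla\chi(w_{t-1}) - \mathfrak{M}(w_0) = (\mathfrak{M}(w_{t-1})-\mathfrak{M}(w_0))+\mathfrak{N}(w_{t-1})$, which is $O(\|w_{t-1}-w_0\|)$ under $\rho_L$-Lipschitzness of $\nabla^2_{ww}\mathcal{L}$ together with smoothness of $\lambda^*$ guaranteed by \nameCQ; (ii) the tangent-space rotation $\|P_{\mathcal{T}(w_{t-1})}-P_{\mathcal{T}(w_0)}\|=O(\|w_{t-1}-w_0\|/R)$ from Lemmas~\ref{lem::normal}--\ref{lem::tangent}; and (iii) the per-step projection correction $\iota_{t-1}=\tlO(\eta^2)$. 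Each source contributes $\tlO(\eta^{1.5})$ to the gradient gap and $\tlO(\eta^2)$ to the position gap per step, which over $T=\tlO(1/\eta)$ steps aggregates through the same supermartingale construction (defining $G_t=(1+\eta\gamma_0)^{-2t}(\|\Delta_t\|^2+\alpha\eta^2\log^4(1/\eta))$ and applying Azuma) to give $\|w_T-\tilde w_T\|\le\tlO(\eta\log^2(1/\eta))$ with probability $1-\tlO(\eta^2)$. The constrained Taylor bound (\ref{Taylor_eq_constraint}) then converts the drop in $\tilde f$ into a drop in $f$ up to an $O(\rho_L\|w_T-w_0\|^3)=\tlO(\eta^{1.5}\log^3(1/\eta))$ remainder which is dominated by the $\tlOmega(\eta)$ escape signal.

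Once the three case-lemmas are in place the outer combination is identical to the proof of Theorem~\ref{thm:sgdmain_unconstraint}: a potential-function argument on the bounded $f(w_t)\in[-B,B]$ driven by the stopping-time construction $\tau_i$ shows that in $\tlO(1/\eta^2)$ iterations the iterate enters the $\tlO(\sqrt\eta)$-neighborhood of some local minimum with constant probability, and iterating $\log(1/\zeta)$ times together with the constrained trapping lemma upgrades this to probability at least $1-\zeta$. The main obstacle I expect is the three-way coupling in the saddle case, where manifold curvature, Hessian-Lipschitz perturbation, and the projection correction all perturb the coupled dynamics simultaneously; the lower bound $R=\tilde\Theta(1)$ on the radius of curvature supplied by \nameCQ, together with the per-step $\tlO(\eta^2)$ bound on $\iota_t$ from Lemma~\ref{lem::projection_distance}, is precisely what keeps the aggregated geometric error strictly below the $\tlOmega(\eta)$ signal coming from the negative tangent eigenvalue of $\mathfrak{M}(w_0)$.
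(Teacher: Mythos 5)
Your proposal mirrors the paper's proof structure almost exactly: Lemma~\ref{PSGD_equivalent} and the constrained Taylor bound~(\ref{Taylor_eq_constraint}) reduce PSGD to a tangent-space update, the three-case analysis with $\chi$ and $\mathfrak{M}$ replacing $\nabla f$ and $\Hess f$ reproduces Lemmas~\ref{thm::case1_constraint}--\ref{thm::case2_constraint}, the tangent-plane coupled quadratic process together with a three-source error decomposition (Hessian-Lipschitz mismatch including $\mathfrak{N}$, tangent-space rotation from the geometric lemmas, and the $\tlO(\eta^2)$ projection correction $\iota$) reproduces Lemmas~\ref{lem::case_Gaussian_constraint}--\ref{lem::saddle_and_maximum_constraint}, and the outer potential argument is identical. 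The only nit is that you describe the rotated-noise contribution $\eta P_{\mathcal{T}_0}P_{\mathcal{T}^c(w_{\tau})}\xi_{\tau}$ as $\tlO(\eta^2)$ per step, whereas it is actually $\tlO(\eta^{1.5})$ per step and needs its own zero-mean martingale cancellation (the $\Gamma_t$ supermartingale in the paper) rather than a deterministic sum, but since you already invoke Azuma on this term this does not affect the viability of the plan.
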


First, we proof the assumptions in main theorem implies the smoothness conditions for $\mathfrak{M}(w)$, $\mathfrak{N}(w)$ and $\nabla^2_{ww} \mathcal{L}(w, \lambda^*(w'))$.

\begin{lemma}\label{lem:constrainedsmooth}
Under the assumptions of Theorem~\ref{thm:constrainedmain},
there exists $\beta_M, \beta_N, \rho_M, \rho_N, \rho_L$ polynomial related to 
$B, L, \beta, \rho, \frac{1}{\alpha_c}$ and $\{L_i, \beta_i, \rho_i\}_{i=1}^m$ so that:
\begin{enumerate}
\item $\|\mathfrak{M}(w)\| \le \beta_M$ and $\|\mathfrak{N}(w)\| \le \beta_N$ for all $w \in \mathcal{W}$.
\item $\mathfrak{M}(w)$ is $\rho_M$-Lipschitz, and $\mathfrak{N}(w)$ is $\rho_N$-Lipschitz, and
$\nabla^2_{ww} \mathcal{L}(w, \lambda^*(w'))$ is $\rho_L$-Lipschitz for all $w' \in \mathcal{W}$. 
\end{enumerate}
\end{lemma}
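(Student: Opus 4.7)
The plan is to establish all the bounds by starting from $\lambda^*(w)$ and then pushing the bounds forward through the defining formulas for $\mathfrak{M}$, $\mathfrak{N}$, and $\nabla^2_{ww}\mathcal{L}(w,\lambda^*(w'))$. Each quantity is an explicit algebraic combination of $\nabla f$, $\nabla^2 f$, $C(w)$, $(C(w)^T C(w))^{-1}$, $\nabla^2 c_i$, and $\nabla \lambda^*(w)$, so the strategy is essentially: bound each building block, then combine.

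First I would control $\lambda^*(w)$ itself. From $\lambda^*(w)=C(w)^\dagger \nabla f(w)$, the $\alpha_c$-RLICQ assumption gives $\|C(w)^\dagger\|\le 1/\alpha_c$, and $L$-Lipschitzness of $f$ gives $\|\nabla f(w)\|\le L$, so $\|\lambda^*(w)\|\le L/\alpha_c$. This already yields the first bullet: $\|\mathfrak{M}(w)\|\le \beta+\|\lambda^*(w)\|_\infty \sum_i \beta_i \le \beta+(L/\alpha_c)\sum_i\beta_i=:\beta_M$. For the Lipschitz bound on $\lambda^*(w)$, I would write $\lambda^*(w)=(C(w)^TC(w))^{-1}C(w)^T\nabla f(w)$ and apply the triangle inequality across the three factors. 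Each factor is bounded (by $1/\alpha_c^2$, $\sqrt{\sum L_i^2}$, $L$) and Lipschitz: $C$ is $\sqrt{\sum\beta_i^2}$-Lipschitz, $\nabla f$ is $\beta$-Lipschitz, and the matrix-inversion map $A\mapsto A^{-1}$ is Lipschitz on matrices with $\sigma_{\min}\ge \alpha_c^2$ (use $A_1^{-1}-A_2^{-1}=A_1^{-1}(A_2-A_1)A_2^{-1}$). Combining these via the product rule identity $abc-a'b'c'=(a-a')bc+a'(b-b')c+a'b'(c-c')$ gives a Lipschitz constant $L_\lambda$ polynomial in the stated parameters.

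Once $\lambda^*$ is Lipschitz and bounded, the bound $\|\mathfrak{N}(w)\|\le \sum_i \|\nabla c_i(w)\|\,\|\nabla\lambda_i^*(w)\|$ reduces to bounding $\|\nabla\lambda^*(w)\|$, which is exactly the Lipschitz constant of $\lambda^*$ just obtained, giving $\beta_N$. The Lipschitz constant of $\mathfrak{M}$ follows by expanding $\mathfrak{M}(w_1)-\mathfrak{M}(w_2)$ and applying the triangle inequality: the $\Hess f$ piece contributes $\rho$, the piece $(\lambda_i^*(w_1)-\lambda_i^*(w_2))\nabla^2 c_i(w_1)$ contributes $L_\lambda \sum_i\beta_i$, and $\lambda_i^*(w_2)(\nabla^2 c_i(w_1)-\nabla^2 c_i(w_2))$ contributes $(L/\alpha_c)\sum_i\rho_i$. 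For $\nabla^2_{ww}\mathcal{L}(w,\lambda^*(w'))$ with $w'$ frozen, $\lambda^*(w')$ is a constant vector of norm at most $L/\alpha_c$, so one gets $\rho_L\le \rho + (L/\alpha_c)\sum_i\rho_i$, uniform in $w'$.

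The main obstacle is bounding the Lipschitz constant of $\mathfrak{N}(w)$, because $\mathfrak{N}$ already involves $\nabla\lambda^*(w)$, so showing $\mathfrak{N}$ is $\rho_N$-Lipschitz amounts to controlling a \emph{second} derivative of $\lambda^*(w)$. The cleanest way is to differentiate the identity $C(w)^TC(w)\lambda^*(w)=C(w)^T\nabla f(w)$ implicitly, obtaining an explicit formula for $\nabla\lambda^*(w)$ as a product of $(C^TC)^{-1}$ and a polynomial expression in $C$, $\nabla C$, $\nabla f$, $\Hess f$. Each of those factors is bounded and Lipschitz by the hypotheses $\beta$-smoothness of $f$, $\rho$-Lipschitz Hessian of $f$, $\beta_i$-smoothness of $c_i$, $\rho_i$-Lipschitz Hessian of $c_i$, plus $\alpha_c$-RLICQ. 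Applying the same product-of-Lipschitz-factors argument as before then yields a polynomial bound $\rho_N$ and, combined with the earlier work, finishes the lemma.
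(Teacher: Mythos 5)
Your plan is correct and matches the paper's strategy: the paper also reduces everything to showing that $\lambda^*(w)$ is bounded, Lipschitz, and smooth, which in turn reduces to controlling $C(w)^\dagger=(C(w)^TC(w))^{-1}C(w)^T$ and its first two derivatives using $\alpha_c$-RLICQ and the smoothness hypotheses on $f$ and $c_i$. The only cosmetic difference is the mechanism for differentiating: the paper writes down the explicit derivative-of-pseudo-inverse formula $\frac{\partial C^\dagger}{\partial w_i}=-C^\dagger\frac{\partial C}{\partial w_i}C^\dagger+C^\dagger[C^\dagger]^T\frac{\partial C^T}{\partial w_i}(I-CC^\dagger)$ and bounds that tensor and its Lipschitz constant, whereas you propose differentiating the normal equations $C^TC\,\lambda^*=C^T\nabla f$ implicitly and using the product-of-Lipschitz-factors estimate together with the resolvent identity for $A\mapsto A^{-1}$. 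These are equivalent computations; your write-up is if anything more explicit about the chain from $\lambda^*$ to $\mathfrak{M},\mathfrak{N},\rho_L$ than the paper's sketch, and correctly identifies the Lipschitz bound on $\mathfrak{N}$ (equivalently, the second derivative of $\lambda^*$) as the nontrivial step.
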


\begin{proof}
By definition of $\mathfrak{M}(w)$, $\mathfrak{N}(w)$ and $\nabla^2_{ww} \mathcal{L}(w, \lambda^*(w'))$, 
the above conditions will holds if there exists $B_\lambda, L_\lambda, \beta_\lambda$ bounded by 
$\tlO(1)$, so that $\lambda^*(w)$ is $B_\lambda$-bounded, $L_\lambda$-Lipschitz, and $\beta_\lambda$-smooth.

By definition Eq.(\ref{Lambda_star}), we have:
\begin{equation} 
\lambda^*(w)  = C(w)^{\dagger}\nabla f(w) = (C(w)^TC(w))^{-1}C(w)^T\nabla f(w)
\end{equation}
Because $f$ is $B$-bounded, $L$-Lipschitz, $\beta$-smooth, and its Hessian is $\rho$-Lipschitz, thus, eventually, we only need to prove that there exists $B_c, L_c, \beta_c$ bounded by $\tlO(1)$, so that the pseudo-inverse $C(w)^{\dagger}$ is $B_c$-bounded, $L_c$-Lipschitz, and $\beta_c$-smooth.

Since \nameCQ holds for all feasible points, we immediately have: $\|C(w)^{\dagger}\| \le\frac{1}{\alpha_c}$, thus bounded. For simplicity, in the following context we use $C^\dagger$ to represent $C^\dagger(w)$ without ambiguity.
By some calculation of linear algebra, we have the derivative of pseudo-inverse:
\begin{align}
&\frac{\partial C(w)^{\dagger}}
{\partial w_i}=-C^{\dagger}
\frac{\partial C(w)}{\partial w_i}C^{\dagger}
+C^{\dagger}[C^{\dagger}]^T
\frac{\partial C(w)^T}{\partial w_i}(I-C C^{\dagger} )
\end{align}
Again, \nameCQ holds implies that derivative of pseudo-inverse is well-defined for every feasible point.
Let tensor $E(w), \tilde{E}(w)$ to be the derivative of $C(w), C^\dagger(w)$, which is defined as:
\begin{equation}
[E(w)]_{ijk} =  \frac{\partial [C(w)]_{ik}}{\partial w_j} \quad \quad
[\tilde{E}(w)]_{ijk} =  \frac{\partial [C(w)^\dagger]_{ik}}{\partial w_j}
\end{equation}
Define the transpose of a 3rd order tensor $E^T_{i,j,k} = E_{k,j,i}$, then we have
\begin{equation} \label{derivative_pseudo_inverse}
\tilde{E}(w)
=-[E(w)](C^{\dagger},I,C^{\dagger})
+[E(w)^T](C^{\dagger}[C^{\dagger}]^T, I, (I-C C^{\dagger} ))
\end{equation}
where by calculation $[E(w)](I,I,e_i) = \nabla^2 c_i(w)$.

Finally, since $C(w)^\dagger$ and $\nabla^2 c_i(w)$ are bounded by $\tlO(1)$, 
by Eq.(\ref{derivative_pseudo_inverse}), we know $\tilde{E}(w)$
is bounded, that is $C(w)^{\dagger}$ is Lipschitz. 
Again, since both $C(w)^\dagger$ and $\nabla^2 c_i(w)$ are bounded, Lipschitz, 
by Eq.(\ref{derivative_pseudo_inverse}), we know $\tilde{E}(w)$ is also $\tlO(1)$-Lipschitz.
This finishes the proof.

\end{proof}

From now on,
we can use the same proof strategy as unconstraint case. Below we list the corresponding lemmas and the essential steps that require modifications.

\begin{lemma} \label{thm::case1_constraint}
Under the assumptions of Theorem~\ref{thm:constrainedmain}, with notations in Lemma~\ref{lem:constrainedsmooth}, for any point with $\|\chi(w_0)\| \ge \sqrt{2\eta\sigma^2\beta_{M} (d-m)}$ where $\sqrt{2\eta\sigma^2\beta_{M} (d-m)} < \epsilon$, after one iteration we have:
\begin{equation}
	\E f(w_1) - f(w_{0}) \le - \tlOmega(\eta^2)
\end{equation}
\end{lemma}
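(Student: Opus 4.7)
The plan is to mirror the proof of Lemma~\ref{thm::case1}, replacing the unconstrained gradient and Hessian by their constrained analogues $\chi$ and $\mathfrak{M}$. The central tool is the Lagrangian Taylor expansion~(\ref{Taylor_eq_constraint}), valid because $w_0, w_1 \in \mathcal{W}$:
$$f(w_1) - f(w_0) \;\le\; \chi(w_0)^\top (w_1-w_0) + \tfrac{1}{2}(w_1-w_0)^\top \mathfrak{M}(w_0)(w_1-w_0) + \tfrac{\rho_L}{6}\|w_1-w_0\|^3.$$
It then suffices to take expectations of the linear, quadratic, and cubic contributions term by term.

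To evaluate those expectations I invoke Lemma~\ref{PSGD_equivalent} to decompose the update as $w_1 - w_0 = -\eta\,\chi(w_0) - \eta\,P_{\mathcal{T}(w_0)}\xi_0 + \iota_0$, with $\|\iota_0\| \le \tlO(\eta^2)$ almost surely. Because $\E\xi_0 = 0$ and $\chi(w_0) \in \mathcal{T}(w_0)$, the linear contribution equals $-\eta\|\chi(w_0)\|^2 + \chi(w_0)^\top \E\iota_0$. For the quadratic contribution I use the bound $\|\mathfrak{M}(w_0)\|\le \beta_M$ from Lemma~\ref{lem:constrainedsmooth} together with the identity $\E\|P_{\mathcal{T}(w_0)}\xi_0\|^2 = \sigma^2 \operatorname{tr}(P_{\mathcal{T}(w_0)}) = \sigma^2(d-m)$, valid because the tangent projector has rank $d-m$ under \nameCQ. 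This gives an upper bound of $\tfrac{\beta_M}{2}\bigl(\eta^2\|\chi\|^2 + \eta^2\sigma^2(d-m)\bigr) + \tlO(\eta^3)$ for the quadratic expectation, and the cubic term is straightforwardly $\tlO(\eta^3)$ since $\|w_1-w_0\| = \tlO(\eta)$ a.s.

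Choosing $\eta_{\max} = \tilde{\Theta}(1)$ small enough that $\eta\beta_M \le 1$ absorbs $\tfrac{\eta^2\beta_M}{2}\|\chi\|^2$ into half of $\eta\|\chi\|^2$, so combining the three estimates gives
$$\E f(w_1) - f(w_0) \;\le\; -\tfrac{\eta}{2}\|\chi(w_0)\|^2 + \tfrac{\eta^2\sigma^2\beta_M(d-m)}{2} + \chi(w_0)^\top\E\iota_0 + \tlO(\eta^3).$$
The hypothesis $\|\chi(w_0)\|^2 \ge 2\eta\sigma^2\beta_M(d-m)$ makes the first two terms sum to $-\tfrac{1}{2}\eta^2\sigma^2\beta_M(d-m)$, and I only need to argue that the remaining corrections preserve the sign and order.

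The main obstacle, absent from the unconstrained case, is the projection bias $\E\iota_0$. Because $\Pi_{\mathcal{W}}$ is nonlinear, $\iota_0$ has a mean of order $\eta^2$ rather than $\eta^3$, so it does not simply drop out. I handle it using the pointwise bound $\|\iota_0\| \le \tlO(\eta^2)$ from Lemma~\ref{lem::projection_distance} together with $\|\chi(w_0)\|\le L$, which yields $|\chi(w_0)^\top \E\iota_0| \le \tlO(\eta^2)$ with a polynomial prefactor depending only on $L, Q, \{L_i, \beta_i\}$ and $\alpha_c$. Since $\tlOmega(\eta^2)$ absorbs any positive polynomial factor, it suffices that the polynomial coefficient of the main negative term $\tfrac{1}{2}\sigma^2\beta_M(d-m)$ dominates this fixed correction constant; if it does not already, slightly enlarging the numerical constant in the hypothesis on $\|\chi(w_0)\|$ (or equivalently sharpening $\eta_{\max}$) gives the desired sign. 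The $\tlO(\eta^3)$ cubic term is then trivially swallowed into $\tlOmega(\eta^2)$ for $\eta \le \eta_{\max}$, completing the proof.
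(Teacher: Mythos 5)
There is a genuine gap in how you handle the projection bias $\chi(w_0)^\top\E\iota_0$. You bound it by $\|\chi(w_0)\|\cdot\|\E\iota_0\| \le L\cdot\tlO(\eta^2)$, which discards the $\|\chi(w_0)\|$ factor and leaves an $\tlO(\eta^2)$ error whose polynomial prefactor has no reason to be smaller than the main gain $\tfrac{1}{2}\sigma^2\beta_M(d-m)\eta^2$. Because the error and the gain are both $\Theta(\eta^2)$, the repair you suggest --- ``sharpening $\eta_{\max}$'' --- does not help: shrinking $\eta$ scales both sides identically and cannot change the sign. Your alternative repair --- enlarging the numerical constant in the hypothesis $\|\chi(w_0)\|\ge\sqrt{2\eta\sigma^2\beta_M(d-m)}$ --- would work but silently changes the lemma being proved, and with it the case split in the main theorem.

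The paper avoids this by \emph{keeping} the $\|\chi(w_0)\|$ factor, writing $|\chi(w_0)^\top\E\iota_0|\le\tlO(\eta^2)\|\chi(w_0)\|$, and then exploiting the \emph{lower} bound you were given: since $\|\chi(w_0)\|\ge\sqrt{2\eta\sigma^2\beta_M(d-m)}=\tlOmega(\sqrt\eta)$, one has $\tlO(\eta^2)\|\chi(w_0)\| = \tlO(\eta^2)\,\|\chi(w_0)\|^2/\|\chi(w_0)\| \le \tlO(\eta^{1.5})\|\chi(w_0)\|^2$. This turns the projection bias into a term of the same \emph{shape} as the dominant negative term $-\eta\|\chi(w_0)\|^2$ but with an extra factor $\eta^{1/2}$, so it is absorbed once $\eta_{\max}$ is taken small enough --- no change to the lemma's threshold or to the $\tlO$ constants is needed. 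The rest of your decomposition (Taylor expansion via the Lagrangian, Lemma~\ref{PSGD_equivalent}, $\E\|P_{\mathcal{T}(w_0)}\xi_0\|^2=\sigma^2(d-m)$, absorbing the quadratic $\|\chi\|^2$ piece, bounding the cubic remainder by $\tlO(\eta^3)$) matches the paper's proof.
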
 

\begin{proof}
Choose $\eta_{\max} < \frac{1}{\beta_{M}}$, and also small enough, then by update equation Eq.(\ref{PSGD_update_equivalent}), we have:
\begin{align}
	\E f(w_1) -  f(w_{0}) &\le \chi(w_{0})^T \E(w_1-w_{0}) + \frac{\beta_{M}}{2}\E\|w_1-w_{0}\|^2 \nonumber \\
	& \le -(\eta - \frac{\beta_{M}\eta^2}{2})\|\chi(w_{0})\|^2 + \frac{\eta^2\sigma^2 \beta_{M} (d-m)}{2}
	+ \tlO(\eta^{2})\|\chi(w_{0})\| + \tlO(\eta^3)
	\nonumber \\
	& \le -(\eta - \tlO(\eta^{1.5}) - \frac{\beta_{M}\eta^2}{2})\|\chi(w_{0})\|^2 + \frac{\eta^2\sigma^2 \beta_{M} (d-m)}{2} + \tlO(\eta^3)
	\nonumber \\
	&\le -\frac{\eta^2 \sigma^2 \beta_{M} d}{4}
\end{align}
Which finishes the proof.
\end{proof}

\begin{theorem}\label{thm::case3_constraint}
Under the assumptions of Theorem~\ref{thm:constrainedmain}, 
with notations in Lemma~\ref{lem:constrainedsmooth}, 
for any initial point $w_0$ that is $\tlO(\sqrt{\eta}) < \delta$ close to a local minimum $w^\star$, 
with probability at least $1-\zeta/2$, we have following holds simultaneously:
\begin{equation}
 \forall t\le \tlO(\frac{1}{\eta^2}\log \frac{1}{\zeta}), \quad \|w_{t} - w^\star\| \le \tlO(\sqrt{\eta\log \frac{1}{\eta\zeta}})<\delta 
\end{equation}
where $w^\star$ is the locally optimal point.
\end{theorem}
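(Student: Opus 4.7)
The plan is to adapt the proof of Lemma~\ref{thm::case3} (the unconstrained analogue) to the constrained setting by combining three ingredients: the equivalent tangent-space update equation, the geometric lemmas relating the tangent/normal split, and a Taylor expansion of $\chi$ around the local minimum $w^\star$. First, I rewrite the iterates via Lemma~\ref{PSGD_equivalent} as
\[
w_t = w_{t-1} - \eta\bigl(\chi(w_{t-1}) + P_{\mathcal{T}(w_{t-1})}\xi_{t-1}\bigr) + \iota_{t-1},\qquad \|\iota_{t-1}\| \le \tlO(\eta^2),
\]
so the effective noise is projected onto the tangent plane (its second moment is at most $\sigma^2 d$) and the $\iota_{t-1}$ correction contributes only lower-order terms in the subsequent second-moment analysis.

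The heart of the argument is to establish the one-step contraction estimate
\[
\chi(w_{t-1})^T (w_{t-1} - w^\star) \,\ge\, \tfrac{\alpha}{2}\,\|w_{t-1}-w^\star\|^2 \;-\; \tlO(\|w_{t-1}-w^\star\|^3)
\]
whenever $w_{t-1}$ lies in the $2\delta$-neighborhood of $w^\star$. To prove it, write $\Delta := w_{t-1}-w^\star = \Delta_T + \Delta_N$ with $\Delta_T \in \mathcal{T}(w^\star)$ and $\Delta_N \in \mathcal{T}^c(w^\star)$. Since $\chi(w^\star) = 0$ by KKT (Theorem~\ref{thm::first_necessary}) and $\nabla\chi = \mathfrak{M} + \mathfrak{N}$, a first-order Taylor expansion (using the Lipschitzness from Lemma~\ref{lem:constrainedsmooth}) gives $\chi(w_{t-1}) = (\mathfrak{M}(w^\star)+\mathfrak{N}(w^\star))\Delta + O(\|\Delta\|^2)$. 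The $\mathfrak{M}$-contribution splits as $\Delta_T^T\mathfrak{M}(w^\star)\Delta_T + 2\Delta_T^T\mathfrak{M}(w^\star)\Delta_N + \Delta_N^T\mathfrak{M}(w^\star)\Delta_N$; by Definition~\ref{def:robustcondition_constraint} the first piece is at least $\alpha\|\Delta_T\|^2$, and by Lemma~\ref{lem::normal_by_tangent} ($\|\Delta_N\| \le \|\Delta\|^2/(2R)$) the other two, together with $\|\Delta_T\|^2 = \|\Delta\|^2 - O(\|\Delta\|^4)$, contribute only cubic corrections. The $\mathfrak{N}$-contribution reduces to $\Delta_N^T\mathfrak{N}(w^\star)\Delta$ because $\mathfrak{N}(w)v \in \mathcal{T}^c(w)$ (Remark~\ref{N_constraint}), which is again cubic by the same normal-component bound.

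With the contraction in hand, squaring the update and taking conditional expectation yields, on the stay-close event $\mathfrak{E}_{t-1} = \{\forall \tau\le t-1:\,\|w_\tau - w^\star\| \le \mu\sqrt{\eta\log(1/\eta\zeta)}\}$, the recursion
\[
\E\bigl[\|w_t - w^\star\|^2 \mid \mathfrak{F}_{t-1}\bigr]\,1_{\mathfrak{E}_{t-1}} \;\le\; \bigl[(1 - \eta\alpha)\,\|w_{t-1}-w^\star\|^2 + O(\eta^2)\bigr]\,1_{\mathfrak{E}_{t-1}},
\]
since the mean-zero projected noise contributes $\eta^2\sigma^2 d$, while all cross-terms involving $\iota$, the cubic remainder, and $\eta^2\|\chi(w_{t-1})\|^2 = O(\eta^2\|\Delta\|^2)$ are dominated once $\eta_{\max}$ is chosen small enough. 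From here the proof is a verbatim transcription of the martingale argument for Lemma~\ref{thm::case3}: define $G_t := (1-\eta\alpha)^{-t}\bigl(\|w_t-w^\star\|^2 - \eta/\alpha\bigr)$, check that $G_t\,1_{\mathfrak{E}_{t-1}}$ is a supermartingale whose one-step differences are bounded by $\tlO\bigl(\mu(1-\eta\alpha)^{-t}\eta^{1.5}\log^{1/2}(1/\eta\zeta)\bigr)$, and apply Azuma--Hoeffding with a union bound over $t \le \tlO(\eta^{-2}\log(1/\zeta))$ to conclude that the bad event has total probability at most $\zeta/2$ once $\mu = \tlO(1)$ is chosen sufficiently large.

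The main obstacle I anticipate is the second step: meticulously verifying that every correction term introduced by the constrained geometry (the projection of the noise, the $\iota$ correction, the normal component $\Delta_N$, and the $\mathfrak{N}$ contribution to $\nabla\chi$) is genuinely of higher order than the dominant $\eta\alpha\|\Delta\|^2$ contraction and the $\eta^2\sigma^2$ diffusive noise. Once this local contraction is secured, the supermartingale/Azuma machinery transfers mechanically from the unconstrained proof.
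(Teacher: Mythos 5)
Your proposal follows essentially the same route as the paper's proof: you invoke Lemma~\ref{PSGD_equivalent} to reduce to a tangent-space update (plus an $\tlO(\eta^2)$ correction), Taylor-expand $\chi$ around $w^\star$ using $\chi(w^\star)=0$, control the normal component via Lemma~\ref{lem::normal_by_tangent} and the $\mathfrak{N}$ term via Remark~\ref{N_constraint} to obtain $\chi(w_{t-1})^T(w_{t-1}-w^\star)\ge \tfrac{\alpha}{2}\|w_{t-1}-w^\star\|^2$ in the $2\delta$-neighborhood, and then port the supermartingale/Azuma argument from Lemma~\ref{thm::case3}. The only cosmetic difference is that you linearize $\chi$ at $w^\star$ and bound the remainder via Lipschitzness, whereas the paper writes the Taylor expansion in integral form and invokes the same lemmas; your version is, if anything, more explicit about the tangent/normal bookkeeping that the paper leaves implicit.
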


\begin{proof}
By calculus, we know 
\begin{align}
\chi(w_t) = & \chi(w^\star) + \int_{0}^{1}(\mathfrak{M} + \mathfrak{N})(w^\star + t(w_t - w^\star)) \mathrm{d}t \cdot (w_t - w^\star)
\end{align}

Let filtration $\mathfrak{F}_t = \sigma\{\xi_0, \cdots \xi_{t-1}\}$, and note $\sigma\{\Delta_0, \cdots, \Delta_t \} \subset \mathfrak{F}_t$, where $\sigma\{\cdot\}$ denotes the sigma field.
Let event $\mathfrak{E}_t = \{\forall \tau \le t, \|w_{\tau} - w^\star\| \le \mu\sqrt{\eta\log\frac{1}{\eta\zeta}} < \delta \}$, where $\mu$ is independent of $(\eta, \zeta)$, and will be specified later.

By Definition~\ref{def:robustcondition_constraint} of 
$(\alpha, \gamma, \epsilon, \delta)$-\name, we know $\mathfrak{M}(w)$ is locally $\alpha$-strongly convex restricted to its tangent space $\mathcal{T}(w)$.
in the $2\delta$-neighborhood of $w^\star$. If $\eta_{\max}$ is chosen small enough, by Remark
\ref{N_constraint} and Lemma \ref{lem::normal_by_tangent}, we have in addition:
\begin{align}
\chi(w_t)^T (w_t - w^\star)1_{\mathfrak{E}_t} &= (w_t - w^\star)^T \int_{0}^{1}(\mathfrak{M}+ \mathfrak{N})(w^\star + t(w_t - w^\star)) \mathrm{d}t \cdot (w_t - w^\star)1_{\mathfrak{E}_t} \nonumber \\ 
&\ge [\alpha \|w_t - w^\star\|^2 - \tlO(\|w_t - w^\star\|^3)]1_{\mathfrak{E}_t} \ge 0.5\alpha \|w_t - w^\star\|^21_{\mathfrak{E}_t}
\end{align}
Then, everything else follows almost the same as the proof of Lemma \ref{thm::case3}.
\end{proof}

\begin{lemma} \label{thm::case2_constraint}
Under the assumptions of Theorem~\ref{thm:constrainedmain}, with notations in Lemma~\ref{lem:constrainedsmooth},
for any initial point $w_0$ where $\|\chi(w_0)\| \le \tlO(\eta) < \epsilon$, and 
$\hat{v}^T  \mathfrak{M}(w_0) \hat{v}  \le -\gamma$ for some $\hat{v} \in \mathcal{T}(w)$, $\|\hat{v}\| = 1$, then 
there is a number of steps $T$ that depends on $w_0$ such that: 
\begin{equation}
	\E f(w_T) - f(w_0) \le - \tlOmega(\eta)
\end{equation}
The number of steps $T$ has a fixed upper bound $T_{max}$ that is independent of $w_0$ where $T \le T_{max} = O((\log (d-m))/\gamma\eta)$.
\end{lemma}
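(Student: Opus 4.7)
The plan is to transplant the argument of Lemma~\ref{thm::case2} from the unconstrained setting, using the constrained Taylor expansion~\eqref{Taylor_eq_constraint} and the equivalent PSGD update from Lemma~\ref{PSGD_equivalent}: $w_t = w_{t-1} - \eta(\chi(w_{t-1}) + P_{\mathcal{T}(w_{t-1})}\xi_{t-1}) + \iota_{t-1}$ with $\|\iota_{t-1}\| \le \tlO(\eta^2)$. Throughout, quantities playing the role of $\nabla f$ and $\Hess f$ will be replaced by $\chi$ and $\mathfrak{M}$ restricted to the tangent space $\mathcal{T}(w_0)$; the required smoothness of these objects is supplied by Lemma~\ref{lem:constrainedsmooth}.

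First I define the coupled quadratic surrogate $\tilde{f}(w) = f(w_0) + \chi(w_0)^T(w-w_0) + \tfrac{1}{2}(w-w_0)^T \mathfrak{M}(w_0)(w-w_0)$ and the auxiliary iterates $\tilde{w}_t$ obtained by running the tangent-space linearized update $\tilde{w}_t = \tilde{w}_{t-1} - \eta(P_{\mathcal{T}_0}\nabla \tilde{f}(\tilde{w}_{t-1}) + P_{\mathcal{T}_0}\xi_{t-1})$ starting from $\tilde{w}_0 = w_0$, with $\mathcal{T}_0 = \mathcal{T}(w_0)$. Since this recursion is linear I can write $\tilde{w}_t - w_0$ and $\nabla \tilde{f}(\tilde{w}_t)$ in closed form exactly as in Lemma~\ref{lem::case_Gaussian}, and by the assumption that $\mathfrak{M}(w_0)$ has an eigenvector $\hat v \in \mathcal{T}_0$ with eigenvalue $\le -\gamma$, I pick $T$ as in~\eqref{choose_t} (replacing $d$ by $d-m$, the dimension of $\mathcal{T}_0$), giving $T \le T_{\max} = O(\log(d-m)/(\gamma\eta))$. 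The same Hoeffding argument as in Lemma~\ref{lem::case_Gaussian} then yields $\|\tilde{w}_t - w_0\| \le \tlO(\sqrt{\eta}\log(1/\eta))$ and $\|\nabla \tilde{f}(\tilde{w}_t)\| \le \tlO(\sqrt{\eta}\log(1/\eta))$ for all $t \le T$, with failure probability $\tlO(\eta^3)$.

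Next I couple the two sequences. Writing $\chi(w_t) = \chi(w_{t-1}) + [\mathfrak{M}(w_{t-1}) + \mathfrak{N}(w_{t-1})](w_t - w_{t-1}) + \theta_{t-1}$ with a cubic remainder $\|\theta_{t-1}\| = O(\|w_t-w_{t-1}\|^2)$ (using the Lipschitz bounds from Lemma~\ref{lem:constrainedsmooth}), and subtracting the analogous identity for $\tilde{w}_t$, the discrepancy $\Delta_t = \chi(w_t) - \nabla \tilde{f}(\tilde{w}_t)$ obeys essentially the recursion of Eq.~\eqref{Delta_recursive} plus two extra perturbations: the drift of the projection operator, controlled by $\|P_{\mathcal{T}(w_{t-1})} - P_{\mathcal{T}_0}\| \le O(\|w_{t-1}-w_0\|/R) = \tlO(\sqrt{\eta})$ via Lemmas~\ref{lem::normal} and \ref{lem::tangent}, and the projection correction $\|\iota_{t-1}\| = \tlO(\eta^2)$. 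Both new error terms are of lower order than those already handled in Lemma~\ref{lem::saddle_and_maximum}, so the same supermartingale $G_t = (1+\eta\gamma_0)^{-2t}(\|\Delta_t\|^2 + \mu \eta^2 \log^4(1/\eta))$ together with Azuma--Hoeffding yields $\|w_t - \tilde{w}_t\| \le \tlO(\eta\log^2(1/\eta))$ and $\|\Delta_t\| \le \tlO(\eta\log^2(1/\eta))$ for all $t\le T$ with probability $\ge 1-\tlO(\eta^2)$.

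Finally I apply the constrained Taylor expansion~\eqref{Taylor_eq_constraint} at $w_0$ with $\lambda^* = \lambda^*(w_0)$, decompose $w_T - w_0 = \tilde\delta + \delta$ where $\tilde\delta = \tilde w_T - w_0$ and $\delta = w_T - \tilde w_T$, and split the upper bound into $\tilde\Lambda = \chi(w_0)^T\tilde\delta + \tfrac{1}{2}\tilde\delta^T \mathfrak{M}(w_0)\tilde\delta$ and a perturbation $\Lambda$ collecting all terms involving $\delta$ or the cubic remainder. Taking expectation of $\tilde\Lambda$ reduces to the same spectral computation as in~\eqref{Lambda_tilde} but restricted to the $(d-m)$-dimensional subspace $\mathcal{T}_0$, giving $\E\tilde\Lambda \le -\tlOmega(\eta)$ by the choice of $T$; the perturbation is bounded by $\E\Lambda \mathbf{1}_{\mathfrak{E}_T} \le \tlO(\eta^{1.5}\log^3(1/\eta))$, and the failure event contributes at most $\tlO(\eta^2)$ using $|f|\le B$. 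Combining these gives $\E f(w_T) - f(w_0) \le -\tlOmega(\eta)$. The main obstacle is step three: unlike the unconstrained case where a single fixed Hessian governs the coupling, here the tangent space itself moves with the iterate, so the $P_{\mathcal{T}(w_{t-1})}$ drift and the $\mathfrak{N}(w_{t-1})$ term (which is not symmetric) must both be shown to be of smaller order than the main quadratic dynamics; Lemmas~\ref{lem::normal}--\ref{lem::tangent} and~\ref{lem:constrainedsmooth} are precisely what make this possible.
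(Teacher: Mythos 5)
Your proposal is correct and reconstructs essentially the same argument the paper uses: build the quadratic surrogate in the tangent space $\mathcal{T}_0$ (your unprojected $\mathfrak{M}(w_0)$ composed with the $P_{\mathcal{T}_0}$-projected updates is equivalent to the paper's $\widetilde{\mathfrak{M}}=P_{\mathcal{T}_0}^T\mathfrak{M}(w_0)P_{\mathcal{T}_0}$ on the affine subspace $w_0+\mathcal{T}_0$ where $\tilde w_t$ lives), choose $T$ via the $(d-m)$-dimensional version of~\eqref{choose_t}, couple the two sequences with the same supermartingale as in Lemma~\ref{lem::saddle_and_maximum} after observing that the extra perturbations from the moving tangent space, the non-symmetric $\mathfrak{N}$ term, and the $\tlO(\eta^2)$ projection correction are all lower order (via Lemmas~\ref{lem::normal_by_tangent}--\ref{lem::tangent} and Lemma~\ref{lem:constrainedsmooth}), and finally apply the constrained Taylor expansion~\eqref{Taylor_eq_constraint} with the $\tilde\Lambda+\Lambda$ decomposition; this matches the structure of Lemmas~\ref{lem::case_Gaussian_constraint} and~\ref{lem::saddle_and_maximum_constraint} and the one-line wrap-up the paper gives for this lemma.
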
 

Similar to the unconstrained case, we show this by a coupling sequence. Here the sequence we construct will only walk on the tangent space, by Lemmas in previous subsection, we know this is not very far from the actual sequence. We first define and characterize the coupled sequence in the following lemma:

\begin{lemma} \label{lem::case_Gaussian_constraint}
Under the assumptions of Theorem~\ref{thm:constrainedmain}, with notations in Lemma~\ref{lem:constrainedsmooth}.
Let $\tilde{f}$ defined as local second-order approximation of $f(x)$ around $w_0$
in tangent space $\mathcal{T}_0 =\mathcal{T}(w_0)$:
\begin{equation}\label{def_tilde_f_constraint}
\tilde{f}(w) \doteq f(w_0) + \chi(w_0)^T (w-w_0) + \frac{1}{2}(w-w_0)^T[P_{\mathcal{T}_0}^T\mathfrak{M}(w_0)P_{\mathcal{T}_0}](w-w_0)
\end{equation}
$\{\tilde{w}_t\}$ be the corresponding sequence generated by running SGD on function $\tilde{f}$, with $\tilde{w}_0 = w_0$, and noise projected to $\mathcal{T}_0$, (i.e. $\tilde{w}_t = \tilde{w}_{t-1} - \eta (\tilde{\chi}(\tilde{w}_{t-1}) + P_{\mathcal{T}_0}\xi_{t-1}$).
For simplicity, denote 
$\tilde{\chi}(w) = \nabla \tilde{f}(w)$, and $\widetilde{\mathfrak{M}} = P_{\mathcal{T}_0}^T\mathfrak{M}(w_0)P_{\mathcal{T}_0}$,
then we have analytically:
\begin{align}
&\tilde{\chi}(\tilde{w}_t)= (1-\eta\widetilde{\mathfrak{M}} )^t\tilde{\chi}(\tilde{w}_0) -\eta \widetilde{\mathfrak{M}} \sum_{\tau=0}^{t-1}(1-\eta\widetilde{\mathfrak{M}} )^{t-\tau-1}P_{\mathcal{T}_0}\xi_{\tau}\\
 &\tilde{w}_{t} - w_0 = -\eta \sum_{\tau = 0}^{t-1}(1-\eta \widetilde{\mathfrak{M}} )^\tau\tilde{\chi}(\tilde{w}_0) -\eta
	\sum_{\tau=0}^{t-1}(1-\eta\widetilde{\mathfrak{M}})^{t-\tau-1}P_{\mathcal{T}_0}\xi_{\tau}  \label{dif_x_constraint}
\end{align}
Furthermore, for any initial point $w_0$ where $\|\chi(w_0)\| \le \tlO(\eta) < \epsilon$, and 
$\min_{\hat{v} \in \mathcal{T}(w), \|\hat{v}\| = 1}\hat{v}^T  \mathfrak{M}(w_0) \hat{v}  = -\gamma_0$. There exist a $T \in \mathbb{N}$ satisfying:
\begin{equation}\label{choose_t_constraint}
 \frac{d-m}{\eta\gamma_0} \le \sum_{\tau=0}^{T-1}(1+\eta \gamma_0)^{2\tau} < \frac{3(d-m)}{\eta\gamma_0}
\end{equation}
with probability at least $1-\tlO(\eta^3)$, we have
following holds simultaneously for all $t\le T$:
\begin{equation}
\|\tilde{w}_t - w_0\| \le \tlO(\eta^{\frac{1}{2}}\log \frac{1}{\eta}); 
\quad\quad
\|\tilde{\chi}(\tilde{w}_t)\| \le \tlO(\eta^{\frac{1}{2}}\log \frac{1}{\eta})
\end{equation}
\end{lemma}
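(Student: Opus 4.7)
The plan is to mirror the proof of Lemma~\ref{lem::case_Gaussian} almost verbatim, exploiting the observation that because both the drift $\tilde{\chi}$ and the projected noise $P_{\mathcal{T}_0}\xi_\tau$ live entirely in the fixed linear subspace $\mathcal{T}_0$, the coupled iteration $\{\tilde{w}_t\}$ is essentially a standard SGD on a quadratic on a $(d-m)$-dimensional space with effective Hessian $\widetilde{\mathfrak{M}}$. The deterministic part of the argument stays quadratic; only the noise accounting has to be re-done with the correct intrinsic dimension.

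First, I derive the two closed forms. Since $\tilde{f}$ is quadratic with Hessian $\widetilde{\mathfrak{M}}$, we have $\tilde{\chi}(\tilde{w}_t) = \tilde{\chi}(\tilde{w}_{t-1}) + \widetilde{\mathfrak{M}}(\tilde{w}_t - \tilde{w}_{t-1})$. Substituting the update rule $\tilde{w}_t - \tilde{w}_{t-1} = -\eta(\tilde{\chi}(\tilde{w}_{t-1}) + P_{\mathcal{T}_0}\xi_{t-1})$ gives the one-step recurrence $\tilde{\chi}(\tilde{w}_t) = (I - \eta\widetilde{\mathfrak{M}})\tilde{\chi}(\tilde{w}_{t-1}) - \eta\widetilde{\mathfrak{M}} P_{\mathcal{T}_0}\xi_{t-1}$, and iterating this yields the claimed expression for $\tilde{\chi}(\tilde{w}_t)$. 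Summing the SGD increments then produces the displayed formula for $\tilde{w}_t - w_0$, exactly paralleling Eq.~(\ref{derivative_tilde})--(\ref{dif_tilde}).

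Next I verify the existence of a valid $T$. The partial sums $S_t = \sum_{\tau=0}^{t-1}(1+\eta\gamma_0)^{2\tau}$ are strictly increasing and diverge, so pick the smallest $T$ with $S_T \ge (d-m)/(\eta\gamma_0)$. Using the recursion $S_T = 1 + (1+\eta\gamma_0)^2 S_{T-1}$ together with the a priori bound $\gamma_0 \le \|\mathfrak{M}(w_0)\| \le \beta_M = \tlO(1)$, a small enough choice of $\eta_{\max}$ forces $(1+\eta\gamma_0)^2 \le 2$, so the overshoot is controlled and $S_T < 3(d-m)/(\eta\gamma_0)$. This also implies $T = O(\log(d-m)/(\gamma_0\eta))$ and, crucially, $(1+\eta\gamma_0)^T = \tlO(1)$.

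Finally I establish the concentration bounds. The deterministic part $(I-\eta\widetilde{\mathfrak{M}})^t\chi(w_0)$ has norm at most $(1+\eta\gamma_0)^T \|\chi(w_0)\| = \tlO(\eta)$. For the stochastic part, fix an orthonormal basis of $\mathcal{T}_0$ so that $P_{\mathcal{T}_0}\xi_\tau$ reduces to a $(d-m)$-dimensional vector, each coordinate of which is bounded by $Q+1 = \tlO(1)$ a.s. and mean-zero (linearity of projection). Applying Hoeffding's inequality coordinate-by-coordinate to the martingale sum $\eta\sum_{\tau=0}^{t-1}(I-\eta\widetilde{\mathfrak{M}})^{t-\tau-1}P_{\mathcal{T}_0}\xi_\tau$, whose per-coordinate variance proxy is at most $\eta^2\sigma^2 (1+\eta\gamma_0)^{2T}\cdot T = \tlO(\eta)$ by the choice of $T$, yields a tail probability of $\exp(-\tlOmega(\log^2(1/\eta))) \le \tlO(\eta^4)$ for deviations exceeding $\tlO(\eta^{1/2}\log(1/\eta))$. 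A union bound over the $d-m$ coordinates and the $T \le \tlO(1/\eta)$ time steps collapses to $\tlO(\eta^3)$. The same Hoeffding argument applied to the analogous sum in $\tilde{w}_t - w_0$, together with the geometric-series bound $\|\eta\sum_{\tau=0}^{t-1}(I-\eta\widetilde{\mathfrak{M}})^\tau\chi(w_0)\| \le \tlO(\eta^{1/2})$, gives the stated bound on $\|\tilde{w}_t - w_0\|$.

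The main obstacle is essentially bookkeeping: making sure that replacing the ambient noise $\xi$ by $P_{\mathcal{T}_0}\xi$ preserves all the properties (mean zero, almost-sure bounded norm, variance supported on a $(d-m)$-dimensional subspace) that make the scalar Hoeffding bound from Lemma~\ref{lem::case_Gaussian} go through, and that the intrinsic dimension $d-m$ rather than $d$ enters the definition of $T$. Once these substitutions are made correctly, the argument is mechanically identical to the unconstrained case.
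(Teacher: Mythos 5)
Your proof is correct and takes essentially the same approach as the paper: observe that the coupled iterate $\{\tilde{w}_t\}$ satisfies the identical one-step recurrence as in Lemma~\ref{lem::case_Gaussian} with $\mathcal{H}$ replaced by $\widetilde{\mathfrak{M}}$, noise $\xi$ replaced by $P_{\mathcal{T}_0}\xi$, and ambient dimension $d$ replaced by the intrinsic dimension $d-m$. The paper's own proof is deliberately terse — it writes down the two recursions (\ref{derivative_tilde_recursive_constraint})–(\ref{SGD_tilde_constraint}) and then says the conclusion follows by a ``direct application of Lemma~\ref{lem::case_Gaussian}'' — whereas you unpack what that application entails (closed forms, choice of $T$, per-coordinate Hoeffding on the $(d-m)$-dimensional representation of $\mathcal{T}_0$). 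The substance is the same.
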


\begin{proof}
Clearly we have:
\begin{equation}\label{derivative_tilde_recursive_constraint}
\tilde{\chi}(\tilde{w}_t) = \tilde{\chi}(\tilde{w}_{t-1}) + \widetilde{\mathfrak{M}} (\tilde{w}_t - \tilde{w}_{t-1})
\end{equation}
and 
\begin{equation}\label{SGD_tilde_constraint}
\tilde{w}_t = \tilde{w}_{t-1} - \eta (\tilde{\chi}(\tilde{w}_{t-1}) + P_{\mathcal{T}_0}\xi_{t-1} )
\end{equation}
This lemma is then proved by a direct application of Lemma \ref{lem::case_Gaussian}.
\end{proof}

Then we show the sequence constructed is very close to the actual sequence.

\begin{lemma} \label{lem::saddle_and_maximum_constraint}
Under the assumptions of Theorem~\ref{thm:constrainedmain}, with notations in Lemma~\ref{lem:constrainedsmooth}.
Let $\{w_t\}$ be the corresponding sequence generated by running PSGD on function $f$.  
Also let $\tilde{f}$ and $\{\tilde{w}_t\}$ be defined as in Lemma \ref{lem::case_Gaussian_constraint}.
Then, for any initial point $w_0$ where $\|\chi(w_0)\|^2 \le \tlO(\eta) < \epsilon$, and $\min_{\hat{v} \in \mathcal{T}(w), \|\hat{v}\| = 1}\hat{v}^T  \mathfrak{M}(w_0) \hat{v}  = -\gamma_0$. Given the choice of $T$ as in Eq.(\ref{choose_t_constraint}), 
with probability at least $1-\tlO(\eta^2)$, we have following holds simultaneously for all $t\le T$:
\begin{align}
\|w_t - \tilde{w}_t\| \le \tlO( \eta\log^2\frac{1}{\eta});
\end{align}
\end{lemma}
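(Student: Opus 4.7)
\textbf{Proof proposal for Lemma~\ref{lem::saddle_and_maximum_constraint}.}

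The plan is to mirror the structure of Lemma~\ref{lem::saddle_and_maximum}, tracking the difference $\Delta_t = \chi(w_t) - \tilde{\chi}(\tilde{w}_t)$ and showing inductively that it remains of order $\tilde{O}(\eta \log^2(1/\eta))$ with high probability for all $t \le T$. Setting up the recursion is the key step. Using Lemma~\ref{PSGD_equivalent}, the actual update is $w_t = w_{t-1} - \eta(\chi(w_{t-1}) + P_{\mathcal{T}(w_{t-1})}\xi_{t-1}) + \iota_{t-1}$ with $\|\iota_{t-1}\| \le \tilde{O}(\eta^2)$, while the coupled sequence satisfies $\tilde{w}_t = \tilde{w}_{t-1} - \eta(\tilde{\chi}(\tilde{w}_{t-1}) + P_{\mathcal{T}_0}\xi_{t-1})$. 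Subtracting gives a recursion for $w_t - \tilde{w}_t$ driven by $\Delta_{t-1}$ plus error terms coming from three sources we must control.

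First, expand $\chi$ around $w_0$. Since $\nabla \chi = \mathfrak{M} + \mathfrak{N}$ and both are Lipschitz by Lemma~\ref{lem:constrainedsmooth},
\begin{equation*}
\chi(w_{t-1}) = \chi(w_0) + (\mathfrak{M}(w_0) + \mathfrak{N}(w_0))(w_{t-1} - w_0) + \theta_{t-1}^{(1)}, \qquad \|\theta_{t-1}^{(1)}\| \le \tilde{O}(\|w_{t-1} - w_0\|^2).
\end{equation*}
By Remark~\ref{N_constraint}, $\mathfrak{N}(w_0)(w_{t-1} - w_0) \in \mathcal{T}_0^c$, and it equals $\mathfrak{N}(w_0) P_{\mathcal{T}_0}(w_{t-1}-w_0) + \mathfrak{N}(w_0) P_{\mathcal{T}_0^c}(w_{t-1}-w_0)$; the second piece has norm $\tilde{O}(\|w_{t-1}-w_0\|^2)$ by Lemma~\ref{lem::normal_by_tangent} and is absorbed into the second-order remainder. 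For the first piece, since it lies entirely in the normal space $\mathcal{T}_0^c$, it will be canceled (up to $\tilde{O}(\eta^2)$ error) by the projection correction $\iota_{t-1}$, which sends iterates back to the manifold. Meanwhile $\tilde{\chi}(\tilde{w}_{t-1}) = \chi(w_0) + \widetilde{\mathfrak{M}}(\tilde{w}_{t-1} - w_0) = \chi(w_0) + P_{\mathcal{T}_0}^T \mathfrak{M}(w_0) P_{\mathcal{T}_0}(\tilde{w}_{t-1} - w_0)$. Subtracting yields a recursion structurally identical to Eq.~(\ref{Delta_recursive}), with $\mathcal{H}$ replaced by $\widetilde{\mathfrak{M}}$, $\mathcal{H}'_{t-1}$ replaced by an analogous Lipschitz remainder, and an additional term accounting for the mismatch between $\mathfrak{M}(w_0)$ and $\widetilde{\mathfrak{M}}$ on the normal component of $w_{t-1}-w_0$ (which is again $\tilde{O}(\|w_{t-1}-w_0\|^2)$ by Lemma~\ref{lem::normal_by_tangent}).

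Second, handle the noise projection mismatch. Writing $(P_{\mathcal{T}(w_{t-1})} - P_{\mathcal{T}_0})\xi_{t-1}$, Lemmas~\ref{lem::normal} and \ref{lem::tangent} give $\|P_{\mathcal{T}(w_{t-1})} - P_{\mathcal{T}_0}\| \le \tilde{O}(\|w_{t-1}-w_0\|/R)$, so this term contributes $\tilde{O}(\eta \cdot \eta^{1/2} \log(1/\eta)) = \tilde{O}(\eta^{3/2}\log(1/\eta))$ in norm conditional on the good event, matching the order of $\eta \mathcal{H}'_{t-1}\xi_{t-1}$ in the unconstrained proof. Importantly, once conditioned on $\mathfrak{F}_{t-1}$ this term is mean-zero in the directions that matter, so it enters the martingale variance but not its drift.

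Third, condition on the good event $\mathfrak{K}_{t-1}$ from Lemma~\ref{lem::case_Gaussian_constraint} (which controls $\|\tilde{w}_\tau - w_0\|$ and $\|\tilde{\chi}(\tilde{w}_\tau)\|$) and the inductive event $\mathfrak{E}_{t-1} = \{\|\Delta_\tau\| \le \mu \eta \log^2(1/\eta) \text{ for all } \tau \le t-1\}$. Under these, all four types of error term in the recursion (Lipschitz remainder of $\chi$, projection mismatch of $P_\mathcal{T}$, the $\mathfrak{N}$ contribution restricted to normal directions, and $\iota_{t-1}$) have the same orders as the corresponding terms in Eq.~(\ref{order_of_term}). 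Define the supermartingale $G_t = (1+\eta\gamma_0)^{-2t}[\|\Delta_t\|^2 + \alpha \eta^2 \log^4(1/\eta)]$ exactly as in Eq.~(\ref{martingle_case3}) and apply Azuma-Hoeffding over $t \le T = \tilde{O}(1/\eta)$ to conclude $P(\overline{\mathfrak{E}}_T) \le \tilde{O}(\eta^2)$. Finally, from the vector recursion $w_t - \tilde{w}_t = -\eta \sum_{\tau=0}^{t-1} \Delta_\tau$ plus accumulated projection errors $\sum \iota_\tau$ of size $\tilde{O}(T\eta^2) = \tilde{O}(\eta)$, we obtain $\|w_t - \tilde{w}_t\| \le \tilde{O}(\eta \log^2(1/\eta))$ on $\mathfrak{E}_T$.

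The main obstacle is bookkeeping: three separate geometric error sources (curvature of the constraint manifold, the non-symmetric $\mathfrak{N}$ piece of $\nabla \chi$, and varying tangent projections) all feed into the same recursion, and one must verify that each contributes at most $\tilde{O}(\eta^{3/2})$ per step so that the supermartingale argument of the unconstrained case still closes at the same order. The key geometric fact that makes everything work is that displacements in the normal direction are quadratically small compared to tangent displacements (Lemma~\ref{lem::normal_by_tangent}), so that the $\mathfrak{N}$ term and the projection-correction $\iota$ never drive the leading-order dynamics.
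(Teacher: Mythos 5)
Your proposal follows the overall strategy of the paper — Taylor-expand $\chi$, use Lemma~\ref{PSGD_equivalent} to get an effective tangent-plane update, track a gradient difference $\Delta_t$ via a supermartingale, then recover the displacement — but there is a real gap in the reassembly. Subtracting the two update rules gives the exact identity
\begin{equation*}
w_t - \tilde w_t \;=\; -\eta\sum_{\tau=0}^{t-1}\Delta_\tau \;-\;\eta\sum_{\tau=0}^{t-1}\bigl(P_{\mathcal{T}(w_\tau)}-P_{\mathcal{T}_0}\bigr)\xi_\tau \;+\;\sum_{\tau=0}^{t-1}\iota_\tau ,
\end{equation*}
yet in your last step you write $w_t - \tilde w_t = -\eta\sum\Delta_\tau + \sum\iota_\tau$ and drop the middle sum. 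This omission is fatal to the argument as written: on the good event each summand has magnitude $\tlO(\eta\cdot\eta^{1/2}\log(1/\eta)) = \tlO(\eta^{3/2}\log(1/\eta))$, and there are $T = \tlO(1/\eta)$ of them, so a triangle-inequality bound gives $\tlO(\eta^{1/2}\log(1/\eta))$, which exceeds the target $\tlO(\eta\log^2(1/\eta))$ by a factor of $\eta^{-1/2}$. The term cannot be absorbed into the $\Delta_t$ supermartingale — it is an additive contribution to the displacement, not to the gradient difference — and so it needs its own concentration argument. That is precisely why the paper introduces the second quantity $\Gamma_t = \eta\sum_\tau P_{\mathcal{T}_0}P_{\mathcal{T}^c(w_\tau)}\xi_\tau$, shows $\Gamma_t 1_{\mathfrak{K}_{t-1}\cap\mathfrak{E}_{t-1}}$ is a supermartingale with $\tlO(\eta^{3/2}\log(1/\eta))$-bounded differences, and applies Azuma-Hoeffding to get $\|\Gamma_t\| \le \tlO(\eta\log^2(1/\eta))$. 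Without an analogous argument for your noise-projection-mismatch sum, the conclusion does not follow.

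A smaller point of rigor: your claim that $\iota_{t-1}$ ``cancels'' the in-normal-space $\mathfrak{N}$ contribution to the $\chi$ recursion is a heuristic, not a proof. The correction $\iota$ is determined geometrically by $\Pi_{\mathcal{W}}$ and Lemma~\ref{lem::projection_distance}; there is no algebraic reason for it to cancel $\mathfrak{N}(w_0)P_{\mathcal{T}_0}(w_{t-1}-w_0)$. The paper handles $\mathfrak{N}$ not by cancellation but by projecting the entire $\chi$ recursion onto $\mathcal{T}_0$ and then invoking Lemma~\ref{lem::tangent}: since $\mathfrak{N}(w_{t-1})v\in\mathcal{T}^c(w_{t-1})$, its $\mathcal{T}_0$-component carries the extra factor $\|w_{t-1}-w_0\|/R$, making it an $\tlO(\eta^{3/2}\log(1/\eta))$ perturbation rather than an $\tlO(\eta)$ one. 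Relatedly, the paper tracks the projected difference $\Delta_t = P_{\mathcal{T}_0}\chi(w_t)-\tilde\chi(\tilde w_t)$ rather than your unprojected $\chi(w_t)-\tilde\chi(\tilde w_t)$ precisely so that this projection step is built into the object being bounded; if you keep the unprojected version you still have to account for $P_{\mathcal{T}^c_0}\chi(w_t)$ separately via Lemma~\ref{lem::normal}.
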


\begin{proof}
First, we have update function of tangent gradient by:
\begin{align}
\chi(w_t) = & \chi(w_{t-1}) + \int_{0}^{1}\nabla \chi(w_{t-1} + t(w_t - w_{t-1})) \mathrm{d}t \cdot (w_t - w_{t-1}) \nonumber \\
= & \chi(w_{t-1}) + \mathfrak{M}(w_{t-1}) (w_t - w_{t-1}) + \mathfrak{N}(w_{t-1}) (w_t - w_{t-1})+ \theta_{t-1}
\end{align}
where the remainder:
\begin{equation}
\theta_{t-1} \equiv \int_{0}^{1}\left[\nabla \chi(w_{t-1} + t(w_t - w_{t-1})) - \nabla \chi(w_{t-1})\right] \mathrm{d}t \cdot (w_t - w_{t-1})
\end{equation}
Project it to tangent space $\mathcal{T}_0 =\mathcal{T}(w_0)$.
Denote $\widetilde{\mathfrak{M}} = P_{\mathcal{T}_0}^T\mathfrak{M}(w_0)P_{\mathcal{T}_0}$, 
and $\widetilde{\mathfrak{M}}'_{t-1} = P_{\mathcal{T}_0}^T[~\mathfrak{M}(w_{t_1}) -\mathfrak{M}(w_0)~] P_{\mathcal{T}_0}$. Then, 
we have:
\begin{align}
P_{\mathcal{T}_0}\cdot\chi(w_t)= & P_{\mathcal{T}_0}\cdot\chi(w_{t-1}) + P_{\mathcal{T}_0}(\mathfrak{M}(w_{t-1})+ \mathfrak{N}(w_{t-1})) (w_t - w_{t-1}) + P_{\mathcal{T}_0}\theta_{t-1} \nonumber  \\
= & P_{\mathcal{T}_0}\cdot\chi(w_{t-1}) 
+ P_{\mathcal{T}_0}\mathfrak{M}(w_{t-1})P_{\mathcal{T}_0} (w_t - w_{t-1})  \nonumber \\
&+ P_{\mathcal{T}_0}\mathfrak{M}(w_{t-1}) P_{\mathcal{T}^c_0}(w_t - w_{t-1})
+ P_{\mathcal{T}_0}\mathfrak{N}(w_{t-1}) (w_t - w_{t-1})
+ P_{\mathcal{T}_0}\theta_{t-1} \nonumber \\
= & P_{\mathcal{T}_0}\cdot\chi(w_{t-1}) 
+ \widetilde{\mathfrak{M}}(w_t - w_{t-1}) + \phi_{t-1}\label{derivative_recursive_constraint}
\end{align}
Where 
\begin{equation}
\phi_{t-1} =  [~\widetilde{\mathfrak{M}}'_{t-1}  + P_{\mathcal{T}_0}\mathfrak{M}(w_{t-1}) P_{\mathcal{T}^c_0}
+ P_{\mathcal{T}_0}\mathfrak{N}(w_{t-1})~] (w_t - w_{t-1})
+ P_{\mathcal{T}_0}\theta_{t-1}
\end{equation}
By Hessian smoothness, we immediately have: 
\begin{align} 
&\|\widetilde{\mathfrak{M}}'_{t-1}\| = \|\mathfrak{M}(w_{t_1}) -\mathfrak{M}(w_0)\| 
\le \rho_M \|w_{t-1} - w_0\| \le \rho_M (\|w_t - \tilde{w}_t\| + \|\tilde{w}_t - w_0\|) 
\label{H'_smooth_constraint} \\
&\|\theta_{t-1}\| \le \frac{\rho_M+\rho_N}{2} \|w_t - w_{t-1}\|^2 \label{theta_smooth_constraint}
\end{align}

Substitute the update equation of PSGD (Eq.(\ref{PSGD_update_equivalent})) into Eq.(\ref{derivative_recursive_constraint}), we have:
\begin{align}
&P_{\mathcal{T}_0}\cdot\chi(w_t) = P_{\mathcal{T}_0}\cdot\chi(w_{t-1}) -\eta\widetilde{\mathfrak{M}} (P_{\mathcal{T}_0}\cdot \chi(w_{t-1}) + P_{\mathcal{T}_0}\cdot P_{\mathcal{T}(w_{t-1})}\xi_{t-1} ) +\widetilde{\mathfrak{M}}\cdot\iota_{t-1} + \phi_{t-1} \nonumber \\
	&= (1-\eta\widetilde{\mathfrak{M}})P_{\mathcal{T}_0}\cdot\chi(w_{t-1}) - \eta \widetilde{\mathfrak{M}} P_{\mathcal{T}_0}\xi_{t-1} + \eta \widetilde{\mathfrak{M}}P_{\mathcal{T}_0}\cdot P_{\mathcal{T}^c(w_{t-1})}\xi_{t-1}
	 +\widetilde{\mathfrak{M}}\cdot\iota_{t-1} +\phi_{t-1} 
				\label{derivative_constraint}
\end{align}

Let $\Delta_t = P_{\mathcal{T}_0}\cdot \chi(w_t) - \tilde{\chi}(\tilde{w}_t)$ denote the difference of tangent gradient in $\mathcal{T}(w_0)$, then
from Eq.(\ref{derivative_tilde_recursive_constraint}), Eq.(\ref{SGD_tilde_constraint}), 
and Eq.(\ref{derivative_constraint})
we have:
\begin{align} \label{Delta_recursive_constraint}
&\Delta_t =  (1-\eta H) \Delta_{t-1} + \eta \widetilde{\mathfrak{M}}P_{\mathcal{T}_0}\cdot P_{\mathcal{T}^c(w_{t-1})}\xi_{t-1}
	 +\widetilde{\mathfrak{M}}\cdot\iota_{t-1} +\phi_{t-1} \\
& P_{\mathcal{T}_0} \cdot (w_t-w_0) - (\tilde{w}_t-w_0) =  -\eta \sum_{\tau = 0}^{t-1} \Delta_\tau 
 + \eta \sum_{\tau = 0}^{t-1} P_{\mathcal{T}_0}\cdot P_{\mathcal{T}^c(w_{\tau})}\xi_{\tau}
	 +\sum_{\tau = 0}^{t-1}\iota_{\tau}\label{dif_constraint_tangent} 
\end{align}

By Lemma \ref{lem::normal_by_tangent}, we know if $\sum_{i=1}^m \frac{\beta_i^2}{\alpha_c^2} = \frac{1}{R^2}$, then we have:
\begin{align}
\|P_{\mathcal{T}^c_0} (w_t-w_0)\| \le \frac{\|w_t-w_0\|^2}{2R}
\label{dif_constraint_normal}
\end{align}

Let filtration $\mathfrak{F}_t = \sigma\{\xi_0, \cdots \xi_{t-1}\}$, and note $\sigma\{\Delta_0, \cdots, \Delta_t \} \subset \mathfrak{F}_t$, where $\sigma\{\cdot\}$ denotes the sigma field. Also, let event $\mathfrak{K}_t = \{\forall \tau \le t, ~\|\tilde{\chi}(\tilde{w}_\tau)\| \le \tlO(\eta^{\frac{1}{2}}\log \frac{1}{\eta}), ~ 
\|\tilde{w}_\tau - w_0\| \le \tlO(\eta^{\frac{1}{2}}\log \frac{1}{\eta})\}$,
and denote $\Gamma_t = \eta \sum_{\tau = 0}^{t-1} P_{\mathcal{T}_0}\cdot P_{\mathcal{T}^c(w_{\tau})}\xi_{\tau}$, let
 $\mathfrak{E}_t = \{\forall \tau \le t, ~\|\Delta_{\tau}\| \le \mu_1 \eta\log^2\frac{1}{\eta}, 
\|\Gamma_\tau\| \le \mu_2 \eta\log^2\frac{1}{\eta}, 
\|w_{\tau} - \tilde{w}_{\tau}\| \le \mu_3 \eta\log^2\frac{1}{\eta}\}$ where $(\mu_1, \mu_2, \mu_3)$ are is independent of $(\eta, \zeta)$, and will be determined later. To prevent ambiguity in the proof, $\tilde{O}$ notation will not hide any dependence on $\mu$.
Clearly event $\mathfrak{K}_{t-1}\subset \mathfrak{F}_{t-1},  \mathfrak{E}_{t-1}\subset \mathfrak{F}_{t-1}$ thus independent of $\xi_{t-1}$.

Then, conditioned on event $\mathfrak{K}_{t-1} \cap \mathfrak{E}_{t-1}$, 
by triangle inequality, we have $\| w_\tau - w_0\| \le \tlO(\eta^{\frac{1}{2}}\log \frac{1}{\eta})$, for all $\tau \le t-1 \le T-1$. 
We then need to carefully bound the following bound each term in Eq.(\ref{Delta_recursive_constraint}).
We know $w_t - w_{t-1} = -  \eta \cdot (\chi(w_{t-1}) + P_{\mathcal{T}(w_{t-1})} \xi_{t-1}) + \iota_{t-1}$, and then
by Lemma \ref{lem::tangent} and Lemma \ref{lem::normal}, we have:
\begin{align}
\|\eta \widetilde{\mathfrak{M}}P_{\mathcal{T}_0}\cdot P_{\mathcal{T}^c(w_{t-1})}\xi_{t-1}\| 
&\le \tlO(\eta^{1.5} \log \frac{1}{\eta}) \nonumber \\
\|\widetilde{\mathfrak{M}}\cdot\iota_{t-1} \| 
&\le \tlO(\eta^2) \nonumber \\
\|[~\widetilde{\mathfrak{M}}'_{t-1}  + P_{\mathcal{T}_0}\mathfrak{M}(w_{t-1}) P_{\mathcal{T}^c_0}
+ P_{\mathcal{T}_0}\mathfrak{N}(w_{t-1})~] ( -  \eta \cdot \chi(w_{t-1}))\|
&\le \tlO(\eta^2\log^2\frac{1}{\eta}) \nonumber \\
\|[~\widetilde{\mathfrak{M}}'_{t-1}  + P_{\mathcal{T}_0}\mathfrak{M}(w_{t-1}) P_{\mathcal{T}^c_0}
+ P_{\mathcal{T}_0}\mathfrak{N}(w_{t-1})~] ( -  \eta P_{\mathcal{T}(w_{t-1})} \xi_{t-1})\|
&\le \tlO(\eta^{1.5}\log\frac{1}{\eta}) \nonumber \\
\|[~\widetilde{\mathfrak{M}}'_{t-1}  + P_{\mathcal{T}_0}\mathfrak{M}(w_{t-1}) P_{\mathcal{T}^c_0}
+ P_{\mathcal{T}_0}\mathfrak{N}(w_{t-1})~] \iota_{t-1}\|
&\le \tlO(\eta^{2}) \nonumber \\
\|P_{\mathcal{T}_0}\theta_{t-1}\|  &\le \tlO(\eta^2)
\end{align}


Therefore, abstractly,  conditioned on event $\mathfrak{K}_{t-1} \cap \mathfrak{E}_{t-1}$, we could write down
the recursive equation as:
\begin{equation}
	\Delta_t =  (1-\eta H) \Delta_{t-1} + A + B
\end{equation}
where $\|A\| \le \tlO(\eta^{1.5} \log \frac{1}{\eta})$ and $\|B\| \le \tlO(\eta^{2} \log^2 \frac{1}{\eta})$, 
and in addition, by independence, easy to check we also have $\E [(1-\eta H) \Delta_{t-1} A |\mathfrak{F}_{t-1}] = 0$. This is exactly the same case as in the proof of Lemma \ref{lem::saddle_and_maximum}. By the same argument of martingale and Azuma-Hoeffding, and by choosing $\mu_1$ large enough, we can prove
\begin{align}\label{EE_1}
&P\left( \mathfrak{E}_{t-1} \cap \left\{\|\Delta_t\| \ge \mu_1 \eta\log^2\frac{1}{\eta}\right\}\right)  \le \tlO(\eta^3)
\end{align}

On the other hand, for $\Gamma_t = \eta \sum_{\tau = 0}^{t-1} P_{\mathcal{T}_0}\cdot P_{\mathcal{T}^c(w_{\tau})}\xi_{\tau}$,
we have:
\begin{align}
\E[\Gamma_t 1_{\mathfrak{K}_{t-1}\cap \mathfrak{E}_{t-1}} |\mathfrak{F}_{t-1}]
&= \left[\Gamma_{t-1}+\eta\E[P_{\mathcal{T}_0}\cdot P_{\mathcal{T}^c(w_{t-1})}\xi_{t-1}|\mathfrak{F}_{t-1}]\right]1_{\mathfrak{K}_{t-1}\cap \mathfrak{E}_{t-1}} \nonumber \\
&= \Gamma_{t-1}1_{\mathfrak{K}_{t-1}\cap \mathfrak{E}_{t-1}} \le \Gamma_{t-1}1_{\mathfrak{K}_{t-2}\cap \mathfrak{E}_{t-2}}
\end{align}

Therefore, we have $\E [\Gamma_t1_{\mathfrak{K}_{t-1}\cap \mathfrak{E}_{t-1}} ~|~ \mathfrak{F}_{t-1}] \le \Gamma_{t-1}1_{\mathfrak{K}_{t-2}\cap \mathfrak{E}_{t-2}}$ which means 
$\Gamma_t1_{\mathfrak{K}_{t-1}\cap \mathfrak{E}_{t-1}}$ is a supermartingale.

We also know by Lemma \ref{lem::tangent}, with probability 1:
\begin{align}
&| \Gamma_t1_{\mathfrak{K}_{t-1}\cap \mathfrak{E}_{t-1}} - \E[\Gamma_t1_{\mathfrak{K}_{t-1}\cap \mathfrak{E}_{t-1}}~|~\mathfrak{F}_{t-1}] |
= |\eta P_{\mathcal{T}_0}\cdot P_{\mathcal{T}^c(w_{t-1})}\xi_{t-1}|\cdot 1_{\mathfrak{K}_{t-1}\cap \mathfrak{E}_{t-1}} \nonumber \\
\le & \tlO(\eta)\|w_{t-1} - w_0\|1_{\mathfrak{K}_{t-1}\cap \mathfrak{E}_{t-1}}
\le \tlO(\eta^{1.5}\log \frac{1}{\eta}) = c_{t-1}
\end{align}
By Azuma-Hoeffding inequality, with probability less than $\tlO(\eta^3)$, 
for $t\le T\le O(\log (d-m)/\gamma_0\eta)$:
\begin{equation}
\Gamma_t1_{\mathfrak{K}_{t-1}\cap \mathfrak{E}_{t-1}} - \Gamma_0\cdot1 > \tlO(1)\sqrt{\sum_{\tau=0}^{t-1}{c^2_\tau}}\log (\frac{1}{\eta}) = \tlO(\eta\log^2 \frac{1}{\eta})
\end{equation}
This means there exists some $\tilde{C}_2 = \tlO(1)$ so that:
\begin{equation}
	P\left(\mathfrak{K}_{t-1}\cap \mathfrak{E}_{t-1} \cap \left\{\|\Gamma_t\| \ge \tilde{C}_2\eta\log^2\frac{1}{\eta}\right\}\right) \le \tlO(\eta^3)
\end{equation}
by choosing $\mu_2>\tilde{C}_2$, we have:
\begin{equation}
	P\left(\mathfrak{K}_{t-1}\cap \mathfrak{E}_{t-1} \cap \left\{\|\Gamma_t\| \ge \mu_2 \eta\log^2\frac{1}{\eta}\right\}\right) \le \tlO(\eta^3)
\end{equation}
Therefore, combined with Lemma \ref{lem::case_Gaussian_constraint}, we have:
\begin{align} \label{EE_2}
P\left( \mathfrak{E}_{t-1} \cap \left\{\|\Gamma_t\| \ge \mu_2 \eta\log^2\frac{1}{\eta}\right\}\right) 
\le \tlO(\eta^3) + P(\overline{\mathfrak{K}}_{t-1}) \le \tlO(\eta^3)
\end{align}

Finally, conditioned on event $\mathfrak{K}_{t-1} \cap \mathfrak{E}_{t-1}$, if we have $\|\Gamma_t\| \le \mu_2 \eta\log^2\frac{1}{\eta}$, then by Eq.(\ref{dif_constraint_tangent}):
\begin{equation}
\|P_{\mathcal{T}_0} \cdot (w_t-w_0) - (\tilde{w}_t-w_0)\| \le \tlO\left  ((\mu_1 + \mu_2)\eta\log^2\frac{1}{\eta}\right  )
\end{equation}
Since $\|w_{t-1} - w_0\| \le \tlO(\eta^{\frac{1}{2}}\log \frac{1}{\eta})$, 
and $\|w_{t} - w_{t-1}\| \le \tlO(\eta)$, by Eq.(\ref{dif_constraint_normal}):
\begin{equation}
\|P_{\mathcal{T}^c_0} (w_t-w_0)\| \le \frac{\|w_t-w_0\|^2}{2R}
\le \tlO(\eta\log^2 \frac{1}{\eta})
\end{equation}
Thus:
\begin{align}
\|w_t - \tilde{w}_t\|^2 = &\|P_{\mathcal{T}_0} \cdot (w_t - \tilde{w}_t)\|^2 + \|P_{\mathcal{T}^c_0} \cdot (w_t - \tilde{w}_t)\|^2 \nonumber \\
=& \|P_{\mathcal{T}_0} \cdot (w_t-w_0) - (\tilde{w}_t-w_0)\|^2 + \|P_{\mathcal{T}^c_0} (w_t-w_0)\|^2
\le \tlO((\mu_1 + \mu_2)^2\eta^{2}\log^4\frac{1}{\eta})
\end{align}
That is there exist some $\tilde{C}_3=\tlO(1)$ so that $\|w_t - \tilde{w}_t\|
\le \tilde{C}_3(\mu_1 + \mu_2)\eta\log^2\frac{1}{\eta}$
Therefore, conditioned on event $\mathfrak{K}_{t-1} \cap \mathfrak{E}_{t-1}$, we have proved that if choose
$\mu_3>\tilde{C}_3(\mu_1 + \mu_2)$, then event
$\{\|w_t - \tilde{w}_t\| \ge \mu_3 \eta\log^2\frac{1}{\eta} \} \subset \{\|\Gamma_t\| \ge \mu_2 \eta\log^2\frac{1}{\eta}\}$. Then, combined this fact with Eq.(\ref{EE_1}), Eq.(\ref{EE_2}), we have proved:
\begin{equation}
P\left( \mathfrak{E}_{t-1} \cap \overline{ \mathfrak{E}}_{t}\right) \le \tlO(\eta^3)
\end{equation}
Because $P(\overline{ \mathfrak{E}}_{0}) =0$, and $T\le \tlO(\frac{1}{\eta})$, we have 
$P(\overline{ \mathfrak{E}}_{T}) \le \tlO(\eta^2)$, which concludes the proof.

\end{proof}

These two lemmas allow us to prove the result when the initial point is very close to a saddle point.

\begin{proof}[Proof of Lemma \ref{thm::case2_constraint}]
Combine Talyor expansion Eq.\ref{Taylor_eq_constraint} with Lemma \ref{lem::case_Gaussian_constraint}, Lemma \ref{lem::saddle_and_maximum_constraint}, we prove this Lemma by the same argument as in the proof of Lemma \ref{thm::case2}.
\end{proof}

Finally the main theorem follows.

\begin{proof} [Proof of Theorem \ref{thm:constrainedmain}]
By Lemma \ref{thm::case1_constraint}, Lemma \ref{thm::case2_constraint}, and Lemma \ref{thm::case3_constraint}, with the same argument as in the proof Theorem \ref{thm:sgdmain_unconstraint}, we easily concludes this proof.
\end{proof}
\clearpage

\section{Detailed Proofs for Section~\ref{sec:tensors}}

In this section we show two optimization problems (\ref{eq:findone}) and (\ref{eq:hardprob}) satisfy the $(\alpha,\gamma,\epsilon,\delta)$-\name~propery.

\subsection{Warm up: maximum eigenvalue formulation}
\label{sec:warmup}
Recall that we are trying to solve the optimization (\ref{eq:findone}), which we restate here.
\begin{align}
\max & \quad T(u,u,u,u), \\ 
\|u\|^2 &= 1. \nonumber
\end{align}
Here the tensor $T$ has orthogonal decomposition $T = \sum_{i=1}^d a_i^{\otimes 4}$. We first do a change of coordinates to work in the coordinate system specified by $(a_i)$'s (this does not change the dynamics of the algorithm). In particular, let $u = \sum_{i=1}^d x_i a_i$ (where $x\in \R^d$), then we can see $T(u,u,u,u) = \sum_{i=1}^d x_i^4$. Therefore let $f(x) = -\|x\|_4^4$, the optimization problem is equivalent to
\begin{align} \label{problem1_transformed}
\min &~~~~ f(x)\\
\text{s.t.} & ~~~~\|x\|^2_2 = 1 \nonumber
\end{align}

This is a constrained optimization, so we apply the framework developed in Section~\ref{sec:constrainedproblem}.

Let $c(x) = \|x\|_2^2 -1$. We first compute the Lagrangian
\begin{equation}
\mathcal{L}(x, \lambda) = f(x) -\lambda c(x) = -\|x\|_4^4 - \lambda (\|x\|_2^2 -1).
\end{equation}

Since there is only one constraint, and the gradient when $\|x\| = 1$ always have norm $2$, we know the set of constraints satisfy $2$-RLICQ. In particular, we can compute the correct value of Lagrangian multiplier $\lambda$, 

\begin{equation}
\lambda^*(x) = \arg\min_{\lambda} \|\nabla_x \mathcal{L}(x, \lambda)\| 
= \arg\min_{\lambda} \sum_{i=1}^d (2 x_i^3 + \lambda x_i)^2 = -2\|x\|_4^4
\end{equation}

Therefore, the gradient in the tangent space is equal to
\begin{align} \label{chi_1}
\chi(x) & = \nabla_x \mathcal{L}(x, \lambda) |_{(x, \lambda^*(x))} = \nabla f(x) -\lambda^*(x) \nabla c(x) \nonumber \\
&= -4(x_1^3, \cdots, x_d^3)^T -2 \lambda^*(x)( x_1,\cdots, x_d)^T\nonumber \\
&=4\left((x_1^2-\|x\|_4^4) x_1, \cdots, (x_d^2-\|x\|_4^4) x_d\right)
\end{align}

The second-order partial derivative of Lagrangian is equal to
\begin{align} \label{frakM_1}
\mathfrak{M}(x)
& = \nabla^2_{xx} \mathcal{L}(x, \lambda)|_{(x, \lambda^*(x))}=
\nabla^2 f(x) -\lambda^* (x)\nabla^2 c(x) \nonumber \\
&= -12 \text{diag}(x_1^2, \cdots, x_d^2)	-2 \lambda^*(x) I_d \nonumber \\
&= -12 \text{diag}(x_1^2, \cdots, x_d^2)	+ 4\|x\|_4^4 I_d
\end{align}

Since the variable $x$ has bounded norm, and the function is a polynomial, it's clear that the function itself is bounded and all its derivatives are bounded. Moreover, all the derivatives of the constraint are bounded. We summarize this in the following lemma.
\begin{lemma}
The objective function (\ref{eq:findone}) is bounded by $1$, its $p$-th order derivative is bounded by $O(\sqrt{d})$ for $p = 1,2,3$.
The constraint's $p$-th order derivative is bounded by $2$, for $p=1,2,3$. 
\end{lemma}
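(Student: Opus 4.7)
The proof will be a routine bookkeeping exercise: write down each derivative explicitly and bound its relevant operator norm using the fact that on the feasible set $\|x\|_2 = 1$ we automatically have $\|x\|_\infty \le 1$ and $\|x\|_p \le 1$ for all $p \ge 2$.

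For the boundedness of the objective, I would use the standard nesting of $\ell_p$ norms: since $\|x\|_2 = 1$, one has $|f(x)| = \|x\|_4^4 \le \|x\|_2^4 = 1$. For the gradient, equation (\ref{chi_1})'s first line already gives $\nabla f(x) = -4(x_1^3,\ldots,x_d^3)^\top$, so $\|\nabla f\|_2 = 4\sqrt{\sum_i x_i^6} \le 4\sqrt{\sum_i x_i^2} = 4$, using $x_i^6 \le x_i^2$ on the sphere. For the Hessian, the formula in (\ref{frakM_1}) gives $\nabla^2 f(x) = -12\,\mathrm{diag}(x_1^2,\ldots,x_d^2)$, which is diagonal with entries bounded by $12$ in absolute value, hence $\|\nabla^2 f\| \le 12$.

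The third derivative is the only computation not already in the excerpt. Differentiating $\nabla^2 f$ coordinatewise, the third-order tensor $T^{(3)}$ has only diagonal entries $T^{(3)}_{iii} = -24 x_i$ and all other entries zero. To bound it as a trilinear form, I would write
\begin{equation}
|T^{(3)}(u,v,w)| = \Bigl|24 \sum_i x_i u_i v_i w_i\Bigr| \le 24 \|x\|_\infty \|u\|_2 \|v\|_2 \|w\|_\infty \le 24,
\end{equation}
using $\|x\|_\infty,\|w\|_\infty \le 1$. This gives a $\rho$-Lipschitz Hessian constant of $24$ (Frobenius norm gives at most $24\sqrt{d}$ if one prefers that bound, which is where the $O(\sqrt{d})$ in the statement comes from for a uniform version; either way the claimed bound holds).

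For the constraint $c(x) = \|x\|_2^2 - 1$, one has $\nabla c(x) = 2x$, $\nabla^2 c(x) = 2 I_d$, and $\nabla^3 c \equiv 0$. Hence $\|\nabla c\|_2 = 2\|x\|_2 = 2$, $\|\nabla^2 c\| = 2$, and the third derivative is identically zero, all $\le 2$. There is no real obstacle here — every bound is either a direct norm calculation or an application of $\|x\|_p \le \|x\|_2 = 1$ on the feasible set. The only minor choice is which operator norm to report for the higher-order tensors; I would use the multilinear operator norm throughout, since that is what enters the $\rho$-Lipschitz Hessian assumption in Theorem~\ref{thm:constrainedmain}.
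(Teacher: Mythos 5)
The paper gives no explicit proof of this lemma — it simply remarks that ``since the variable $x$ has bounded norm, and the function is a polynomial, it's clear that the function itself is bounded and all its derivatives are bounded,'' and then states the lemma. Your proof correctly fills in the calculation the authors leave implicit, and the computations check out: the objective bound via $\|x\|_4^4\le\|x\|_2^4$, the gradient bound $4\|x\|_6^3\le 4$, the spectral bound $12\|x\|_\infty^2\le 12$ on the diagonal Hessian, the third-derivative bound $24$, and the exact derivatives $2x$, $2I_d$, $0$ for the constraint are all right.

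One small point worth tightening: you attribute the paper's $O(\sqrt d)$ to the Frobenius norm of the third-derivative tensor, estimated as $24\sqrt d$ by bounding each diagonal entry by $24$. In fact the sharper Frobenius computation is $\sqrt{\sum_i(24x_i)^2}=24\|x\|_2=24$, and the analogous computation for the Hessian Frobenius norm gives $12\|x\|_4^2\le 12$, so every quantity in the lemma is actually $O(1)$ on the sphere in any of the natural norms. The paper's $O(\sqrt d)$ is simply a loose over-estimate (harmless, since only an upper bound is needed to feed the smoothness hypotheses of Theorem~\ref{thm:constrainedmain}), and your argument proves the stronger dimension-free bound.
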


Therefore the function satisfy all the smoothness condition we need. Finally we show the gradient and Hessian of Lagrangian satisfy the $(\alpha,\gamma, \epsilon,\delta)$-\name~property. Note that we did not try to optimize the dependency with respect to $d$.

\begin{theorem} \label{thm:problem_1_strict_saddle}
The only local minima of 
optimization problem (\ref{eq:findone}) are $\pm a_i ~(i\in[d])$. Further it satisfy $(\alpha,\gamma, \epsilon,\delta)$-\name~for $\gamma = 7/d$, $\alpha = 3$ and $\epsilon,\delta = 1/\mbox{poly}(d)$.
\end{theorem}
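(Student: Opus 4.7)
The plan is to work in the transformed coordinates where problem (\ref{problem1_transformed}) is the minimization of $f(x)=-\|x\|_4^4$ on the unit sphere, using the closed forms (\ref{chi_1})–(\ref{frakM_1}). First I will completely classify the exact stationary points. Setting $\chi_i(x)=4(x_i^2-c)x_i=0$, where I abbreviate $c\doteq\|x\|_4^4$, forces each coordinate to satisfy $x_i\in\{0,\pm\sqrt c\}$; combined with $\|x\|_2^2=1$, stationary points have exactly some $k\in[d]$ nonzero coordinates of common magnitude $1/\sqrt k$, with $c=1/k$. At such a point $\mathfrak{M}$ is diagonal with entry $-8/k$ on the support and $+4/k$ off the support. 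When $k=1$ the single negative entry is aligned with the normal direction $x$, so the restriction $P_{\mathcal T(x)}\mathfrak{M}P_{\mathcal T(x)}=4I_{d-1}\succ 0$, identifying the local minima $\pm e_i$ (i.e.\ $\pm a_i$ in the original variables). For $k\ge 2$, choosing any two support indices $i\neq j$ and $\hat v\propto x_je_i-x_ie_j\in\mathcal T(x)$ gives by direct computation $\hat v^T\mathfrak{M}\hat v=-8/k$, showing these stationary points are strict saddles.

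Next I will upgrade the three conditions of Definition~\ref{def:robustcondition_constraint} to their robust forms. For the local strong-convexity window (condition 3), at $x=\pm e_i$ the restricted Hessian is exactly $4I_{d-1}$; since $\mathfrak{M}(x)$ and $P_{\mathcal T(x)}$ depend smoothly on $x$ (the single constraint is $2$-RLICQ everywhere on the sphere), Weyl's inequality yields $v^T\mathfrak{M}(x)v\ge 3\|v\|^2$ for $v\in\mathcal T(x)$ throughout a $\delta=1/\mathrm{poly}(d)$ neighborhood, giving $\alpha=3$. For the other two conditions, assume $\|\chi(x)\|<\epsilon$; then coordinate-wise $|x_i|\,|x_i^2-c|=O(\epsilon)$, so every coordinate is either small or nearly $\pm\sqrt c$. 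Let $S=\{i:x_i^2\ge c/2\}$ be the ``large'' index set of size $k=|S|$. Using $\sum_i x_i^2=1$ and $c=\sum_i x_i^4$, I will show $\sum_{i\notin S}x_i^2=O(\epsilon^{2/3})$ and $|x_i^2-c|=O(\epsilon/\sqrt c)$ for $i\in S$, so the large coordinates are nearly equal in magnitude and the total energy off $S$ is negligible. If $k=1$, the unique large coordinate satisfies $x_i^2=1-O(\epsilon^{2/3})$, placing $x$ within $\delta$ of some $\pm e_i$, i.e.\ condition~3. If $k\ge 2$, pick $i,j\in S$ and reuse $\hat v\propto x_je_i-x_ie_j\in\mathcal T(x)$ to get $\hat v^T\mathfrak{M}(x)\hat v\le -8/k+O(\epsilon^{1/3})\le -7/d$ whenever $\epsilon$ is a sufficiently small polynomial in $1/d$, certifying condition~2 with $\gamma=7/d$.

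The main obstacle is the quantitative splitting step: converting the coordinate-wise smallness $|x_i|\,|x_i^2-c|=O(\epsilon)$ into the structural statement that the large coordinates form a near-$k$-stationary pattern with entries of magnitude essentially $1/\sqrt k$, while the rest are jointly negligible in $\ell_2$. This is what allows the $-8/k$ eigenvalue computation at exact stationary points to survive as an $O(\epsilon^{1/3})$ perturbation, and what certifies proximity to a true local minimum in the $k=1$ regime. Once this splitting is in hand, the restricted-Hessian bound at $\pm e_i$ and the saddle-direction construction for $k\ge 2$ are elementary, and tuning the $\mathrm{poly}(d)$ factors in $\epsilon$ and $\delta$ to beat the thresholds $3$ and $7/d$ is routine bookkeeping.
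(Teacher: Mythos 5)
Your proposal is correct and follows essentially the same route as the paper: both use the transformed problem $\min -\|x\|_4^4$ on the sphere, split coordinates into ``large'' and ``small'' sets given $\|\chi(x)\|\le\epsilon$, show that with one large coordinate the point is $\delta$-close to some $\pm e_i$ where the restricted Hessian is $4I_{d-1}$ (perturbed to $\alpha=3$), and with two or more large coordinates construct a tangent direction $\hat v \propto x_j e_i - x_i e_j$ giving a negative curvature of roughly $-8/k$ (your Lemma-\ref{lem:Problem1_case2}-analogue). The ``quantitative splitting step'' you flag as the main obstacle is precisely what the paper's Lemmas~\ref{lem:Problem1_case2} and~\ref{lem:Problem1_case1} carry out, via the absolute threshold $\BigC(x)=\{i:|x_i|>\epsilon_0\}$ rather than your relative threshold $\{i:x_i^2\ge c/2\}$ — both work, and your preliminary exact classification of stationary points is an expository aid that the paper only recovers as a corollary by letting $\epsilon_0\to 0$.
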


In order to prove this theorem, we consider the transformed version
Eq.\ref{problem1_transformed}. We first need following two lemma for points around saddle point and local minimum respectively. We choose 
\begin{equation}\label{choice_1}
\epsilon_0=(10d)^{-4}, ~~\epsilon= 4\epsilon_0^2, ~~\delta = 2d\epsilon_0, ~~\BigC(x) = \{ i ~|  ~|x_i| >  \epsilon_0\}
\end{equation}
Where by intuition, $\BigC(x)$ is the set of coordinates whose value is relative large.

\begin{lemma}\label{lem:Problem1_case2}
Under the choice of parameters in Eq.(\ref{choice_1}),
suppose $\|\chi(x)\| \le \epsilon$, and $|\BigC(x)| \ge 2$. Then,
there exists $\hat{v} \in \mathcal{T}(x)$ and $\|\hat{v}\| = 1$, so that
$\hat{v}^T  \mathfrak{M}(x) \hat{v}  \le -7/d$.
\end{lemma}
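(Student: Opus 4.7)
The plan is to exploit the simple diagonal structure of $\mathfrak{M}(x) = -12\,\text{diag}(x_1^2,\ldots,x_d^2) + 4\|x\|_4^4 I_d$ together with a $2$-dimensional test direction supported on two coordinates in $\BigC(x)$.

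First I would unpack the hypothesis $\|\chi(x)\| \le \epsilon$. From the formula $\chi(x)_i = 4(x_i^2 - \|x\|_4^4)x_i$, squaring and summing gives
\[
\sum_{i=1}^d (x_i^2 - \|x\|_4^4)^2 x_i^2 \;\le\; \epsilon^2/16.
\]
Restricting to indices $i \in \BigC(x)$, where $|x_i| > \epsilon_0$, yields $(x_i^2 - \|x\|_4^4)^2 < \epsilon^2/(16\epsilon_0^2) = \epsilon_0^2/4$ by our choice $\epsilon = 4\epsilon_0^2$, and hence $|x_i^2 - s| < \epsilon_0/2$ for every $i \in \BigC(x)$, writing $s := \|x\|_4^4$ for brevity. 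Next, using $|\BigC(x)| \ge 2$, pick two indices $i \neq j$ in $\BigC(x)$ and form the candidate tangent vector
\[
\hat v \;=\; \frac{x_j e_i - x_i e_j}{\sqrt{x_i^2 + x_j^2}}.
\]
By construction $x^T \hat v = 0$ so $\hat v \in \mathcal{T}(x)$ and $\|\hat v\| = 1$.

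Since $\mathfrak{M}(x)$ is diagonal, a direct computation gives
\[
\hat v^T \mathfrak{M}(x)\hat v \;=\; \frac{x_j^2(-12 x_i^2 + 4s) + x_i^2(-12 x_j^2 + 4s)}{x_i^2 + x_j^2} \;=\; -\,\frac{24\, x_i^2 x_j^2}{x_i^2 + x_j^2} \;+\; 4s.
\]
The key quantitative step is then to show the right-hand side is at most $-7/d$. Using $|x_i^2 - s|, |x_j^2 - s| < \epsilon_0/2$, I would lower bound $x_i^2 x_j^2 / (x_i^2 + x_j^2) \ge (s - 2\epsilon_0)/2$ by an elementary manipulation, giving
\[
\hat v^T \mathfrak{M}(x)\hat v \;\le\; -8s + 24\epsilon_0.
\]
Finally, Cauchy--Schwarz applied to the constraint $\|x\|_2^2 = 1$ gives $1 = \sum_i x_i^2 \le \sqrt{d}\,\sqrt{\sum_i x_i^4} = \sqrt{d\, s}$, so $s \ge 1/d$. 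Combined with $24\epsilon_0 = 24(10d)^{-4} \le 1/d$, this yields $\hat v^T \mathfrak{M}(x)\hat v \le -8/d + 1/d = -7/d$.

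The only step requiring care is the numerical accounting: verifying that all the error terms produced by the approximation $x_i^2 \approx s$ (which is only accurate up to $\epsilon_0/2$) really are dominated by the main term $-8s$, and that the weak lower bound $s \ge 1/d$ from Cauchy--Schwarz is strong enough to absorb them. This is where the particular choice $\epsilon_0 = (10d)^{-4}$ is used — it is chosen polynomially small enough in $d$ so that $\epsilon_0 \ll 1/d$, which is the essential quantitative inequality. No further structural ideas are needed; the rest is bookkeeping.
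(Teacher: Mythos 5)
Your proof is correct and follows essentially the same strategy as the paper's: use that $|x_i^2 - \|x\|_4^4|$ is small on $\BigC(x)$, pick the two-coordinate tangent vector $\hat v \propto x_j e_i - x_i e_j$, and evaluate the quadratic form using the diagonal structure of $\mathfrak{M}$. The one place you genuinely diverge is in lower-bounding $s = \|x\|_4^4$: the paper derives $|s - 1/p| \le 2\epsilon_0$ with $p = |\BigC(x)|$ by splitting $\sum_i x_i^2 = 1$ across $\BigC(x)$ and its complement, then uses $p \le d$, whereas you bypass this with the unconditional Cauchy--Schwarz bound $s \ge 1/d$ on the unit sphere. Your route is slightly cleaner and skips one intermediate estimate; the paper's route has the minor side benefit of showing $x_i^2 \approx 1/p$ for $i \in \BigC(x)$ (useful intuition, not strictly needed here). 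One small arithmetic slip: from $\epsilon = 4\epsilon_0^2$ you get $(x_i^2 - s)^2 < \epsilon^2/(16\epsilon_0^2) = \epsilon_0^2$, not $\epsilon_0^2/4$, so $|x_i^2 - s| < \epsilon_0$ rather than $\epsilon_0/2$; this changes nothing in the final bound since $\epsilon_0 = (10d)^{-4}$ leaves enormous slack against $1/d$.
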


\begin{proof}
Suppose $|\BigC(x)| = p$, and $2\le p \le d$.
Since $\|\chi(x)\| \le \epsilon = 4\epsilon_0^2$, by Eq.(\ref{chi_1}), we have for each $i \in [d]$, $|[\chi(x)]_i|  = 4|(x_i^2-\|x\|_4^4)x_i|\le 4\epsilon_0^2$.
Therefore, we have:
\begin{equation}\label{pp_1}
	\forall i \in \BigC(x), \quad \quad \quad |x_i^2-\|x\|_4^4| \le \epsilon_0
\end{equation}
and thus:
\begin{align}
&|\|x\|_4^4 - \frac{1}{p}| = |\|x\|_4^4 - \frac{1}{p}\sum_{i} x_i^2| \nonumber \\
\le &|\|x\|_4^4 - \frac{1}{p}\sum_{i \in \BigC(x)} x_i^2| + |\frac{1}{p}\sum_{i \in [d]-\BigC(x)} x_i^2|
\le \epsilon_0 + \frac{d-p}{p} \epsilon_0^2 \le 2 \epsilon_0
\end{align}
Combined with Eq.\ref{pp_1}, this means:
\begin{equation}
	\forall i \in \BigC(x), \quad \quad \quad |x_i^2-\frac{1}{p}| \le 3\epsilon_0
\end{equation}

Because of symmetry, WLOG we assume 
$\BigC(x) = \{1, \cdots, p\}$. Since $|\BigC(x)| \ge 2$, we can pick 
$\hat{v}=(a, b, 0, \cdots, 0)$. 
Here $a>0, b<0$, and $a^2+b^2=1$. 
We pick $a$ such that $a x_1+ b x_2=0$. The solution is the intersection of a radius $1$ circle and a line which passes $(0,0)$, which always exists. 
For this $\hat{v}$, we know $\|\hat{v}\| = 1$, and $\hat{v}^T x=0$ thus $\hat{v} \in \mathcal{T}(x)$.
We have:
\begin{align}
&\hat{v}^T  \mathfrak{M}(x) \hat{v}  
= -(12x_1^2+4\|x\|_4^4) a^2- (12x_2^2+4\|x\|_4^4)b^2 \nonumber \\
=& -8 x_1^2 a^2 - 8x_2^2 b^2 - 4(x_1^2-\|x\|_4^4))a^2 - 4(x_2^2-\|x\|_4^4))b^2
\nonumber \\
\le & -\frac{8}{p} + 24 \epsilon_0 + 4 \epsilon_0
\le -7/d
\end{align}
Which finishes the proof.
\end{proof}

\begin{lemma}\label{lem:Problem1_case1}
Under the choice of parameters in Eq.(\ref{choice_1}),
suppose $\|\chi(x)\| \le \epsilon$, and $|\BigC(x)| = 1$. Then,
there is a local minimum $x^\star$ such that $\|x-x^\star\| \le \delta$, and for all $x'$ in the $2\delta$ neighborhood of $x^\star$, we have $\hat{v}^T  \mathfrak{M}(x') \hat{v}  \ge 3$ for all $\hat{v} \in \mathcal{T}(x')$, $\|\hat{v}\| = 1$
\end{lemma}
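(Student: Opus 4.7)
The plan is to pin down the candidate local minimum explicitly and then to read off both the closeness bound and the strong convexity bound from direct coordinate-wise estimates, exploiting that the analysis has been carried out in the rotated basis where the tensor components are the standard basis vectors $\pm e_i$.

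First, since $|\BigC(x)|=1$, I may assume WLOG (by symmetry) that $\BigC(x)=\{1\}$ and $x_1>0$. I will set $x^\star = e_1$, which is the natural candidate local minimum. The feasibility $\|x\|_2^2 = 1$ together with $|x_j|\le \epsilon_0$ for all $j\ne 1$ forces $x_1^2 \ge 1-(d-1)\epsilon_0^2$, so $|x_1-1|$ is of order $d\epsilon_0^2$. Then
\[
\|x-x^\star\|^2 = (x_1-1)^2+\sum_{j\ne 1}x_j^2 \le (d-1)^2\epsilon_0^4 + (d-1)\epsilon_0^2,
\]
which is bounded by $\delta^2 = 4d^2\epsilon_0^2$ for our parameter choice. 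Notice that this step does not even use the gradient smallness; it is purely a consequence of $|\BigC(x)|=1$.

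Next I will verify the restricted strong convexity bound. Let $x'$ be a feasible point with $\|x'-e_1\|\le 2\delta$. Expanding, $(x'_1-1)^2\le 4\delta^2$ and $\sum_{j\ne 1}(x'_j)^2\le 4\delta^2$, so in particular $|x'_j|\le 2\delta$ for $j\ne 1$ and $x'_1\ge 1-2\delta$. Any unit $\hat v\in \mathcal{T}(x')$ satisfies $\hat v_1 x'_1 + \sum_{j\ne 1}\hat v_j x'_j = 0$, and Cauchy--Schwarz gives
\[
\hat v_1^2 \;\le\; \frac{\sum_{j\ne 1}(x'_j)^2}{(x'_1)^2}\sum_{j\ne 1}\hat v_j^2 \;\le\; \frac{4\delta^2}{(1-2\delta)^2}.
\]
Using the formula $\mathfrak{M}(x')=-12\,\mathrm{diag}((x'_i)^2)+4\|x'\|_4^4 I_d$, I split
\[
\hat v^T\mathfrak{M}(x')\hat v \;=\; 4\|x'\|_4^4 - 12(x'_1)^2\hat v_1^2 - 12\sum_{j\ne 1}(x'_j)^2\hat v_j^2.
\]
The last sum is at most $12\cdot(2\delta)^2=48\delta^2$, the middle term is at most $12\hat v_1^2\le 48\delta^2/(1-2\delta)^2$, and $\|x'\|_4^4 \ge (x'_1)^4 \ge (1-2\delta)^4$. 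For $\delta = 2d\epsilon_0 = 2d(10d)^{-4}$ these corrections are easily $o(1)$, so the total lower bound is $4-O(\delta)\ge 3$, which in particular shows $x^\star$ is a strict local minimum (justifying the claim that the only local minima are $\pm a_i$).

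The proof is essentially bookkeeping; the only mild obstacle is verifying that the slack in the parameter choices $\epsilon_0=(10d)^{-4}$, $\delta=2d\epsilon_0$ is large enough to simultaneously (i) absorb the $(d-1)\epsilon_0^2$ term from closeness, (ii) dominate the $O(\delta)$ and $O(\delta^2)$ corrections in the Hessian expansion, and (iii) leave the clean bound $\ge 3$. A one-line check confirms this since $\delta\ll 1/d$. No deeper idea is required.
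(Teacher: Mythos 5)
Your proof is correct, and it shares the same skeleton as the paper's: reduce WLOG to $\BigC(x)=\{1\}$, take $x^\star=e_1$, derive closeness directly from the coordinate bounds forced by $|\BigC(x)|=1$ and feasibility, then establish the restricted strong-convexity bound. The one genuine divergence is in the Hessian step. The paper first inspects $\mathfrak{M}(e_1)$ to confirm $e_1$ is a local minimum, then passes from $x'$ to $e_1$ by an entrywise Lipschitz bound $\|\mathfrak{M}(e_1)-\mathfrak{M}(x')\|\le 64d\delta$ together with the geometric Lemma~\ref{lem::normal}, which controls how much a unit vector in $\mathcal{T}(x')$ can leak into $\mathcal{T}^c(e_1)$. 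You instead work entirely at $x'$: from $\hat v\in\mathcal{T}(x')$ you extract $\hat v_1^2\le 4\delta^2/(1-2\delta)^2$ by Cauchy--Schwarz on the orthogonality constraint, and then lower-bound $\hat v^T\mathfrak{M}(x')\hat v=4\|x'\|_4^4-12\sum_i(x'_i)^2\hat v_i^2$ term by term using the explicit diagonal form from Eq.~(\ref{frakM_1}). Your route is more self-contained and avoids invoking the tangent-space comparison lemma; the paper's route is a deliberate illustration of the reusable geometric machinery (Lemma~\ref{lem::normal} recurs in the constrained analysis). Both reduce to $4-O(\delta)\ge 3$ since $\delta=2d(10d)^{-4}\ll 1/d$, and both implicitly use that $\chi(e_1)=0$ (a one-line check from Eq.~(\ref{chi_1})) so that the second-order sufficient condition applies at $x^\star$.
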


\begin{proof}
WLOG, we assume $\BigC(x) = \{1\}$. Then, we immediately have for all $i>1$, 
$|x_i| \le \epsilon_0$, and thus:
\begin{equation}
1 \ge x_1^2 = 1-\sum_{i>1}x_i^2 \ge 1- d\epsilon_0^2 	
\end{equation} 
Therefore $x_1 \ge \sqrt{1-d\epsilon_0^2}$ or $x_1 \le -\sqrt{1-d\epsilon_0^2}$.
Which means $x_1$ is either close to $1$ or close to $-1$. By symmetry, we know WLOG, 
we can assume the case $x_1 \ge \sqrt{1-d\epsilon_0^2}$. Let $e_1 = (1,0,\cdots, 0)$, 
then we know:
\begin{equation}
\|x-e_1\|^2 \le (x_1-1)^2 + \sum_{i>1} x_i^2
\le 2d \epsilon_0^2 \le \delta^2
\end{equation}

Next, we show $e_1$ is a local minimum. According to Eq.\ref{frakM_1}, we know $\mathfrak{M}(e_1)$ is a diagonal matrix with $4$ on the diagonals except for the first diagonal entry (which is equal to $-8$), since $\mathcal{T}(e_1) = \text{span}\{e_2, \cdots, e_d\}$, we have:
\begin{align}
v^T\mathfrak{M}(e_1) v \ge 4 \|v\|^2 >0 \quad \quad 
\text{for all~} v\in \mathcal{T}(e_1), v\neq 0
\end{align}
Which by Theorem \ref{thm::second_sufficient} means $e_1$ is a local minimum.

Finally, 
denote $\mathcal{T}_1 = \mathcal{T}(e_1)$ be the tangent space of constraint manifold at $e_1$.
We know for all $x'$ in the $2\delta$ neighborhood of $e_1$, 
and for all $\hat{v} \in \mathcal{T}(x')$, $\|\hat{v}\| = 1$:
\begin{align}
 \hat{v}^T  \mathfrak{M}(x') \hat{v}  
 \ge &  \hat{v}^T  \mathfrak{M}(e_1) \hat{v}  - 
 |\hat{v}^T  \mathfrak{M}(e_1) \hat{v} - \hat{v}^T  \mathfrak{M}(x') \hat{v} | \nonumber\\
 = & 4\|P_{\mathcal{T}_1}\hat{v} \|^2 - 8\|P_{\mathcal{T}^c_1}\hat{v} \|^2
 - \|\mathfrak{M}(e_1) - \mathfrak{M}(x')\|\|\hat{v}\|^2 \nonumber \\
 = & 4 - 12\|P_{\mathcal{T}^c_1}\hat{v} \|^2 - \|\mathfrak{M}(e_1) - \mathfrak{M}(x')\|
\end{align}
By lemma \ref{lem::normal}, we know $\|P_{\mathcal{T}^c_1}\hat{v} \|^2 \le \|x'-e_1\|^2
\le 4\delta^2$. By Eq.(\ref{frakM_1}), we have:
\begin{align}
	&\|\mathfrak{M}(e_1) - \mathfrak{M}(x')\| \le  \|\mathfrak{M}(e_1) - \mathfrak{M}(x')\| 
	\le \sum_{(i,j)} |[\mathfrak{M}(e_1)]_{ij} - [\mathfrak{M}(x')]_{ij}| \nonumber \\
	\le&  \sum_{i} \left|-12 [e_1]^2_{i}+ 4\|e_1\|_4^4 - 12x^2_{i} + 4\|x\|_4^4\right|
	\le 64 d\delta
\end{align}
In conclusion, we have $\hat{v}^T  \mathfrak{M}(x') \hat{v} \ge 4- 48\delta^2-64 d\delta\ge 3$
which finishs the proof.
\end{proof}

Finally, we are ready to prove Theorem \ref{thm:problem_1_strict_saddle}.
\begin{proof}[Proof of Theorem \ref{thm:problem_1_strict_saddle}]

According to Lemma \ref{lem:Problem1_case2} and Lemma \ref{lem:Problem1_case1}, we immediately know the optimization problem satisfies $(\alpha,\gamma, \epsilon,\delta)$-\name.

The only thing remains to show is that the only local minima of 
optimization problem (\ref{eq:findone}) are $\pm a_i ~(i\in[d])$.
Which is equivalent to show that the only local minima of the transformed problem
is $\pm e_i ~(i\in [d])$, where $e_i = (0, \cdots, 0, 1, 0, \cdots, 0)$, where $1$ is on $i$-th coordinate.

By investigating the proof of Lemma \ref{lem:Problem1_case2} and Lemma \ref{lem:Problem1_case1}, we know these two lemmas actually hold for any small enough choice of $\epsilon_0$ satisfying $\epsilon_0 \le (10d)^{-4}$, by pushing $\epsilon_0 \rightarrow 0$, we know for any point satisfying $|\chi(x)| \le \epsilon \rightarrow 0$, 
if it is close to some local minimum, it must satisfy $1=|\BigC(x)| \rightarrow \supp(x)$. Therefore, we know the only possible local minima are $\pm e_i ~(i\in [d])$. In Lemma \ref{lem:Problem1_case1}, we proved $e_1$ is local minimum, by symmetry, we finishes the proof.
\end{proof}

\subsection{New formulation}
\label{sec:hardcase}
In this section we consider our new formulation (\ref{eq:hardprob}). We first restate the optimization problem here:

\begin{align}
\min  \quad &\sum_{i\ne j} T(u^{(i)},u^{(i)},u^{(j)},u^{(j)}),\\
\forall i\quad & \|u^{(i)}\|^2  = 1 . \nonumber
\end{align}
Note that we changed the notation for the variables from $u_i$ to $u^{(i)}$, because in later proofs we will often refer to the particular coordinates of these vectors.

Similar to the previous section, we perform a change of basis. The effect is equivalent to making $a_i$'s equal to basis vectors $e_i$ (and hence the tensor is equal to $T = \sum_{i=1}^d e_i^{\otimes 4}$.
After the transformation the equations become
\begin{align}\label{problem2_transformed}
\min &~~~~ \sum_{(i,j):i\neq j}h(u^{(i)}, u^{(j)} )\\
\text{s.t.} & ~~~~\|u^{(i)}\|^2 = 1 \quad\quad \forall i \in [d]\nonumber
\end{align}
Here $h(u^{(i)}, u^{(j)}) = \sum_{k=1}^d (u^{(i)}_k u^{(j)}_k)^2$, $(i,j) \in [d]^2$. We divided the objective function by $2$ to simplify the calculation.

Let $U\in \R^{d^2}$ be the concatenation of $\{u^{(i)}\}$ such that $U_{ij}=u^{(i)}_j$.
Let $c_i(U) = \|u^{(i)}\|^2 -1 $ and $f(U) = \frac{1}{2}\sum_{(i,j):i\neq j}h(u^{(i)}, u^{(j)})$.
We can then compute the Lagrangian
\begin{equation}
\mathcal{L}(U, \lambda) = f(U) -\sum_{i=1}^d\lambda_i c_i(U)
=\frac{1}{2}\sum_{(i,j):i\neq j}h (u^{(i)}, u^{(j)}) - \sum_{i=1}^d\lambda_i (\|u^{(i)}\|^2 -1 )
\end{equation}

The gradients of $c_i(U)$'s are equal to $(0, \cdots, 0, 2u^{(i)}, 0, \cdots, 0)^T$, all of these vectors are orthogonal to each other (because they have disjoint supports) and have norm $2$. Therefore the set of constraints satisfy $2$-RLICQ. We can then compute the Lagrangian multipiers $\lambda^*$ as follows

\begin{equation}
\lambda^*(U) = \arg\min_{\lambda} \|\nabla_U \mathcal{L}(U, \lambda)\| 
= \arg\min_{\lambda} 
4\sum_{i}\sum_k (\sum_{j:j\neq i}U^2_{jk}U_{ik} -  \lambda_i U_{ik})^2
\end{equation}
which gives:
\begin{equation}
\lambda_i^*(U) = \arg\min_{\lambda}\sum_k (\sum_{j:j\neq i}U^2_{jk}U_{ik} -  \lambda_i U_{ik})^2
= \sum_{j:j\neq i} h (u^{(j)}, u^{(i)} )\label{eq:computelambdastar}
\end{equation}

Therefore, gradient in the tangent space is equal to
\begin{align}
\chi(U) & = \nabla_U \mathcal{L}(U, \lambda) |_{(U, \lambda^*(U))} = \nabla f(U) -\sum_{i=1}^n\lambda_i^*(U) \nabla c_i(U).
\end{align}

The gradient is a $d^2$ dimensional vector (which can be viewed as a $d\times d$ matrix corresponding to entries of $U$), and we express this in a coordinate-by-coordinate way.
For simplicity of later proof, denote:
\begin{equation}
\psi_{ik}(U) = \sum_{j:j\neq i} [U^2_{jk} -  h (u^{(j)}, u^{(i)} ) ] = 
\sum_{j: j\neq i} [U_{jk}^2-\sum_{l=1}^d U_{il}^2 U_{jl}^2] 
\end{equation}
Then we have:
\begin{align}
[\chi(U)]_{ik} & = 2 (\sum_{j:j\neq i}U^2_{jk} -  \lambda^*_i(U)  )U_{ik}  \nonumber
 \\&= 2U_{ik}\sum_{j:j\neq i} (U^2_{jk} -  h (u^{(j)}, u^{(i)} )  )\nonumber
\\& =2 U_{ik}  \psi_{ik}(U)\label{chi_2}
\end{align}

Similarly we can compute the second-order partial derivative of Lagrangian as
\begin{align}
\mathfrak{M}(U)
=\nabla^2 f(U) -\sum_{i=1}^d\lambda_i^* \nabla^2 c_i(U).
\end{align}
The Hessian is a $d^2\times d^2$ matrix, we index it by $4$ indices in $[d]$. The entries are summarized below:
\begin{align}
[\mathfrak{M}(U)]_{ik,i'k'}
= & \left.\frac{\partial}{\partial U_{i'k'}} [\nabla_U \mathcal{L}(U, \lambda)]_{ik} \right|_{(U, \lambda^*(U))}
= \left.\frac{\partial}{\partial U_{i'k'}}  [2(\sum_{j:j\neq i}U^2_{jk} -  \lambda )U_{ik}]  \right|_{(U, \lambda^*(U))}\nonumber \\
= &
\begin{cases}
	2(\sum_{j:j\neq i}U^2_{jk} -  \lambda^*_i (U)) &\mbox{~if~} k=k', i=i'\\
	4 U_{i'k} U_{ik} & \mbox{~if~} k=k', i\neq i' \\
	0 &\mbox{~if~}  k \neq k' 
\end{cases} \nonumber \\
= &
\begin{cases}
	2\psi_{ik}(U) &\mbox{~if~} k=k', i=i' \\
	4 U_{i'k} U_{ik} & \mbox{~if~} k=k', i\neq i' \\
	0 &\mbox{~if~}  k \neq k' 
\end{cases} \label{frakM_2}
\end{align}

Similar to the previous case, it is easy to bound the function value and derivatives of the function and the constraints.
\begin{lemma}
The objective function (\ref{eq:hardprob}) and  $p$-th order derivative are all bounded by $\mbox{poly}(d)$ for $p = 1,2,3$. Each constraint's $p$-th order derivative is bounded by $2$, for $p=1,2,3$. 
\end{lemma}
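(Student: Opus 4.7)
The plan is to verify the two claimed bounds by direct inspection, exploiting that both $f$ and the $c_i$'s are low-degree polynomials whose variables are uniformly bounded on (a neighborhood of) the feasible set $\mathcal{W}=\{U:\|u^{(i)}\|=1\ \forall i\}$. On $\mathcal{W}$ every entry satisfies $|U_{ik}|\le 1$, so any monomial of total degree at most four has absolute value at most $1$, and this is what ultimately drives every estimate below.

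First I would dispose of the constraint bounds, which are the easier half. Since $c_i(U)=\sum_k U_{ik}^2-1$ depends only on the coordinates of a single block $u^{(i)}$, its partial derivatives are immediate: the gradient has a single nonzero block $2u^{(i)}$, so its Euclidean norm is $2\|u^{(i)}\|=2$ on $\mathcal{W}$; its Hessian is diagonal with entries in $\{0,2\}$, so its spectral norm is $2$; and all third-order partials vanish identically. Thus the bound $2$ holds uniformly for $p=1,2,3$.

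Next I would bound $f$ and its derivatives. The value is $f(U)=\tfrac12\sum_{i\neq j}\sum_k U_{ik}^2U_{jk}^2$, and Cauchy--Schwarz applied to each pair gives $\sum_k U_{ik}^2U_{jk}^2\le \|u^{(i)}\|^2\|u^{(j)}\|^2=1$, so $|f(U)|\le d(d-1)/2=O(d^2)$. The gradient is already computed in (\ref{chi_2}): each coordinate is $2U_{ik}\sum_{j\neq i}(U_{jk}^2-h(u^{(j)},u^{(i)}))$ up to the Lagrange-multiplier piece, but for bounding $\nabla f$ itself we use $\partial f/\partial U_{ik}=2U_{ik}\sum_{j\neq i}U_{jk}^2$, which is $O(d)$ per entry and hence $\|\nabla f\|\le\mathrm{poly}(d)$ by summing $d^2$ entries. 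The Hessian entries are listed in (\ref{frakM_2}); the only nonzero entries are diagonal terms of magnitude $O(d)$ and off-diagonal terms $4U_{i'k}U_{ik}$ of magnitude $O(1)$, so bounding the spectral norm by the Frobenius norm gives $\|\Hess f\|\le\mathrm{poly}(d)$. For the third derivative, one more differentiation of (\ref{frakM_2}) shows every nonzero entry is either a constant in $\{0,\pm 4\}$ or a single $U_{jk}$, so the same Frobenius bound yields $\mathrm{poly}(d)$.

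The argument is entirely mechanical; the only point worth flagging is that it is carried out on the feasible manifold $\mathcal{W}$ (or on a bounded neighborhood of it in which $|U_{ik}|=O(1)$ still holds), since outside that region the polynomials are unbounded. This suffices for our purposes because both the stochastic gradient iterates and the analysis of Section~\ref{sec:constrainedproblem} only ever need smoothness in a bounded region around $\mathcal{W}$.
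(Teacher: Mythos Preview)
Your argument is correct, and in fact the paper states this lemma without proof; the implicit justification is precisely the one you give---$f$ and the $c_i$ are degree-$\le 4$ polynomials in coordinates bounded by $1$ on $\mathcal{W}$, so every entry of every derivative tensor is $O(d)$ and a Frobenius bound finishes.

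Two small points of sloppiness, neither fatal. First, the inequality $\sum_k U_{ik}^2U_{jk}^2\le\|u^{(i)}\|^2\|u^{(j)}\|^2$ is not literally Cauchy--Schwarz; it follows from $\sum_k a_kb_k\le(\max_k a_k)\sum_k b_k\le(\sum_k a_k)(\sum_k b_k)$ for nonnegative sequences, or from Cauchy--Schwarz applied to $(U_{ik}^2)_k$ and $(U_{jk}^2)_k$ together with $\|u\|_4^4\le\|u\|_2^4$. Second, the formulas (\ref{chi_2}) and (\ref{frakM_2}) you invoke are for the \emph{Lagrangian} gradient $\chi$ and Hessian $\mathfrak{M}$, not for $\nabla f$ and $\nabla^2 f$ themselves; they differ by the terms $\lambda_i^*(U)\nabla c_i$ and $\lambda_i^*(U)\nabla^2 c_i$, which by (\ref{eq:computelambdastar}) satisfy $|\lambda_i^*|\le d-1$ and so are also $\mathrm{poly}(d)$. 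Either compute $\nabla f,\nabla^2 f$ directly (as you in fact do for the gradient) or note this correction explicitly.
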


Therefore the function satisfy all the smoothness condition we need. Finally we show the gradient and Hessian of Lagrangian satisfy the $(\alpha,\gamma,\epsilon,\delta)$-\name~property. Again we did not try to optimize the dependency with respect to $d$.

\begin{theorem}\label{thm:problem_2_strict_saddle}
Optimization problem (\ref{eq:hardprob}) has exactly $2^d \cdot d!$ local minimum that corresponds to permutation and sign flips of $a_i$'s. Further, it satisfy $(\alpha,\gamma,\epsilon,\delta)$-\name~for $\alpha = 1$ and $\gamma,\epsilon,\delta = 1/\mbox{poly}(d)$.
\end{theorem}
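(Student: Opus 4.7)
The plan is to work in the transformed coordinates where $T = \sum_i e_i^{\otimes 4}$ and the feasible manifold consists of matrices $U$ with unit-norm rows, exploiting the explicit formulas Eq.(\ref{chi_2}) and Eq.(\ref{frakM_2}). Writing $s_{ik} = U_{ik}^2$, $c_k = \sum_j s_{jk}$, $H_i = \sum_l s_{il} c_l$, and $F_i = \|u^{(i)}\|_4^4$, the quantity $\psi_{ik}$ rearranges to $c_k - s_{ik} - H_i + F_i$, and $[\chi(U)]_{ik} = 2 U_{ik}\psi_{ik}$. Thus any critical point satisfies $\psi_{ik}=0$ for every $k$ in $\supp(u^{(i)})$, and for such $k$ the quantity $c_k - s_{ik}$ is a common constant $M_i := H_i - F_i$.

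First I verify that each signed permutation $U^\star$ with $U^\star_{i,\pi(i)}=\pm 1$ is a strict local minimum and accounts for $\alpha = 1$. Direct substitution gives $\psi_{i,\pi(i)}=0$ and $\psi_{ik}=1$ for $k\neq \pi(i)$, while $U^\star_{ik}U^\star_{i'k}=0$ whenever $i\neq i'$, so the column-$k$ Hessian block in Eq.(\ref{frakM_2}) is diagonal with entry $0$ at row $i=\pi^{-1}(k)$ and $2$ elsewhere. Since the constraint normal in column $k$ is aligned with $e_{\pi^{-1}(k)}$, the restriction of $\mathfrak{M}(U^\star)$ to $\mathcal{T}(U^\star)$ is positive definite with minimum eigenvalue $2$. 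Continuity of $\mathfrak{M}$ then transfers the lower bound $\alpha=1$ to a $2\delta$-neighborhood with $\delta=1/\mbox{poly}(d)$, verifying case 3 of Definition~\ref{def:robustcondition_constraint}.

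The bulk of the work is the strict-saddle condition at points not close to any signed permutation. Fix a threshold $\epsilon_0 = 1/\mbox{poly}(d)$ and let $S_i = \{k : U_{ik}^2 > \epsilon_0^2\}$. If $\|\chi(U)\|\le \epsilon$ with $\epsilon \ll \epsilon_0$, then $|\psi_{ik}|\le \epsilon/\epsilon_0$ for every $k\in S_i$. Using $\|u^{(i)}\|^2 = 1$ and a pigeonhole argument, if $|S_i|=1$ for every $i$ and the sets $\{S_i\}$ are pairwise disjoint, then each row is close to some $\pm e_{\pi(i)}$ and $U$ is $\delta$-close to a signed permutation. Otherwise, either (A) some row has $|S_i|\geq 2$, or (B) some column $k_0$ has two rows $i_1,i_2$ with $|U_{i_r k_0}|>\epsilon_0$; in both situations I construct a tangent direction with curvature $\le -\gamma$.

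For the construction I use a four-entry direction $v_{i_1 k_1}=a,\ v_{i_1 k_2}=b,\ v_{i_2 k_1}=c,\ v_{i_2 k_2}=d$ with others zero and $b = -aU_{i_1 k_1}/U_{i_1 k_2}$, $d = -cU_{i_2 k_1}/U_{i_2 k_2}$ to satisfy both row tangent constraints. The column-block-diagonal structure of $\mathfrak{M}$ in Eq.(\ref{frakM_2}) gives $\hat v^T\mathfrak{M}(U)\hat v = 2\sum_{r,s}\psi_{i_r k_s} v_{i_r k_s}^2 + 16\, U_{i_1 k_1} U_{i_2 k_1} ac$, where the cross-term factor doubles because the constraint forces $8U_{i_1 k_2}U_{i_2 k_2}bd = 8U_{i_1 k_1}U_{i_2 k_1}ac$. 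Choosing the signs of $a,c$ opposite to $U_{i_1 k_1}U_{i_2 k_1}$ makes the cross term strictly negative. The main obstacle is selecting $(i_1, i_2, k_1, k_2)$ so that all four $|U_{i_r k_s}|=\Omega(1/\mbox{poly}(d))$ and the $\psi$ diagonal contributions remain negligible; near points like the uniform $u^{(i)}=\frac{1}{\sqrt d}\mathbf{1}$ no single rich quadruple exists and one must average the construction over many choices, with the column balance $c_k\approx 1$ implied by $\psi_{ik}=0$ on $S_i$ ensuring the cross-term signs align. This averaging is what ultimately pins down the $1/\mbox{poly}(d)$ scales of $\gamma$, $\epsilon$, and $\delta$.
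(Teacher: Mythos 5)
Your overall architecture matches the paper's: transform to $T=\sum_i e_i^{\otimes 4}$, use the explicit formulas for $\chi$ and $\mathfrak{M}$, verify the signed permutations are nondegenerate local minima, and otherwise look for a negatively curved direction supported on a few entries. The local-minimum verification (diagonal Hessian at $U^\star$, eigenvalue $2$ in the tangent space, continuity to a $2\delta$-ball) is exactly what the paper does, and your accounting of the cross-term via the constraint substitution ($8U_{i_1k_2}U_{i_2k_2}bd=8U_{i_1k_1}U_{i_2k_1}ac$) is correct.

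The genuine gap is in your saddle construction when one of the two overlapping rows is concentrated on a single coordinate, i.e.\ $\BigC(u^{(i)})=\{k_1\}$ while $k_1\in\BigC(u^{(j)})$. Your four-entry direction $(v_{i_1k_1},v_{i_1k_2},v_{i_2k_1},v_{i_2k_2})$ needs $|U_{i_1k_2}|=\Omega(1/\mathrm{poly}(d))$ in order for $b=-aU_{i_1k_1}/U_{i_1k_2}$ to be controlled; when row $i_1$ is almost a coordinate vector this denominator is below $\epsilon_0$ and the construction collapses (any $a$ that keeps $\|v\|\le1$ makes $|ac|$ negligible). You flag the difficulty but your proposed fix --- ``averaging over many choices'' aimed at the uniform point $u^{(i)}=\tfrac1{\sqrt d}\mathbf{1}$ --- is aimed at the wrong worry: at the uniform point every quadruple has $|U_{i_rk_s}|=1/\sqrt d$ and your single four-entry direction already works. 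The hard case is the almost-coordinate row, and there the paper uses a qualitatively different two-entry direction supported in the \emph{other} row $j$: it shows $\sum_k\psi_{jk}(U)\le -1+d^2\epsilon_0^2$ (because $\psi_{j,k_1}\approx0$ and $U_{i,k_1}^2\approx1$ force $\sum_{i'\neq j}h(u^{(i')},u^{(j)})\gtrsim1$), hence some diagonal entry $\psi_{jk'}\lesssim -1/d$ with $|U_{jk'}|\le\epsilon_0$, and then $v_{jk_1}=U_{jk'}/2$, $v_{jk'}=-U_{jk_1}/2$ gives curvature $\le -\Omega(\epsilon_0^2/d)$. This trace-of-$\psi_j$ argument is the missing ingredient; without it (or an equivalent replacement), your proof does not cover the full strict-saddle claim, and the averaging sketch as written does not supply it.
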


Again, in order to prove this theorem, we follow the same strategy: we consider the transformed version
Eq.\ref{problem2_transformed}. and first prove the following lemmas for points around saddle point and local minimum respectively. We choose 
\begin{equation}\label{choice_2}
\epsilon_0 = (10d)^{-6}
, ~~\epsilon= 2\epsilon_0^6, ~~\delta = 2d\epsilon_0, ~~\gamma=\epsilon_0^4/4, ~~\BigC(u) = \{ k ~|  ~|u_k| >  \epsilon_0\}
\end{equation}
Where by intuition, $\BigC(u)$ is the set of coordinates whose value is relative large.

\begin{lemma}\label{lem:Problem2_case2}
Under the choice of parameters in Eq.(\ref{choice_2}),
suppose $\|\chi(U)\| \le \epsilon$, and there exists $(i,j) \in [d]^2$ so that $\BigC(u^{(i)})
\cap \BigC(u^{(j)}) \neq \emptyset$. Then,
there exists $\hat{v} \in \mathcal{T}(U)$ and $\|\hat{v}\| = 1$, so that
$\hat{v}^T  \mathfrak{M}(U) \hat{v}  \le -\gamma$.
\end{lemma}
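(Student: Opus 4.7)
The plan is to exhibit an explicit unit tangent vector $\hat v\in\mathcal T(U)$ witnessing $\hat v^T\mathfrak M(U)\hat v\le-\gamma$, with $\hat v$ concentrated on the shared heavy coordinate $k^*\in\BigC(u^{(i)})\cap\BigC(u^{(j)})$ (both $U_{ik^*}^2,U_{jk^*}^2\ge\epsilon_0^2$). Two ingredients set things up. First, the small-gradient hypothesis gives a pointwise $\psi$-bound on heavy entries: since $[\chi(U)]_{i'k'}=2U_{i'k'}\psi_{i'k'}(U)$ from Eq.~(\ref{chi_2}) and $\|\chi(U)\|\le\epsilon=2\epsilon_0^6$, for every $k'\in\BigC(u^{(i')})$ we get $|\psi_{i'k'}(U)|\le\epsilon/(2\epsilon_0)=\epsilon_0^5$. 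Second, using $\sum_{i\ne i'}U_{ik}U_{i'k}\hat v_{ik}\hat v_{i'k}=(\sum_iU_{ik}\hat v_{ik})^2-\sum_iU_{ik}^2\hat v_{ik}^2$ on Eq.~(\ref{frakM_2}) yields
\[
\hat v^T\mathfrak M(U)\hat v \;=\; \sum_{i',k'}\bigl(2\psi_{i'k'}(U)-4U_{i'k'}^2\bigr)\hat v_{i'k'}^2 \;+\; 4\sum_{k'}\Bigl(\sum_{i'}U_{i'k'}\hat v_{i'k'}\Bigr)^2,
\]
so the job is to build $\hat v$ making the first (possibly very negative) sum dominate the nonnegative second sum.

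In the generic subcase where each of $\BigC(u^{(i)})$, $\BigC(u^{(j)})$ contains some $l_i,l_j\ne k^*$, I take
$\hat v_i=a_i\bigl(e_{k^*}-(U_{ik^*}/U_{il_i})e_{l_i}\bigr)$ and $\hat v_j=a_j\bigl(e_{k^*}-(U_{jk^*}/U_{jl_j})e_{l_j}\bigr)$, all other rows zero. Tangency to the sphere constraints is automatic, and I choose $a_j$ relative to $a_i$ so that $a_iU_{ik^*}+a_jU_{jk^*}=0$, killing the column-sum at $k^*$. At columns $l_i,l_j$ only one row is nonzero, so the column-sum contribution $4U_{i'l_{i'}}^2\hat v_{i'l_{i'}}^2$ exactly cancels the $-4U_{i'l_{i'}}^2$ piece of the diagonal coefficient, leaving $2\psi_{i'l_{i'}}\hat v_{i'l_{i'}}^2$ which is $O(\epsilon_0^3)$ by the $\psi$-bound. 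The surviving dominant term is $(2\psi_{ik^*}-4U_{ik^*}^2)\hat v_{ik^*}^2+(2\psi_{jk^*}-4U_{jk^*}^2)\hat v_{jk^*}^2\le -3(U_{ik^*}^2 a_i^2+U_{jk^*}^2 a_j^2)$. Normalizing $\|\hat v\|=1$ (each row's squared norm is $a_{i'}^2(1+U_{i'k^*}^2/U_{i'l_{i'}}^2)\le a_{i'}^2/\epsilon_0^2$) leaves an upper bound of order $-\epsilon_0^2\cdot\epsilon_0^2=-\epsilon_0^4$ for the quadratic form, easily below $-\gamma=-\epsilon_0^4/4$.

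The main obstacle lies in the degenerate subcase $\BigC(u^{(i)})=\{k^*\}$ (and symmetrically for $j$), where no valid $l_i$ exists inside $\BigC(u^{(i)})$ and the Step~1 $\psi$-bound cannot be applied at the auxiliary coordinate. Here $u^{(i)}\approx\pm e_{k^*}$ to accuracy $d\epsilon_0^2$, so $\sum_{i'\ne j}U_{i'k^*}^2\ge U_{ik^*}^2\ge 1-d\epsilon_0^2$. By the identity $\sum_{l}\sum_{i'\ne j}U_{i'l}^2=d-1$, pigeonhole produces some $l\ne k^*$ with $\sum_{i'\ne j}U_{i'l}^2\le 1-1/(d-1)$, whence $\psi_{jl}(U)=\sum_{i'\ne j}U_{i'l}^2-\lambda_j^*(U)\le-1/(d-1)+O(d\epsilon_0^2)$ is genuinely negative of order $1/d$. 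I then use a single-row direction supported in row $j$ on $\{l,k^*\}$ with the tangent relation $\alpha U_{jl}+\beta U_{jk^*}=0$; the quadratic form collapses (after the same $-4U^2$ vs.\ cross-term cancellation) to $2\psi_{jl}\hat v_{jl}^2+2\psi_{jk^*}\hat v_{jk^*}^2$, which is $\le-\Omega(1/d)\le-\gamma$. The remaining bookkeeping—choosing the sign of $a_j$, verifying that the residual $O(\epsilon_0^3)$ contributions at $l_i,l_j$ do not swallow the main $-3U_{i'k^*}^2 a_{i'}^2$ term, and matching the final constant to $\gamma=\epsilon_0^4/4$—is routine once the two regimes are separated as above.
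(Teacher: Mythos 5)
Your proposal is correct and pursues the same overall strategy as the paper --- exhibit a tangent vector supported on a handful of heavy coordinates and feed the heavy-coordinate $\psi$-bound into the quadratic form --- but packages it more systematically. The paper splits into three cases according to $|\BigC(u^{(i)})|,|\BigC(u^{(j)})|$ and the size of their intersection, and in each case writes out the restricted $4\times 4$ (or $2\times 2$) block of $\mathfrak M$ and makes an explicit sign choice to render the cross terms negative. You instead begin with the identity
\[
\hat v^T\mathfrak M(U)\hat v = \sum_{i',k'}\bigl(2\psi_{i'k'}(U)-4U_{i'k'}^2\bigr)\hat v_{i'k'}^2 + 4\sum_{k'}\Bigl(\sum_{i'}U_{i'k'}\hat v_{i'k'}\Bigr)^2,
\]
which separates the bad positive column-sum contribution from the large $-4U^2$ diagonal terms, and then engineer $\hat v$ to kill the column sum at $k^*$. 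This absorbs the paper's Cases 1 and 2 into a single ``generic'' subcase (both heavy sets have size $\ge 2$), since your argument is agnostic to whether $l_i=l_j$: if $l_i\ne l_j$ the $-4U^2$ piece at $l_{i'}$ exactly cancels the column-sum there and leaves only the $O(\epsilon_0^3)$ $\psi$-residual; if $l_i=l_j$ the column sum there is also zero (by the same relation $a_iU_{ik^*}+a_jU_{jk^*}=0$) so the $-4U^2$ terms survive, making the quadratic form only more negative. Your degenerate subcase, where one heavy set is a singleton, is the paper's Case 3 and uses the same pigeonhole identity on $\sum_k\psi_{jk}$.

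Two small slips that do not affect correctness: the per-row norm bound should read $a_{i'}^2(1+U_{i'k^*}^2/U_{i'l_{i'}}^2)\le 2a_{i'}^2/\epsilon_0^2$ (an extra factor of 2), and in the degenerate case the bound is $-\Omega(\epsilon_0^2/d)$ rather than $-\Omega(1/d)$, since $\hat v_{jl}^2=U_{jk^*}^2/(U_{jl}^2+U_{jk^*}^2)$ is only guaranteed $\ge\epsilon_0^2/2$; this is still far below $-\gamma=-\epsilon_0^4/4$, matching the paper's own $-\epsilon_0^2/(2d)$ bound. It would also be cleaner to state explicitly that $\alpha^2\ge\epsilon_0^2/2$ follows from $|U_{jk^*}|\ge\epsilon_0$ and $|U_{jl}|\le 1$, which is the one place the size of the entries actually enters the degenerate case.
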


\begin{proof}
	Again, since $\|\chi(x)\| \le \epsilon = 2\epsilon_0^6$, by Eq.(\ref{chi_2}), we have for each $i \in [d]$, $|[\chi(x)]_{ik}|  = 2| U_{ik}  \psi_{ik}(U)|\le 2\epsilon_0^6$.
Therefore, have:
\begin{equation}\label{pp_2}
	\forall k \in \BigC(u^{(i)}), \quad \quad \quad |\psi_{ik}(U)| \le \epsilon^5_0
\end{equation}

Then, we prove this lemma by dividing it into three cases. Note in order to prove that there exists $\hat{v} \in \mathcal{T}(U)$ and $\|\hat{v}\| = 1$, so that
$\hat{v}^T  \mathfrak{M}(U) \hat{v}  \le -\gamma$; it suffices to find a vector $v \in \mathcal{T}(U)$ and $\|v\| \le 1$, so that
$v^T \mathfrak{M}(U) v  \le -\gamma$.

\paragraph{Case 1}: $|\BigC(u^{(i)})| \ge 2$, $|\BigC(u^{(j)})|\ge 2$, and 
$|\BigC(u^{(i)})\cap \BigC(u^{(j)})| \ge 2$.

WLOG, assume $\{1, 2\} \in \BigC(u^{(i)})\cap \BigC(u^{(j)})$, choose $v$ to be $v_{i1} = \frac{U_{i2}}{4}$, $v_{i2} = -\frac{U_{i1}}{4}$, $v_{j1} = \frac{U_{j2}}4$ and $v_{j2} = - \frac{U_{j1}}4$. All other entries of $v$ are zero.
Clearly $v \in \mathcal{T}(U)$, and $\|v\|\le 1$. On the other hand, we know $\mathfrak{M}(U)$ restricted to these 4 coordinates $(i1, i2, j1, j2)$ is

\begin{equation}
\left(\begin{array}{cccc}
2\psi_{i1}(U) & 0 & 4U_{i1}U_{j1} & 0 \\ 
0 & 2\psi_{i2}(U) & 0 &  4U_{i2}U_{j2} \\ 
 4U_{i1}U_{j1} & 0 & 2\psi_{j1}(U) & 0 \\ 
0 &  4U_{i2}U_{j2} & 0 & 2\psi_{j2}(U)
\end{array} \right)
\end{equation}
By Eq.(\ref{pp_2}), we know all diagonal entries are $\le 2\epsilon_0^5$. 

If $U_{i1}U_{j1}U_{i2}U_{j2}$ is negative, we have the
quadratic form:
\begin{align}
	v^T\mathfrak{M}(U) v = & U_{i1}U_{j1}U_{i2}U_{j2}+\frac{1}{8}[U_{i2}^2\psi_{i1}(U)+
U_{i1}^2\psi_{i2}(U)
+U_{j2}^2\psi_{j1}(U)+
U_{j1}^2\psi_{j2}(U)] \nonumber \\
\le & -\epsilon_0^4 + \epsilon_0^5 \le -\frac{1}{4}\epsilon^4_0 = -\gamma
\end{align}
If $U_{i1}U_{j1}U_{i2}U_{j2}$ is positive we just swap the sign of the first two coordinates $v_{i1} = -\frac{U_{i2}}2$, $v_{i2} = \frac{U_{i1}}2$ and the above argument would still holds.

\paragraph{Case 2}: $|\BigC(u^{(i)})| \ge 2$, $|\BigC(u^{(j)})|\ge 2$, and 
$|\BigC(u^{(i)})\cap \BigC(u^{(j)})| = 1$.

WLOG, assume $\{1, 2\} \in \BigC(u^{(i)})$ and $ \{1, 3\}\in \BigC(u^{(j)})$, choose $v$ to be 
$v_{i1} = \frac{U_{i2}}{4}$, $v_{i2} = -\frac{U_{i1}}{4}$, $v_{j1} = \frac{U_{j3}}{4}$ and $v_{j3} = - \frac{U_{j1}}{4}$.
All other entries of $v$ are zero.
Clearly $v \in \mathcal{T}(U)$ and $\|v\|\le 1$. On the other hand, we know $\mathfrak{M}(U)$ restricted to these 4 coordinates $(i1, i2, j1, j3)$ is

\begin{equation}
\left(\begin{array}{cccc}
2\psi_{i1}(U) & 0 & 4U_{i1}U_{j1} & 0 \\ 
0 & 2\psi_{i2}(U) & 0 & 0 \\ 
4U_{i1}U_{j1} & 0 & 2\psi_{j1}(U) & 0 \\ 
0 & 0 & 0 & 2\psi_{j3}(U)
\end{array} \right)
\end{equation}
By Eq.(\ref{pp_2}), we know all diagonal entries are $\le 2\epsilon_0^5$. 
If $U_{i1}U_{j1}U_{i2}U_{j3}$ is negative, we have the
quadratic form:
\begin{align}
	v^T\mathfrak{M}(U) v = & \frac{1}{2}U_{i1}U_{j1}U_{i2}U_{j3}+\frac{1}{8}[U_{i2}^2\psi_{i1}(U)+
U_{i1}^2\psi_{i2}(U)
+U_{j3}^2\psi_{j1}(U)+
U_{j1}^2\psi_{j3}(U)] \nonumber \\
\le & -\frac{1}{2}\epsilon_0^4 + \epsilon_0^5 \le -\frac{1}{4}\epsilon^4_0 = -\gamma
\end{align}
If $U_{i1}U_{j1}U_{i2}U_{j3}$ is positive we just swap the sign of the first two coordinates $v_{i1} = -\frac{U_{i2}}2$, $v_{i2} = \frac{U_{i1}}2$ and the above argument would still holds.

\paragraph{Case 3}: Either $|\BigC(u^{(i)})| =1 $ or $|\BigC(u^{(j)})| =1 $.

WLOG, suppose $|\BigC(u^{(i)})| =1$, and $\{1\}= \BigC(u^{(i)}) $, we know:
\begin{equation}
	| (u^{(i)}_1)^2 -1| \le (d-1)\epsilon_0^2
\end{equation}
On the other hand, since $\BigC(u^{(i)})\cap \BigC(u^{(j)})\neq \emptyset$, 
we have $\BigC(u^{(i)})\cap \BigC(u^{(j)}) = \{1\}$, and thus:
\begin{equation}
|\psi_{j1}(U)| = |\sum_{i': i'\neq j} U^2_{i'1} -  \sum_{i':i'\neq j} h (u^{(i')}, u^{(j)} ) |
\le \epsilon_0^5
\end{equation}
Therefore, we have:
\begin{align}
\sum_{i':i'\neq j} h (u^{(i')}, u^{(j)} ) \ge \sum_{i': i'\neq j} U^2_{i'1} - \epsilon_0^5
\ge U^2_{i1} - \epsilon_0^5 \ge 1-d\epsilon_0^2
\end{align}
and
\begin{align}
	\sum_{k=1}^d \psi_{jk}(U) = &\sum_{i': i'\neq j}\sum^d_{k=1} U^2_{i'k} -  d\sum_{i':i'\neq j} h (u^{(i')}, u^{(j)} ) \nonumber \\
	\le & d-1 - d(1-d\epsilon_0^2)  = -1 + d^2 \epsilon_0^2
\end{align}
Thus, we know, there must exist some $ k' \in [d]$, so that $\psi_{jk'}(U) \le -\frac{1}{d}
+ d\epsilon_0^2$. 
This means we have ``large'' negative entry on the diagonal of $\mathfrak{M}$. 
Since $|\psi_{j1}(U)| \le \epsilon_0^5$, we know $k'\neq 1$. WLOG, suppose $k'=2$, we have $|\psi_{j2}(U)| > \epsilon_0^5$, thus $|U_{j2}| \le \epsilon_0$.

Choose $v$ to be 
$v_{j1} = \frac{U_{j2}}{2}$, $v_{j2} = -\frac{U_{j1}}{2}$.
All other entries of $v$ are zero.
Clearly $v \in \mathcal{T}(U)$ and $\|v\|\le 1$. On the other hand, we know $\mathfrak{M}(U)$ restricted to these 2 coordinates $(j1, j2)$ is
\begin{equation}
\left(\begin{array}{cc}
2\psi_{j1}(U) & 0 \\
0 & 2\psi_{j2}(U) \\
\end{array} \right)
\end{equation}
We know $|U_{j1}| > \epsilon_0$, $|U_{j2}| \le \epsilon_0$, $|\psi_{j1}(U)|
\le \epsilon_0^5$, and $\psi_{j2}(U) \le -\frac{1}{d} + d\epsilon_0^2$. 
Thus:
\begin{align}
	v^T\mathfrak{M}(U) v = & \frac{1}{2}\psi_{j1}(U)U_{j2}^2+ \frac{1}{2}\psi_{j2}(U)U_{j1}^2 \nonumber \\
\le & \epsilon_0^7 - (\frac{1}{d} - d\epsilon_0^2)\epsilon_0^2 
\le -\frac{1}{2d}\epsilon_0^2  \le -\gamma
\end{align}
Since by our choice of $v$, we have $\|v\|\le 1$, we can choose $\hat{v} = v/\|v\|$, and immediately have $\hat{v} \in \mathcal{T}(U)$ and $\|\hat{v}\| = 1$, and
$\hat{v}^T  \mathfrak{M}(U) \hat{v}  \le -\gamma$.
\end{proof}

\begin{lemma}\label{lem:Problem2_case1}
Under the choice of parameters in Eq.(\ref{choice_2}),
suppose $\|\chi(U)\| \le \epsilon$, and for any $(i,j) \in [d]^2$ we have $\BigC(u^{(i)})
\cap \BigC(u^{(j)}) = \emptyset$. Then,
there is a local minimum $U^\star$ such that $\|U-U^\star\| \le \delta$,
and for 
all $U'$ in the $2\delta$ neighborhood of $U^\star$, we have $\hat{v}^T  \mathfrak{M}(U') \hat{v}  \ge 1$ for all $\hat{v} \in \mathcal{T}(U')$, $\|\hat{v}\| = 1$
\end{lemma}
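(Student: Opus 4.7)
The plan is to first exploit the disjoint-support hypothesis to show that $U$ must lie within $\delta$ of a signed permutation matrix, then verify directly that such a point is a local minimum, and finally propagate the positive definiteness of $\mathfrak{M}$ to a $2\delta$-neighborhood by a perturbation argument.

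First I would use $\|u^{(i)}\|_2 = 1$ together with the coordinate bound $|u^{(i)}_k| \le \epsilon_0$ for $k \notin \BigC(u^{(i)})$: since the small coordinates contribute at most $d\epsilon_0^2 < 1$, each $\BigC(u^{(i)})$ must be non-empty. The $d$ sets are pairwise disjoint subsets of $[d]$, so each $\BigC(u^{(i)}) = \{\pi(i)\}$ for some permutation $\pi$. Setting $\kappa_i = \operatorname{sign}(u^{(i)}_{\pi(i)})$ and $u^{(i)\star} = \kappa_i e_{\pi(i)}$, a direct estimate gives $\|u^{(i)} - u^{(i)\star}\|^2 \le (1 - |u^{(i)}_{\pi(i)}|)^2 + (d-1)\epsilon_0^2 \le 2d\epsilon_0^2$, hence $\|U - U^\star\| \le \sqrt{2}\, d\epsilon_0 \le \delta$.

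Next I would verify that $U^\star$ is a strict local minimum via Theorem~\ref{thm::second_sufficient}. WLOG take $\pi = \mathrm{id}$ and $\kappa_i = +1$, so $U^\star_{ik} = \delta_{ik}$. Plugging into (\ref{chi_2}) and (\ref{frakM_2}) yields $\psi_{ii}(U^\star) = 0$, $\psi_{ik}(U^\star) = 1$ for $k \neq i$, and thus $\chi(U^\star) = 0$ (so KKT holds with multiplier $\lambda^\star = 0$). Moreover $\mathfrak{M}(U^\star)$ is diagonal with entries $2$ at $(ik,ik)$ for $k \neq i$ and $0$ at $(ii,ii)$, because the off-diagonal formula $4U^\star_{i'k}U^\star_{ik}$ vanishes for $i \neq i'$. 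Since $\nabla c_i(U^\star)$ is supported at coordinate $(i,i)$, the tangent space $\mathcal{T}^\star := \mathcal{T}(U^\star)$ equals $\mathrm{span}\{e_{ik} : k \neq i\}$, and on $\mathcal{T}^\star$ we have $\mathfrak{M}(U^\star) = 2I$.

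Finally, for any $U'$ with $\|U' - U^\star\| \le 2\delta$ and any unit $\hat v \in \mathcal{T}(U')$, I would split $\hat v = P_{\mathcal{T}^\star}\hat v + P_{(\mathcal{T}^\star)^c}\hat v$. Lemma~\ref{lem::normal} gives $\|P_{(\mathcal{T}^\star)^c}\hat v\| \le \|U'-U^\star\|/R \le 2\delta/R$, where $R$ is the curvature constant from Lemma~\ref{lem::normal_by_tangent}. Since $\mathfrak{M}(U^\star)$ equals $2I$ on $\mathcal{T}^\star$ and $0$ on its orthogonal complement, $\hat v^T\mathfrak{M}(U^\star)\hat v \ge 2(1 - 4\delta^2/R^2)$. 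Combined with a polynomial Lipschitz bound $\|\mathfrak{M}(U') - \mathfrak{M}(U^\star)\| \le 2L\delta$ (the entries of $\mathfrak{M}$ are quadratic polynomials in $U$ on the bounded feasible set), this gives $\hat v^T\mathfrak{M}(U')\hat v \ge 2 - 8\delta^2/R^2 - 2L\delta$, which exceeds $1$ for the tiny choice $\delta = 2d\epsilon_0$ with $\epsilon_0 = (10d)^{-6}$.

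The main obstacle is a bookkeeping one in the last step: one must verify that the curvature radius $R$ and the Lipschitz constant $L$ of $\mathfrak{M}$ are both $\mathrm{poly}(d)$, so that the chosen $\delta$ is small enough to absorb both error terms and leave a margin of at least $1$. Everything else is essentially a direct computation using the explicit formulas (\ref{chi_2}) and (\ref{frakM_2}).
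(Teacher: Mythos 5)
Your proposal follows essentially the same route as the paper's proof: show $U$ is within $\delta$ of a signed permutation matrix $U^\star$, compute $\mathfrak{M}(U^\star)$ to see it acts as $2I$ on $\mathcal{T}(U^\star)$, and propagate strong convexity to a $2\delta$-neighborhood using Lemma~\ref{lem::normal} plus a poly$(d)$-Lipschitz bound on $\mathfrak{M}$. The one place you add something the paper glosses over is the explicit argument that the disjoint nonempty sets $\BigC(u^{(i)})$ must each be a singleton defining a permutation $\pi$ (the paper simply writes ``WLOG $\BigC(u^{(i)})=\{i\}$''); otherwise the two proofs coincide step for step.
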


\begin{proof}
WLOG, we assume $\BigC(u^{(i)}) = \{i\}$ for $i=1, \cdots, d$. Then, we immediately have:
\begin{equation}
	|u^{(i)}_j| \le \epsilon_0, \quad\quad | (u^{(i)}_i)^2 -1| \le (d-1)\epsilon_0^2, \quad\quad
	\forall (i, j)\in[d]^2, j\neq i
\end{equation} 
Then $u^{(i)}_i \ge \sqrt{1-d\epsilon_0^2}$ or $u^{(i)}_i \le -\sqrt{1-d\epsilon_0^2}$.
Which means $u^{(i)}_i$ is either close to $1$ or close to $-1$. By symmetry, we know WLOG, 
we can assume the case $u^{(i)}_i \ge \sqrt{1-d\epsilon_0^2}$ for all $i\in[d]$.

Let $V\in \mathbb{R}^{d^2}$ be the concatenation of $\{e_1, e_2, \cdots, e_d\}$, then we have:
\begin{equation}
	\|U-V\|^2 = \sum_{i=1}^d \|u^{(i)} - e_i\|^2\le 2 d^2 \epsilon_0^2 \le \delta^2
\end{equation}

Next, we show $V$ is a local minimum. According to Eq.\ref{frakM_2}, we know $\mathfrak{M}(V)$ is a diagonal matrix with $d^2$ entries: 
\begin{align}
	[\mathfrak{M}(V)]_{ik,ik} = 2\psi_{ik}(V)  = 2\sum_{j: j\neq i} [V_{jk}^2-\sum_{l=1}^d V_{il}^2 V_{jl}^2] =
	\begin{cases}
		2  &\mbox{~if~} i\neq k\\
		0  &\mbox{~if~} i=k
	\end{cases}
\end{align}
We know the unit vector in the direction that corresponds to $[\mathfrak{M}(V)]_{ii,ii}$ is 
not in the tangent space $\mathcal{T}(V)$ for all $i\in[d]$. Therefore, for any $v \in \mathcal{T}(V)$, we have
\begin{align}
v^T\mathfrak{M}(e_1) v \ge 2 \|v\|^2 >0 \quad \quad 
\text{for all~} v\in \mathcal{T}(V), v\neq 0
\end{align}
Which by Theorem \ref{thm::second_sufficient} means $V$ is a local minimum.

Finally, 
denote $\mathcal{T}_V = \mathcal{T}(V)$ be the tangent space of constraint manifold at $V$.
We know for all $U'$ in the $2\delta$ neighborhood of $V$, 
and for all $\hat{v} \in \mathcal{T}(x')$, $\|\hat{v}\| = 1$:
\begin{align}
 \hat{v}^T  \mathfrak{M}(U') \hat{v}  
 \ge &  \hat{v}^T  \mathfrak{M}(V) \hat{v}  - 
 |\hat{v}^T  \mathfrak{M}(V) \hat{v} - \hat{v}^T  \mathfrak{M}(U') \hat{v} | \nonumber\\
 = & 2\|P_{\mathcal{T}_V}\hat{v} \|^2 
 - \|\mathfrak{M}(V) - \mathfrak{M}(U')\|\|\hat{v}\|^2 \nonumber \\
 = & 2 - 2\|P_{\mathcal{T}^c_V}\hat{v} \|^2 - \|\mathfrak{M}(V) - \mathfrak{M}(U')\|
\end{align}
By lemma \ref{lem::normal}, we know $\|P_{\mathcal{T}^c_V}\hat{v} \|^2
\le \|U'-V\|^2 \le 4\delta^2$. By Eq.(\ref{frakM_2}), we have:
\begin{align}
	&\|\mathfrak{M}(V) - \mathfrak{M}(U')\| \le  \|\mathfrak{M}(V) - \mathfrak{M}(U')\| 
	\le \sum_{(i,j,k)} |[\mathfrak{M}(V)]_{ik,jk} - [\mathfrak{M}(U')]_{ik,jk}| \le  100d^3\delta
\end{align}
In conclusion, we have $\hat{v}^T  \mathfrak{M}(U') \hat{v} \ge 2- 8\delta^2-100d^3\delta\ge 1$
which finishs the proof.
\end{proof}

Finally, we are ready to prove Theorem \ref{thm:problem_2_strict_saddle}.
\begin{proof}[Proof of Theorem \ref{thm:problem_2_strict_saddle}]

Similarly, $(\alpha,\gamma, \epsilon,\delta)$-\name immediately follows from Lemma \ref{lem:Problem2_case2} and Lemma \ref{lem:Problem2_case1}.

The only thing remains to show is that Optimization problem (\ref{eq:hardprob}) has exactly $2^d \cdot d!$ local minimum that corresponds to permutation and sign flips of $a_i$'s.
This can be easily proved by the same argument as in the proof of Theorem \ref{thm:problem_1_strict_saddle}.
\end{proof}

\end{document}